\newcommand{\elink}[1]{\footnote{See Exercise~\ref{#1}.}}
\newcommand{\mopt}{\text{\textsf{mOPT}}_{\!f}}
\newcommand{\x}{\vx}
\newcommand{\y}{\vy}
\newcommand{\z}{\vz}
\newcommand{\w}{\vw}
\newcommand{\e}{\ve}
\newcommand{\R}{\bR}
\newcommand{\B}{\cB}
\newcommand{\D}{\cD}
\newcommand{\A}{\cA}
\newcommand{\sign}{\text{sign}}
\newcommand{\xopt}{\vx^\ast}
\newcommand{\sep}{\ |\ }
\newcommand{\nd}{^{\text{nd}}}
\title{Non-convex Optimization for Machine Learning\footnote{\color{red} The official publication is available from now publishers via \newline \url{http://dx.doi.org/10.1561/2200000058}}}
\author{
Prateek Jain \\
Microsoft Research India\\
prajain@microsoft.com
\and
Purushottam Kar\\
IIT Kanpur\\
purushot@cse.iitk.ac.in
}
\begin{document}

\pagestyle{headings}
\sloppy
\flushbottom
\frenchspacing

\frontmatter

\maketitle

\tableofcontents
\listoffigures
\listofalgorithms

\mainmatter

\allowdisplaybreaks

\chapter*{\centering Abstract}
\addcontentsline{toc}{chapter}{Abstract}

A vast majority of machine learning algorithms train their models and perform inference by solving optimization problems. In order to capture the learning and prediction problems accurately, structural constraints such as sparsity or low rank are frequently imposed or else the objective itself is designed to be a non-convex function. This is especially true of algorithms that operate in high-dimensional spaces or that train non-linear models such as tensor models and deep networks.

The freedom to express the learning problem as a non-convex optimization problem gives immense modeling power to the algorithm designer, but often such problems are NP-hard to solve. A popular workaround to this has been to relax non-convex problems to convex ones and use traditional methods to solve the (convex) \emph{relaxed} optimization problems. However this approach may be lossy and nevertheless presents significant challenges for large scale optimization.

On the other hand, direct approaches to non-convex optimization have met with resounding success in several domains and remain the methods of choice for the practitioner, as they frequently outperform relaxation-based techniques -- popular heuristics include projected gradient descent and alternating minimization. However, these are often poorly understood in terms of their convergence and other properties.

This monograph presents a selection of recent advances that bridge a long-standing gap in our understanding of these heuristics. We hope that an insight into the inner workings of these methods will allow the reader to appreciate the unique marriage of task structure and generative models that allow these heuristic techniques to (provably) succeed. The monograph will lead the reader through several widely used non-convex optimization techniques, as well as applications thereof. The goal of this monograph is to both, introduce the rich literature in this area, as well as equip the reader with the tools and techniques needed to analyze these simple procedures for non-convex problems.

\chapter*{Preface}
\label{chap:org}
\addcontentsline{toc}{chapter}{Preface}
\markboth{\sffamily\slshape Preface}
{\sffamily\slshape Preface}

Optimization as a field of study has permeated much of science and technology. The advent of the digital computer and a tremendous subsequent increase in our computational prowess has increased the impact of optimization in our lives. Today, tiny details such as airline schedules all the way to leaps and strides in medicine, physics and artificial intelligence, all rely on modern advances in optimization techniques.

For a large portion of this period of excitement, our energies were focused largely on convex optimization problems, given our deep understanding of the structural properties of convex sets and convex functions. However, modern applications in domains such as signal processing, bio-informatics and machine learning, are often dissatisfied with convex formulations alone since there exist non-convex formulations that better capture the problem structure. For applications in these domains, models trained using non-convex formulations often offer excellent performance and other desirable properties such as compactness and reduced prediction times.

Examples of applications that benefit from non-convex optimization techniques include gene expression analysis, recommendation systems, clustering, and outlier and anomaly detection. In order to get satisfactory solutions to these problems, that are scalable and accurate, we require a deeper understanding of non-convex optimization problems that naturally arise in these problem settings.

Such an understanding was lacking until very recently and non-convex optimization found little attention as an active area of study, being regarded as intractable. Fortunately, a long line of works have recently led areas such as computer science, signal processing, and statistics to realize that the general abhorrence to non-convex optimization problems hitherto practiced, was misled. These works demonstrated in a beautiful way, that although non-convex optimization problems do suffer from intractability in general, those that arise in \emph{natural settings} such as machine learning and signal processing, possess additional structure that allow the intractability results to be circumvented.

The first of these works still religiously stuck to convex optimization as the method of choice, and instead, sought to show that certain classes of non-convex problems which possess suitable additional structure as offered by natural instances of those problems, could be converted to convex problems without any loss. More precisely, it was shown that the original non-convex problem and the modified convex problem possessed a common optimum and thus, the solution to the convex problem would automatically solve the non-convex problem as well! However, these approaches had a price to pay in terms of the time it took to solve these so-called \emph{relaxed} convex problems. In several instances, these relaxed problems, although not intractable to solve, were nevertheless challenging to solve, at large scales.

It took a second wave of still more recent results to usher in provable non-convex optimization techniques which abstained from relaxations, solved the non-convex problems in their native forms, and yet seemed to offer the same quality of results as relaxation methods did. These newer results were accompanied with a newer realization that, for a wide range of applications such as sparse recovery, matrix completion, robust learning among others, these direct techniques are faster, often by an order of magnitude or more, than relaxation-based techniques while offering solutions of similar accuracy.

This monograph wishes to tell the story of this realization and the wisdom we gained from it from the point of view of machine learning and signal processing applications. The monograph will introduce the reader to a lively world of non-convex optimization problems with rich structure that can be exploited to obtain extremely scalable solutions to these problems. Put a bit more dramatically, it will seek to show how problems that were once avoided, having been shown to be NP-hard to solve, now have solvers that operate in near-linear time, by carefully analyzing and exploiting additional task structure! It will seek to inform the reader on how to look for such structure in diverse application areas, as well as equip the reader with a sound background in fundamental tools and concepts required to analyze such problem areas and come up with newer solutions.\\

\noindent\textbf{How to use this monograph} We have made efforts to make this monograph as self-contained as possible while not losing focus of the main topic of non-convex optimization techniques. Consequently, we have devoted entire sections to present a tutorial-like treatment to basic concepts in convex analysis and optimization, as well as their non-convex counterparts. As such, this monograph can be used for a semester-length course on the basics of non-convex optimization with applications to machine learning.

On the other hand, it is also possible to cherry pick portions of the monograph, such the section on sparse recovery, or the EM algorithm, for inclusion in a broader course. Several courses such as those in machine learning, optimization, and signal processing may benefit from the inclusion of such topics. However, we advise that relevant background sections (see Figure~\ref{fig:org}) be covered beforehand.

While striving for breadth, the limits of space have constrained us from looking at some topics in much detail. Examples include the construction of design matrices that satisfy the RIP/RSC properties and pursuit style methods, but there are several others. However, for all such omissions, the bibliographic notes at the end of each section can always be consulted for references to details of the omitted topics. We have also been unable to address several application areas such as dictionary learning, advances in low-rank tensor decompositions, topic modeling and community detection in graphs but have provided pointers to prominent works in these application areas too.

The organization of this monograph is outlined below with Figure~\ref{fig:org} presenting a suggested order of reading the various sections.\\

\begin{figure}[t!]
\includegraphics[width=\columnwidth]{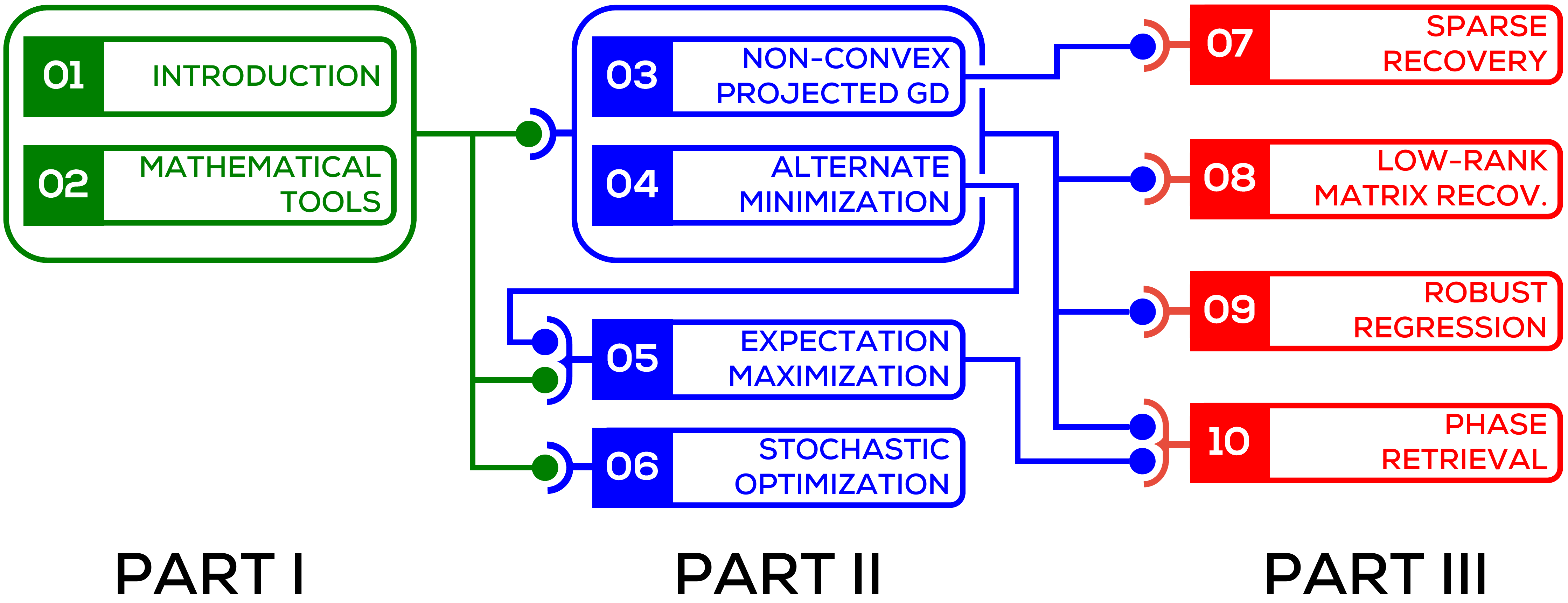}
\caption[Suggested Order of Reading the Sections]{A schematic showing the suggested order of reading the sections. For example, concepts introduced in \S~\ref{chap:pgd} and \ref{chap:altmin} are helpful for \S~\ref{chap:rreg} but a thorough reading of \S~\ref{chap:saddle} is not required for the same. Similarly, we recommend reading \S~\ref{chap:em} after going through \S~\ref{chap:altmin} but a reader may choose to proceed to \S~\ref{chap:spreg} directly after reading \S~\ref{chap:pgd}.}
\label{fig:org}
\end{figure}

\noindent\textbf{Part I: Introduction and Basic Tools}\\
This part will offer an introductory note and a section exploring some basic definitions and algorithmic tools in convex optimization. These sections are recommended to readers not intimately familiar with basics of numerical optimization.\\

\noindent\textbf{Section 1 - Introduction} This section will give a more relaxed introduction to the area of non-convex optimization by discussing applications that motivate the use of non-convex formulations. The discussion will also clarify the scope of this monograph.\\

\noindent\textbf{Section 2 - Mathematical Tools} This section will set up notation and introduce some basic mathematical tools in convex optimization. This section is basically a handy repository of useful concepts and results and can be skipped by a reader familiar with them. Parts of the section may instead be referred back to, as and when needed, using the cross-referencing links in the monograph.

\noindent\textbf{Part II: Non-convex Optimization Primitives}\\
This part will equip the reader with a collection of primitives most widely used in non-convex optimization problems.\\

\noindent\textbf{Section 3 - Non-convex Projected Gradient Descent} This section will introduce the simple and intuitive projected gradient descent method in the context of non-convex optimization. Variants of this method will be used in later sections to solve problems such as sparse recovery and robust learning.\\

\noindent\textbf{Section 4 - Alternating Minimization} This section will introduce the principle of alternating minimization which is widely used in optimization problems over two or more (groups of) variables. The methods introduced in this section will be later used in later sections to solve problems such as low-rank matrix recovery, robust regression, and phase retrieval.\\

\noindent\textbf{Section 5 - The EM Algorithm} This section will introduce the EM algorithm which is a widely used optimization primitive for learning problems with latent variables. Although EM is a form of alternating minimization, given its significance, the section gives it special attention. This section will discuss some recent advances in the analysis and applications of this method and look at two case studies in learning Gaussian mixture models and mixed regression to illustrate the algorithm and its analyses.\\

\noindent\textbf{Section 6 - Stochastic Non-convex Optimization} This section will look at some recent advances in using stochastic optimization techniques for solving optimization problems with non-convex objectives. The section will also introduce the problem of tensor factorization as a case study for the algorithms being studied.\\

\noindent\textbf{Part III - Applications}\\
This part will take a look at four interesting applications in the areas of machine learning and signal processing and explore how the non-convex optimization techniques introduced earlier can be used to solve these problems.

\noindent\textbf{Section 7 - Sparse Recovery} This section will look at a very basic non-convex optimization problem, that of performing linear regression to fit a sparse model to the data. The section will discuss conditions under which it is possible to do so in polynomial time and show how the non-convex projected gradient descent method studied earlier can be used to offer provably optimal solutions. The section will also point to other techniques used to solve this problem, as well as refer to extensions and related results.\\

\noindent\textbf{Section 8 - Low-rank Matrix Recovery} This section will address the more general problem of low rank matrix recovery with specific emphasis on low-rank matrix completion. The section will gently introduce low-rank matrix recovery as a generalization of sparse linear regression that was studied in the previous section and then move on to look at matrix completion in more detail. The section will apply both the non-convex projected gradient descent and alternating minimization methods in the context of low-rank matrix recovery, analyzing simple cases and pointing to relevant literature.\\

\noindent\textbf{Section 9 - Robust Regression} This section will look at a widely studied area of machine learning, namely robust learning, from the point of view of regression. Algorithms that are robust to (adversarial) corruption in data are sought after in several areas of signal processing and learning. The section will explore how to use the projected gradient and alternating minimization techniques to solve the robust regression problem and also look at applications of robust regression to robust face recognition and robust time series analysis.\\

\noindent\textbf{Section 10 - Phase Retrieval} This section will look at some recent advances in the application of non-convex optimization to phase retrieval. This problem lies at the heart of several imaging techniques such as X-ray crystallography and electron microscopy. A lot remains to be understood about this problem and existing algorithms often struggle to cope with the retrieval problems presented in practice.\\

The area of non-convex optimization has considerably widened in both scope and application in recent years and newer methods and analyses are being proposed at a rapid pace. While this makes researchers working in this area extremely happy, it also makes summarizing the vast body of work in a monograph such as this, more challenging. We have striven to strike a balance between presenting results that are the best known, and presenting them in a manner accessible to a newcomer. However, in all cases, the bibliography notes at the end of each section do contain pointers to the state of the art in that area and can be referenced for follow-up readings.\\

\noindent Prateek Jain, Bangalore, India\\
\noindent Purushottam Kar, Kanpur, India\\
\noindent \today

\chapter*{Mathematical Notation}
\label{chap:not}
\addcontentsline{toc}{chapter}{Mathematical Notation}
\markboth{\sffamily\slshape Mathematical Notation}
{\sffamily\slshape Mathematical Notation}
\begin{itemize}
	\item The set of real numbers is denoted by $\R$. The set of natural numbers is denoted by $\bN$.
	\item The cardinality of a set $S$ is denoted by $\abs{S}$.
	\item Vectors are denoted by boldface, lower case alphabets for example, $\x,\y$. The zero vector is denoted by $\vzero$. A vector $\x \in \R^p$ will be in column format. The transpose of a vector is denoted by $\x^\top$. The $i\th$ coordinate of a vector $\x$ is denoted by $\x_i$.
	\item Matrices are denoted by upper case alphabets for example, $A, B$. $A_i$ denotes the $i\th$ column of the matrix $A$ and $A^j$ denotes its $j\th$ row. $A_{ij}$ denotes the element at the $i\th$ row and $j\th$ column.
	\item For a vector $\x \in \R^p$ and a set $S \subset [p]$, the notation $\x_S$ denotes the vector $\z \in \R^p$ such that $\z_i = \x_i$ for $i \in S$, and $\z_i = 0$ otherwise. Similarly for matrices, $A_S$ denotes the matrix $B$ with $B_i = A_i$ for $i \in S$ and $B_i = \vzero$ for $i \neq S$. Also, $A^S$ denotes the matrix $B$ with $B^i = A^i$ for $i \in S$ and $B^i = \vzero^\top$ for $i \neq S$.
	\item The support of a vector $\x$ is denoted by $\supp(x) := \bc{i : \x_i \neq 0}$. A vector $x$ is referred to as $s$-sparse if $\abs{\supp(x)} \leq s$.
	\item The canonical directions in $\R^p$ are denoted by $\e_i,\ i = 1, \ldots, p$.
	\item The identity matrix of order $p$ is denoted by $I_{p\times p}$ or simply $I_p$. The subscript may be omitted when the order is clear from context.
	\item For a vector $\x \in \R^p$, the notation $\norm{x}_q = \sqrt[\leftroot{1}\uproot{1}q]{\sum_{i=1}^p\abs{\x_i}^q}$ denotes its $L_q$ norm. As special cases we define $\norm{\x}_\infty := \max_i\ \abs{\x_i}$, $\norm{\x}_{-\infty} := \min_i\ \abs{\x_i}$, and $\norm{\x}_0 := \abs{\supp(\x)}$.
	\item Balls with respect to various norms are denoted as $\B_q(r) := \bc{\x \in \R^p, \norm{\x}_q \leq r}$. As a special case the notation $\B_0(s)$ is used to denote the set of $s$-sparse vectors.
	\item For a matrix $A \in \R^{m \times n}$, $\sigma_1(A)\geq \sigma_2(A)\geq \ldots\geq\sigma_{\min\bc{m,n}}(A)$ denote its singular values. The Frobenius norm of $A$ is defined as $\norm{A}_F := \sqrt{\sum_{i,j} A_{ij}^2}=\sqrt{\sum_i \sigma_i(A)^2}$. The nuclear norm of $A$ is defined as $\norm{A}_\ast := \sum_i \sigma_i(A)$.
	\item The trace of a square matrix $A \in \bR^{m \times m}$ is defined as $\text{tr}(A) = \sum_{i=1}^m A_{ii}$.
	\item The spectral norm (also referred to as the operator norm) of a matrix $A$ is defined as $\norm{A}_2 := \max_i \sigma_i(A)$.
	\item Random variables are denoted using upper case letters such as $X,Y$.
	\item The expectation of a random variable $X$ is denoted by $\E{X}$. In cases where the distribution of $X$ is to be made explicit, the notation $\bE_{X\sim\D}\bs{X}$, or else simply $\bE_\D\bs{X}$, is used.
	\item $\text{\textsf{Unif}}(\cX)$ denotes the uniform distribution over a compact set $\cX$.
	\item The standard \emph{big-Oh} notation is used to describe the asymptotic behavior of functions. The \emph{soft-Oh} notation is employed to hide poly-logarithmic factors i.e., $f = \softO{g}$ will imply $f = \bigO{g \log^c(g)}$ for some absolute constant $c$.
\end{itemize}

\part{Introduction and Basic Tools}

\chapter{Introduction}
\label{chap:intro}

This section will set the stage for subsequent discussions by motivating some of the non-convex optimization problems we will be studying using real life examples, as well as setting up notation for the same.

\section{Non-convex Optimization}
The generic form of an analytic optimization problem is the following
\begin{align*}
\min_{\vx \in \bR^p}\ & f(\vx)\\
\text{s.t.}\ & \vx \in \cC,
\end{align*}
where $\vx$ is the \emph{variable} of the problem, $f: \bR^p \rightarrow \bR$ is the \emph{objective function} of the problem, and $\cC \subseteq \bR^p$ is the \emph{constraint set} of the problem. When used in a machine learning setting, the objective function allows the algorithm designer to encode proper and expected behavior for the machine learning model, such as fitting well to training data with respect to some loss function, whereas the constraint allows restrictions on the model to be encoded, for instance, restrictions on model size.

An optimization problem is said to be \emph{convex} if the objective is a convex function, as well as the constraint set is a convex set. We refer the reader to \S~\ref{chap:tools} for formal definitions of these terms. An optimization problem that violates either one of these conditions, i.e., one that has a non-convex objective, or a non-convex constraint set, or both, is called a \emph{non-convex} optimization problem. In this monograph, we will discuss non-convex optimization problems with non-convex objectives and convex constraints (\S~\ref{chap:altmin}, \ref{chap:em}, \ref{chap:saddle}, and  \ref{chap:matrec}), as well as problems with non-convex constraints but convex objectives (\S~\ref{chap:pgd}, \ref{chap:spreg}, \ref{chap:rreg}, \ref{chap:phret}, and \ref{chap:matrec}). Such problems arise in a lot of application areas.

\section{Motivation for Non-convex Optimization}
Modern applications frequently require learning algorithms to operate in extremely high dimensional spaces. Examples include web-scale document classification problems where $n$-gram-based representations can have dimensionalities in the millions or more, recommendation systems with millions of items being recommended to millions of users, and signal processing tasks such as face recognition and image processing and bio-informatics tasks such as splice and gene detection, all of which present similarly high dimensional data.

Dealing with such high dimensionalities necessitates the imposition of structural constraints on the learning models being estimated from data. Such constraints are not only helpful in regularizing the learning problem, but often essential to prevent the problem from becoming ill-posed. For example, suppose we know how a user rates some items and wish to infer how this user would rate other items, possibly in order to inform future advertisement campaigns. To do so, it is essential to impose some structure on how a user's ratings for one set of items influences ratings for other kinds of items. Without such structure, it becomes impossible to infer any new user ratings. As we shall soon see, such structural constraints often turn out to be non-convex.

In other applications, the natural objective of the learning task is a non-convex function. Common examples include training deep neural networks and tensor decomposition problems. Although non-convex objectives and constraints allow us to accurately model learning problems, they often present a formidable challenge to algorithm designers. This is because unlike convex optimization, we do not possess a handy set of tools for solving non-convex problems. Several non-convex optimization problems are known to be NP-hard to solve. The situation is made more bleak by a range of non-convex problems that are not only NP-hard to solve optimally, but NP-hard to solve approximately as well \citep{MekaJCD2008}.

\section{Examples of Non-Convex Optimization Problems}
Below we present some areas where non-convex optimization problems arise naturally when devising learning problems.\\

\noindent\textbf{Sparse Regression} The classical problem of linear regression seeks to recover a linear model which can effectively predict a response variable as a linear function of covariates. For example, we may wish to predict the average expenditure of a household (the response) as a function of the education levels of the household members, their annual salaries and other relevant indicators (the covariates). The ability to do allows economic policy decisions to be more informed by revealing, for instance, how does education level affect expenditure.

More formally, we are provided a set of $n$ covariate/response pairs $(\x_1, y_1),\ldots,(\x_n,y_n)$ where $\x_i \in \R^p$ and $y_i \in \R$. The linear regression approach makes the modeling assumption $y_i = \x_i^\top\bto + \eta_i$ where $\bto \in \R^p$ is the underlying linear model and $\eta_i$ is some benign additive noise. Using the data provided $\bc{\x_i,y_i}_{i=1,\ldots,n}$, we wish to recover back the model $\bto$ as faithfully as possible.

A popular way to recover $\bto$ is using the \emph{least squares} formulation
\[
\bth = \underset{\bt\in\R^p}{\arg\min}\ \sum_{i=1}^n\br{y_i - \x_i^\top\bt}^2.
\]
The linear regression problem as well as the least squares estimator, are extremely well studied and their behavior, precisely known. However, this age-old problem acquires new dimensions in situations where, either we expect only a few of the $p$ features/covariates to be actually relevant to the problem but do not know their identity, or else are working in extremely data-starved settings i.e., $n \ll p$.

\begin{figure}
\includegraphics[width=\columnwidth]{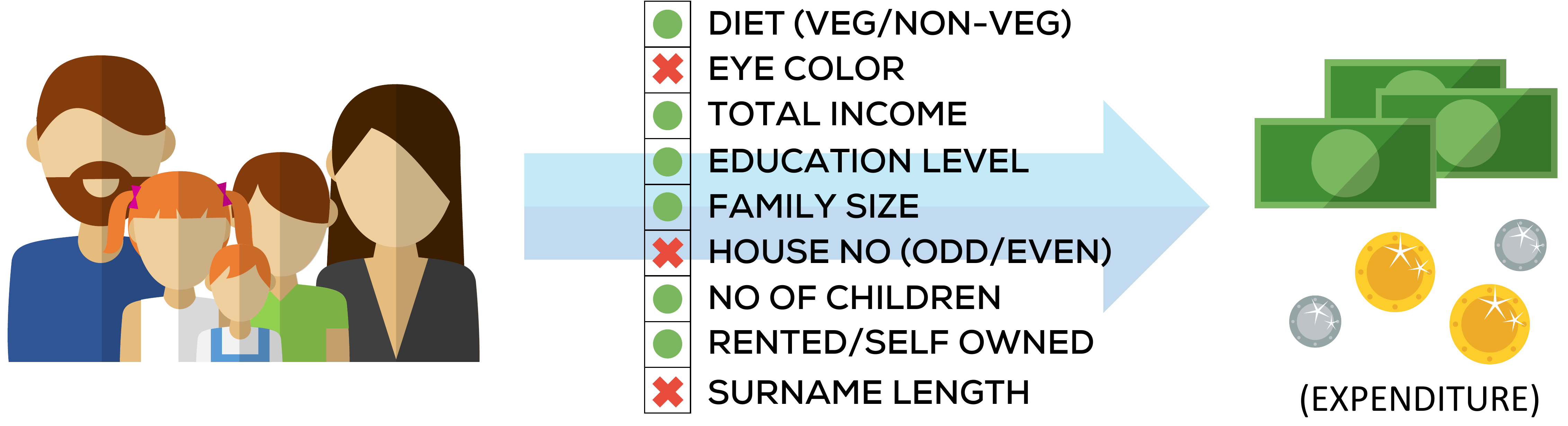}
\caption[Sparse Recovery for Automated Feature Selection]{Not all available parameters and variables may be required for a prediction or learning task. Whereas the family size may significantly influence family expenditure, the eye color of family members does not directly or significantly influence it. Non-convex optimization techniques, such as sparse recovery, help discard irrelevant parameters and promote compact and accurate models.}%
\label{fig:fam}
\end{figure}

The first problem often arises when there is an excess of covariates, several of which may be spurious or have no effect on the response. \S~\ref{chap:spreg} discusses several such practical examples. For now, consider the example depicted in Figure~\ref{fig:fam}, that of expenditure prediction in a situation when the list of indicators include irrelevant ones such as whether the family lives in an odd-numbered house or not, which should arguably have no effect on expenditure. It is useful to eliminate such variables from consideration to promote consistency of the learned model.

The second problem is common in areas such as genomics and signal processing which face moderate to severe \emph{data starvation} and the number of data points $n$ available to estimate the model is small compared to the number of model parameters $p$ to be estimated, i.e., $n \ll p$. Standard statistical approaches require at least $n \geq p$ data points to ensure a consistent estimation of all $p$ model parameters and are unable to offer accurate model estimates in the face of data-starvation.

Both these problems can be handled by the \emph{sparse recovery} approach, which seeks to fit a sparse model vector (i.e., a vector with say, no more than $s$ non-zero entries) to the data. The least squares formulation, modified as a sparse recovery problem, is given below
\begin{align*}
\bth_\text{sp} = \underset{\bt \in \bR^p}{\arg\min}\ & \sum_{i=1}^n\br{y_i - \x_i^\top\bt}^2\\
\text{s.t.}\ & \bt \in \cB_0(s),
\end{align*}

Although the objective function in the above formulation is convex, the constraint $\norm{\bt}_0 \leq s$ (equivalently $\bt \in \cB_0(s)$ -- see list of mathematical notation at the beginning of this monograph) corresponds to a non-convex constraint set\elink{exer:tools-nonconv-sp}. Sparse recovery effortlessly solves the twin problems of discarding irrelevant covariates and countering data-starvation since typically, only $n \geq s\log p$ (as opposed to $n \geq p$) data points are required for sparse recovery to work which drastically reduces the data requirement. Unfortunately however, sparse-recovery is an NP-hard problem \citep{Natarajan1995}.\\

\noindent\textbf{Recommendation Systems} Several internet search engines and e-commerce websites utilize recommendation systems to offer items to users that they would benefit from, or like, the most. The problem of recommendation encompasses benign recommendations for songs etc, all the way to critical recommendations in personalized medicine.

To be able to make accurate recommendations, we need very good estimates of how each user likes each item (song), or would benefit from it (drug). We usually have first-hand information for some user-item pairs, for instance if a user has specifically rated a song or if we have administered a particular drug on a user and seen the outcome. However, users typically rate only a handful of the hundreds of thousands of songs in any commercial catalog and it is not feasible, or even advisable, to administer every drug to a user. Thus, for the vast majority of user-item pairs, we have no direct information.

It is useful to visualize this problem as a \emph{matrix completion} problem: for a set of $m$ users $u_1,\ldots,u_m$ and $n$ items $a_1,\ldots,a_n$, we have an $m \times n$ \emph{preference matrix}  $A = [A_{ij}]$ where $A_{ij}$ encodes the preference of the $i\th$ user for the $j\th$ item. We are able to directly view only a small number of entries of this matrix, for example, whenever a user explicitly rates an item. However, we wish to recover the remaining entries, i.e., complete this matrix. This problem is closely linked to the \emph{collaborative filtering} technique popular in recommendation systems.

Now, it is easy to see that unless there exists some structure in matrix, and by extension, in the way users rate items, there would be no relation between the unobserved entries and the observed ones. This would result in there being no unique way to complete the matrix. Thus, it is essential to impose some structure on the matrix. A structural assumption popularly made is that of low rank: we wish to fill in the missing entries of $A$ assuming that $A$ is a low rank matrix. This can make the problem well-posed and have a unique solution since the additional low rank structure links the entries of the matrix together. The unobserved entries can no longer take values independently of the values observed by us. Figure~\ref{fig:cf} depicts this visually.

\begin{figure}
\includegraphics[width=\columnwidth]{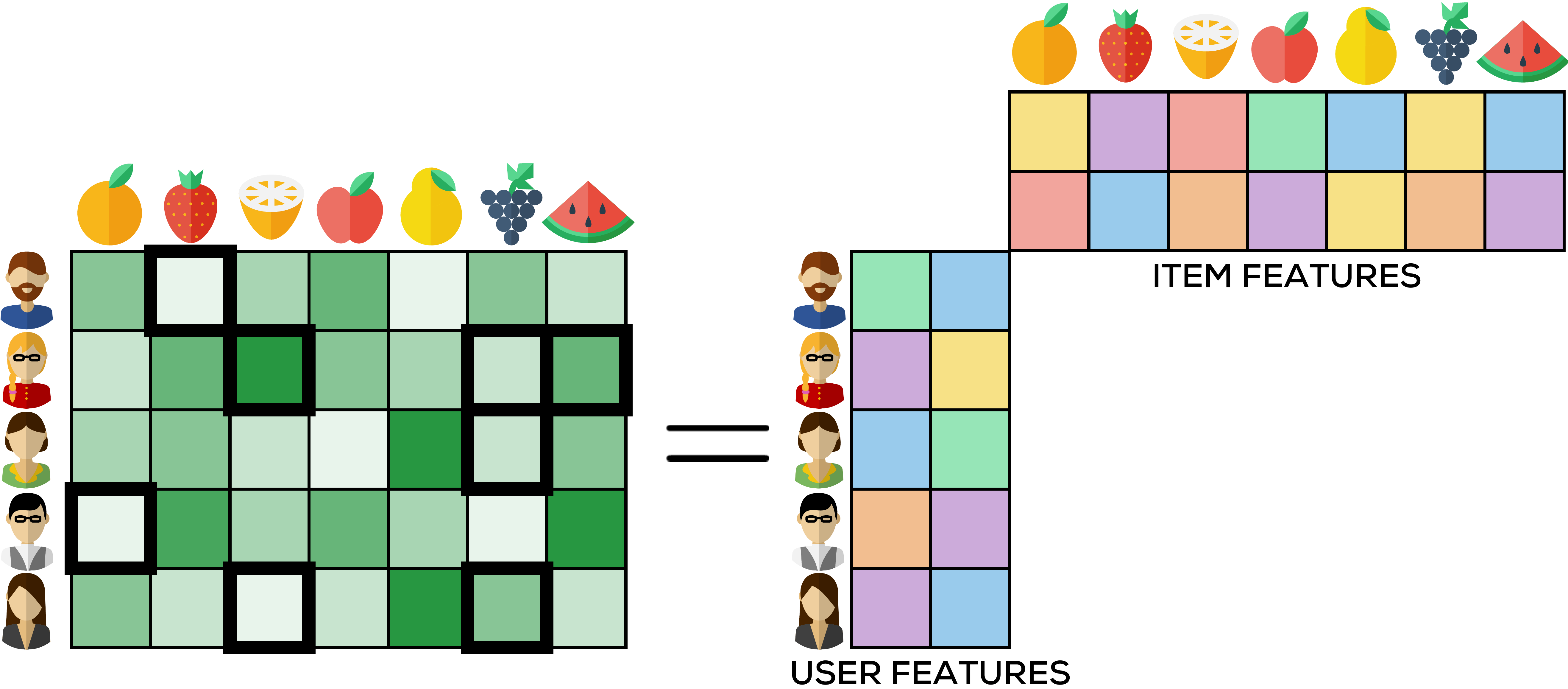}
\caption[Matrix Completion for Recommendation Systems]{Only the entries of the ratings matrix with thick borders are observed. Notice that users rate infrequently and some items are not rated even once. Non-convex optimization techniques such as low-rank matrix completion can help recover the unobserved entries, as well as reveal hidden features that are descriptive of user and item properties, as shown on the right hand side.}%
\label{fig:cf}
\end{figure}

If we denote by $\Omega \subset [m] \times [n]$, the set of observed entries of $A$, then the low rank matrix completion problem can be written as
\begin{align*}
\hat A_\text{lr} = \underset{X \in \R^{m \times n}}{\arg\min}\ & \sum_{(i,j) \in \Omega}\br{X_{ij} - A_{ij}}^2\\
\text{s.t.}\ & \rank(X) \leq r,
\end{align*}

This formulation also has a convex objective but a non-convex rank constraint\elink{exer:pgd-nc-rank}. This problem can be shown to be NP-hard as well. Interestingly, we can arrive at an alternate formulation by imposing the low-rank constraint indirectly. It turns out that\elink{exer:pgd-low-rank} assuming the ratings matrix to have rank at most $r$ is equivalent to assuming that the matrix $A$ can be written as $A = UV^\top$ with the matrices $U \in \bR^{m \times r}$ and $V \in \bR^{n \times r}$ having at most $r$ columns. This leads us to the following alternate formulation
\[
\hat A_\text{lv} = \underset{\substack{U \in \R^{m \times r}\\V \in \R^{n \times r}}}{\arg\min}\ \sum_{(i,j) \in \Omega}\br{U_i^\top V_j - A_{ij}}^2.
\]
There are no constraints in the formulation. However, the formulation requires joint optimization over a pair of variables $(U,V)$ instead of a single variable. More importantly, it can be shown\elink{exer:altmin-marg-conv} that the objective function is non-convex in $(U,V)$.

It is curious to note that the matrices $U$ and $V$ can be seen as encoding $r$-dimensional descriptions of users and items respectively. More precisely, for every user $i \in [m]$, we can think of the vector $U^i \in \bR^r$ (i.e., the $i$-th row of the matrix $U$) as describing user $i$, and for every item $j \in [n]$, use the row vector $V^j \in \bR^r$ to describe the item $j$ in vectoral form. The rating given by user $i$ to item $j$ can now be seen to be $A_{ij} \approx \ip{U^i}{V^j}$. Thus, recovering the rank $r$ matrix $A$ also gives us a bunch of $r$-dimensional latent vectors describing the users and items. These latent vectors can be extremely valuable in themselves as they can help us in understanding user behavior and item popularity, as well as be used in ``content''-based recommendation systems which can effectively utilize item and user features.\\

The above examples, and several others from machine learning, such as low-rank tensor decomposition, training deep networks, and training structured models, demonstrate the utility of non-convex optimization in naturally modeling learning tasks. However, most of these formulations are NP-hard to solve exactly, and sometimes even approximately. In the following discussion, we will briefly introduce a few approaches, classical as well as contemporary, that are used in solving such non-convex optimization problems.

\section{The Convex Relaxation Approach}
Faced with the challenge of non-convexity, and the associated NP-hardness, a traditional workaround in literature has been to modify the problem formulation itself so that existing tools can be readily applied. This is often done by \emph{relaxing} the problem so that it becomes a convex optimization problem. Since this allows familiar algorithmic techniques to be applied, the so-called \emph{convex relaxation} approach has been widely studied. For instance, there exist relaxed, convex problem formulations for both the recommendation system and the sparse regression problems. For sparse linear regression, the relaxation approach gives us the popular LASSO formulation.

Now, in general, such modifications change the problem drastically, and the solutions of the relaxed formulation can be poor solutions to the original problem. However, it is known that if the problem possesses certain nice structure, then under careful relaxation, these distortions, formally referred to as a``relaxation gap'', are absent, i.e., solutions to the relaxed problem would be optimal for the original non-convex problem as well.

Although a popular and successful approach, this still has limitations, the most prominent of them being scalability. Although the relaxed convex optimization problems are solvable in polynomial time, it is often challenging to solve them \emph{efficiently} for large-scale problems.

\section{The Non-Convex Optimization Approach}
Interestingly, in recent years, a new wisdom has permeated the fields of machine learning and signal processing, one that advises not to relax the non-convex problems and instead solve them directly. This approach has often been dubbed the \emph{non-convex optimization} approach owing to its goal of optimizing non-convex formulations directly.

Techniques frequently used in non-convex optimization approaches include simple and efficient primitives such as projected gradient descent, alternating minimization, the expectation-maximization algorithm, stochastic optimization, and variants thereof. These are very fast in practice and remain favorites of practitioners.

\begin{figure}[t]
\begin{subfigure}[t]{.5\columnwidth}
\centering \includegraphics[width=\columnwidth]{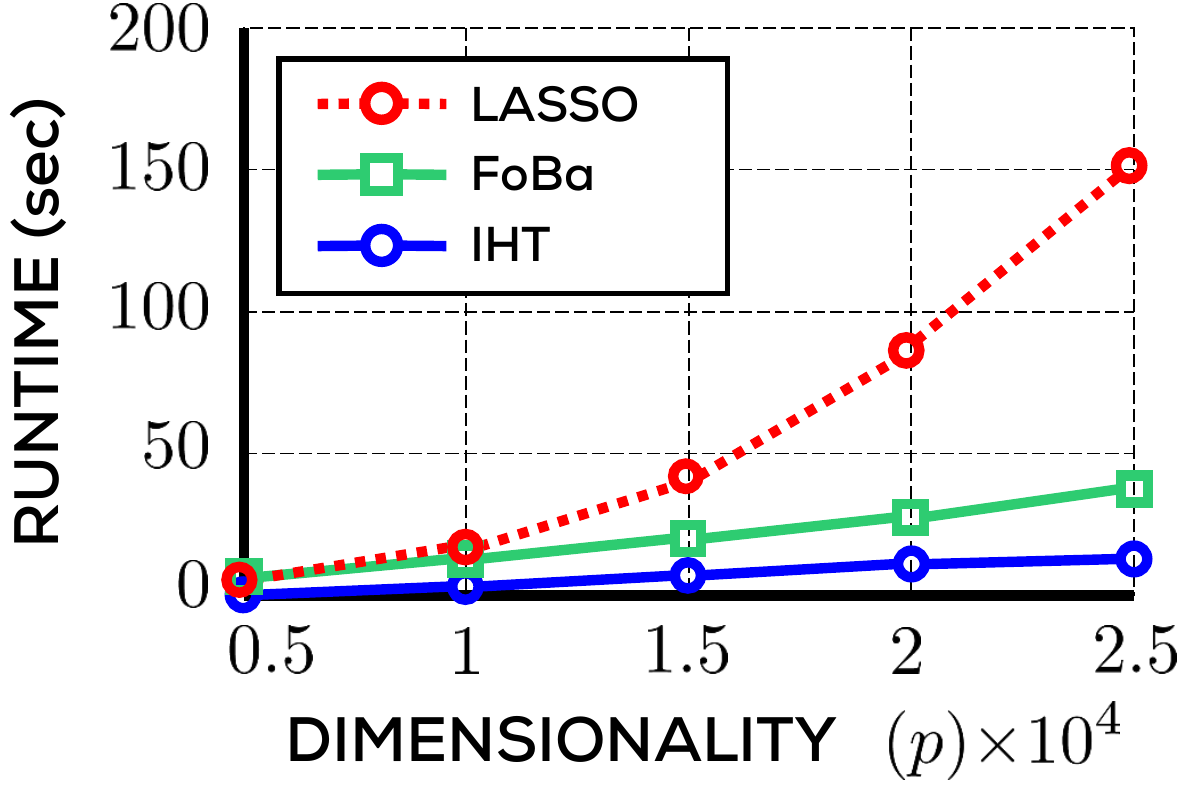}
\caption{Sparse Recovery (\S~\ref{chap:spreg})}
\label{fig:intro-comparison-spreg}
\end{subfigure}
\hfill
\begin{subfigure}[t]{.5\columnwidth}
\centering \includegraphics[width=\columnwidth]{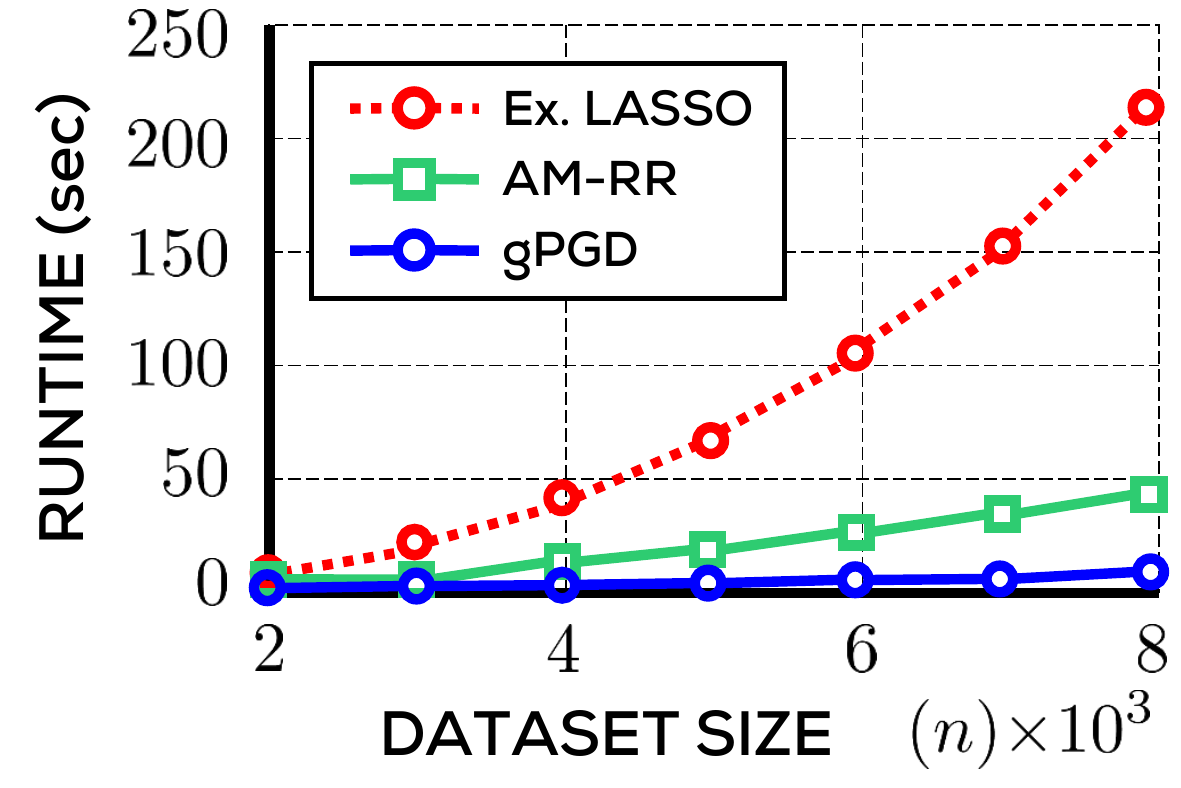}
\caption{Robust Regression (\S~\ref{chap:rreg})}
\label{fig:intro-comparison-rreg}
\end{subfigure}\\
\begin{subfigure}[t]{.5\columnwidth}
\centering \includegraphics[width=\columnwidth]{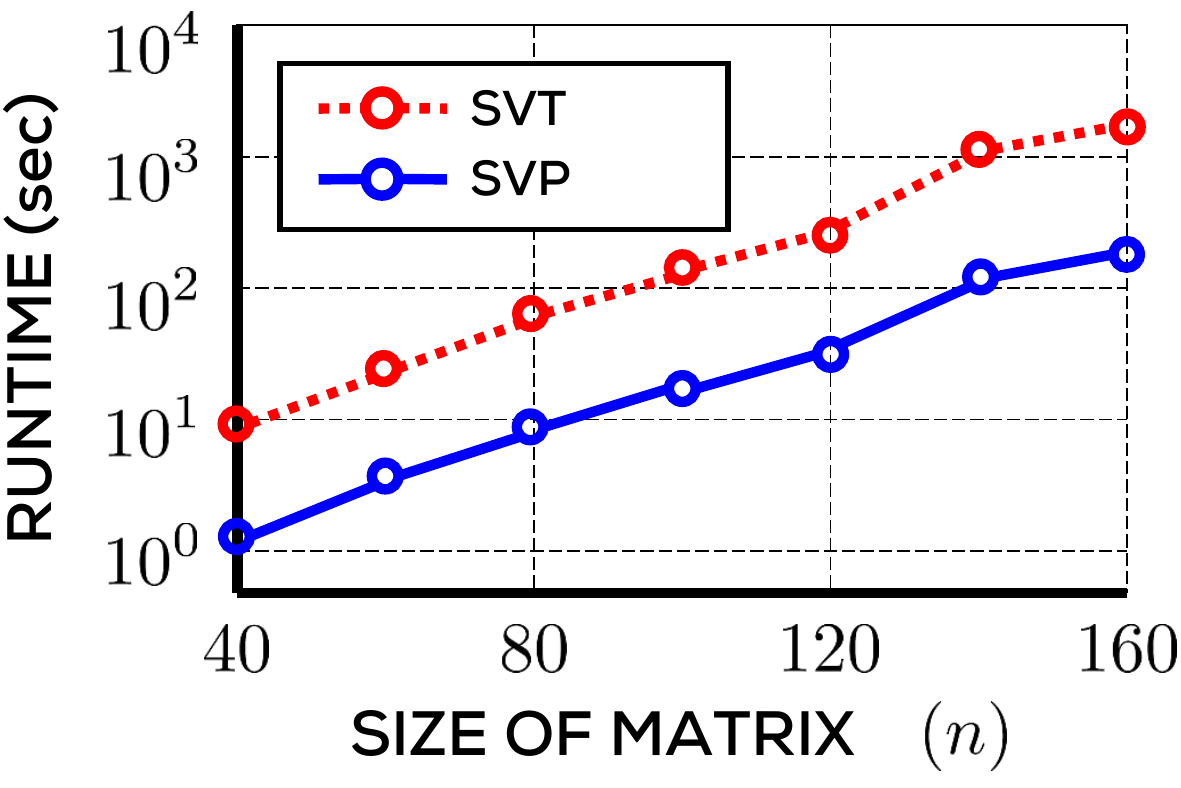}
\caption{Matrix Recovery (\S~\ref{chap:matrec})}
\label{fig:intro-comparison-matrec}
\end{subfigure}
\hfill
\begin{subfigure}[t]{.5\columnwidth}
\centering \includegraphics[width=\columnwidth]{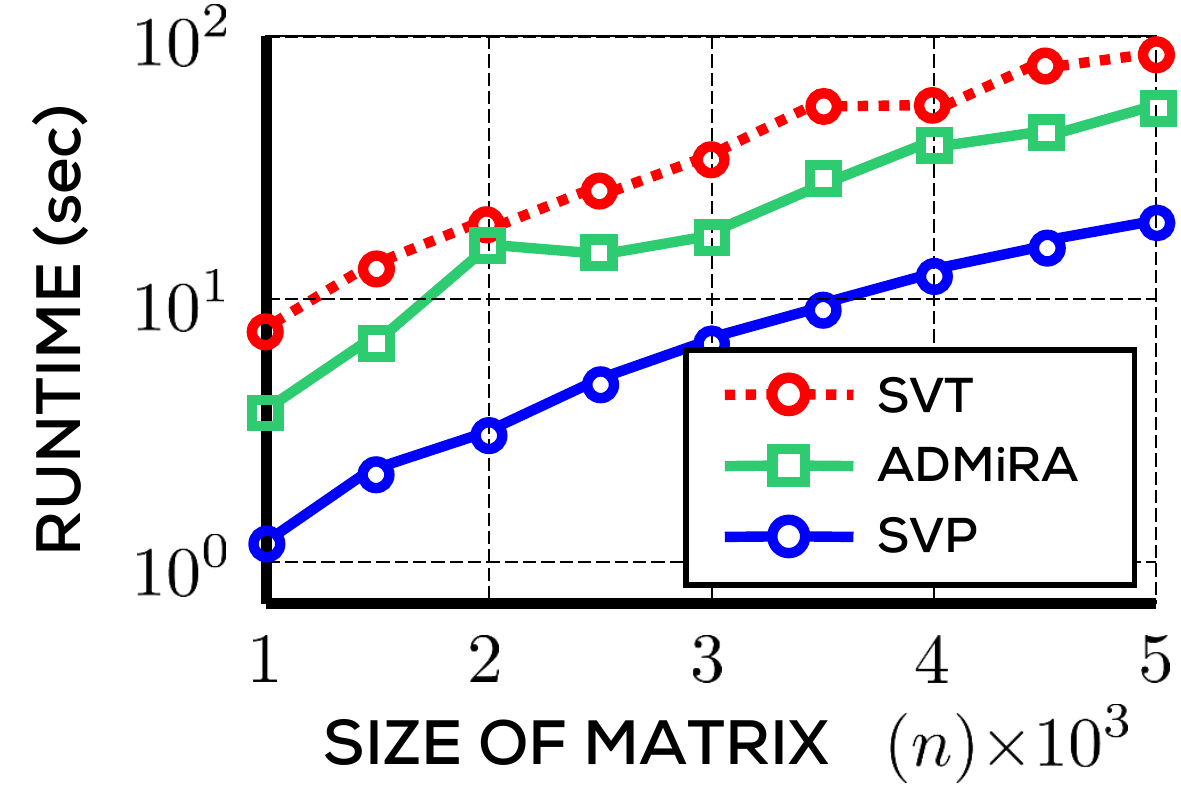}
\caption{Matrix Completion (\S~\ref{chap:matrec})}
\label{fig:intro-comparison-matcomp}
\end{subfigure}%
\caption[Relaxation vs. Non-convex Optimization Methods]{An empirical comparison of run-times offered by various approaches to four different non-convex optimization problems. LASSO, extended LASSO, SVT are relaxation-based methods whereas IHT, gPGD, FoBa, AM-RR, SVP, ADMiRA are non-convex methods. In all cases, non-convex optimization techniques offer routines that are faster, often by an order of magnitude or more, than relaxation-based methods. Note that Figures~\ref{fig:intro-comparison-matrec} and \ref{fig:intro-comparison-matcomp}, employ a $y$-axis at logarithmic scale. The details of the methods are present in the sections linked with the respective figures.}%
\label{fig:intro-comparison}
\end{figure}

At first glance, however, these efforts seem doomed to fail, given to the aforementioned NP-hardness results. However, in a series of deep and illuminating results, it has been repeatedly revealed that if the problem possesses nice structure, then not only do relaxation approaches succeed, but non-convex optimization algorithms do too. In such nice cases, non-convex approaches are able to only avoid NP-hardness, but actually offer provably optimal solutions. In fact, in practice, they often handsomely outperform relaxation-based approaches in terms of speed and scalability. Figure~\ref{fig:intro-comparison} illustrates this for some applications that we will investigate more deeply in later sections.

Very interestingly, it turns out that problem structures that allow non-convex approaches to avoid NP-hardness results, are very similar to those that allow their convex relaxation counterparts to avoid distortions and a large relaxation gap! Thus, it seems that if the problems possess nice structure, convex relaxation-based approaches, as well as non-convex techniques, both succeed. However, non-convex techniques usually offer more scalable solutions.

\section{Organization and Scope}
Our goal of this monograph is to present basic tools, both algorithmic and analytic, that are commonly used in the design and analysis of non-convex optimization algorithms, as well as present results which best represent the non-convex optimization philosophy. The presentation should enthuse, as well as equip, the interested reader and allow further readings, independent investigations, and applications of these techniques in diverse areas.

Given this broad aim, we shall appropriately restrict the number of areas we cover in this monograph, as well as the depth in which we cover each area. For instance, the literature abounds in results that seek to perform optimizations with more and more complex structures being imposed - from sparse recovery to low rank matrix recovery to low rank tensor recovery. However, we shall restrict ourselves from venturing too far into these progressions. Similarly, within the problem of sparse recovery, there exist results for recovery in the simple least squares setting, the more involved setting of sparse M-estimation, as well as the still more involved setting of sparse M-estimation in the presence of outliers. Whereas we will cover sparse least squares estimation in depth, we will refrain from delving too deeply into the more involved sparse M-estimation problems.

That being said, the entire presentation will be self contained and accessible to anyone with a basic background in algebra and probability theory. Moreover, the bibliographic notes given at the end of the sections will give pointers that should enable the reader to explore the state of the art not covered in this monograph.
\chapter{Mathematical Tools}
\label{chap:tools}

This section will introduce concepts, algorithmic tools, and analysis techniques used in the design and analysis of optimization algorithms. It will also explore simple convex optimization problems which will serve as a warm-up exercise.

\section{Convex Analysis}
We recall some basic definitions in convex analysis. Studying these will help us appreciate the structural properties of non-convex optimization problems later in the monograph. For the sake of simplicity, unless stated otherwise, we will assume that functions are continuously differentiable. We begin with the notion of a convex combination.

\begin{definition}[Convex Combination]
A convex combination of a set of $n$ vectors $\x_i \in \bR^p$, $i = 1 \ldots n$ in an arbitrary real space is a vector $\x_{\vtheta} := \sum_{i=1}^n\theta_i\x_i$ where $\vtheta = \br{\theta_1,\theta_2,\ldots,\theta_n}$, $\theta_i \geq 0$ and $\sum_{i=1}^n\theta_i = 1$.
\end{definition}

A set that is closed under arbitrary convex combinations is a convex set. A standard definition is given below. Geometrically speaking, convex sets are those that contain all line segments that join two points inside the set. As a result, they cannot have any inward ``bulges''.

\begin{figure}
\includegraphics[width=\columnwidth]{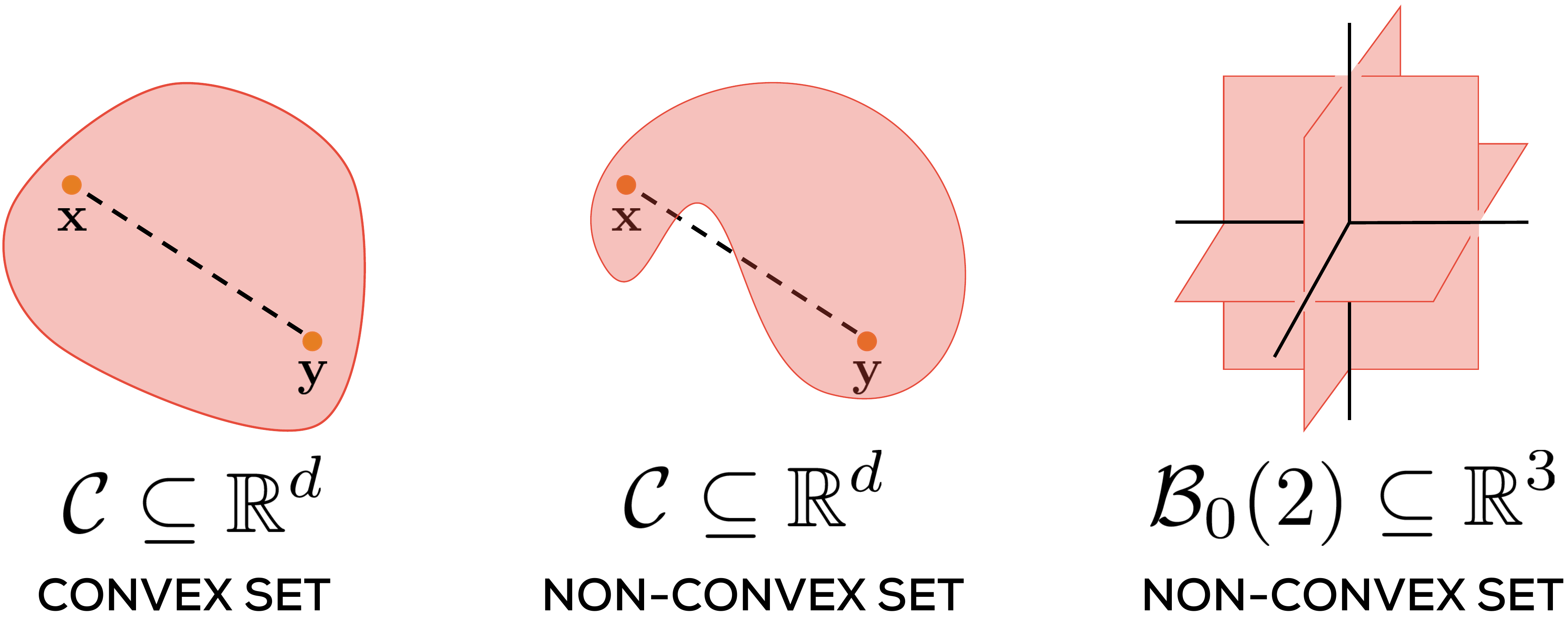}
\caption[Convex and Non-convex Sets]{A convex set is closed under convex combinations. The presence of even a single uncontained convex combination makes a set non-convex. Thus, a convex set cannot have inward ``bulges''. In particular, the set of sparse vectors is non-convex.}%
\label{fig:cvxset}
\end{figure}

\begin{definition}[Convex Set]
A set $\cC \in \R^p$ is considered convex if, for every $\x,\y \in \cC$ and $\lambda \in [0,1]$, we have $(1-\lambda)\cdot\x + \lambda\cdot\y \in \cC$ as well.
\end{definition}

Figure~\ref{fig:cvxset} gives visual representations of prototypical convex and non-convex sets. A related notion is that of convex functions which have a unique behavior under convex combinations. There are several definitions of convex functions, those that are more basic and general, as well as those that are restrictive but easier to use. One of the simplest definitions of convex functions, one that does not involve notions of derivatives, defines convex functions $f : \R^p \rightarrow \R$ as those for which, for every $\x,\y \in \R^p$ and every $\lambda \in [0,1]$, we have $f((1-\lambda)\cdot\x + \lambda\cdot\y) \leq (1-\lambda)\cdot f(\x) + \lambda\cdot f(\y)$. For continuously differentiable functions, a more usable definition follows.

\begin{definition}[Convex Function]
\label{defn:cvx-fn}
A continuously differentiable function $f: \R^p \rightarrow \R$ is considered convex if for every $\x,\y \in \R^p$ we have $f(\y) \geq f(\x) + \ip{\nabla f(\x)}{\y - \x}$, where $\nabla f(\x)$ is the gradient of $f$ at $\x$.
\end{definition}

A more general definition that extends to non-differentiable functions uses the notion of \emph{subgradient} to replace the gradient in the above expression. A special class of convex functions is the class of \emph{strongly convex} and \emph{strongly smooth} functions. These are critical to the study of algorithms for non-convex optimization. Figure~\ref{fig:cvxfn} provides a handy visual representation of these classes of functions.

\begin{figure}
\includegraphics[width=\columnwidth]{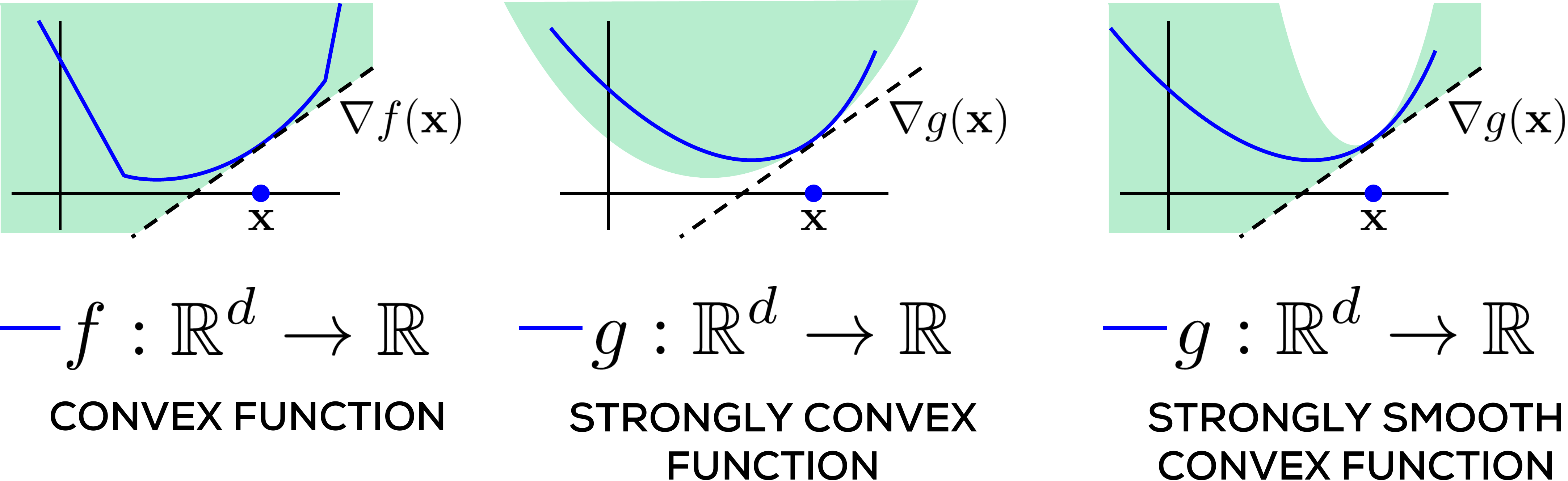}
\caption[Convex, Strongly Convex and Strongly Smooth Functions]{A convex function is lower bounded by its own tangent at all points. Strongly convex and smooth functions are, respectively, lower and upper bounded in the rate at which they may grow, by quadratic functions and cannot, again respectively, grow too slowly or too fast. In each figure, the shaded area describes regions the function curve is permitted to pass through.}%
\label{fig:cvxfn}
\end{figure}

\begin{definition}[Strongly Convex/Smooth Function]
\label{defn:strong-cvx-smooth-fn}
A continuously  differentiable function $f: \R^p \rightarrow \R$ is considered $\alpha$-strongly convex (SC) and $\beta$-strongly smooth (SS) if for every $\x,\y \in \R^p$, we have
\[
\frac{\alpha}{2}\norm{\x-\y}_2^2 \leq f(\y) - f(\x) - \ip{\nabla f(\x)}{\y - \x} \leq \frac{\beta}{2}\norm{\x-\y}_2^2.
\]
\end{definition}

It is useful to note that strong convexity places a quadratic lower bound on the growth of the function at every point -- the function must rise up at least as fast as a quadratic function. How fast it rises is characterized by the SC parameter $\alpha$. Strong smoothness similarly places a quadratic upper bound and does not let the function grow too fast, with the SS parameter $\beta$ dictating the upper limit.

We will soon see that these two properties are extremely useful in forcing optimization algorithms to rapidly converge to optima. Note that whereas strongly convex functions are definitely convex, strong smoothness does not imply convexity\elink{exer:tools-ss-cvx}. Strongly smooth functions may very well be non-convex. A property similar to strong smoothness is that of Lipschitzness which we define below.

\begin{definition}[Lipschitz Function]
\label{defn:lip}
A function $f: \R^p \rightarrow \R$ is $B$-Lipschitz if for every $\x,\y \in \R^p$, we have
\[
\abs{f(\x) - f(\y)} \leq B\cdot\norm{\x - \y}_2.
\]
\end{definition}

Notice that Lipschitzness places a upper bound on the growth of the function that is linear in the perturbation i.e., $\norm{\x-\y}_2$, whereas strong smoothness (SS) places a quadratic upper bound. Also notice that Lipschitz functions need not be differentiable. However, differentiable functions with bounded gradients are always Lipschitz\elink{exer:tools-diff-lip}. Finally, an important property that generalizes the behavior of convex functions on convex combinations is the Jensen's inequality.

\begin{lemma}[Jensen's Inequality]
\label{lem:jensen's}
If $X$ is a random variable taking values in the domain of a convex function $f$, then $\E{f(X)} \geq f(\E{X})$
\end{lemma}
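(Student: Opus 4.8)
The plan is to reduce Jensen's inequality to the first-order characterization of convexity from Definition~\ref{defn:cvx-fn}, which states that a convex function never dips below any of its tangent hyperplanes. The whole proof is then just ``apply the tangent bound at the mean, then integrate.'' Concretely, I would set $\mu := \E{X}$ and exploit the fact that, because $X$ is supported in the (convex) domain of $f$, the point $\mu$ lies in that domain as well, so that both $f(\mu)$ and the gradient $\nabla f(\mu)$ are well-defined and I have a genuine supporting tangent at $\mu$.

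First I would instantiate Definition~\ref{defn:cvx-fn} with anchor point $\mu$ and variable point $X$, obtaining the pointwise (random) inequality
\[
f(X) \geq f(\mu) + \ip{\nabla f(\mu)}{X - \mu},
\]
which holds for every realization of $X$. The geometric content is exactly the ``convex function lies above its own tangent'' picture of Figure~\ref{fig:cvxfn}: the right-hand side is the affine function whose graph is tangent to $f$ at $\mu$. Next I would take expectations of both sides. Monotonicity of expectation turns the left-hand side into $\E{f(X)}$, while linearity of expectation turns the right-hand side into $f(\mu) + \ip{\nabla f(\mu)}{\E{X} - \mu}$. Since $\E{X} - \mu = \vzero$ by the choice of $\mu$, the inner-product term vanishes identically, leaving $\E{f(X)} \geq f(\mu) = f(\E{X})$, which is the claim.

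The algebra here is entirely routine, so the only real care is measure-theoretic bookkeeping: I would make sure $\E{X}$ is finite (so that $\mu$ is a legitimate point of the domain) and that $\E{f(X)}$ exists, and then justify integrating the pointwise bound using monotonicity of the expectation operator. This is the step I would flag as the main obstacle, though it is mild — it is where the hypothesis ``$X$ takes values in the domain of $f$'' does its work, guaranteeing that the tangent at $\mu$ is available. I would also note, as a remark, that the continuity/differentiability assumption in Definition~\ref{defn:cvx-fn} is not essential: for a general convex $f$ one replaces $\nabla f(\mu)$ by any subgradient at $\mu$ (a supporting hyperplane, which exists on the interior of the domain), and the identical two-line argument goes through.
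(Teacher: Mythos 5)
The paper states Lemma~\ref{lem:jensen's} without proof, so there is nothing to compare against; on its own terms your argument is correct and is the standard one. You instantiate the first-order characterization of convexity (Definition~\ref{defn:cvx-fn}) at the anchor $\mu = \E{X}$ to get the pointwise tangent bound $f(X) \geq f(\mu) + \ip{\nabla f(\mu)}{X - \mu}$, take expectations, and observe that the linear term vanishes — exactly the supporting-hyperplane proof, and since the paper's convex functions are defined on all of $\R^p$, the point $\mu$ automatically lies in the domain. Your measure-theoretic caveats and the remark that a subgradient suffices in the non-differentiable case are both appropriate and match the paper's own aside that subgradients replace gradients in the general definition.
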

This property will be useful while analyzing iterative algorithms.

\section{Convex Projections}
The projected gradient descent technique is a popular method for constrained optimization problems, both convex as well as non-convex. The \emph{projection} step plays an important role in this technique. Given any closed set $\cC \subset \R^p$, the projection operator $\Pi_\cC(\cdot)$ is defined as
\[
\Pi_\cC(\z) := \underset{\x\in\cC}{\argmin}\ \norm{\x-\z}_2.
\]
In general, one need not use only the $L^2$-norm in defining projections but is the most commonly used one. If $\cC$ is a convex set, then the above problem reduces to a convex optimization problem. In several useful cases, one has access to a closed form solution for the projection.

\begin{figure}
\includegraphics[width=\columnwidth]{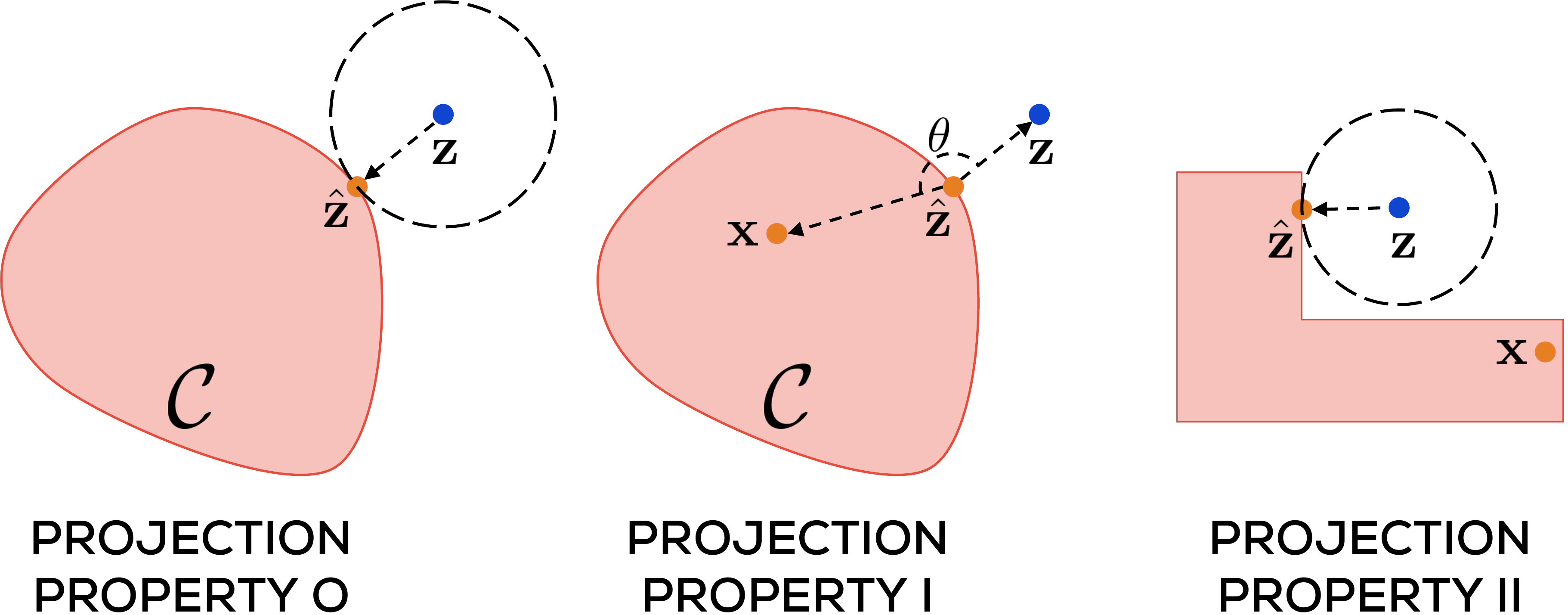}
\caption[Convex Projections and their Properties]{A depiction of projection operators and their properties. Projections reveal a closest point in the set being projected onto. For convex sets, projection property I ensures that the angle $\theta$ is always non-acute. Sets that satisfy projection property I also satisfy projection property II. Projection property II may be violated by non-convex sets. Projecting onto them may take the projected point $\vz$ closer to certain points in the set (for example, $\hat\vz$) but farther from others (for example, $\vx$).}%
\label{fig:proj}
\end{figure}

For instance, if $\cC = \B_2(1)$ i.e., the unit $L_2$ ball, then projection is equivalent\elink{exer:tools-proj-l2} to a normalization step
\[
\Pi_{\B_2(1)}(\z) = \begin{cases}\z/\norm{\z}_2 &\mbox{if } \norm{\z} > 1\\ \z & \mbox{otherwise}\end{cases}.
\]
For the case $\cC = \B_1(1)$, the projection step reduces to the popular \emph{soft thresholding} operation. If $\hat\z := \Pi_{\B_1(1)}(\z)$, then $\hat\z_i = \max\bc{\z_i - \theta, 0}$, where $\theta$ is a threshold that can be decided by a sorting operation on the vector \citep[see][for details]{DuchiS-SSC2008}.

Projections onto convex sets have some very useful properties which come in handy while analyzing optimization algorithms. In the following, we will study three properties of projections. These are depicted visually in Figure~\ref{fig:proj} to help the reader gain an intuitive appeal.

\begin{lemma}[Projection Property-O]
\label{lem:proj-prop-o}
For any set (convex or not) $\cC \subset \R^p$ and $\z \in \R^p$, let $\hat\z := \Pi_\cC(\z)$. Then for all $\x \in \cC$, $\norm{\hat\z - \z}_2 \leq \norm{\x - \z}_2$.
\end{lemma}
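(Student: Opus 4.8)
The plan is to derive the inequality directly from the definition of the projection operator, since Property-O is essentially a restatement of what it means for $\hat\z$ to be a projection. Recall that $\Pi_\cC(\z)$ is defined as a minimizer of the map $\x \mapsto \norm{\x - \z}_2$ over $\x \in \cC$. Thus, once $\hat\z$ is known to be such a minimizer, the desired bound $\norm{\hat\z - \z}_2 \leq \norm{\x - \z}_2$ for every $\x \in \cC$ is nothing more than the defining optimality of $\hat\z$.

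The one point that genuinely requires care is the well-definedness of $\hat\z$, i.e., that the minimum is actually attained rather than merely approached; this is where the closedness hypothesis on $\cC$ enters. First I would fix an arbitrary reference point $\x_0 \in \cC$ and set $R := \norm{\x_0 - \z}_2$. Any candidate minimizer may be restricted to the set $\cC \cap \bc{\x : \norm{\x - \z}_2 \leq R}$, since points outside this ball are strictly farther from $\z$ than $\x_0$ is and hence cannot minimize the distance. Because $\cC$ is closed and the closed ball is compact, this intersection is compact, and it is nonempty as it contains $\x_0$. The objective $\x \mapsto \norm{\x - \z}_2$ is continuous, so by the Weierstrass extreme value theorem it attains its minimum on this compact set at some point $\hat\z$, which is then a global minimizer over all of $\cC$.

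With existence secured, the conclusion is immediate: $\hat\z \in \cC$ minimizes $\norm{\,\cdot - \z}_2$ over $\cC$, so for every $\x \in \cC$ we have $\norm{\hat\z - \z}_2 \leq \norm{\x - \z}_2$ by the definition of a minimizer.

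I expect no substantive obstacle here, and the only hypothesis doing any work is closedness, which merely guarantees the projection is well-defined. Indeed, the result holds for \emph{arbitrary} sets precisely because convexity is never invoked. It is worth emphasizing that the labelling ``Property-O'' (as opposed to the later Properties I and II) signals exactly this: it is the baseline fact true even for non-convex $\cC$, while the interesting, convexity-dependent geometry only surfaces in the subsequent properties.
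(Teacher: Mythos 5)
Your proof is correct and takes the same route as the paper, which simply observes that the inequality is the defining optimality of $\hat\z$ as a minimizer of $\x \mapsto \norm{\x - \z}_2$ over $\cC$. Your additional Weierstrass argument for existence of the minimizer on a closed set is a sound (and welcome) piece of rigor that the paper omits, but it does not change the substance of the argument.
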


This property follows by simply observing that the projection step solves the the optimization problem $\min_{\x\in\cC}\norm{\x-\z}_2$. Note that this property holds for all sets, whether convex or not. However, the following two properties necessarily hold only for convex sets.

\begin{lemma}[Projection Property-I]
\label{lem:proj-prop-1}
For any convex set $\cC \subset \R^p$ and any $\z \in \R^p$, let $\hat\z := \Pi_\cC(\z)$. Then for all $\x \in \cC$, $\ip{\x - \hat\z}{\z - \hat\z} \leq 0$.
\end{lemma}
\begin{proof}

To prove this, assume the contra-positive. Suppose for some $\x \in \cC$, we have $\ip{\x - \hat\z}{\z - \hat\z} > 0$. Now, since $\cC$ is convex and $\hat\z,\x \in \cC$, for any $\lambda \in [0,1]$, we have $\x_\lambda := \lambda\cdot\x + (1 - \lambda)\cdot\hat\z \in \cC$. We will now show that for some value of $\lambda \in [0,1]$, it must be the case that $\norm{\z - \x_\lambda}_2 < \norm{\z - \hat\z}_2$. This will contradict the fact that $\hat\z$ is the closest point in the convex set to $\z$ and prove the lemma. All that remains to be done is to find such a value of $\lambda$. The reader can verify that any value of $0 < \lambda < \min\bc{1, \frac{2\ip{\x-\hat\z}{\z-\hat\z}}{\norm{\x-\hat\z}_2^2}}$ suffices. Since we assumed $\ip{\x - \hat\z}{\z - \hat\z} > 0$, any value of $\lambda$ chosen this way is always in $(0,1]$.
\end{proof}
Projection Property-I can be used to prove a very useful \emph{contraction property} for convex projections. In some sense, a convex projection brings a point closer to \emph{all} points in the convex set simultaneously.
\begin{lemma}[Projection Property-II]
\label{lem:proj-prop-2}
For any convex set $\cC \subset \R^p$ and any $\z \in \R^p$, let $\hat\z := \Pi_\cC(\z)$. Then for all $\x \in \cC$, $\norm{\hat\z - \x}_2 \leq \norm{\z - \x}_2$.
\end{lemma}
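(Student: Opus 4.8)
The plan is to obtain Property-II as an immediate consequence of Projection Property-I (Lemma~\ref{lem:proj-prop-1}) via a single expansion of the squared Euclidean norm. Since both $\norm{\hat\z - \x}_2$ and $\norm{\z - \x}_2$ are nonnegative, it suffices to compare their squares; this conveniently linearizes the problem and lets the inner-product structure, on which Property-I is phrased, come into play.

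First I would split the vector $\z - \x$ through the projected point $\hat\z$, writing $\z - \x = (\z - \hat\z) + (\hat\z - \x)$, and expand the square:
\[
\norm{\z - \x}_2^2 = \norm{\z - \hat\z}_2^2 + 2\ip{\z - \hat\z}{\hat\z - \x} + \norm{\hat\z - \x}_2^2.
\]
Rearranging isolates exactly the difference we wish to bound:
\[
\norm{\z - \x}_2^2 - \norm{\hat\z - \x}_2^2 = \norm{\z - \hat\z}_2^2 + 2\ip{\z - \hat\z}{\hat\z - \x}.
\]
The first term on the right is manifestly nonnegative, so everything reduces to controlling the sign of the cross term.

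The key step is to invoke Projection Property-I, which asserts $\ip{\x - \hat\z}{\z - \hat\z} \leq 0$ for every $\x \in \cC$; negating the first factor gives $\ip{\hat\z - \x}{\z - \hat\z} \geq 0$, which by symmetry of the inner product is exactly the cross term appearing above. Hence the right-hand side is a sum of two nonnegative quantities, so $\norm{\z - \x}_2^2 - \norm{\hat\z - \x}_2^2 \geq 0$. Taking square roots, an order-preserving operation on nonnegative reals, yields $\norm{\hat\z - \x}_2 \leq \norm{\z - \x}_2$, as desired.

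There is no genuine obstacle here; the only point demanding care is bookkeeping on signs, since Property-I is stated with the factor $\x - \hat\z$ whereas the norm expansion naturally produces $\hat\z - \x$, and a single sign flip reconciles the two. It is worth emphasizing that convexity enters \emph{only} through Property-I, as the expansion itself is purely algebraic; this makes transparent why the contraction can fail for non-convex $\cC$, precisely as Figure~\ref{fig:proj} illustrates.
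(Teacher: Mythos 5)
Your proof is correct and is essentially the paper's own argument: the same decomposition of $\z - \x$ through $\hat\z$, the same expansion of the squared norm, and the same invocation of Projection Property-I to show the cross term is nonnegative. The sign bookkeeping you describe is handled correctly, so nothing further is needed.
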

\begin{proof}
We have the following elementary inequalities
\begin{align*}
\norm{\z - \x}_2^2 &= \norm{(\hat\z - \x) - (\hat\z - \z)}_2^2\\
									 &= \norm{\hat\z - \x}_2^2 + \norm{\hat\z - \z}_2^2 - 2\ip{\hat\z - \x}{\hat\z - \z}\\
									 &\geq \norm{\hat\z - \x}_2^2 + \norm{\hat\z - \z}_2^2 \tag*{(Projection Property-I)}\\
									 &\geq \norm{\hat\z - \x}_2^2\tag*{\qedhere}
\end{align*}
\end{proof}
Note that Projection Properties-I and II are also called \emph{first order} properties and can be violated if the underlying set is non-convex. However, Projection Property-O, often called a \emph{zeroth order} property, always holds, whether the underlying set is convex or not.

\section{Projected Gradient Descent}
We now move on to study the projected gradient descent algorithm. This is an extremely simple and efficient technique that can effortlessly scale to large problems. Although we will apply this technique to non-convex optimization tasks later, we first look at its behavior on convex optimization problems as a warm up exercise. We warn the reader that the proof techniques used in the convex case do not apply directly to non-convex problems. Consider the following optimization problem:
\begin{equation}
	\label{eq:cons-cvx-opt}\tag*{(CVX-OPT)}
	\begin{split}
		\min_{\vx \in \bR^p}\ & f(\vx) \\
		\text{s.t.}\ & \vx \in \cC.
	\end{split}
\end{equation}
In the above optimization problem, $\cC \subset \R^p$ is a convex constraint set and $f: \R^p \rightarrow \R$ is a convex objective function. We will assume that we have oracle access to the gradient and projection operators, i.e., for any point $\x \in \R^p$ we are able to access $\nabla f(\x)$ and $\Pi_\cC(\x)$.
\begin{algorithm}[t]
	\caption{Projected Gradient Descent (PGD)}
	\label{algo:pgd}
	\begin{algorithmic}[1]
		{
			\REQUIRE Convex objective $f$, convex constraint set $\cC$, step lengths $\eta_t$
			\ENSURE A point $\hat\x \in \cC$ with near-optimal objective value
			\STATE $\x^1 \leftarrow \vzero$
			\FOR{$t = 1, 2, \ldots, T$}
				\STATE $\z^{t+1} \leftarrow \x^t - \eta_t\cdot\nabla f(\x^t)$
				\STATE $\x^{t+1} \leftarrow \Pi_\cC(\z^{t+1})$
			\ENDFOR
			\STATE (OPTION 1) \textbf{return} {$\hat\x_{\text{final}} = \x^T$}
			\STATE (OPTION 2) \textbf{return} {$\hat\x_{\text{avg}} = (\sum_{t=1}^T \x^t)/T$}
			\STATE (OPTION 3) \textbf{return} {$\hat\x_{\text{best}} = \arg\min_{t\in[T]} f(\x^t)$}
		}
	\end{algorithmic}
\end{algorithm}			

The projected gradient descent algorithm is stated in Algorithm~\ref{algo:pgd}. The procedure generates iterates $\x^t$ by taking steps guided by the gradient in an effort to reduce the function value locally. Finally it returns either the final iterate, the average iterate, or the best iterate.

\section{Convergence Guarantees for PGD}
We will analyze PGD for objective functions that are either a) convex with bounded gradients, or b) strongly convex and strongly smooth. Let $f^\ast = \min_{\x \in \cC}\ f(\x)$ be the optimal value of the optimization problem. A point $\hat\x \in \cC$ will be said to be an $\epsilon$-optimal solution if $f(\hat\x) \leq f^\ast + \epsilon$.

\subsection{Convergence with Bounded Gradient Convex Functions}
Consider a convex objective function $f$ with bounded gradients over a convex constraint set $\cC$ i.e., $\norm{f(\x)}_2 \leq G$ for all $\x \in \cC$.

\begin{theorem}
\label{thm:pgd-conv-proof}
Let $f$ be a convex objective with bounded gradients and Algorithm~\ref{algo:pgd} be executed for $T$ time steps with step lengths $\eta_t = \eta = \frac{1}{\sqrt T}$. Then, for any $\epsilon > 0$, if $T = \bigO{\frac{1}{\epsilon^2}}$, then $\frac{1}{T}\sum_{t=1}^Tf(\x^t) \leq f^\ast + \epsilon$.
\end{theorem}

We see that the PGD algorithm in this setting ensures that the function value of the iterates approaches $f^\ast$ \emph{on an average}. We can use this result to prove the convergence of the PGD algorithm. If we use OPTION 3, i.e., $\hat\x_{\text{best}}$, then since by construction, we have $f(\hat\x_{\text{best}}) \leq f(\x^t)$ for all $t$, by applying Theorem~\ref{thm:pgd-conv-proof}, we get
\[
f(\hat\x_{\text{best}}) \leq \frac{1}{T}\sum_{t=1}^Tf(\x^t) \leq f^\ast + \epsilon,
\]
If we use OPTION 2, i.e., $\hat\x_{\text{avg}}$, which is cheaper since we do not have to perform function evaluations to find the best iterate, we can apply Jensen's inequality (Lemma~\ref{lem:jensen's}) to get the following
\[
f(\hat\x_{\text{avg}}) = f\br{\frac{1}{T}\sum_{t=1}^T\x^t} \leq \frac{1}{T}\sum_{t=1}^Tf(\x^t) \leq f^\ast + \epsilon.
\]
Note that the Jensen's inequality may be applied only when the function $f$ is convex. Now, whereas OPTION 1 i.e., $\hat\x_{\text{final}}$, is the cheapest and does not require any additional operations, $\hat\x_{\text{final}}$ does not converge to the optimum for convex functions in general and may oscillate close to the optimum. However, we shall shortly see that $\hat\x_{\text{final}}$ does converge if the objective function is strongly smooth. Recall that strongly smooth functions may not grow at a faster-than-quadratic rate.

The reader would note that we have set the step length to a value that depends on the total number of iterations $T$ for which the PGD algorithm is executed. This is called a \emph{horizon-aware} setting of the step length. In case we are not sure what the value of $T$ would be, a \emph{horizon-oblivious} setting of $\eta_t = \frac{1}{\sqrt t}$ can also be shown to work\elink{exer:tools-hor}.

\begin{proof}[Proof (of Theorem~\ref{thm:pgd-conv-proof}).]
Let $\x^\ast \in \arg\min_{\x \in \cC}\ f(\x)$ denote any point in the constraint set where the optimum function value is achieved. Such a point always exists if the constraint set is closed and the objective function continuous. We will use the following \emph{potential function} $\Phi_t = f(\x^t)-f(\x^\ast)$ to track the progress of the algorithm. Note that $\Phi_t$ measures the sub-optimality of the $t$-th iterate. Indeed, the statement of the theorem is equivalent to claiming that $\frac{1}{T}\sum_{t=1}^T\Phi_t \leq \epsilon$.\\

\noindent\textbf{(Apply Convexity)} We apply convexity to upper bound the potential function at every step. Convexity is a global property and very useful in getting an upper bound on the level of sub-optimality of the current iterate in such analyses.
\[
\Phi_t = f(\x^t)-f(\x^\ast) \leq \ip{\nabla f(\x^t)}{\x^t-\x^\ast}
\]
We now do some elementary manipulations
\begin{align*}
&\ip{\nabla f(\x^t)}{\x^t-\x^\ast} = \frac{1}{\eta}\ip{\eta\cdot\nabla f(\x^t)}{\x^t-\x^\ast}\\
																	&= \frac{1}{2\eta}\br{\norm{\x^t-\x^\ast}_2^2 + \norm{\eta\cdot\nabla f(\x^t)}_2^2 - \norm{\x^t - \eta\cdot\nabla f(\x^t) -\x^\ast}_2^2}\\
																	&= \frac{1}{2\eta}\br{\norm{\x^t-\x^\ast}_2^2 + \norm{\eta\cdot\nabla f(\x^t)}_2^2 - \norm{\z^{t+1} -\x^\ast}_2^2}\\
																	&\leq \frac{1}{2\eta}\br{\norm{\x^t-\x^\ast}_2^2 + \eta^2G^2 - \norm{\z^{t+1} -\x^\ast}_2^2},
\end{align*}
where the first step applies the identity $2ab = a^2 + b^2 - (a+b)^2$, the second step uses the update step of the PGD algorithm that sets $\z^{t+1} \leftarrow \x^t - \eta_t\cdot\nabla f(\x^t)$, and the third step uses the fact that the objective function $f$ has bounded gradients.

\noindent\textbf{(Apply Projection Property)} We apply Lemma~\ref{lem:proj-prop-2} to get
\[
\norm{\z^{t+1} -\x^\ast}_2^2 \geq \norm{\x^{t+1} -\x^\ast}_2^2
\]
Putting all these together gives us
\[
\Phi_t \leq \frac{1}{2\eta}\br{\norm{\x^t-\x^\ast}_2^2 - \norm{\x^{t+1} -\x^\ast}_2^2} + \frac{\eta G^2}{2}
\]
The above expression is interesting since it tells us that, apart from the $\eta G^2/2$ term which is small as $\eta = \frac{1}{\sqrt T}$, the current sub-optimality $\Phi_t$ is small if the consecutive iterates $\x^t$ and $\x^{t+1}$ are close to each other (and hence similar in distance from $\x^\ast$).

This observation is quite useful since it tells us that once PGD stops making a lot of progress, it actually converges to the optimum! In hindsight, this is to be expected. Since we are using a constant step length, only a vanishing gradient can cause PGD to stop progressing. However, for convex functions, this only happens at global optima. Summing the expression up across time steps, performing telescopic cancellations, using $\x^1 = \vzero$, and dividing throughout by $T$ gives us
\begin{align*}
\frac{1}{T}\sum_{t=1}^T\Phi_t &\leq \frac{1}{2\eta T}\br{\norm{\x^\ast}_2^2 - \|{\x^{T+1} - \x^\ast}\|_2^2} + \frac{\eta G^2}{2}\\
				&\leq \frac{1}{2\sqrt T}\br{\norm{\x^\ast}_2^2 + G^2},
\end{align*}
where in the second step, we have used the fact that $\norm{\x^{t+1} - \x^\ast}_2 \geq 0$ and $\eta = 1/\sqrt T$. This gives us the claimed result.
\end{proof}

\subsection{Convergence with Strongly Convex and Smooth Functions}
We will now prove a stronger guarantee for PGD when the objective function is strongly convex and strongly smooth (see Definition~\ref{defn:strong-cvx-smooth-fn}).
\begin{theorem}
\label{thm:pgd-sc-ss-proof}
Let $f$ be an objective that satisfies the $\alpha$-SC and $\beta$-SS properties. Let Algorithm~\ref{algo:pgd} be executed with step lengths $\eta_t = \eta = \frac{1}{\beta}$. Then after at most $T = \bigO{\frac{\beta}{\alpha}\log\frac{\beta}{\epsilon}}$ steps, we have $f(\x^T) \leq f(\x^\ast) + \epsilon$.
\end{theorem}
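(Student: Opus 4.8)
The plan is to track the \emph{distance to the optimum} $\Phi_t = \norm{\x^t - \x^\ast}_2^2$ rather than the function suboptimality used in the bounded-gradient proof, and to show that PGD shrinks this potential by a constant multiplicative factor at every step. Geometric contraction of a distance-based potential is exactly what strong convexity buys us, and once we have it we can convert back to a function-value guarantee at the very end. Here $\x^\ast$ denotes the constrained minimizer and $\hat\x_\text{final} = \x^T$ is the returned iterate (Option 1).

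First I would pass from the unprojected iterate $\z^{t+1} = \x^t - \eta\cdot\nabla f(\x^t)$ to the projected iterate $\x^{t+1}$ using Projection Property-II (Lemma~\ref{lem:proj-prop-2}); since $\x^\ast \in \cC$, this yields $\norm{\x^{t+1} - \x^\ast}_2 \leq \norm{\z^{t+1} - \x^\ast}_2$, so it suffices to control $\norm{\z^{t+1} - \x^\ast}_2^2$. Expanding the square gives the identity $\norm{\z^{t+1} - \x^\ast}_2^2 = \norm{\x^t - \x^\ast}_2^2 - 2\eta\ip{\nabla f(\x^t)}{\x^t - \x^\ast} + \eta^2\norm{\nabla f(\x^t)}_2^2$, which isolates a cross term and a gradient-norm term that I would bound separately using the two halves of Definition~\ref{defn:strong-cvx-smooth-fn}.

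The two strong-ness assumptions each tame one term. Applying strong convexity at $\x^t$ and $\x^\ast$ lower bounds the cross term by $f(\x^t) - f(\x^\ast) + \frac{\alpha}{2}\norm{\x^t - \x^\ast}_2^2$, which is what injects the crucial $\br{1 - \eta\alpha}$ contraction factor. Strong smoothness, via the standard gradient-domination inequality $\norm{\nabla f(\x^t)}_2^2 \leq 2\beta\br{f(\x^t) - f(\x^\ast)}$ (obtained by minimizing the quadratic upper bound over one full gradient step), controls the gradient-norm term; with the prescribed $\eta = 1/\beta$ the two $f(\x^t) - f(\x^\ast)$ contributions cancel exactly, leaving $\norm{\x^{t+1} - \x^\ast}_2^2 \leq \br{1 - \frac{\alpha}{\beta}}\norm{\x^t - \x^\ast}_2^2$. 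Telescoping gives $\norm{\x^T - \x^\ast}_2^2 \leq \br{1 - \frac{\alpha}{\beta}}^{T-1}\norm{\x^1 - \x^\ast}_2^2$, and a final application of strong smoothness at $\x^\ast$ converts this into $f(\x^T) - f(\x^\ast) \leq \frac{\beta}{2}\norm{\x^T - \x^\ast}_2^2$; using $1 - \alpha/\beta \leq e^{-\alpha/\beta}$ and asking when the right-hand side falls below $\epsilon$ yields $T = \bigO{\frac{\beta}{\alpha}\log\frac{\beta}{\epsilon}}$.

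The step I expect to be most delicate is the gradient-domination bound together with the final conversion, because both implicitly want $\nabla f(\x^\ast) = 0$, which holds at an unconstrained minimizer but not necessarily at a constrained one. In the constrained setting the honest inequalities carry an extra $\ip{\nabla f(\x^\ast)}{\cdot}$ term, and the first-order optimality condition $\ip{\nabla f(\x^\ast)}{\x - \x^\ast} \geq 0$ for all $\x \in \cC$ has the wrong sign to be discarded outright. So the main obstacle is arguing that this term is harmless --- either by assuming the optimum is interior, or by folding the optimality condition into the projection step more carefully --- and I would expect the clean contraction above to survive once that technicality is addressed.
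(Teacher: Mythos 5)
Your argument is correct and arrives at the same per-step contraction $\Phi_{t+1} \leq \br{1 - \frac{\alpha}{\beta}}\Phi_t$ as the paper, but by a genuinely different route. You work entirely in iterate space: nonexpansiveness of the projection (Projection Property-II) lets you replace $\x^{t+1}$ by $\z^{t+1}$, you expand the square, and you dispatch the two resulting terms with strong convexity and with the gradient-domination inequality $\norm{\nabla f(\x^t)}_2^2 \leq 2\beta\br{f(\x^t) - f(\x^\ast)}$. The paper instead applies strong smoothness to the decrement $f(\x^{t+1}) - f(\x^t)$ between consecutive iterates, uses Projection Property-I to replace $\ip{\x^t - \z^{t+1}}{\x^{t+1}-\x^\ast}$ by $\ip{\x^t - \x^{t+1}}{\x^{t+1}-\x^\ast}$ followed by a three-point identity, and closes with strong convexity plus $f(\x^{t+1}) \geq f(\x^\ast)$. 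The substantive difference is exactly the one you flag: gradient domination is derived by taking an unconstrained gradient step from $\x^t$, so it is only valid against the unconstrained minimum of $f$; if the constrained minimizer has $\nabla f(\x^\ast) \neq \vzero$, your cancellation leaves a non-vanishing additive term of order $f(\x^\ast) - \min_{\R^p} f$. The paper's use of Projection Property-I sidesteps this entirely --- the gradient norm never appears --- so its per-step contraction holds for the constrained problem without assuming an interior optimum. Both proofs then share the same final conversion via strong smoothness at $\x^\ast$, and your observation that the first-order optimality condition $\ip{\nabla f(\x^\ast)}{\x - \x^\ast} \geq 0$ points the wrong way there is accurate: the paper quotes this condition with the reversed sign in order to drop the cross term, so your conversion step is no worse off than the paper's own. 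In short, your proof is complete for an unconstrained or interior optimum, and the one constrained-safe ingredient you are missing is Projection Property-I in place of gradient domination.
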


This result is particularly nice since it ensures that the final iterate $\hat\x_{\text{final}} = \x^T$ converges, allowing us to use OPTION 1 in Algorithm~\ref{algo:pgd} when the objective is SC/SS. A further advantage is the accelerated rate of convergence. Whereas for general convex functions, PGD requires $\bigO{\frac{1}{\epsilon^2}}$ iterations to reach an $\epsilon$-optimal solution, for SC/SS functions, it requires only $\bigO{\log\frac{1}{\epsilon}}$ iterations.

The reader would notice the insistence on the step length being set to $\eta = \frac{1}{\beta}$. In fact the proof we show below crucially uses this setting. In practice, for many problems, $\beta$ may not be known to us or may be expensive to compute which presents a problem. However, as it turns out, it is not necessary to set the step length exactly to $1/\beta$. The result can be shown to hold even for values of $\eta < 1/\beta$ which are nevertheless large enough, but the proof becomes more involved. In practice, the step length is tuned globally by doing a grid search over several $\eta$ values, or per-iteration using line search mechanisms, to obtain a step length value that assures good convergence rates.

\begin{proof}[Proof (of Theorem~\ref{thm:pgd-sc-ss-proof}).]
This proof is a nice opportunity for the reader to see how the SC/SS properties are utilized in a convergence analysis. As with convexity in the proof of Theorem~\ref{thm:pgd-conv-proof}, the strong convexity property is a global property that will be useful in assessing the progress made so far by relating the optimal point $\x^\ast$ with the current iterate $\x^t$. Strong smoothness on the other hand, will be used locally to show that the procedure makes significant progress between iterates.

We will prove the result by showing that after at most $T = \bigO{\frac{\beta}{\alpha}\log\frac{1}{\epsilon}}$ steps, we will have $\norm{\x^T - \x^\ast}_2^2 \leq \frac{2\epsilon}{\beta}$. This already tells us that we have reached very close to the optimum. However, we can use this to show that $\x^T$ is $\epsilon$-optimal in function value as well. Since we are very close to the optimum, it makes sense to apply strong smoothness to upper bound the sub-optimality as follows
\[
f(\x^T) \leq f(\x^\ast) + \ip{\nabla f(\x^\ast)}{\x^T - \x^\ast} + \frac{\beta}{2}\norm{\x^T-\x^\ast}_2^2.
\]
Now, since $\x^\ast$ is an optimal point for the constrained optimization problem with a convex constraint set $\cC$, the first order optimality condition \cite[see][Proposition 1.3]{Bubeck2015} gives us $\ip{\nabla f(\x^\ast)}{\x - \x^\ast} \leq 0$ for any $\x \in \cC$. Applying this condition with $\x = \x^T$ gives us
\[
f(\x^T) - f(\x^\ast) \leq \frac{\beta}{2}\norm{\x^T-\x^\ast}_2^2 \leq \epsilon,
\]
which proves that $\x^T$ is an $\epsilon$-optimal point. We now show $\norm{\x^T - \x^\ast}_2^2 \leq \frac{2\epsilon}{\beta}$. Given that we wish to show convergence in terms of the iterates, and not in terms of the function values, as we did in Theorem~\ref{thm:pgd-conv-proof}, a natural potential function for this analysis is $\Phi_t = \norm{\x^t - \x^\ast}_2^2$.\\

\noindent\textbf{(Apply Strong Smoothness)} As discussed before, we use it to show that PGD always makes significant progress in each iteration.
\begin{align*}
&f(\x^{t+1}) - f(\x^t) \leq \ip{\nabla f(\x^t)}{\x^{t+1}-\x^t} + \frac{\beta}{2}\norm{\x^t-\x^{t+1}}_2^2\\
											&= \ip{\nabla f(\x^t)}{\x^{t+1}-\x^\ast} + \ip{\nabla f(\x^t)}{\x^\ast-\x^t} + \frac{\beta}{2}\norm{\x^t-\x^{t+1}}_2^2\\
											&= \frac{1}{\eta}\ip{\x^t - \z^{t+1}}{\x^{t+1}-\x^t} + \ip{\nabla f(\x^t)}{\x^\ast-\x^t} + \frac{\beta}{2}\norm{\x^t-\x^{t+1}}_2^2
\end{align*}

\noindent\textbf{(Apply Projection Rule)} The above expression contains an unwieldy term $\z^{t+1}$. Since this term only appears during projection steps, we eliminate it by applying Projection Property-I (Lemma~\ref{lem:proj-prop-1}) to get
\begin{align*}
\ip{\x^t - \z^{t+1}}{\x^{t+1}-\x^\ast} &\leq \ip{\x^t - \x^{t+1}}{\x^{t+1}-\x^\ast}\\
																			&= \frac{\norm{\x^t-\x^\ast}_2^2 - \norm{\x^t - \x^{t+1}}_2^2 - \norm{\x^{t+1}-\x^\ast}_2^2}{2}
\end{align*}
Using $\eta = 1/\beta$ and combining the above results gives us
\[
f(\x^{t+1}) - f(\x^t) \leq \ip{\nabla f(\x^t)}{\x^\ast-\x^t} + \frac{\beta}{2}\br{\norm{\x^t-\x^\ast}_2^2 - \norm{\x^{t+1}-\x^\ast}_2^2}
\]

\noindent\textbf{(Apply Strong Convexity)} The above expression is perfect for a telescoping step but for the inner product term. Fortunately, this can be eliminated using strong convexity.
\[
\ip{\nabla f(\x^t)}{\x^\ast-\x^t} \leq f(\x^\ast) - f(\x^t) - \frac{\alpha}{2}\norm{\x^t-\x^\ast}_2^2
\]
Combining with the above this gives us
\[
f(\x^{t+1}) - f(\x^\ast) \leq \frac{\beta-\alpha}{2}\norm{\x^t-\x^\ast}_2^2 - \frac{\beta}{2}\norm{\x^{t+1}-\x^\ast}_2^2.
\]
The above form seems almost ready for a telescoping exercise. However, something much stronger can be said here, especially due to the $\frac{-\alpha}{2}\norm{\x^t-\x^\ast}_2^2$ term. Notice that we have $f(\x^{t+1}) \geq f(\x^\ast)$. This means
\[
\frac{\beta}{2}\norm{\x^{t+1}-\x^\ast}_2^2 \leq \frac{\beta-\alpha}{2}\norm{\x^t-\x^\ast}_2^2,
\]
which can be written as
\[
\Phi_{t+1} \leq \br{1 - \frac{\alpha}{\beta}}\Phi_t \leq \exp\br{-\frac{\alpha}{\beta}}\Phi_t,
\]
where we have used the fact that $1 - x \leq \exp(-x)$ for all $x \in \R$. What we have arrived at is a very powerful result as it assures us that the potential value goes down by a constant fraction at every iteration! Applying this result recursively gives us
\[
\Phi_{t+1} \leq \exp\br{-\frac{\alpha t}{\beta}}\Phi_1 = \exp\br{-\frac{\alpha t}{\beta}}\norm{\x^\ast}_2^2,
\]
since $\x^1 = \vzero$. Thus, we deduce that $\Phi_T = \norm{\x^T - \x^\ast}_2^2 \leq \frac{2\epsilon}{\beta}$ after at most $T = \bigO{\frac{\beta}{\alpha}\log\frac{\beta}{\epsilon}}$ steps which finishes the proof
\end{proof}

We notice that the convergence of the PGD algorithm is of the form $\norm{\x^{t+1}-\x^\ast}_2^2 \leq \exp\br{-\frac{\alpha t}{\beta}}\norm{\x^\ast}_2^2$. The number $\kappa := \frac{\beta}{\alpha}$ is the \emph{condition number} of the optimization problem. The concept of condition number is central to numerical optimization. Below we give an informal and generic definition for the concept. In later sections we will see the condition number appearing repeatedly in the context of the convergence of various optimization algorithms for convex, as well as non-convex problems. The exact numerical form of the condition number (for instance here it is $\beta/\alpha$) will also change depending on the application at hand. However, in general, all these definitions of condition number will satisfy the following property.

\begin{definition}[Condition Number - Informal]
\label{defn:condition-number}
The condition number of a function $f : \cX \rightarrow \R$ is a scalar $\kappa \in \bR$ that bounds how much the function value can change relative to a perturbation of the input.
\end{definition}

Functions with a small condition number are stable and changes to their input do not affect the function output values too much. However, functions with a large condition number can be quite jumpy and experience abrupt changes in output values even if the input is changed slightly. To gain a deeper appreciation of this concept, consider a differentiable function $f$ that is also $\alpha$-SC and $\beta$-SS. Consider a stationary point for $f$ i.e., a point $\x$ such that $\nabla f(\x) = \vzero$. For a general function, such a point can be a local optima or a saddle point. However, since $f$ is strongly convex, $\x$ is the (unique) global minima\elink{exer:tools-sc-unique-minima} of $f$. Then we have, for any other point $\y$
\[
\frac{\alpha}{2}\norm{\x-\y}_2^2 \leq f(\y) - f(\x) \leq \frac{\beta}{2}\norm{\x-\y}_2^2
\]
Dividing throughout by $\frac{\alpha}{2}\norm{\x-\y}_2^2$ gives us
\[
\frac{f(\y) - f(\x)}{\frac{\alpha}{2}\norm{\x-\y}_2^2} \in \bs{1,\frac{\beta}{\alpha}} := [1,\kappa]
\]
Thus, upon perturbing the input from the global minimum $\x$ to a point $\norm{\x-\y}_2 =: \epsilon$ distance away, the function value does change much -- it goes up by an amount at least $\frac{\alpha\epsilon^2}{2}$ but at most $\kappa\cdot\frac{\alpha\epsilon^2}{2}$. Such well behaved response to perturbations is very easy for optimization algorithms to exploit to give fast convergence.

The condition number of the objective function can significantly affect the convergence rate of algorithms. Indeed, if $\kappa = \frac{\beta}{\alpha}$ is small, then $\exp\br{-\frac{\alpha}{\beta}} = \exp\br{-\frac{1}{\kappa}}$ would be small, ensuring fast convergence. However, if $\kappa \gg 1$ then $\exp\br{-\frac{1}{\kappa}} \approx 1$ and the procedure might offer slow convergence.

\section{Exercises}
\begin{exer}
\label{exer:tools-ss-cvx}
Show that strong smoothness does not imply convexity by constructing a non-convex function $f: \R^p \rightarrow \R$ that is $1$-SS.
\end{exer}
\begin{exer}
\label{exer:tools-diff-lip}
Show that if a differentiable function $f$ has bounded gradients i.e., $\norm{\nabla f(\vx)}_2 \leq G$ for all $\x \in \R^d$, then $f$ is Lipschitz. What is its Lipschitz constant?\\
\textit{Hint}: use the mean value theorem.
\end{exer}
\begin{exer}
\label{exer:tools-proj-l2}
Show that for any point $\z \notin \B_2(r)$, the projection onto the ball is given by $\Pi_{\B_2(r)}(\z) = \frac{r}{\norm{\z}_2}\cdot\z$.
\end{exer}
\begin{exer}
\label{exer:tools-hor}
Show that a \emph{horizon-oblivious} setting of $\eta_t = \frac{1}{\sqrt t}$ while executing the PGD algorithm with a convex function with bounded gradients also ensures convergence.\\
\textit{Hint}: the convergence rates may be a bit different for this setting.
\end{exer}
\begin{exer}
\label{exer:tools-sc-unique-minima}
Show that if $f: \bR^p \rightarrow \bR$ is a strongly convex function that is differentiable, then there is a unique point $\x^\ast \in \bR^p$ that minimizes the function value $f$ i.e., $f(\x^\ast) = \min_{\x \in \bR^p}\ f(\x)$.
\end{exer}
\begin{exer}
\label{exer:tools-nonconv-sp}
Show that the set of sparse vectors $\cB_0(s) \subset \R^p$ is non-convex for any $s < p$. What happens when $s = p$?
\end{exer}
\begin{exer}
\label{exer:pgd-nc-rank}
Show that $\cB_\text{rank}(r) \subseteq \R^{n \times n}$, the set of $n \times n$ matrices with rank at most $r$, is non-convex for any $r < n$. What happens when $r = n$?
\end{exer}
\begin{exer}
Consider the Cartesian product set $\cC = \R^{m \times r} \times \R^{n \times r}$. Show that it is convex.
\end{exer}
\begin{exer}
Consider a least squares optimization problem with a strongly convex and smooth objective. Show that the condition number of this problem is equal to the condition number of the Hessian matrix of the objective function.
\end{exer}
\begin{exer}
\label{exer:tools-sc-unique-minima-cons}
Show that if $f: \bR^p \rightarrow \bR$ is a strongly convex function that is differentiable, then optimization problems with $f$ as an objective and a convex constraint set $\cC$ always have a unique solution i.e., there is a unique point $\x^\ast \in \cC$ that is a solution to the optimization problem $\arg\min_{\x\in\cC}\ f(x)$. This generalizes the result in Exercise~\ref{exer:tools-sc-unique-minima}.\\
\textit{Hint}: use the first order optimality condition (see proof of Theorem~\ref{thm:pgd-sc-ss-proof})
\end{exer}

\section{Bibliographic Notes}
\label{sec:tools-bib}
The sole aim of this discussion was to give a self-contained introduction to concepts and tools in convex analysis and descent algorithms in order to seamlessly introduce non-convex optimization techniques and their applications in subsequent sections. However, we clearly realize our inability to cover several useful and interesting results concerning convex functions and optimization techniques given the paucity of scope to present this discussion. We refer the reader to literature in the field of optimization theory for a much more relaxed and deeper introduction to the area of convex optimization. Some excellent examples include \citep{Bertsekas2016,BoydV2004,Bubeck2015,Nesterov2013,SraNW2011}.

\part{Non-convex Optimization Primitives}

\chapter{Non-Convex Projected Gradient Descent}
\label{chap:pgd}

In this section we will introduce and study gradient descent-style methods for non-convex optimization problems. In \S~\ref{chap:tools}, we studied the projected gradient descent method for convex optimization problems. Unfortunately, the algorithmic and analytic techniques used in convex problems fail to extend to non-convex problems. In fact, non-convex problems are NP-hard to solve and thus, no algorithmic technique should be expected to succeed on these problems in general.

However, the situation is not so bleak. As we discussed in \S~\ref{chap:intro}, several breakthroughs in non-convex optimization have shown that non-convex problems that possess nice additional structure can be solved not just in polynomial time, but rather efficiently too. Here, we will study the inner workings of projected gradient methods on such structured non-convex optimization problems.

The discussion will be divided into three parts. The first part will take a look at constraint sets that, despite being non-convex, possess additional structure so that projections onto them can be carried out efficiently. The second part will take a look at structural properties of objective functions that can aid optimization. The third part will present and analyze a simple extension of the PGD algorithm for non-convex problems. We will see that for problems that do possess nicely structured objective functions and constraint sets, the PGD-style algorithm does converge to the global optimum in polynomial time with a linear rate of convergence.

We would like to point out to the reader that our emphasis in this section will be on generality and exposition of basic concepts. We will seek to present easily accessible analyses for problems that have non-convex objectives. However, the price we will pay for this generality is in the fineness of the results we present. The results discussed in this section are not the best possible and more refined and problem-specific results will be discussed in subsequent sections where specific applications will be discussed in detail.

\section{Non-Convex Projections}
\label{sec:non-cvx-proj}
Executing the projected gradient descent algorithm with non-convex problems requires projections onto non-convex sets. Now, a quick look at the projection problem
\[
\Pi_\cC(\z) := \underset{\x\in\cC}{\argmin}\ \norm{\x-\z}_2
\]
reveals that this is an optimization problem in itself. Thus, when the set $\cC$ to be projected onto is non-convex, the projection problem can itself be NP-hard. However, for several well-structured sets, projection can be carried out efficiently despite the sets being non-convex.

\subsection{Projecting into Sparse Vectors}
In the sparse linear regression example discussed in \S~\ref{chap:intro},
\[
\bth = \underset{\norm{\bt}_0 \leq s}{\argmin}\ \sum_{i=1}^n\br{y_i - \x_i^\top\bt}^2,
\]
applying projected gradient descent requires projections onto the set of $s$-sparse vectors i.e., $\B_0(s) := \bc{\x\in\R^p, \norm{\x}_0 \leq s}$. The following result shows that the projection $\Pi_{\B_0(s)}(\z)$ can be carried out by simply sorting the coordinates of the vector $\z$ according to magnitude and setting all except the top-$s$ coordinates to zero.

\begin{lemma}
\label{lem:sparse-proj}
For any vector $\z \in \R^p$, let $\sigma$ be the permutation that sorts the coordinates of $\z$ in decreasing order of magnitude, i.e., $\abs{\z_{\sigma(1)}} \geq \abs{\z_{\sigma(2)}} \geq \ldots \geq \abs{\z_{\sigma(p)}}$. Then the vector $\hat\z := \Pi_{\B_0(s)}(\z)$ is obtained by setting $\hat\z_i = \z_i$ if $\sigma(i) \leq s$ and $\hat\z_i = 0$ otherwise.
\end{lemma}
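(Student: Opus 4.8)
The plan is to reduce this projection problem, which is really a joint optimization over both the support and the entries of the target vector, to two nested minimizations: an inner minimization over the entries for a fixed support, followed by an outer minimization over the choice of support. First I would fix an arbitrary candidate support $S \subseteq [p]$ with $\abs{S} \leq s$ and examine all vectors $\x \in \B_0(s)$ with $\supp(\x) \subseteq S$. For any such $\x$, the squared objective separates coordinate-wise as
\[
\norm{\x - \z}_2^2 = \sum_{i \in S}(\x_i - \z_i)^2 + \sum_{i \notin S}\z_i^2,
\]
because $\x_i = 0$ for every $i \notin S$. The first sum is minimized, indeed driven to zero, by the feasible choice $\x_i = \z_i$ for all $i \in S$, leaving a residual of exactly $\sum_{i \notin S}\z_i^2$. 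Hence the best squared distance attainable using support $S$ equals the sum of squared magnitudes of the coordinates that $S$ discards.

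The outer step is then to pick the support $S$ minimizing this residual, equivalently maximizing $\sum_{i \in S}\z_i^2$ subject to $\abs{S} \leq s$. Since each term $\z_i^2$ is nonnegative, including an additional coordinate never hurts, so an optimal $S$ may be taken to have exactly $s$ elements (or to contain all nonzero coordinates, should there be fewer than $s$ of them). Selecting the $s$ coordinates of largest magnitude maximizes $\sum_{i \in S}\z_i^2$: for any other admissible $S'$, a standard exchange argument shows that replacing a coordinate of $S'$ lying outside the top-$s$ block by a missing top-$s$ coordinate can only increase the captured mass. This is exactly the support identified by the sorting permutation $\sigma$, and setting $\hat\z_i = \z_i$ on it and $\hat\z_i = 0$ elsewhere produces the claimed minimizer.

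The one point needing care — and essentially the only obstacle — is the treatment of ties at the threshold magnitude $\abs{\z_{\sigma(s)}}$. When several coordinates share this cutoff value, the optimal support is not unique, and correspondingly $\Pi_{\B_0(s)}(\z)$ is set-valued; the lemma should be read as asserting that any top-$s$ selection consistent with such a $\sigma$ is a valid minimizer. I would therefore phrase the exchange argument in terms of \emph{achieving} the minimal residual rather than uniquely attaining it, so that the conclusion covers every tie-breaking choice. Everything else is routine, relying only on the nonnegativity of squares and the coordinate-wise separability of the Euclidean norm.
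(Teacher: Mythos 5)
Your proposal is correct and follows essentially the same route as the paper's proof: both first observe that for a fixed support the optimal choice is to copy the entries of $\z$, reducing the problem to minimizing the residual $\sum_{i \notin S}\z_i^2$, which is achieved by keeping the $s$ largest-magnitude coordinates. Your explicit handling of ties at the threshold magnitude is a small refinement the paper omits, but it does not change the substance of the argument.
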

\begin{proof}
We first notice that since the function $x \mapsto x^2$ is an increasing function on the positive half of the real line, we have $\underset{\x\in\cC}{\argmin}\ \norm{\x-\z}_2 = \underset{\x\in\cC}{\argmin}\ \norm{\x-\z}_2^2$. Next, we observe that the vector $\hat\z := \Pi_{\B_0(s)}$ must satisfy $\hat\z_i = \z_i$ for all $i \in \supp(\hat\z)$ otherwise we can decrease the objective value $\norm{\hat\z -\z}_2^2$ by ensuring this. Having established this gives us $\norm{\hat\z - \z}_2^2 = \sum_{i \notin \supp(\hat\z)} \z_i^2$. This is clearly minimized when $\supp(\hat\z)$ has the coordinates of $\z$ with largest magnitude.
\end{proof}

\subsection{Projecting into Low-rank Matrices}
In the recommendation systems problem, as discussed in \S~\ref{chap:intro} 
\[
\hat A_\text{lr} = \underset{\rank(X) \leq r}{\arg\min}\ \sum_{(i,j) \in \Omega}\br{X_{ij} - A_{ij}}^2,
\]
we need to project onto the set of low-rank matrices. Let us first define this problem formally. Consider matrices of a certain order, say $m \times n$ and let $\cC \subset \R^{m\times n}$ be an arbitrary set of matrices. Then, the projection operator $\Pi_\cC(\cdot)$ is defined as follows: for any matrix $A \in R^{m\times n}$,
\[
\Pi_\cC(A) := \underset{X\in\cC}{\argmin}\ \norm{A - X}_F,
\]
where $\norm{\cdot}_F$ is the Frobenius norm over matrices. For low rank projections we require $\cC$ to be the set of low rank matrices $\B_{\rank}(r) := \bc{A \in \R^{m\times n}, \rank(A) \leq r}$. Yet again, this projection can be done efficiently by performing a \emph{Singular Value Decomposition} on the matrix $A$ and retaining the top $r$ singular values and vectors. The Eckart-Young-Mirsky theorem proves that this indeed gives us the projection.

\begin{theorem}[Eckart-Young-Mirsky theorem]
\label{thm:eym-thm}
For any matrix $A \in \R^{m \times n}$, let $U\Sigma V^\top$ be the singular value decomposition of $A$ such that $\Sigma = \diag(\sigma_1,\sigma_2,\ldots,\sigma_{\min(m,n)})$ where $\sigma_1 \geq \sigma_2 \geq \ldots \geq \sigma_{\min(m,n)}$. Then for any $r \leq {\min(m,n)}$, the matrix $\hat A_{(r)} := \Pi_{\B_{\rank}(r)}(A)$ can be obtained as $U_{(r)}\Sigma_{(r)}V_{(r)}^\top$ where $U_{(r)} := \bs{U_1 U_2 \ldots U_r}$, $V{(r)} := \bs{V_1 V_2 \ldots V_r}$, and $\Sigma_{(r)} := \diag(\sigma_1,\sigma_2,\ldots,\sigma_r)$.
\end{theorem}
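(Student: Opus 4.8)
The plan is to prove two matching bounds: first compute the approximation error achieved by the truncated SVD $\hat A_{(r)}$, and then show that no matrix of rank at most $r$ can do better. Together these two halves establish that $\hat A_{(r)}$ is a minimizer of $\norm{A - X}_F$ over $\B_{\rank}(r)$. As in the proof of Lemma~\ref{lem:sparse-proj}, I would begin by noting that minimizing $\norm{A - X}_F$ is equivalent to minimizing $\norm{A - X}_F^2$.

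\textbf{Computing the achieved error.} First I would exploit the unitary invariance of the Frobenius norm, namely that $\norm{U M V^\top}_F = \norm{M}_F$ for orthogonal $U, V$. Writing $A = U\Sigma V^\top$ and $\hat A_{(r)} = U \Sigma_{(r)} V^\top$ (with $\Sigma_{(r)}$ padded by zeros to the full size), this gives $\norm{A - \hat A_{(r)}}_F = \norm{\Sigma - \Sigma_{(r)}}_F$. Since $\Sigma - \Sigma_{(r)}$ is diagonal with entries $\sigma_{r+1}, \ldots, \sigma_{\min(m,n)}$, the achieved error is $\sqrt{\sum_{i > r}\sigma_i^2}$. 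This step is routine.

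\textbf{Lower bound for arbitrary rank-$r$ matrices.} The heart of the argument is to show that every $X$ with $\rank(X) \le r$ satisfies $\norm{A - X}_F^2 \ge \sum_{i>r}\sigma_i^2$. I would establish the singular-value interlacing inequality $\sigma_i(A - X) \ge \sigma_{i+r}(A)$ for every $i$, from which the bound follows by squaring and summing over $i$. To prove the interlacing inequality I would use the variational (Courant--Fischer) characterization $\sigma_i(M) = \max_{\dim S = i}\ \min_{w \in S,\, \norm{w}_2 = 1} \norm{M w}_2$. The subspace I would plug in is $T := \ker(X) \cap \text{span}\bc{V_1, \ldots, V_{i+r}}$; by dimension counting, since $\dim \ker(X) \ge n - r$ and the span has dimension $i + r$, we get $\dim T \ge i$. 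For any unit $w \in T$ we have $X w = \vzero$, hence $\norm{(A - X)w}_2 = \norm{A w}_2$, and because $w$ lies in the span of the top $i+r$ right singular vectors, $\norm{A w}_2^2 = \sum_{j \le i+r} \ip{w}{V_j}^2 \sigma_j^2 \ge \sigma_{i+r}^2$. Feeding $T$ into the Courant--Fischer formula yields $\sigma_i(A - X) \ge \sigma_{i+r}(A)$.

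The main obstacle I anticipate is this lower bound, and within it the interlacing inequality $\sigma_i(A - X) \ge \sigma_{i+r}(A)$: the achieved-error computation is a direct calculation, whereas the lower bound genuinely requires the dimension-counting argument to locate a subspace on which $X$ vanishes yet $A$ still acts with large singular values. One subtlety to handle carefully is the indexing near the tail (ensuring the $\sigma_{i+r}$ are well-defined, with the convention that singular values beyond $\min(m,n)$ are zero) so that the summation collapses cleanly to $\sum_{i>r}\sigma_i^2$. With both bounds in hand, $\hat A_{(r)}$ attains the minimum possible error and is therefore a valid projection onto $\B_{\rank}(r)$.
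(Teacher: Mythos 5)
The paper does not actually supply a proof of this theorem --- it states the result and remarks that the proof is beyond the scope of the monograph --- so there is nothing to compare against except the standard literature. Your argument is a correct and complete proof of the Frobenius-norm case. The upper half (unitary invariance reduces the achieved error to $\sqrt{\sum_{i>r}\sigma_i^2}$) is routine, and the lower half is the classical Weyl-type interlacing argument: the dimension count $\dim\br{\ker(X)\cap\text{span}\bc{V_1,\ldots,V_{i+r}}} \geq (n-r)+(i+r)-n = i$ correctly produces an $i$-dimensional subspace on which $X$ vanishes while $\norm{Aw}_2 \geq \sigma_{i+r}$, and feeding it into the max--min characterization gives $\sigma_i(A-X)\geq\sigma_{i+r}(A)$; summing squares with the convention $\sigma_j = 0$ for $j > \min(m,n)$ yields $\norm{A-X}_F^2 \geq \sum_{i>r}\sigma_i^2$, matching the achieved error. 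Two small points worth making explicit if you write this up: first, the minimizer need not be unique when $\sigma_r = \sigma_{r+1}$, so the claim is that $U_{(r)}\Sigma_{(r)}V_{(r)}^\top$ is \emph{a} valid projection rather than \emph{the} projection; second, the paper's remark that the theorem extends to all unitarily invariant norms is not captured by your argument --- the interlacing inequality itself generalizes, but the final ``sum the squares'' step is specific to the Frobenius norm, so the operator-norm and Schatten-norm versions need a separate (though similar) concluding step.
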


Although we have stated the above result for projections with the Frobenius norm defining the projections, the Eckart-Young-Mirsky theorem actually applies to any unitarily invariant norm including the Schatten norms and the operator norm. The proof of this result is beyond the scope of this monograph.

Before moving on, we caution the reader that the ability to efficiently project onto the non-convex sets mentioned above does not imply that non-convex projections are as nicely behaved as their convex counterparts. Indeed, none of the projections mentioned above satisfy projection properties I or II  (Lemmata~\ref{lem:proj-prop-1} and \ref{lem:proj-prop-2}). This will pose a significant challenge while analyzing PGD-style algorithms for non-convex problems since, as we would recall, these properties were crucially used in all convergence proofs discussed in \S~\ref{chap:tools}.

\section{Restricted Strong Convexity and Smoothness}
In \S~\ref{chap:tools}, we saw how optimization problems with convex constraint sets and objective functions that are convex and have bounded gradients, or else are strongly convex and smooth, can be effectively optimized using PGD, with much faster rates of convergence if the objective is strongly convex and smooth. However, when the constraint set fails to be convex, these results fail to apply.

There are several workarounds to this problem, the simplest being to convert the constraint set into a convex one, possibly by taking its \emph{convex hull}\footnote{The convex hull of any set $\cC$ is the ``smallest'' convex set $\bar\cC$ that contains $\cC$. Formally, we define $\bar\cC = \!\!\!\!\!\underset{\substack{S \supseteq \cC\\S \text{ is convex}}}\bigcap \!\!\!\!\!S$. If $\cC$ is convex then it is its own convex hull.}, which is what relaxation methods do. However, a much less drastic alternative exists that is widely popular in non-convex optimization literature.

The intuition is a simple one and generalizes much of the insights we gained from our discussion in \S~\ref{chap:tools}. The first thing we need to notice\elink{exer:pgd-rsc} is that the convergence results for the PGD algorithm in \S~\ref{chap:tools} actually do not require the objective function to be convex (or strongly convex/strongly smooth) over the entire $\R^p$. These properties are only required to be satisfied over the constraint set being considered. A natural generalization that emerges from this insight is the concept of \emph{restricted} properties that are discussed below.

\begin{definition}[Restricted Convexity]
\label{defn:res-cvx-fn}
A continuously differentiable function $f: \R^p \rightarrow \R$ is said to satisfy restricted convexity over a (possibly non-convex) region $\cC \subseteq \R^p$ if for every $\x,\y \in \cC$ we have $f(\y) \geq f(\x) + \ip{\nabla f(\x)}{\y - \x}$, where $\nabla f(\x)$ is the gradient of $f$ at $\x$.
\end{definition}

As before, a more general definition that extends to non-differentiable functions, uses the notion of subgradient to replace the gradient in the above expression.

\begin{definition}[Restricted Strong Convexity/Smoothness]
\label{defn:res-strong-cvx-smooth-fn}
A continuously differentiable function $f: \R^p \rightarrow \R$ is said to satisfy $\alpha$-restricted strong convexity (RSC) and $\beta$-restricted strong smoothness (RSS) over a (possibly non-convex) region $\cC \subseteq \R^p$ if for every $\x,\y \in \cC$, we have
\[
\frac{\alpha}{2}\norm{\x-\y}_2^2 \leq f(\y) - f(\x) - \ip{\nabla f(\x)}{\y - \x} \leq \frac{\beta}{2}\norm{\x-\y}_2^2.
\]
\end{definition}

Note that, as Figure~\ref{fig:rsc} demonstrates, even non-convex functions can demonstrate the RSC/RSS properties over suitable subsets. Conversely, functions that satisfy RSC/RSS need not be convex. It turns out that in several practical situations, such as those explored by later sections, the objective functions in the non-convex optimization problems do satisfy the RSC/RSS properties described above, in some form.

\begin{figure}
\includegraphics[width=\columnwidth]{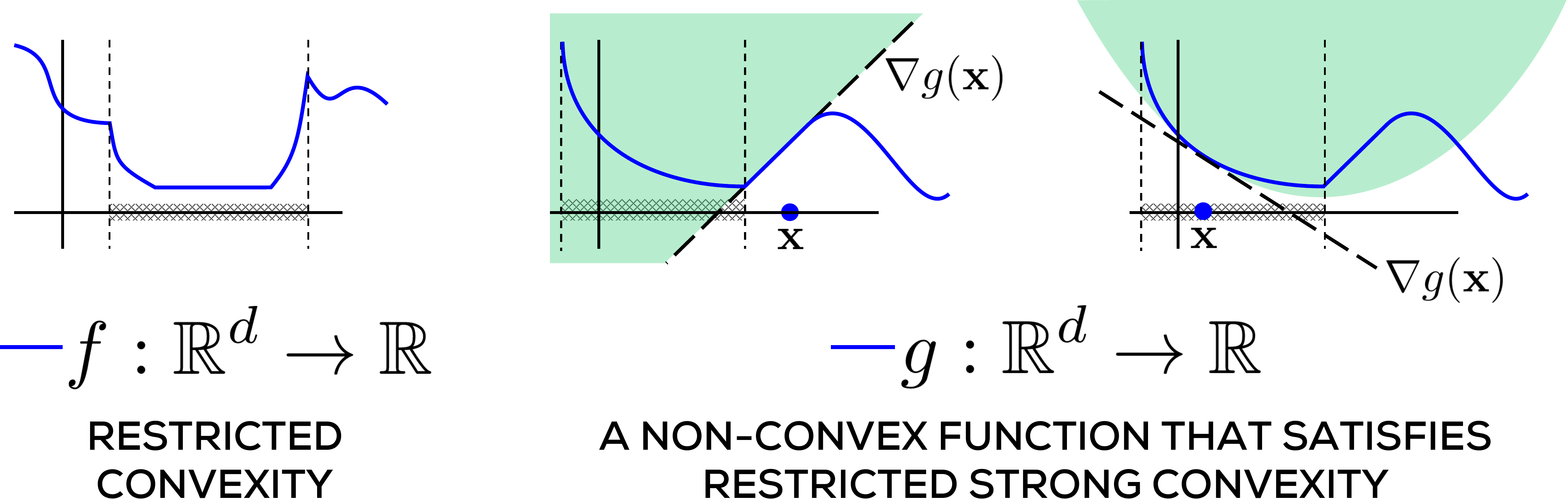}
\caption[Restricted Strong Convexity and Strong Smoothness]{A depiction of restricted convexity properties. $f$ is clearly non-convex over the entire real line but is convex within the cross-hatched region bounded by the dotted vertical lines. $g$ is a non-convex function that satisfies restricted strong convexity. Outside the cross-hatched region (again bounded by the dotted vertical lines), $g$ fails to even be convex as its curve falls below its tangent, but within the region, it actually exhibits strong convexity.}%
\label{fig:rsc}
\end{figure}

We also remind the reader that the RSC/RSS definitions presented here are quite generic and presented to better illustrate basic concepts. Indeed, for specific non-convex problems such as sparse recovery, low-rank matrix recovery, and robust regression, the later sections will develop more refined versions of these properties that are better tailored to those problems. In particular, for sparse recovery problems, the RSC/RSS properties can be shown to be related\elink{exer:spreg-rsc-is-rsc} to the well-known restricted isometry property (RIP).

\section{Generalized Projected Gradient Descent}
We now present the generalized projected gradient descent algorithm (gPGD) for non-convex optimization problems. The procedure is outlined in Algorithm~\ref{algo:gpgd}. The reader would find it remarkably similar to the PGD procedure in Algorithm~\ref{algo:pgd}. However, a crucial difference is in the projections made. Whereas PGD utilized convex projections, the gPGD procedure, if invoked with a non-convex constraint set $\cC$, utilizes non-convex projections instead.

We will perform the convergence analysis for the gPGD algorithm assuming that the projection step in the algorithm is carried out exactly. As we saw in the preceding discussion, this can be accomplished efficiently for non-convex sets arising in several interesting problem settings. However, despite this, the convergence analysis will remain challenging due to the non-convexity of the problem.

Firstly, we will not be able to assume that the objective function we are working with is convex over the entire $\R^p$. Secondly, non-convex projections do not satisfy projection properties I or II. Finally, the first order optimality condition (\cite[Proposition 1.3]{Bubeck2015}) we used to prove Theorem~\ref{thm:pgd-sc-ss-proof} also fails to hold for non-convex constraint sets. Since the analyses for the PGD algorithm crucially used these results, we will have to find workarounds to all of them. We will denote the optimal function value as $f^\ast = \min_{\x \in \cC}f(\x)$ and any optimizer as $\x^\ast \in \cC$ such that $f(\x^\ast) = f^\ast$.

To simplify the presentation we will assume that $\nabla f(\x^\ast) = \vzero$. This assumption is satisfied whenever the objective function is differentiable and the optimal point $\x^\ast$ lies in the interior of the constraint set $\cC$. However, many sets such as $\cB_0(s)$ do not possess an interior (although they may still possess a \emph{relative} interior) and this assumption fails by default on such sets. Nevertheless, this assumption will greatly simplify the presentation as well as help us focus on the key issues. Moreover, convergence results can be shown without making this assumption too.

\begin{algorithm}[t]
	\caption{Generalized Projected Gradient Descent (gPGD)}
	\label{algo:gpgd}
	\begin{algorithmic}[1]
		{
			\REQUIRE Objective function $f$, constraint set $\cC$, step length $\eta$
			\ENSURE A point $\hat\x \in \cC$ with near-optimal objective value
			\STATE $\x^1 \leftarrow \vzero$
			\FOR{$t = 1, 2, \ldots, T$}
				\STATE $\z^{t+1} \leftarrow \x^t - \eta\cdot\nabla f(\x^t)$
				\STATE $\x^{t+1} \leftarrow \Pi_\cC(\z^{t+1})$
			\ENDFOR
			\STATE \textbf{return} {$\hat\x_{\text{final}} = \x^T$}
		}
	\end{algorithmic}
\end{algorithm}

Theorem~\ref{thm:gpgd-rsc-rss-proof} gives the convergence proof for gPGD. The reader will notice that the convergence rate offered by gPGD is similar to the one offered by the PGD algorithm for convex optimization (see Theorem~\ref{thm:pgd-sc-ss-proof}). However, the gPGD algorithm requires a more careful analysis of the structure of the objective function and constraint set since it is working with a non-convex optimization problem.

\begin{theorem}
\label{thm:gpgd-rsc-rss-proof}
Let $f$ be a (possibly non-convex) function satisfying the $\alpha$-RSC and $\beta$-RSS properties over a (possibly non-convex) constraint set $\cC$ with $\beta/\alpha < 2$. Let Algorithm~\ref{algo:gpgd} be executed with a step length $\eta = \frac{1}{\beta}$. Then after at most $T = \bigO{\frac{\alpha}{2\alpha - \beta}\log\frac{1}{\epsilon}}$ steps, $f(\x^T) \leq f(\x^\ast) + \epsilon$.
\end{theorem}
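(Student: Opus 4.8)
The plan is to mirror the structure of the proof of Theorem~\ref{thm:pgd-sc-ss-proof}, replacing each step that relied on convexity with its restricted counterpart, and—crucially—finding a substitute for Projection Property-I, which no longer holds on non-convex sets. The one projection fact that survives the move to non-convex constraint sets is Projection Property-O (Lemma~\ref{lem:proj-prop-o}), so the whole argument will be anchored on it. Since $\x^{t+1} = \Pi_\cC(\z^{t+1})$ is the closest point of $\cC$ to $\z^{t+1}$ and $\x^\ast \in \cC$, Property-O gives $\norm{\x^{t+1} - \z^{t+1}}_2 \leq \norm{\x^\ast - \z^{t+1}}_2$. First I would square both sides, substitute $\z^{t+1} = \x^t - \eta\cdot\nabla f(\x^t)$, and expand; the $\eta^2\norm{\nabla f(\x^t)}_2^2$ terms cancel between the two sides, leaving after rearrangement
\[
\ip{\nabla f(\x^t)}{\x^{t+1} - \x^\ast} \leq \frac{1}{2\eta}\br{\norm{\x^t-\x^\ast}_2^2 - \norm{\x^{t+1}-\x^t}_2^2}.
\]
This is the non-convex replacement for the Projection-Property-I step, and it is where I would spend the most care, as it is the only place the projection enters the analysis.

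Next I would bring in the function values. Applying $\beta$-RSS between the consecutive iterates $\x^t$ and $\x^{t+1}$ (both lie in $\cC$, so RSS is legitimate) bounds $f(\x^{t+1}) - f(\x^t)$ from above, and I would split the resulting inner product as $\ip{\nabla f(\x^t)}{\x^{t+1}-\x^t} = \ip{\nabla f(\x^t)}{\x^{t+1}-\x^\ast} + \ip{\nabla f(\x^t)}{\x^\ast-\x^t}$. The first piece is controlled by the projection inequality above; the second I would eliminate with $\alpha$-RSC applied between $\x^t$ and $\x^\ast$, namely $\ip{\nabla f(\x^t)}{\x^\ast-\x^t} \leq f(\x^\ast) - f(\x^t) - \frac{\alpha}{2}\norm{\x^t-\x^\ast}_2^2$. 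Setting $\eta = 1/\beta$ makes the two $\frac{\beta}{2}\norm{\x^{t+1}-\x^t}_2^2$ terms cancel exactly—this is precisely why the step length is pinned to $1/\beta$—and what remains collapses to the clean one-step bound
\[
f(\x^{t+1}) - f(\x^\ast) \leq \frac{\beta-\alpha}{2}\norm{\x^t-\x^\ast}_2^2.
\]

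Finally I would convert the right-hand side back into a sub-optimality gap so as to close a recursion. Using $\alpha$-RSC between $\x^t$ and $\x^\ast$ together with the simplifying assumption $\nabla f(\x^\ast) = \vzero$ gives $\frac{\alpha}{2}\norm{\x^t-\x^\ast}_2^2 \leq f(\x^t) - f(\x^\ast)$, so with $\Phi_t := f(\x^t) - f(\x^\ast)$ the previous display becomes $\Phi_{t+1} \leq \frac{\beta-\alpha}{\alpha}\Phi_t$. The hypothesis $\beta/\alpha < 2$ is exactly what forces the contraction factor $\frac{\beta-\alpha}{\alpha} = \frac{\beta}{\alpha}-1$ to be strictly below one; since $1 - \frac{\beta-\alpha}{\alpha} = \frac{2\alpha-\beta}{\alpha}$, this is also where the stated rate originates. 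Iterating and using $1-x \leq \exp(-x)$ yields $\Phi_T \leq \exp\br{-\frac{2\alpha-\beta}{\alpha}(T-1)}\Phi_1$, and solving $\Phi_T \leq \epsilon$ for $T$ gives the claimed bound $T = \bigO{\frac{\alpha}{2\alpha-\beta}\log\frac{1}{\epsilon}}$ after absorbing the constant $\Phi_1 = f(\vzero) - f(\x^\ast)$ into the logarithm. The main obstacle throughout is the loss of Projection Property-I; everything else amounts to careful but routine bookkeeping of the RSC/RSS inequalities once Property-O has been pressed into service with $\x^\ast$ as the comparison point.
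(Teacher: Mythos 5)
Your proposal is correct and follows essentially the same route as the paper's proof: both arguments rest on Projection Property-O applied with $\x^\ast$ as the comparison point, $\beta$-RSS between consecutive iterates with $\eta = 1/\beta$ forcing the $\norm{\x^{t+1}-\x^t}_2^2$ terms to cancel, and two applications of $\alpha$-RSC (one to eliminate $\ip{\nabla f(\x^t)}{\x^\ast-\x^t}$, one using $\nabla f(\x^\ast)=\vzero$ to convert $\norm{\x^t-\x^\ast}_2^2$ back into $\Phi_t$), arriving at the identical recursion $\Phi_{t+1} \leq (\kappa-1)\Phi_t$. The only difference is cosmetic — you expand Property-O first and then invoke RSS, whereas the paper applies RSS first and then substitutes Property-O into the resulting squared distances to $\z^{t+1}$ — and your intermediate bound $f(\x^{t+1})-f(\x^\ast) \leq \frac{\beta-\alpha}{2}\norm{\x^t-\x^\ast}_2^2$ is an equivalent reorganization of the same algebra.
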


This result holds even when the step length is set to values that are large enough but yet smaller than $1/\beta$. However, setting $\eta = \frac{1}{\beta}$ simplifies the proof and allows us to focus on the key concepts.

\begin{proof}[Proof (of Theorem~\ref{thm:gpgd-rsc-rss-proof}).]
Recall that the proof of Theorem~\ref{thm:pgd-conv-proof} used the SC/SS properties for the analysis. We will replace these by the RSC/RSS properties -- we will use RSC to track the global convergence of the algorithm and RSS to locally assess the progress made by the algorithm in each iteration. We will use $\Phi_t = f(\x^{t+1}) - f(\x^\ast)$ as the potential function.\\

\noindent\textbf{(Apply Restricted Strong Smoothness)} Since both $\x^t,\x^{t+1} \in \cC$ due to the projection steps, we apply the $\beta$-RSS property to them.
\begin{align*}
f(\x^{t+1}) - f(\x^t) &\leq \ip{\nabla f(\x^t)}{\x^{t+1}-\x^t} + \frac{\beta}{2}\norm{\x^t-\x^{t+1}}_2^2\\
											&= \frac{1}{\eta}\ip{\x^t - \z^{t+1}}{\x^{t+1}-\x^t} + \frac{\beta}{2}\norm{\x^t-\x^{t+1}}_2^2\\
											&= \frac{\beta}{2}\br{\norm{\x^{t+1} - \z^{t+1}}_2^2 - \norm{\x^t - \z^{t+1}}_2^2}
\end{align*}
Notice that this step crucially uses the fact that $\eta = 1/\beta$.\\

\noindent\textbf{(Apply Projection Property)} We are again stuck with the unwieldy $\z^{t+1}$ term. However, unlike before, we cannot apply projection properties I or II as non-convex projections do not satisfy  them. Instead, we resort to Projection Property-O (Lemma~\ref{lem:proj-prop-o}), that all projections (even non-convex ones) must satisfy. Applying this property gives us
\begin{align*}
f(\x^{t+1}) - f(\x^t) &\leq \frac{\beta}{2}\br{\norm{\x^\ast - \z^{t+1}}_2^2 - \norm{\x^t - \z^{t+1}}_2^2}\\
											&= \frac{\beta}{2}\br{\norm{\x^\ast - \x^t}_2^2 + 2\ip{\x^\ast - \x^t}{\x^t - \z^{t+1}}}\\
											&= \frac{\beta}{2}\norm{\x^\ast - \x^t}_2^2 + \ip{\x^\ast - \x^t}{\nabla f(\x^t)}
\end{align*}

\noindent\textbf{(Apply Restricted Strong Convexity)} Since both $\x^t,\x^\ast \in \cC$, we apply the $\alpha$-RSC property to them. However, we do so in two ways:
\begin{align*}
f(\x^\ast) - f(\x^t) &\geq \ip{\nabla f(\x^t)}{\x^\ast - \x^t} + \frac{\alpha}{2}\norm{\x^t - \x^\ast}_2^2\\
f(\x^t) - f(\x^\ast) &\geq \ip{\nabla f(\x^\ast)}{\x^t - \x^\ast} + \frac{\alpha}{2}\norm{\x^t - \x^\ast}_2^2 \geq \frac{\alpha}{2}\norm{\x^t - \x^\ast}_2^2,
\end{align*}
where in the second line we used the fact that we assumed $\nabla f(\x^\ast) = \vzero$. We recall that this assumption can be done away with but makes the proof more complicated which we wish to avoid. Simple manipulations with the two equations give us
\[
\ip{\nabla f(\x^t)}{\x^\ast - \x^t} + \frac{\beta}{2}\norm{\x^\ast - \x^t}_2^2 \leq \br{2 - \frac{\beta}{\alpha}}\br{f(\x^\ast) - f(\x^t)}
\]
Putting this in the earlier expression gives us
\[
f(\x^{t+1}) - f(\x^t) \leq \br{2 - \frac{\beta}{\alpha}}\br{f(\x^\ast) - f(\x^t)}
\]
The above inequality is quite interesting. It tells us that the larger the gap between $f(\x^\ast)$ and $f(\x^t)$, the larger will be the drop in objective value in going from $\x^t$ to $\x^{t+1}$. The form of the result is also quite fortunate as it assures us that we will cover a constant fraction $\br{2-\frac{\beta}{\alpha}}$ of the remaining ``distance'' to $\x^\ast$ at each step! Rearranging this gives
\[
\Phi_{t+1} \leq (\kappa-1)\Phi_t,
\]
where $\kappa = \beta/\alpha$. Note that we always have $\kappa \geq 1$\elink{exer:pgd-kappa} and by assumption $\kappa = \beta/\alpha < 2$, so that we always have $\kappa - 1 \in [0,1)$. This proves the result after simple manipulations.
\end{proof}
We see that the condition number has yet again played in crucial role in deciding the convergence rate of the algorithm, this time for a non-convex problem. However, we see that the condition number is defined differently here, using the RSC/RSS constants instead of the SC/SS constants as we did in \S~\ref{chap:tools}.

The reader would notice that while there was no restriction on the condition number $\kappa$ in the analysis of the PGD algorithm (see Theorem~\ref{thm:pgd-sc-ss-proof}), the analysis of the gPGD algorithm does require $\kappa < 2$. It turns out that this restriction can be done away with for specific problems. However, the analysis becomes significantly more complicated. Resolving this issue in general is beyond the scope of this monograph but we will revisit this question in \S~\ref{chap:spreg} when we study sparse recovery in ill-conditioned settings. i.e., with large condition numbers.

In subsequent sections, we will see more refined versions of the gPGD algorithm for different non-convex optimization problems, as well as more refined and problem-specific analyses. In all cases we will see that the RSC/RSS assumptions made by us can be fulfilled in practice and that gPGD-style algorithms offer very good performance on practical machine learning and signal processing problems.

\section{Exercises}
\begin{exer}
\label{exer:pgd-rsc}
Verify that the basic convergence result for the PGD algorithm in Theorem~\ref{thm:pgd-conv-proof}, continues to hold when the constraint set $\cC$ is convex and $f$ only satisfies \emph{restricted convexity} over $\cC$ (i.e., $f$ is not convex over the entire $\R^p$). Verify that the result for strongly convex and smooth functions in Theorem~\ref{thm:pgd-sc-ss-proof}, also continues to hold if $f$ satisfies RSC and RSS over a convex constraint set $\cC$.
\end{exer}
\begin{exer}
\label{exer:pgd-kappa}
Let the function $f$ satisfy the $\alpha$-RSC and $\beta$-RSS properties over a set $\cC$. Show that the condition number $\kappa = \frac{\beta}{\alpha} \geq 1$. Note that the function $f$ and the set $\cC$ may both be non-convex.
\end{exer}
\begin{exer}
\label{exer:pgd-low-rank}
Recall the recommendation systems problem we discussed in \S~\ref{chap:intro}. Show that assuming the ratings matrix to be rank-$r$ is equivalent to assuming that with every user $i \in [m]$ there is associated a vector $\vu_i \in \bR^r$ describing that user, and with every item $j \in [n]$ there is associated a vector $\vv_i \in \bR^r$ describing that item such that the rating given by user $i$ to item $j$ is $A_{ij} = \vu_i^\top\vv_j$.\\
\textit{Hint}: Use the singular value decomposition for $A$.
\end{exer}
\chapter{Alternating Minimization}
\label{chap:altmin}

In this section we will introduce a widely used non-convex optimization primitive, namely the alternating minimization principle. The technique is extremely general and its popular use actually predates the recent advances in non-convex optimization by several decades. Indeed, the popular Lloyd's algorithm \citep{Lloyd1982} for k-means clustering and the EM algorithm \citep{DempsterLR1977} for latent variable models are problem-specific variants of the general alternating minimization principle. The technique continues to inspire new algorithms for several important non-convex optimization problems such as matrix completion, robust learning, phase retrieval and dictionary learning.

Given the popularity and breadth of use of this method, our task to present an introductory treatment will be even more challenging here. To keep the discussion focused on core principles and tools, we will refrain from presenting the alternating minimization principle in all its varieties. Instead, we will focus on showing, in a largely problem-independent manner, what are the challenges that face alternating minimization when applied to real-life problems, and how they can be overcome. Subsequent sections will then show how this principle can be applied to various machine learning and signal processing tasks. In particular, \S~\ref{chap:em} will be devoted to the EM algorithm which embodies the alternating minimization principle and is extremely popular for latent variable estimation problems in statistics and machine learning.

The discussion will be divided into four parts. In the first part, we will look at some useful structural properties of functions that frequently arise in alternating minimization settings. In the second part, we will present a general implementation of the alternating minimization principle and discuss some challenges faced by this algorithm in offering convergent behavior in real-life problems. In the third part, as a warm-up exercise, we will show how these challenges can be overcome when the optimization problem being solved is convex. Finally in the fourth part, we will discuss the more interesting problem of convergence of alternating minimization for non-convex problems.

\section{Marginal Convexity and Other Properties}
Alternating Minimization is most often utilized in settings where the optimization problem concerns two or more (groups of) variables. For example, recall the matrix completion problem in recommendation systems from \S~\ref{chap:intro} which involved two variables $U$ and $V$ denoting respectively, the latent factors for the users and the items. In several such cases, the optimization problem, more specifically the objective function, is not \emph{jointly convex} in all the variables.

\clearpage

\begin{definition}[Joint Convexity]
\label{defn:joint-cvx-fn}
A continuously differentiable function in two variables $f: \R^p \times \R^q \rightarrow \R$ is considered jointly convex if for every $(\x^1,\y^1), (\x^2,\y^2) \in \R^p \times \R^q$ we have
\[
f(\x^2,\y^2) \geq f(\x^1,\y^1) + \ip{\nabla f(\x^1,\y^1)}{(\x^2,\y^2) - (\x^1,\y^1)},
\]
where $\nabla f(\x^1,\y^1)$ is the gradient of $f$ at the point $(\x^1,\y^1)$.
\end{definition}

The definition of joint convexity is not different from the one for convexity Definition~\ref{defn:cvx-fn}. Indeed the two coincide if we assume $f$ to be a function of a single variable $\z = (\x,\y) \in \R^{p+q}$ instead of two variables. However, not all multivariate functions that arise in applications are jointly convex. This motivates the notion of \emph{marginal convexity}.

\begin{definition}[Marginal Convexity]
\label{defn:marg-cvx-fn}
A continuously differentiable function of two variables $f: \R^p \times \R^q\rightarrow \R$ is considered marginally convex in its first variable if for every value of $\y \in \R^q$, the function $f(\cdot,\y): \R^p \rightarrow \R$ is convex, i.e., for every $\x^1,\x^2 \in \R^p$, we have
\[
f(\x^2,\y) \geq f(\x^1,\y) + \ip{\nabla_\x f(\x^1,\y)}{\x^2 - \x^1},
\]
where $\nabla_\x f(\x^1,\y)$ is the partial gradient of $f$ with respect to its first variable at the point $(\x^1,\y)$. A similar condition is imposed for $f$ to be considered marginally convex in its second variable.
\end{definition}
Although the definition above has been given for a function of two variables, it clearly extends to functions with an arbitrary number of variables. It is interesting to note that whereas the objective function in the matrix completion problem mentioned earlier is not jointly convex in its variables, it is indeed marginally convex in both its variables\elink{exer:altmin-marg-conv}.

\begin{figure}
\includegraphics[width=\columnwidth]{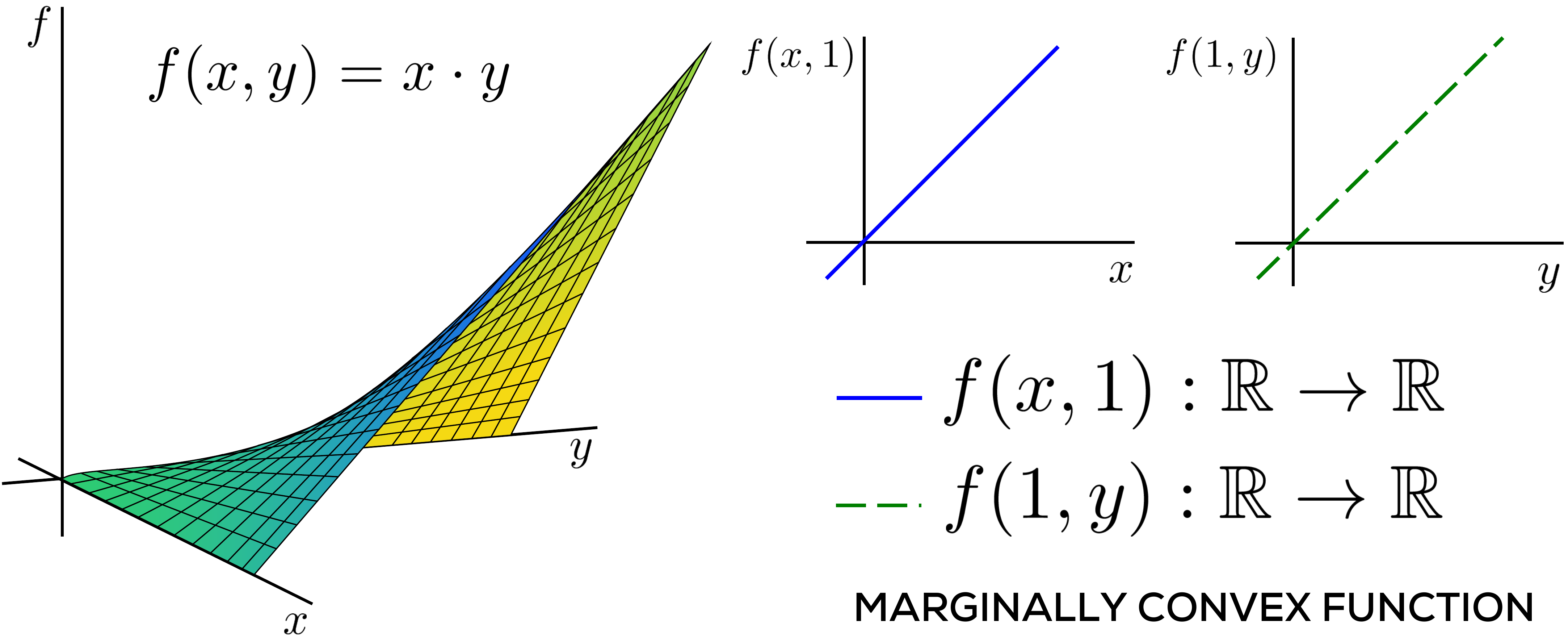}
\caption[Marginal Convexity]{A marginally convex function is not necessarily (jointly) convex. The function $f(x,y) = x\cdot y$ is marginally linear, hence marginally convex, in both its variables, but clearly not a (jointly) convex function.}%
\label{fig:marg}
\end{figure}

It is also useful to note that even though a function that is marginally convex in all its variables need not be a jointly convex function (see Figure~\ref{fig:marg}), the converse is true\elink{exer:altmin-joint-marg}. We will find the following notions of \emph{marginal strong convexity and smoothness} to be especially useful in our subsequent discussions.

\begin{definition}[Marginally Strongly Convex/Smooth Function]
\label{defn:marg-strong-cvx-smooth-fn}
A continuously differentiable function $f: \R^p \times \R^q \rightarrow \R$ is considered (uniformly) $\alpha$-marginally strongly convex (MSC) and (uniformly) $\beta$-marginally strongly smooth (MSS) in its first variable if for every value of $\y \in \R^q$, the function $f(\cdot,\y): \R^p \rightarrow \R$ is $\alpha$-strongly convex and $\beta$-strongly smooth, i.e., for every $\x^1,\x^2 \in \R^p$, we have
\[
\frac{\alpha}{2}\norm{\x^2 - \x^1}_2^2 \leq f(\x^2,\y) - f(\x^1,\y) - \ip{\vg}{\x^2 - \x^1} \leq \frac{\beta}{2}\norm{\x^2 - \x^1}_2^2,
\]
where $\vg = \nabla_\x f(\x^1,\y)$ is the partial gradient of $f$ with respect to its first variable at the point $(\x^1,\y)$. A similar condition is imposed for $f$ to be considered (uniformly) MSC/MSS in its second variable.
\end{definition}

The above notion is a ``uniform'' one since the parameters $\alpha, \beta$ do not depend on the $\y$ coordinate. It is instructive to relate MSC/MSS to the RSC/RSS properties from \S~\ref{chap:tools}. MSC/MSS extend the idea of functions that are not ``globally'' convex (strongly or otherwise) but do exhibit such properties under ``qualifications''. MSC/MSS use a different qualification than RSC/RSS did. Note that a function that is MSC with respect to \emph{all} its variables, need not be a convex function\elink{exer:altmin-msc-sc}.

\begin{algorithm}[t]
	\caption{Generalized Alternating Minimization (gAM)}
	\label{algo:gam}
	\begin{algorithmic}[1]
		{
			\REQUIRE Objective function $f: \cX \times \cY \rightarrow \R$
			\ENSURE A point $(\hat\x,\hat\y) \in \cX \times \cY$ with near-optimal objective value
			\STATE $(\x^1,\y^1) \leftarrow \text{\textsf{INITALIZE}}()$
			\FOR{$t = 1, 2, \ldots, T$}
				\STATE $\x^{t+1} \leftarrow \arg\min_{\x \in \cX} f(\x,\y^t)$
				\STATE $\y^{t+1} \leftarrow \arg\min_{\y \in \cY} f(\x^{t+1},\y)$
			\ENDFOR
			\STATE \textbf{return} {$(\x^T,\y^T)$}
		}
	\end{algorithmic}
\end{algorithm}

\section{Generalized Alternating Minimization}
The alternating minimization algorithm (gAM) is outlined in Algorithm~\ref{algo:gam} for an optimization problem on two variables constrained to the sets $\cX$ and $\cY$ respectively. The procedure can be easily extended to functions with more variables, or have more complicated constraint sets\elink{exer:altmin-gen-am} of the form $\cZ \subset \cX \times \cY$. After an initialization step, gAM alternately fixes one of the variables and optimizes over the other.

This approach of solving several intermediate \emph{marginal} optimization problems instead of a single big problem is the key to the practical success of gAM. Alternating minimization is mostly used when these marginal problems are easy to solve. Later, we will see that there exist simple, often closed form solutions to these marginal problems for applications such as matrix completion, robust learning etc. 

There also exist ``descent'' versions of gAM which do not completely perform marginal optimizations but take gradient steps along the variables instead. These are described below
\begin{align*}
\x^{t+1} &\leftarrow \x^t - \eta_{t,1}\cdot\nabla_\x f(\x^t,\y^t)\\
\y^{t+1} &\leftarrow \y^t - \eta_{t,2}\cdot\nabla_\y f(\x^{t+1},\y^t)
\end{align*}
These descent versions are often easier to execute but may also converge more slowly. If the problem is nicely structured, then progress made on the intermediate problems offers fast convergence to the optimum. However, from the point of view of convergence, gAM faces several challenges. To discuss those, we first introduce some more concepts.

\begin{definition}[Marginally Optimum Coordinate]
\label{defn:marg-opt}
Let $f$ be a function of two variables constrained to be in the sets $\cX,\cY$ respectively. For any point $\y \in \cY$, we say that $\tilde\x$ is a marginally optimal coordinate with respect to $\y$, and use the shorthand $\tilde\x \in \mopt(\y)$, if $f(\tilde\x,\y) \leq f(\x,\y)$ for all $\x \in \cX$. Similarly for any $\x \in \cX$, we say $\tilde\y \in \mopt(\x)$ if $\tilde\y$ is a marginally optimal coordinate with respect to $\x$.
\end{definition}

\begin{definition}[Bistable Point]
\label{defn:bistable}
Given a function $f$ over two variables constrained within the sets $\cX,\cY$ respectively, a point $(\x,\y) \in \cX\times\cY$ is considered a \emph{bistable} point if $\y \in \mopt(\x)$ and $\x \in \mopt(\y)$ i.e., both coordinates are marginally optimal with respect to each other.
\end{definition}

It is easy to see\elink{exer:altmin-opt-bs} that the optimum of the optimization problem must be a bistable point. The reader can also verify that the gAM procedure must stop after it has reached a bistable point. However, two questions arise out of this. First, how fast does gAM approach a bistable point and second, even if it reaches a bistable point, is that point guaranteed to be (globally) optimal?

The first question will be explored in detail later. It is interesting to note that the gAM procedure has no parameters, such as step length. This can be interpreted as a benefit as well as a drawback. While it relieves the end-user from spending time tweaking parameters, it also means that the user has less control over the progress of the algorithm. Consequently, the convergence of the gAM procedure is totally dependent on structural properties of the optimization problem. In practice, it is common to switch between gAM updates as given in Algorithm~\ref{algo:gam} and descent versions thereof discussed earlier. The descent versions do give a step length as a tunable parameter to the user.

\begin{figure}
\includegraphics[width=\columnwidth]{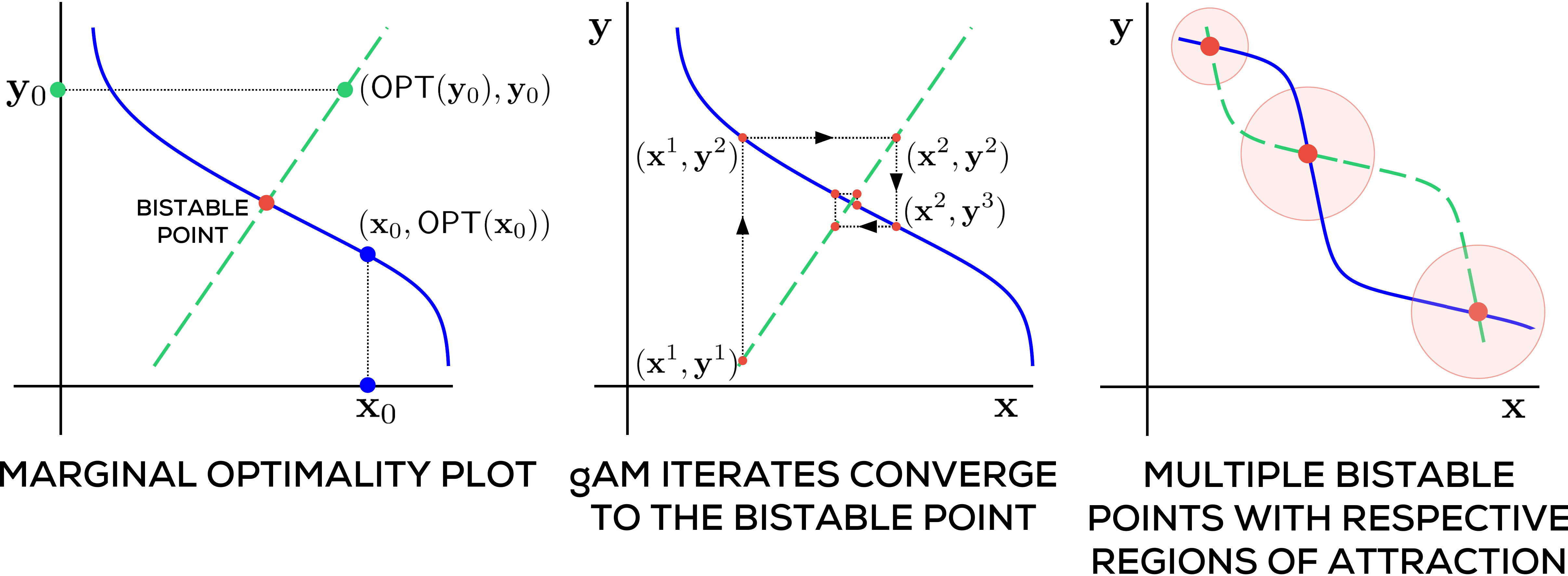}
\caption[Bistable Points and Convergence of gAM]{The first plot depicts the marginally optimal coordinate curves for the two variables whose intersection produces bistable points. gAM is adept at converging to bistable points for well-behaved functions. Note that gAM progresses only along a single variable at a time. Thus, in the 2-D example in the second plot, the progress lines are only vertical or horizontal. A function may have multiple bistable points, each with its own region of attraction, depicted as shaded circles.}
\label{fig:bistable}
\end{figure}

The second question requires a closer look at the interaction between the objective function and the gAM process. Figure~\ref{fig:bistable} illustrates this with toy bi-variate functions over $\R^2$. In the first figure, the bold solid curve plots the function $g: \cX \rightarrow \cX \times \cY$ with $g(\x) = (\x,\mopt(\x))$ (in this toy case, the marginally optimal coordinates are taken to be unique for simplicity, i.e., $\abs{\mopt(\x)} = 1$ for all $\x \in \cX$). The bold dashed curve similarly plots $h: \cY \rightarrow \cX \times \cY$ with $h(\y) = (\mopt(\y),\y)$. These plots are quite handy in demonstrating the convergence properties of the gAM algorithm.

It is easy to see that bistable points lie precisely at the intersection of the bold solid and the bold dashed curves. The second illustration shows how the gAM process may behave when instantiated with this toy function -- clearly gAM exhibits rapid convergence to the bistable point. However, the third illustration shows that functions may have multiple bistable points. The figure shows that this may happen even if the marginally optimal coordinates are unique i.e., for every $\x$ there is a unique $\tilde\y$ such that $\tilde\y = \mopt(\x)$ and vice versa.

In case a function taking bounded values possesses multiple bistable points, the bistable point to which gAM eventually converges depends on where the procedure was initialized. This is exemplified in the third illustration where each bistable region has its own ``region of attraction''. If initialized inside a particular region, gAM converges to the bistable point corresponding to that region. This means that in order to converge to the globally optimal point, gAM must be initialized inside the region of attraction of the global optimum.

The above discussion shows that it may be crucial to properly initialize the gAM procedure to ensure convergence to the global optimum. Indeed, when discussing gAM-style algorithms for learning latent variable models, matrix completion and phase retrieval in later sections, we will pay special attention to initialize the procedure ``close'' to the optimum. The only exception will be that of robust regression in \S~\ref{chap:rreg} where it seems that the problem structure ensures a unique bistable point and so, a careful initialization is not required.

\section{A Convergence Guarantee for gAM for Convex Problems}
As usual, things do become nice when the optimization problems are convex. For instance, for differentiable convex functions, all bistable points are global minima\elink{exer:altmin-conv-gam} and thus, converging to any one of them is sufficient. In fact, approaches similar to gAM, commonly known as \emph{Coordinate Minimization} (CM), are extremely popular for large scale convex optimization problems. The CM procedure simply treats a single $p$-dimensional variable $\x \in \R^p$ as $p$ one-dimensional variables $\bc{\x_1,\ldots,\x_p}$ and executes gAM-like steps with them, resulting in the intermediate problems being uni-dimensional. However, it is worth noting that gAM can struggle with non-differentiable objectives. 

In the following, we will analyze the convergence of the gAM algorithm for the case when the objective function is jointly convex in both its variables. We will then see what happens to Algorithm~\ref{algo:gam} if the function $f$ is non-convex. To keep the discussion focused, we will consider an unconstrained optimization problem.

\begin{theorem}
\label{thm:gam-smooth-cvx-proof}
Let $f: \R^p \times \R^q \rightarrow \R$ be jointly convex, continuously differentiable, satisfy $\beta$-MSS in both its variables, and $f^\ast = \min_{\x,\y}f(\x,\y) > -\infty$. Let the region $S_0 = \bc{\x,\y: f(\x,\y) \leq f(\vzero,\vzero)} \subset \R^{p+q}$ be bounded, i.e., satisfy $S_0 \subseteq \cB_2((\vzero,\vzero),R)$ for some $R > 0$. Let Algorithm~\ref{algo:gam} be executed with the initialization $(\x^1,\y^1) = (\vzero,\vzero)$. Then after at most $T = \bigO{\frac{1}{\epsilon}}$ steps, we have $f(\x^T,\y^T) \leq f^\ast + \epsilon$.
\end{theorem}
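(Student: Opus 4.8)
The plan is to mirror the structure of the strongly-convex/smooth analysis in the proof of Theorem~\ref{thm:pgd-sc-ss-proof}, but replace the geometric potential decrease by a \emph{sublinear} one, since here we only have joint convexity (not strong convexity) to exploit. I would track the suboptimality potential $\Phi_t = f(\x^t,\y^t) - f^\ast$, where $(\x^\ast,\y^\ast)$ is some global minimizer. Such a minimizer exists and lies in $S_0$: since $f$ is continuous and $S_0$ is closed and bounded, it is compact, and the global minimizer satisfies $f = f^\ast \leq f(\vzero,\vzero)$, hence sits in $S_0$.

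First I would establish that all iterates stay inside $S_0$. This is immediate from the monotonicity of gAM: each marginal minimization can only decrease the objective, so $f(\x^{t+1},\y^{t+1}) \leq f(\x^{t+1},\y^t) \leq f(\x^t,\y^t) \leq f(\vzero,\vzero)$. Consequently every iterate, as well as $(\x^\ast,\y^\ast)$, lies in $S_0 \subseteq \cB_2((\vzero,\vzero),R)$, so any two of them are at distance at most $2R$. This is the only place the boundedness hypothesis enters, and it plays the role that $\norm{\x^\ast}_2$ played in the earlier proofs.

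Next comes the per-iteration progress bound, which is the technical heart. I would combine two ingredients. (i) Since $\x^{t+1}$ exactly minimizes the $\beta$-smooth function $f(\cdot,\y^t)$, the standard descent lemma (the upper $\beta$-MSS bound applied to a $1/\beta$ gradient step, which the exact minimizer can only beat) gives $f(\x^t,\y^t) - f(\x^{t+1},\y^t) \geq \frac{1}{2\beta}\norm{\nabla_\x f(\x^t,\y^t)}_2^2$. (ii) Joint convexity gives $\Phi_t \leq \ip{\nabla f(\x^t,\y^t)}{(\x^t,\y^t)-(\x^\ast,\y^\ast)} \leq 2R\,\norm{\nabla f(\x^t,\y^t)}_2$. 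The obstacle is that (i) controls only the $\x$-partial gradient at $(\x^t,\y^t)$, whereas (ii) features the \emph{full} gradient. The key observation that reconciles them is that for $t \geq 2$ the coordinate $\y^t$ was itself produced by the previous $\y$-minimization, so $\nabla_\y f(\x^t,\y^t) = \vzero$ and hence $\norm{\nabla f(\x^t,\y^t)}_2 = \norm{\nabla_\x f(\x^t,\y^t)}_2$. Using also $\Phi_{t+1} \leq f(\x^{t+1},\y^t)-f^\ast$, these yield the clean recurrence $\Phi_t - \Phi_{t+1} \geq \Phi_t - f(\x^{t+1},\y^t) + f^\ast \geq \frac{1}{2\beta}\norm{\nabla_\x f(\x^t,\y^t)}_2^2 \geq \frac{\Phi_t^2}{8\beta R^2}$.

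Finally I would convert this quadratic decrease into the $\bigO{1/\epsilon}$ rate by the reciprocal-telescoping trick: dividing $\Phi_t - \Phi_{t+1} \geq \Phi_t^2/C$ (with $C = 8\beta R^2$) by $\Phi_t\Phi_{t+1}$ and using $\Phi_{t+1} \leq \Phi_t$ gives $\frac{1}{\Phi_{t+1}} - \frac{1}{\Phi_t} \geq \frac{1}{C}$. Summing from $t=2$ to $T-1$ telescopes to $\frac{1}{\Phi_T} \geq \frac{T-2}{C}$, i.e. $\Phi_T \leq \frac{8\beta R^2}{T-2}$, so $T = \bigO{1/\epsilon}$ suffices. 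The only wrinkle is the very first step, where $\nabla_\y f(\x^1,\y^1)$ need not vanish at the initialization; I would simply start the telescoped sum at $t=2$, which is harmless since monotonicity already guarantees $\Phi_2 \leq \Phi_1 = f(\vzero,\vzero)-f^\ast < \infty$. I expect the main obstacle to be getting the recurrence into the $\Phi_t^2$ form (rather than $\Phi_{t+1}^2$, which does not telescope cleanly), and the vanishing-$\y$-gradient observation is precisely what delivers this; everything afterward is routine. It is worth noting that the argument uses only $\beta$-MSS and joint convexity, never marginal strong convexity, exactly matching the stated hypotheses.
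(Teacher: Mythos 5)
Your proposal is correct and follows essentially the same route as the paper's proof: the same monotonicity and boundedness observations, the same hypothetical $1/\beta$ gradient step that the exact marginal minimizer can only beat, the same use of $\nabla_\y f(\x^t,\y^t)=\vzero$ for $t\geq 2$ to equate the full and partial gradient norms, and the same reciprocal-telescoping conversion of the $\Phi_t^2$ recurrence into an $\bigO{1/\epsilon}$ rate. The only cosmetic difference is that the paper tracks the reciprocal suboptimality $1/(f(\x^t,\y^t)-f^\ast)$ as its potential while you track the suboptimality directly and invert at the end.
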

\begin{proof}
The first property of the gAM algorithm that we need to appreciate is \emph{monotonicity}. It is easy to see that due to the marginal minimizations carried out, we have at all time steps $t$,
\[
f(\x^{t+1},\y^{t+1}) \leq f(\x^{t+1},\y^t) \leq f(\x^t,\y^t)
\]
The region $S_0$ is the \emph{sublevel set} of $f$ at the initialization point. Due to the monotonicity property, we have $f(\x^t,\y^t) \leq f(\x^1,\y^1)$ for all $t$ i.e., $(\x^t,\y^t) \in S_0$ for all $t$. Thus, gAM remains restricted to the bounded region $S_0$ and does not diverge. We notice that this point underlies the importance of proper initialization: gAM benefits from being initialized at a point at which the sublevel set of $f$ is bounded.

We will use $\Phi_t = \frac{1}{f(\x^t,\y^t)-f^\ast}$ as the potential function. This is a slightly unusual choice of potential function but its utility will be clear from the proof. Note that $\Phi_t > 0$ for all $t$ and that convergence is equivalent to showing $\Phi_t \rightarrow \infty$. We will, as before, use smoothness to analyze the per-iteration progress made by gAM and use convexity for global convergence analysis. For any time step $t \geq 2$, consider the hypothetical update we could have made had we done a gradient step instead of the marginal minimization step gAM does in step 3.
\[
\tilde\x^{t+1} = \x^t - \frac{1}{\beta}\nabla_\x f(\x^t,\y^t)
\]
\textbf{(Apply Marginal Strong Smoothness)} We get
\begin{align*}
f(\tilde\x^{t+1},\y^t) &\leq f(\x^t,\y^t) + \ip{\nabla_\x f(\x^t,\y^t),\tilde\x^{t+1} - \x^t} + \frac{\beta}{2}\norm{\tilde\x^{t+1} - \x^t}_2^2\\
&= f(\x^t,\y^t) - \frac{1}{2\beta}\norm{\nabla_\x f(\x^t,\y^t)}_2^2
\end{align*}
\textbf{(Apply Monotonicity of gAM)} Since $\x^{t+1} \in \mopt(\y^t)$, we must have $f(\x^{t+1},\y^t) \leq f(\tilde\x^{t+1},\y^t)$, which gives us
\[
f(\x^{t+1},\y^{t+1}) \leq f(\x^{t+1},\y^t) \leq f(\x^t,\y^t) - \frac{1}{2\beta}\norm{\nabla_\x f(\x^t,\y^t)}_2^2
\]
Now since $t \geq 2$, we must have had $\y^t \in \argmin_\y f(\x^t,\y)$. Since $f$ is differentiable, we must have (\cite[see][Proposition 1.2]{Bubeck2015}) $\nabla_\y f(\x^t,\y^t) = \vzero$. Applying the Pythagoras' theorem now gives us as a result, $\norm{\nabla f(\x^t,\y^t)}_2^2 = \norm{\nabla_\x f(\x^t,\y^t)}_2^2$.\\

\noindent\textbf{(Apply Convexity)} Since $f$ is jointly convex, we can state
\[
f(\x^t,\y^t) - f^\ast \leq \ip{\nabla f(\x^t,\y^t)}{(\x^t,\y^t)-(\x^\ast,\y^\ast)} \leq 2R\norm{\nabla f(\x^t,\y^t)}_2,
\]
where we have used the Cauchy-Schwartz inequality and the fact that $(\x^t,\y^t),(\x^\ast,\y^\ast) \in S_0$. Putting these together gives us
\[
f(\x^{t+1},\y^{t+1}) \leq f(\x^t,\y^t) - \frac{1}{4\beta R^2}\br{f(\x^t,\y^t) - f^\ast}^2,
\]
or in other words,
\[
\frac{1}{\Phi_{t+1}} \leq \frac{1}{\Phi_t} - \frac{1}{4\beta R^2}\frac{1}{\Phi_t^2} \leq \frac{1}{\Phi_t} - \frac{1}{4\beta R^2}\frac{1}{\Phi_t\Phi_{t+1}},
\]
where the second step follows from monotonicity. Rearranging gives us
\[
\Phi_{t+1} - \Phi_t \geq \frac{1}{4\beta R^2},
\]
which upon telescoping, and using $\Phi_2 \geq 0$ gives us
\[
\Phi_T \geq \frac{T}{4\beta R^2},
\]
which proves the result. Note that the result holds even if $f$ is jointly convex and satisfies the MSS property only \emph{locally} in the region $S_0$.
\end{proof}

\section{A Convergence Guarantee for gAM under MSC/MSS}
We will now investigate what happens when the gAM algorithm is executed with a non-convex function. Note that the previous result crucially uses convexity and will not extend here. Moreover, for non-convex functions, there is no assurance that all bistable points are global minima. Instead we will have to fend for fast convergence to a bistable point, as well as show that the function is globally minimized there.

Doing the above will require additional structure on the objective function. In the following, we will denote $f^\ast = \min_{\x,\y}f(\x,\y)$ to be the optimum value of the objective function. We will fix $(\x^\ast,\y^\ast)$ to be any point such that $f(\x^\ast,\y^\ast) = f^\ast$ (there may be several). We will also let $\cZ^\ast \subset \R^p \times \R^q$ denote the set of all bistable points for $f$.

First of all, notice that if a continuously differentiable function $f: \R^p \times \R^q \rightarrow \R$ is marginally convex (strongly or otherwise) in both its variables, then its bistable points are exactly its stationary points.

\begin{lemma}
\label{lemma:bistable-stat}
A point $(\x,\y)$ is bistable with respect to a continuously differentiable function $f: \R^p \times \R^q$ that is marginally convex in both its variables iff $\nabla f(\x,\y) = \vzero$.
\end{lemma}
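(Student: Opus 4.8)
The plan is to unpack the definition of bistability into its two constituent unconstrained marginal minimization conditions and then prove the two directions of the equivalence separately, noting at the outset that only the backward implication will actually invoke marginal convexity. Recall that since $\cX = \R^p$ and $\cY = \R^q$ here, the point $(\x,\y)$ being bistable means precisely that $f(\x,\y) \leq f(\x',\y)$ for all $\x' \in \R^p$ (i.e.\ $\x \in \mopt(\y)$) and $f(\x,\y) \leq f(\x,\y')$ for all $\y' \in \R^q$ (i.e.\ $\y \in \mopt(\x)$). So each marginal condition is an unconstrained minimization of a continuously differentiable function in one block of variables with the other held fixed.

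For the forward direction, I would invoke only the first-order \emph{necessary} condition for unconstrained optima. If $\x$ minimizes the differentiable map $\x' \mapsto f(\x',\y)$ over all of $\R^p$, then its partial gradient must vanish, $\nabla_\x f(\x,\y) = \vzero$; symmetrically, $\y \in \mopt(\x)$ forces $\nabla_\y f(\x,\y) = \vzero$. Stacking these, $\nabla f(\x,\y) = (\nabla_\x f(\x,\y),\nabla_\y f(\x,\y)) = \vzero$. This direction uses neither marginal convexity nor any global structure.

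For the backward direction, this is where marginal convexity does the essential work of upgrading a stationary point to a global marginal minimizer. Suppose $\nabla f(\x,\y) = \vzero$, so both partial gradients vanish. Since $f$ is marginally convex in its first variable, the map $f(\cdot,\y)$ is convex, and Definition~\ref{defn:marg-cvx-fn} gives, for every $\x' \in \R^p$,
\[
f(\x',\y) \geq f(\x,\y) + \ip{\nabla_\x f(\x,\y)}{\x' - \x} = f(\x,\y),
\]
using $\nabla_\x f(\x,\y) = \vzero$. Hence $\x \in \mopt(\y)$; the identical argument with the second variable yields $\y \in \mopt(\x)$, so $(\x,\y)$ is bistable.

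The one point I would flag rather than any genuine obstacle is the asymmetry of the two directions: vanishing gradient follows from bistability for \emph{any} differentiable $f$, but recovering bistability from a vanishing gradient is exactly the step that can fail for non-convex marginals (a stationary point could be a marginal saddle or maximum), so marginal convexity is indispensable there and should be stated explicitly as the hypothesis being consumed.
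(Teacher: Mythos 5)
Your proof is correct and follows essentially the same route as the paper's: the forward direction via the first-order necessary condition for unconstrained marginal minimizers, and the backward direction via the gradient inequality of marginal convexity applied with vanishing partial gradients. Your explicit remark that convexity is consumed only in the backward direction is a nice clarification consistent with the paper's argument.
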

\begin{proof}
It is easy to see that partial derivatives must vanish at a bistable point since the function is differentiable (\cite[see][Proposition 1.2]{Bubeck2015}) and thus we get $\nabla f(\x,\y) = [\nabla_\x f(\x,\y),\nabla_\y f(\x,\y)] = \vzero$. Arguing the other way round, if the gradient, and by extension the partial derivatives, vanish at $(\x,\y)$, then by marginal convexity, for any $\x'$
\[
f(\x',\y) - f(\x,\y) \geq \ip{\nabla_\x f(\x,\y)}{\x'-\x} = 0
\]
Similarly, $f(\x,\y') \geq f(\x,\y)$ for any $\y'$. Thus $(\x,\y)$ is bistable.
\end{proof}

The above tells us that $\cZ^\ast$ is also the set of all stationary points of $f$. However, not all points in $\cZ^\ast$ may be global minima. Addressing this problem requires careful initialization and problem-specific analysis, that we will carry out for problems such as matrix completion etc in later sections. For now, we introduce a generic \emph{robust} bistability property that will be very useful in the analysis. Similar properties are frequently used in the analysis of gAM-style algorithms.

\begin{definition}[Robust Bistability Property]
\label{defn:rob-bistable}
A function $f: \R^p \times \R^q \rightarrow \R$ satisfies the $C$-robust bistability property if for some $C > 0$, for every $(\x,\y) \in \R^p \times \R^q$, $\tilde\y \in \mopt(\x)$ and $\tilde\x \in \mopt(\y)$, we have
\[
f(\x,\y^\ast) + f(\x^\ast,\y) - 2f^\ast \leq C\cdot\br{2f(\x,\y) - f(\x,\tilde\y) - f(\tilde\x,\y)}.
\]
\end{definition}

The right hand expression captures how much one can reduce the function value \emph{locally} by performing marginal optimizations. The property suggests\elink{exer:altmin-robust-bistable} that if not much local improvement can be made (i.e., if $f(\x,\tilde\y) \approx f(\x,\y) \approx f(\tilde\x,\y)$) then we are close to the optimum. This has a simple corollary that all bistable points achieve the (globally) optimal function value. We now present a convergence analysis for gAM.

\begin{theorem}
\label{thm:gam-msc-mss-proof}
Let $f: \R^p \times \R^q \rightarrow \R$ be a continuously differentiable (but possibly non-convex) function that, within the region $S_0 = \bc{\x,\y: f(\x,\y) \leq f(\vzero,\vzero)} \subset \R^{p+q}$, satisfies the properties of $\alpha$-MSC, $\beta$-MSS in both its variables, and $C$-robust bistability. Let Algorithm~\ref{algo:gam} be executed with the initialization $(\x^1,\y^1) = (\vzero,\vzero)$. Then after at most $T = \bigO{\log\frac{1}{\epsilon}}$ steps, we have $f(\x^T,\y^T) \leq f^\ast + \epsilon$.
\end{theorem}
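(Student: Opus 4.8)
The plan is to track sub-optimality through the potential $\Phi_t = f(\x^t,\y^t)-f^\ast$ and to establish a per-step geometric contraction $\Phi_{t+1}\le \frac{C}{1+C}\Phi_t$; iterating this, together with the fact that $\Phi_1 = f(\vzero,\vzero)-f^\ast$ is a fixed constant, then yields the $\bigO{\log\frac1\epsilon}$ iteration bound. First I would record gAM's monotonicity: since $\x^{t+1}\in\mopt(\y^t)$ and $\y^{t+1}\in\mopt(\x^{t+1})$, we have $f(\x^{t+1},\y^{t+1})\le f(\x^{t+1},\y^t)\le f(\x^t,\y^t)$. Consequently every iterate, and also the ``half-step'' point $(\x^{t+1},\y^t)$, has objective value at most $f(\vzero,\vzero)$ and therefore lies in the sublevel set $S_0$, which is exactly where the assumed properties hold. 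In this argument the MSC/MSS hypotheses serve mainly to guarantee that each marginal subproblem has a unique attained minimizer (so that gAM and $\mopt$ are well defined), and in concrete applications they are the properties from which robust bistability is actually verified.

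The crux is to instantiate robust bistability not at $(\x^t,\y^t)$ but at the half-step point $(\x,\y)=(\x^{t+1},\y^t)$, taking $\tilde\x=\x^{t+1}\in\mopt(\y^t)$ and $\tilde\y=\y^{t+1}\in\mopt(\x^{t+1})$. Because $\x^{t+1}$ is \emph{itself} the marginal optimizer for $\y^t$, the term $f(\tilde\x,\y)$ equals $f(\x,\y)$, and the right-hand side of the property collapses:
\[
C\br{2f(\x^{t+1},\y^t) - f(\x^{t+1},\y^{t+1}) - f(\x^{t+1},\y^t)} = C\br{f(\x^{t+1},\y^t) - f(\x^{t+1},\y^{t+1})}.
\]
This is the whole point of choosing this anchor: the local-improvement term on the right becomes exactly $C$ times the genuine $\y$-step progress that gAM makes, so there is no need to bridge a gain measured at $\x^t$ with one measured at $\x^{t+1}$.

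On the left-hand side $f(\x^{t+1},\y^\ast)+f(\x^\ast,\y^t)-2f^\ast$, both summands are non-negative since $f^\ast$ is the global minimum, so dropping the $\y^t$-term leaves the lower bound $f(\x^{t+1},\y^\ast)-f^\ast$. As $\y^{t+1}$ minimizes $f(\x^{t+1},\cdot)$, we get $\Phi_{t+1}=f(\x^{t+1},\y^{t+1})-f^\ast\le f(\x^{t+1},\y^\ast)-f^\ast$. Chaining the two bounds gives $\Phi_{t+1}\le C(f(\x^{t+1},\y^t)-f(\x^{t+1},\y^{t+1}))$, and since $f(\x^{t+1},\y^t)\le f(\x^t,\y^t)$ from the $\x$-step, we obtain $f(\x^{t+1},\y^t)-f(\x^{t+1},\y^{t+1})\le \Phi_t-\Phi_{t+1}$. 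Hence $\Phi_{t+1}\le C(\Phi_t-\Phi_{t+1})$, i.e. $\Phi_{t+1}\le\frac{C}{1+C}\Phi_t$, so that $\Phi_T\le\br{\frac{C}{1+C}}^{T-1}\Phi_1\le\epsilon$ once $T=\bigO{\log\frac1\epsilon}$ (with the constant depending on $C$).

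I expect the only genuine subtlety to be the bookkeeping of which marginal optimizer occupies which slot of the robust-bistability inequality, and in particular the realization that anchoring at the half-step point $(\x^{t+1},\y^t)$ rather than the natural-looking $(\x^t,\y^t)$ is what makes the right-hand side telescope directly into gAM's actual per-step decrease. With the other anchor one is left comparing a marginal gain taken at $\x^t$ against the one gAM takes at $\x^{t+1}$, forcing an auxiliary MSC/MSS argument that the half-step anchor entirely sidesteps. A secondary point to check carefully is that the relevant points indeed lie in $S_0$ before invoking the assumed properties.
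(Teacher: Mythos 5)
Your proof is correct, but it takes a genuinely different route from the one in the text. The text never applies robust bistability directly to function values: it first bounds $\Phi_{t+1}\le\frac{\beta}{2}\norm{\x^{t+1}-\x^\ast}_2^2$ via MSS at the optimum, then shows $\Phi_t-\Phi_{t+1}\ge\frac{\alpha}{2}\norm{\x^{t+1}-\x^t}_2^2$ via MSC at the half-step, and bridges the two distances using Lemma~\ref{lemma:local-conv-gam} (itself derived from robust bistability \emph{combined with} MSC/MSS), arriving at a contraction factor $\eta_0=\frac{2\kappa(1+C\kappa)}{1+2\kappa(1+C\kappa)}$ with $\kappa=\beta/\alpha$. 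Your observation that anchoring Definition~\ref{defn:rob-bistable} at the half-step point $(\x^{t+1},\y^t)$, with $\tilde\x=\x^{t+1}\in\mopt(\y^t)$ and $\tilde\y=\y^{t+1}\in\mopt(\x^{t+1})$, forces $f(\tilde\x,\y)=f(\x,\y)$ and collapses the right-hand side to $C\br{f(\x^{t+1},\y^t)-f(\x^{t+1},\y^{t+1})}$ is sound, and every subsequent step checks out: the left-hand side dominates $\Phi_{t+1}$ because $f(\x^\ast,\y^t)\ge f^\ast$ and $\y^{t+1}\in\mopt(\x^{t+1})$, and monotonicity of the $\x$-step turns the right-hand side into $C(\Phi_t-\Phi_{t+1})$, giving $\Phi_{t+1}\le\frac{C}{1+C}\Phi_t$. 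What your route buys is a shorter argument, a rate that depends only on $C$ and not on the condition number $\kappa$, and validity already at $t=1$ (the text needs $t\ge 2$ so that $\y^t\in\mopt(\x^t)$ holds). What it gives up is the geometric by-product of the text's route: Lemma~\ref{lemma:local-conv-gam} certifies $\norm{\x^t-\x^\ast}_2^2+\norm{\y^t-\y^\ast}_2^2\le\frac{C\beta}{\alpha}\norm{\x^t-\x^{t+1}}_2^2$, hence convergence of the iterates themselves and not merely of their objective values. Your treatment of the $S_0$ membership issue (monotonicity keeps all iterates and half-step points in the sublevel set) matches the text's and is exactly the right thing to verify before invoking the restricted hypotheses.
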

Note that the MSC/MSS and robust bistability properties need only hold within the sublevel set $S_0$. This again underlines the importance of proper initialization. Also note that gAM offers rapid convergence despite the non-convexity of the objective. In order to prove the result, the following consequence of $C$-robust bistability will be useful.
\begin{lemma}
\label{lemma:local-conv-gam}
Let $f$ satisfy the properties mentioned in Theorem~\ref{thm:gam-msc-mss-proof}. Then for any $(\x,\y) \in \R^p \times \R^q$, $\tilde\y \in \mopt(\x)$ and $\tilde\x \in \mopt(\y)$,
\[
\norm{\x-\x^\ast}_2^2 + \norm{\y-\y^\ast}_2^2 \leq \frac{C\beta}{\alpha}\br{\norm{\x-\tilde\x}_2^2 + \norm{\y-\tilde\y}_2^2}
\]
\end{lemma}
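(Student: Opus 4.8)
The plan is to use the $C$-robust bistability inequality as the central link in a chain of three inequalities: I will lower-bound its left-hand side by $\frac{\alpha}{2}\br{\norm{\x-\x^\ast}_2^2 + \norm{\y-\y^\ast}_2^2}$ using marginal strong convexity, and upper-bound its right-hand side by $\frac{\beta}{2}\br{\norm{\x-\tilde\x}_2^2 + \norm{\y-\tilde\y}_2^2}$ using marginal strong smoothness. Chaining these with the robust bistability inequality and dividing through by $\alpha/2$ then yields the claimed bound with the factor $C\beta/\alpha$. The key enabling observation throughout is that the relevant partial gradients vanish at both the optimum and at the marginal optima, so the inner-product (linear) terms in the MSC/MSS inequalities drop out.

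For the left-hand side, note that since $(\x^\ast,\y^\ast)$ is a global minimizer of the differentiable function $f$, it is a stationary point, so $\nabla_\x f(\x^\ast,\y^\ast) = \vzero$ and $\nabla_\y f(\x^\ast,\y^\ast) = \vzero$ (equivalently, it is bistable by the remark preceding the lemma, hence stationary by Lemma~\ref{lemma:bistable-stat}). Applying $\alpha$-MSC in the first variable with base point $\x^\ast$ and the second coordinate fixed to $\y^\ast$, and recalling $f^\ast = f(\x^\ast,\y^\ast)$, gives $f(\x,\y^\ast) - f^\ast \geq \frac{\alpha}{2}\norm{\x-\x^\ast}_2^2$ because the inner-product term vanishes. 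Symmetrically, $\alpha$-MSC in the second variable gives $f(\x^\ast,\y) - f^\ast \geq \frac{\alpha}{2}\norm{\y-\y^\ast}_2^2$. Summing these bounds the quantity $f(\x,\y^\ast) + f(\x^\ast,\y) - 2f^\ast$ from below by $\frac{\alpha}{2}\br{\norm{\x-\x^\ast}_2^2 + \norm{\y-\y^\ast}_2^2}$.

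For the right-hand side, I use that $\tilde\y \in \mopt(\x)$ minimizes the marginal $f(\x,\cdot)$, so by first-order optimality $\nabla_\y f(\x,\tilde\y) = \vzero$; likewise $\tilde\x \in \mopt(\y)$ gives $\nabla_\x f(\tilde\x,\y) = \vzero$. Applying $\beta$-MSS in the second variable with base point $\tilde\y$ yields $f(\x,\y) - f(\x,\tilde\y) \leq \frac{\beta}{2}\norm{\y-\tilde\y}_2^2$, and $\beta$-MSS in the first variable with base point $\tilde\x$ yields $f(\x,\y) - f(\tilde\x,\y) \leq \frac{\beta}{2}\norm{\x-\tilde\x}_2^2$. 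Summing, the quantity $2f(\x,\y) - f(\x,\tilde\y) - f(\tilde\x,\y)$ is at most $\frac{\beta}{2}\br{\norm{\x-\tilde\x}_2^2 + \norm{\y-\tilde\y}_2^2}$, and combining with the $C$-robust bistability inequality finishes the argument.

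The proof is essentially mechanical once this sandwich structure is seen; the only point demanding care is ensuring that all the points at which MSC/MSS are invoked — namely $(\x,\y^\ast)$, $(\x^\ast,\y)$, $(\x,\tilde\y)$, and $(\tilde\x,\y)$ — lie in the region $S_0$ over which those properties are assumed to hold. In the intended application within Theorem~\ref{thm:gam-msc-mss-proof}, the relevant iterates stay confined to $S_0$ by the monotonicity of gAM, so this is not a genuine obstacle there; I would simply flag this restriction when invoking the lemma rather than attempt to establish the inequality at arbitrary points of $\R^p \times \R^q$.
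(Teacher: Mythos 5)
Your proof is correct and follows exactly the paper's argument: a two-sided application of MSC at the global optimum (where the gradient vanishes) to lower-bound $f(\x,\y^\ast)+f(\x^\ast,\y)-2f^\ast$, an application of MSS at the marginal optima (where the partial gradients vanish) to upper-bound $2f(\x,\y)-f(\x,\tilde\y)-f(\tilde\x,\y)$, and a chaining of the two through the $C$-robust bistability inequality. Your remark about confining the argument to $S_0$ is a fair point of care that the paper's terser proof leaves implicit.
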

\begin{proof}
Applying MSC/MSS repeatedly gives us
\begin{align*}
f(\x,\y^\ast) + f(\x^\ast,\y) &\geq 2f^\ast + \frac{\alpha}{2}\br{\norm{\x-\x^\ast}_2^2 + \norm{\y-\y^\ast}_2^2}\\
2f(\x,\y) &\leq f(\x,\tilde\y) + f(\tilde\x,\y) + \frac{\beta}{2}\br{\norm{\x-\tilde\x}_2^2+\norm{\y-\tilde\y}_2^2}
\end{align*}
Applying robust stability then proves the result.
\end{proof}
It is noteworthy that Lemma~\ref{lemma:local-conv-gam} relates local convergence to global convergence and assures us that reaching an almost bistable point is akin to converging to the optimum. Such a result can be crucial, especially for non-convex problems. Indeed, similar properties are used in other proofs concerning coordinate minimization as well, for example, the \emph{local error bound} used in \citep{LuoT1993}.
\begin{proof}[Proof (of Theorem~\ref{thm:gam-msc-mss-proof}).]
We will use $\Phi_t = f(\x^t,\y^t) - f^\ast$ as the potential function. Since the intermediate steps in gAM are marginal optimizations and not gradient steps, we will actually find it useful to apply marginal strong convexity at a local level, and apply marginal strong smoothness at a global level instead.\\

\noindent\textbf{(Apply Marginal Strong Smoothness)} As $\nabla_\x f(\x^\ast,\y^\ast) = \vzero$, applying MSS gives us
\[
f(\x^{t+1},\y^\ast) - f(\x^\ast,\y^\ast) \leq \frac{\beta}{2}\norm{\x^{t+1}-\x^\ast}_2^2.
\]
Further, the gAM updates ensure $\y^{t+1} \in \mopt(\x^{t+1})$, which gives
\[
\Phi_{t+1} = f(\x^{t+1},\y^{t+1}) - f^\ast \leq f(\x^{t+1},\y^\ast) - f^\ast \leq \frac{\beta}{2}\norm{\x^{t+1}-\x^\ast}_2^2,
\]

\noindent\textbf{(Apply Marginal Strong Convexity)} Since $\nabla_\x f(\x^{t+1},\y^t) = \vzero$,
\begin{align*}
f(\x^t,\y^t) &\geq f(\x^{t+1},\y^t) + \frac{\alpha}{2}\norm{\x^{t+1}-\x^t}_2^2\\
&\geq f(\x^{t+1},\y^{t+1}) + \frac{\alpha}{2}\norm{\x^{t+1}-\x^t}_2^2,
\end{align*}
which gives us
\[
\Phi_t - \Phi_{t+1} \geq \frac{\alpha}{2}\norm{\x^{t+1}-\x^t}_2^2.
\]
This shows that appreciable progress is made in a single step. Now, with $(\x^t,\y^t)$ for any $t \geq 2$, due to the nature of the gAM updates, we know that $\y^t \in \mopt(\x^t)$ and $\x^{t+1} \in \mopt(\y^t)$. Applying Lemma~\ref{lemma:local-conv-gam} then gives us the following inequality
\[
\norm{\x^t-\x^\ast}_2^2 \leq \norm{\x^t-\x^\ast}_2^2 + \norm{\y^t-\y^\ast}_2^2 \leq \frac{C\beta}{\alpha}\norm{\x^t-\x^{t+1}}_2^2
\]
Putting these together and using $(a+b)^2 \leq 2(a^2+b^2)$ gives us
\begin{align*}
\Phi_{t+1} &\leq \frac{\beta}{2}\norm{\x^{t+1}-\x^\ast}_2^2 \leq \beta\br{\norm{\x^{t+1}-\x^t}_2^2 + \norm{\x^t-\x^\ast}_2^2}\\
					 &\leq \beta(1 + C\kappa)\norm{\x^{t+1}-\x^t}_2^2 \leq 2\kappa(1 + C\kappa)\br{\Phi_t - \Phi_{t+1}},
\end{align*}
where $\kappa = \frac{\beta}{\alpha}$ is the effective condition number of the problem. Rearranging gives us
\[
\Phi_{t+1} \leq \eta_0\cdot\Phi_t,
\]
where $\eta_0 = \frac{2\kappa(1 + C\kappa)}{1+2\kappa(1 + C\kappa)} < 1$ which proves the result.
\end{proof}

Notice that the condition number $\kappa$ makes an appearance in the convergence rate of the algorithm but this time, with a fresh definition in terms of the MSC/MSS parameters. As before, small values of $\kappa$ and $C$ ensure fast convergence, whereas large values of $\kappa,C$ promote $\eta_0 \rightarrow 1$ which slows the procedure down.

Before we conclude, we remind the reader that in later sections, we will see more precise analyses of gAM-style approaches, and the structural assumptions will be more problem specific. However, we hope the preceding discussion has provided some insight into the inner workings of alternating minimization techniques.

\section{Exercises}
\begin{exer}
\label{exer:altmin-marg-conv}
Recall the low-rank matrix completion problem in recommendation systems from \S~\ref{chap:intro}
\[
\hat A_\text{lv} = \underset{\substack{U \in \R^{m \times r}\\V \in \R^{n \times r}}}{\min}\ \sum_{(i,j) \in \Omega}\br{U_i^\top V_j - A_{ij}}^2.
\]
Show that the objective in this optimization problem is not jointly convex in $U$ and $V$. Then show that the objective is nevertheless, marginally convex in both the variables.
\end{exer}
\begin{exer}
\label{exer:altmin-joint-marg}
Show that a function that is jointly convex is necessarily marginally convex as well. Similarly show that a (jointly) strongly convex and smooth function is marginally so as well.
\end{exer}
\begin{exer}
\label{exer:altmin-msc-sc}
Marginal strong convexity does not imply convexity. Show this by giving an example of a function $f: \R^p \times \R^q \rightarrow \R$ that is marginally strongly convex in \emph{both} its variables, but non-convex.\\
\textit{Hint}: use the fact that the function $f(x) = x^2$ is $2$-strongly convex.
\end{exer}
\begin{exer}
\label{exer:altmin-gen-am}
Design a variant of the gAM procedure that can handle a general constraint set $\cZ \subset \cX \times \cY$. Attempt to analyze the convergence of your algorithm.
\end{exer}
\begin{exer}
\label{exer:altmin-opt-bs}
Show that $(\x^\ast,\y^\ast) \in \arg\min_{\x\in\cX,\y\in\cY}f(\x,\y)$ must be a bistable point for any function even if $f$ is non-convex.
\end{exer}
\begin{exer}
\label{exer:altmin-conv-gam}
Let $f: \R^p \times \R^q \rightarrow \R$ be a differentiable, \emph{jointly} convex function. Show that any bistable point of $f$ is a global minimum for $f$.\\
\textit{Hint}: first show that directional derivatives vanish at bistable points.
\end{exer}
\begin{exer}
\label{exer:altmin-robust-bistable}
For a robustly bistable function $f$, any \emph{almost} bistable point is \emph{almost} optimal as well. Show this by proving, for any $(\x,\y)$, $\tilde\y \in \mopt(\x)$, $\tilde\x \in \mopt(\y)$ such that $\max\bc{f(\x,\tilde\y),f(\tilde\x,\y)} \leq f(\x,\y) + \epsilon$, that $f(\x,\y) \leq f^\ast + \bigO{\epsilon}$. Conclude that if $f$ satisfies robust bistability, then any bistable point $(\x,\y) \in \cZ^\ast$ is optimal.
\end{exer}
\begin{exer}
Show that marginal strong convexity is additive i.e., if $f, g: \R^p \times \R^q \rightarrow \R$ are two functions such that $f$ is respectively $\alpha_1$ and $\alpha_2$-MSC in its two variables and $g$ is $\bar\alpha_1$ and $\bar\alpha_2$-MSC in its variables, then the function $f+g$ is $(\alpha_1+\bar\alpha_1)$ and $(\alpha_2+\bar\alpha_2)$-MSC in its variables.
\end{exer}
\begin{exer}
The alternating minimization procedure may oscillate if the optimization problem is not well-behaved. Suppose for an especially nasty problem, the gAM procedure enters into the following loop
\[
(\x^t,\y^t) \rightarrow (\x^{t+1},\y^t) \rightarrow (\x^{t+1},\y^{t+1}) \rightarrow (\x^t,\y^{t+1}) \rightarrow (\x^t, \y^t)
\]
Show that all four points in the loop are bistable and share the same function value. Can you draw a hypothetical set of marginally optimal coordinate curves which may cause this to happen (see Figure~\ref{fig:bistable})?
\end{exer}

\section{Bibliographic Notes}
\label{sec:altmin-bib}
The descent version for CM is aptly named \emph{Coordinate Descent} (CD) and only takes descent steps along the coordinates \citep{SahaT2013}. There exist versions of CM and CD for constrained optimization problems as well \citep{LuoT1993,Nesterov2012}. A variant of CM/CD splits variables into \emph{blocks} of multi-dimensional variables. The resulting algorithm is appropriately named \emph{Block Coordinate Descent} and minimizes over one block of variables at each time step.

The coordinate/block being processed at each step is chosen carefully to ensure rapid convergence. Several ``rules'' exist for this, for example, the Gauss-Southwell rule (that chooses the coordinate/block along which the objective gradient is the largest), the cyclic rule that simply performs a round-robin selection of coordinates/blocks, and random choice that chooses a random coordinate/block at each time step, independent of previous such choices.

For several problem areas such support vector machines, CM/CD methods are at the heart of some of the fastest solvers available due to their speed and ease of implementation \citep{FanCHWL2008,LuoT1992,Shalev-ShwartzZ2013}.

\chapter{The EM Algorithm}
\label{chap:em}

\newcommand{\vth}{\vtheta}
\newcommand{\popt}{\vth^\ast}

In this section we will take a look at the \emph{Expectation Maximization} (EM) principle. The principle forms the basis for widely used learning algorithms such as those used for learning Gaussian mixture models, the Baum-Welch algorithm for learning hidden Markov models (HMM), and mixed regression. The EM algorithm is also a close cousin to the Lloyd's algorithm for clustering with the k-means objective.

Although the EM algorithm, at a surface level, follows the alternating minimization principle which we studied in \S~\ref{chap:em}, given its wide applicability in learning latent variable models in probabilistic learning settings, we feel it is instructive to invest in a deeper understanding of the EM method. To make the reading experience self-contained, we will first devote some time developing intuitions and notation in probabilistic learning methods.\\

\noindent\textbf{Notation} A parametric distribution over a domain $\cX$ with parameter $\vth$ is denoted by $f(\cdot\cond\vth)$ or $f_\vth$. The notation is abused to let $f(\x\cond\vth)$ denote the probability mass or density function (as the case may be) of the distribution at the point $\x \in \cX$. The notation is also abused to let $X \sim f(\cdot\cond\vth)$ or $X \sim f_\vth$ denote a sample drawn from this distribution.

\section{A Primer in Probabilistic Machine Learning}
\label{sec:em-pml}
Suppose we observe i.i.d. samples $\x_1,\x_2,\ldots,\x_n$ of a random variable $X \in \cX$ drawn from an unknown distribution $f^\ast$. Suppose also, that it is known that the distribution generating these data samples belongs to a \emph{parametric family} of distributions $\cF = \bc{f_\vth: \vth \in \Theta}$ such that $f^\ast = f_{\popt}$ for some unknown parameter $\popt \in \Theta$.

How may we recover (an accurate estimate of) the true parameter $\popt$, using only the samples $\x_1,\ldots,\x_n$? There are several ways to do so, popular among them being the \emph{maximum likelihood} estimate. Since the samples were generated independently, one can, for any parameter $\vth^0 \in \Theta$, write their joint density function as follows
\[
f(\x_1,\x_2,\ldots,\x_n\cond\vth^0) = \prod_{i=1}^nf(\x_i\cond\vth^0)
\]
The above quantity is also known as the \emph{likelihood} of the data parametrized on $\vth^0$ as it captures the probability that the observed data was generated by the parameter $\vth^0$. In fact, we can go ahead and define the likelihood function for any parameter $\vth \in \Theta$ as follows
\[
\cL(\vth;\x_1,\x_2,\ldots,\x_n) := f(\x_1,\x_2,\ldots,\x_n\cond\vth)
\]
The maximum likelihood estimate (MLE) of $\popt$ is simply the parameter that maximizes the above likelihood function i.e, the parameter which seems to be the ``most likely'' to have generated the data.
\[
\hat\vth_{\text{MLE}} := \underset{\vth \in \Theta}{\arg\max}\ \cL(\vth;\x_1,\x_2,\ldots,\x_n)
\]
It is interesting to study the convergence of $\hat\vth_{\text{MLE}} \rightarrow \popt$ as $n \rightarrow \infty$ but we will not do so in this monograph. We note that there do exist other estimation techniques apart from MLE, such as the \emph{Maximum a Posteriori} (MAP) estimate that incorporates a \emph{prior} distribution over $\vtheta$, but we will not discuss those here either.\\

\noindent\textbf{Least Squares Regression} As a warmup exercise, let us take the example of linear regression and reformulate it in a probabilistic setting to better understand the above framework. Let $y \in \bR$ and $\bt,\x \in \bR^p$ and consider the following parametric distribution over the set of reals, parametrized by $\bt$ and $\x$
\[
f(y\cond\x,\bt) = \frac{1}{\sqrt{2\pi}}\exp\br{-\frac{(y - \x^\top\bt)^2}{2}},
\]
Note that this distribution exactly encodes the responses in a linear regression model with unit variance Gaussian noise. More specifically, if $y \sim f(\cdot\cond\x,\bt)$, then
\[
y \sim \cN(\x^\top\bt,1)
\]
The above observation allows us to cast linear regression as a parameter estimation problem. Consider the parametric distribution family
\[
\cF = \bc{f_\bt = f(\cdot\cond\cdot,\bt) : \norm{\bt}_2 \leq 1}.
\]
Suppose now that we have $n$ covariate samples $\x_1,\x_2,\ldots,\x_n$ and there is a true parameter $\bto$ such that the distribution $f_{\bto} \in \cF$ (i.e., $\norm{\bto}_2 \leq 1$) is used to generate the responses i.e., $y_i \sim f(\cdot\cond\x_i,\bto)$. It is easy to see that the likelihood function in this setting is\footnote{The reader would notice that we are modeling only the process that generates the responses \emph{given} the covariates. However, this is just for sake of simplicity. It is possible to model the process that generates the covariates $\x_i$ as well using, for example, a mixture of Gaussians (that we will study in this very section). A model that accounts for the generation of both $\x_i$ and $y_i$ is called a \emph{generative} model.}
\[
\cL(\bt;\bc{(\x_i,y_i)}_{i=1}^n) = \prod_{i=1}^nf(y_i\cond\x_i,\bt) = \frac{1}{\sqrt{2\pi}}\prod_{i=1}^n\exp\br{-\frac{(y_i - \x_i^\top\bt)^2}{2}}
\]
Since the logarithm function is a strictly increasing function, maximizing the \emph{log-likelihood} will also yield the MLE, i.e.,
\[
\bth_{\text{MLE}} = \underset{\norm{\bt}_2 \leq 1}{\arg\max}\ \log\cL(\bt;\bc{(\x_i,y_i)}_{i=1}^n) = \underset{\norm{\bt}_2 \leq 1}{\arg\min}\ \sum_{i=1}^n(y_i - \x_i^\top\bt)^2
\]
Thus, the MLE for linear regression under Gaussian noise is nothing but the common least squares estimate! The theory of maximum likelihood estimates and their consistency properties is well studied and under suitable conditions, we indeed have $\bth_{\text{MLE}} \rightarrow \bto$. ML estimators are members of a more general class of estimators known as \emph{M-estimators} \citep{HuberR2009}.

\section{Problem Formulation}
In practical applications, the probabilistic models one encounters are often more challenging than the one for linear regression due to the presence of \emph{latent variables} which necessitates the use of more careful routines like the EM algorithm to even calculate the MLE.

Consider a statistical model that generates two random variables $Y \in \cY$ and $Z \in \cZ$, instead of one, using a distribution from a parametric family $\cF  = \bc{f_\vth = f(\cdot, \cdot \cond \vth): \vth \in \Theta}$ i.e., $(Y,Z) \sim f_{\popt}$ for some $\popt \in \Theta$. However, we only get to see realizations of the $Y$ components and not the $Z$ components. More specifically, although the (unknown) parameter $\popt$ generates pairs of samples $(\y_1,\z_1), (\y_2,\z_2), \ldots, (\y_n,\z_n)$, only $\y_1,\y_2,\ldots,\y_n$ are revealed to us. The missing $Z$ components are often called \emph{latent variables} since they are hidden from us.

The above situation arises in several practical situations in data modeling, clustering, and analysis. We encourage the reader to momentarily skip to \S~\ref{sec:em-app} to look at a few nice examples before returning to proceed with this discussion. A first attempt at obtaining the MLE in such a scenario would be to maximize the \emph{marginal likelihood} function instead. Assume for the sake of simplicity that the support set of the random variable $Z$, i.e., $\cZ$, is discrete. Then the marginal likelihood function is defined as the following
\[
\cL(\vth;\y_1,\y_2,\ldots,\y_n) = \prod_{i=1}^nf(\y_i\cond\vth) = \prod_{i=1}^n\sum_{\z_i\in\cZ}f(\y_i,\z_i\cond\vth).
\]
In most practical situations, using the marginal likelihood function $\cL(\vth;\y_1,\y_2,\ldots,\y_n)$ to perform ML estimation becomes intractable since the expression on the right hand side, when expanded as a sum, contains $\abs{\cZ}^n$ terms which makes it difficult to even write down the expression fully let alone optimize using it as an objective function!

For comparison, the log-likelihood expression for the linear regression problem with $n$ data points (the least squares expression) was a summation of $n$ terms. Indeed, the problem of maximizing the marginal likelihood function $\cL(\vth;\y_1,\y_2,\ldots,\y_n)$ is often NP-hard and as a consequence, direct optimization techniques for finding the MLE fail for even small scale problems.

\begin{algorithm}[t]
	\caption{AltMax for Latent Variable Models (AM-LVM)}
	\label{algo:em-hard-altopt}
	\begin{algorithmic}[1]
		{
			\REQUIRE Data points $\y_1,\ldots,\y_n$
			\ENSURE An approximate MLE $\hat\vth \in \Theta$
			\STATE $\vth^1 \leftarrow \text{\textsf{INITALIZE}}()$
			\FOR{$t = 1, 2, \ldots$}
				\FOR{$i = 1, 2, \ldots, n$}
					\STATE $\hat\z_i^t \leftarrow \underset{\z \in \cZ}{\arg\max}\ f(\z\cond\y_i,\vth^t)$ \hfill {(Estimate latent variables)}%
				\ENDFOR
				\STATE $\vth^{t+1} \leftarrow \underset{\vth\in\Theta}{\arg\max}\ \log\cL(\vth;\bc{(\y_i,\hat\z_i^t)}_{i=1}^n)$ \hfill {(Update parameter)}
			\ENDFOR
			\STATE \textbf{return} {$\btt$}
		}
	\end{algorithmic}
\end{algorithm}

\section{An Alternating Maximization Approach}
Notice that the key reason for the intractability of the MLE problem in the previous discussion was the missing information about the latent variables. Had the latent variables $\z_1,\ldots,\z_n$ been magically provided to us, it would have been simple to find the MLE solution as follows
\[
\hat\vth_{\text{MLE}} = \underset{\vth\in\Theta}{\arg\max}\ \log\cL(\vth;\bc{(\y_i,\z_i)}_{i=1}^n)
\]
However, notice that it is also true that, had the identity of the true parameter $\vth^\ast$ been provided to us (again magically), it would have been simple to estimate the latent variables using a maximum posterior probability estimate for $\z_i$ as follows
\[
\hat\z_i = \underset{\z \in \cZ}{\arg\max}\ f(\z\cond\y_i,\vth^\ast)
\]

Given the above, it is tempting to apply a gAM-style algorithm to solve the MLE problem in the presence of latent variables. Algorithm~\ref{algo:em-hard-altopt} outlines such an adaptation of gAM to the latent variable learning problem. Note that steps 4 and 6 in the algorithm can be very efficiently carried out for several problem cases. In fact, it can be shown\elink{exer:em-em-k-means} that for the Gaussian Mixture modeling problem, AM-LVM reduces to the popular Llyod's algorithm for k-means clustering.

However, the AM-LVM algorithm has certain drawbacks, especially when the space of latent variables $\cZ$ is large. At every time step $t$, AM-LVM makes a ``hard assignment'', assigning the data point $\y_i$ to just one value of the latent variable $\z_i^t \in \cZ$. This can amount to throwing away a lot of information, especially when there may be other values $\z' \in \cZ$ present such that $f(\z'\cond\y_i,\vth^t)$ is also large but nevertheless $f(\z'\cond\y_i,\vth^t) < f(\z^t_i\cond\y_i,\vth^t)$ so that AM-LVM neglects $\z'$. The EM algorithm tries to remedy this.

\section{The EM Algorithm}
\label{sec:em-em}
Given the drawbacks of the hard assignment approach adopted by the AM-LVM algorithm, the EM algorithm presents an alternative approach that can be seen as making ``soft'' assignments.

At a very high level, the EM algorithm can be seen as doing the following. At time step $t$, instead of assigning the point $\y_i$ to a single value of the latent variable $\hat\z^t_i = \underset{\z \in \cZ}{\arg\max}\ f(\z\cond\y_i,\vth^t)$, the EM algorithm chooses to make a partial assignment of $\y_i$ to \emph{all} possible values of the latent variable in the set $\cZ$. The EM algorithm ``assigns'' $\y_i$ to a value $\z \in \cZ$ with affinity/weight equal to $f(\z\cond\y_i,\vth^t)$.

Thus, $\y_i$ gets partially assigned to all possible values of the latent variable, to some with more weight, to others with less weight. This avoids any loss of information. Note that these weights are always positive and sum up to unity since $\sum_{\z\in\cZ}f(\z\cond\y_i,\vth^t) = 1$. Also note that EM still assigns $\hat\z^t_i = \underset{\z \in \cZ}{\arg\max}\ f(\z\cond\y_i,\vth^t)$ the highest weight. In contrast, AM-LVM can be now seen as putting all the weight on $\hat\z_i^t$ alone and zero weight on any other latent variable value.

We now present a more formal derivation of the EM algorithm. Instead of maximizing the likelihood in a single step, the EM algorithm tries to efficiently \emph{encourage} an increase in the likelihood over several steps. Define the \emph{point-wise likelihood} function as
\[
\cL(\vth;\y) = \sum_{\z\in\cZ}f(\y,\z\cond\vth).
\]
Note that we can write the marginal likelihood function as $\cL(\vth;\y_1,\y_2,\ldots,\y_n) = \prod_{i=1}^n\cL(\vth;\y_i)$. Our goal is to maximize $\cL(\vth;\y_1,\y_2,\ldots,\y_n)$ but doing so directly is too expensive. So the next best thing is to do so indirectly. Suppose we had a proxy function that lower bounded the likelihood function but was also easy to optimize. Then maximizing the proxy function would also lead to an increase in the likelihood if the proxy were really good.

This is the key to the EM algorithm: it introduces a proxy function called the \emph{$Q$-function} that lower bounds the marginal likelihood function and casts the parameter estimation problem as a bi-variate problem, the two variables being the parameter $\vth$ and the $Q$-function.

Given an initialization, $\vth^0 \in \Theta$, EM constructs a $Q$-function out of it, uses that as a proxy to obtain a better parameter $\vth^1$, uses the newly obtained parameter to construct a better $Q$-function, uses the better $Q$-function to obtain a still better parameter $\vth^2$, and so on. Thus, it essentially performs alternating optimization steps, with better estimations of the $\vth$ parameter leading to better constructions of the $Q$-function and vice versa.

To formalize this notion, we will abuse notation to let $f(\z\cond\y,\vth^0)$ denote the conditional probability function for the random variable $Z$ given the variable $Y$ and the parameter $\vth^0$. Then we have
\[
\log\cL(\vth;\y) = \log\sum_{\z\in\cZ}f(\y,\z\cond\vth) = \log\sum_{\z\in\cZ}f(\z\cond\y,\vth^0)\frac{f(\y,\z\cond\vth)}{f(\z\cond\y,\vth^0)}
\]
The summation in the last expression can be seen to be simply an expectation with respect to the random variable $Z$ being sampled from the conditional distribution $f(\cdot\cond\y,\vth^0)$. Using this, we get
\begin{align*}
\log\cL(\vth;\y) &= \log \bE_{\z \sim f(\cdot\cond\y,\vth^0)}\bs{\frac{f(\y,\z\cond\vth)}{f(\z\cond\y,\vth^0)}}\\
									 &\geq \bE_{\z \sim f(\cdot\cond\y,\vth^0)}\bs{\log\frac{f(\y,\z\cond\vth)}{f(\z\cond\y,\vth^0)}}\\
									 &= \underbrace{\bE_{\z \sim f(\cdot\cond\y,\vth^0)}\bs{\log f(\y,\z\cond\vth)}}_{Q_{\y}(\vth\cond\vth^0)} - \underbrace{\bE_{\z \sim f(\cdot\cond\y,\vth^0)}\bs{\log f(\z\cond\y,\vth^0)}}_{R_{\y}(\vth^0)}.
\end{align*}
The inequality follows from Jensen's inequality as the logarithm function is concave. The function $Q_{\y}(\vth\cond\vth^0)$ is called the \emph{point-wise $Q$-function}. Now, the $Q$-function can be interpreted as a \emph{weighted} point-wise likelihood function
\[
Q_{\y}(\vth\cond\vth^0) =  \sum_{\z \in \cZ}w_\z\cdot\log f(\y,\z\cond\vth),
\]
with weights $w_\z = f(\z\cond\y,\theta^0)$. Notice that this exactly corresponds to assigning $\y$ to every $\z \in \cZ$ with weight $w_\z$ instead of assigning it to just one value $\hat\z = \arg\max_{\z\in\cZ}\ f(\z\cond\y,\vth^0)$ (with weight $1$) as AM-LVM does. We will soon see that due to the way the $Q$-function is used, the EM algorithm can be seen as performing AM-LVM with a soft-assignment step instead of a hard assignment step.

Using the point-wise $Q$-function, we define the $Q$-function.
\[
Q(\vth\cond\vth^0) = \frac1n\sum_{i=1}^nQ_{\y_i}(\vth\cond\vth^0)
\]

The $Q$-function has all the properties we desired from our proxy: parameters maximizing the function $Q(\vth\cond\vth^0)$ do indeed improve the likelihood $\cL(\vth;\y_1,\y_2,\ldots,\y_n)$. More importantly, for several applications, it is possible to both construct as well as optimize the $Q$-function, very efficiently.

Algorithm~\ref{algo:em} gives an overall skeleton of the EM algorithm. Implementing EM requires two routines, one to construct the $Q$-function corresponding to the current iterate (the \emph{Expectation step} or E-step), and the other to maximize the $Q$-function (the \emph{Maximization step} or M-step) to get the next iterate. We will next give precise constructions of these steps.

EM works well in many situations where the marginal likelihood function is inaccessible. This is because the $Q$-function only requires access to the conditional probability function $f(\cdot\cond\y,\vth^0)$ and the joint probability function $f(\cdot,\cdot\cond\vth)$, both of which are readily accessible in several applications. We will see examples of such applications shortly.

\section{Implementing the E/M steps}
\label{sec:em-implement}
We will now look at various implementations of the E and the M steps in the EM algorithm. Some would be easier to implement in practice whereas others would be easier to analyze.

\begin{algorithm}[t]
	\caption{Expectation Maximization (EM)}
	\label{algo:em}
	\begin{algorithmic}[1]
		{
			\REQUIRE Implementations of the E-step $E(\cdot)$, and the M-step $M(\cdot)$
			\ENSURE A good parameter $\hat\vth \in \Theta$
			\STATE $\vth^1 \leftarrow \text{\textsf{INITALIZE}}()$
			\FOR{$t = 1, 2, \ldots$}
				\STATE $Q_t(\cdot\cond\vth^t) \leftarrow E(\vth^t)$ \hfill {(E-step)}%
				\STATE $\vth^{t+1} \leftarrow M(\vth^t,Q_t)$ \hfill {(M-step)}
			\ENDFOR
			\STATE \textbf{return} {$\btt$}
		}
	\end{algorithmic}
\end{algorithm}

\noindent\textbf{E-step Constructions} Given the definition of the point-wise $Q$-function, one can use it to construct the $Q$-function in several ways. The first construction is of largely theoretical interest but reveals a lot of the inner workings of the EM algorithm by simplifying the proofs. This \emph{population} construction requires access to the marginal distribution on the Y variable i.e., the probability function $f(\y\cond\popt)$. Recall that $\popt$ is the true parameter generating these samples. Given this, the population $E$-step constructs the $Q$-function as
\begin{align*}
Q^\text{pop}_t(\vth\cond\vth^t) &= \bE_{\y \sim f_{\popt}}Q_{\y}(\vth\cond\vth^t)\\
									 &= \bE_{\y \sim f_{\popt}}\bE_{\z \sim f(\cdot\cond\y,\vth^t)}\bs{\log f(\y,\z\cond\vth)}\\
									 &= \sum_{\y \in \cY}\sum_{\z \in \cZ}f(\y\cond\popt)\cdot f(\z\cond\y,\vth^t)\cdot\log f(\y,\z\cond\vth)
\end{align*}
Clearly this construction is infeasible in practice. A much more realistic \emph{sample} construction works with just the observed samples $\y_1,\y_2,\ldots,\y_n$ (note that these were indeed drawn from the distribution $f(\y\cond\popt)$). The sample $E$-step constructs the $Q$-function as
\begin{align*}
Q^\text{sam}_t(\vth\cond\vth^t) &= \frac{1}{n}\sum_{i=1}^nQ_{\y_i}(\vth\cond\vth^t)\\
									 &= \frac{1}{n}\sum_{i=1}^n\bE_{\z \sim f(\cdot\cond\y_i,\vth^t)}\bs{\log f(\y_i,\z\cond\vth)}\\
									 &= \frac{1}{n}\sum_{i=1}^n\sum_{\z \in \cZ}f(\z\cond\y,\vth^t)\cdot\log f(\y_i,\z\cond\vth)
\end{align*}
Note that this expression has $n\cdot\abs{\cZ}$ terms instead of the $\abs{\cZ}^n$ terms which the marginal likelihood expression had. This drastic reduction is a key factor behind the scalability of the EM algorithm.\\

\noindent\textbf{M-step Constructions}
Recall that in \S~\ref{chap:altmin} we considered alternating approaches that fully optimize with respect to a variable, as well as those that merely perform a descent step, improving the function value along that variable but not quite optimizing it.

Similar variants can be developed for EM as well. Given a $Q$-function, the simplest strategy is to optimize it completely with the M-step simply returning the maximizer of the $Q$-function. This is the \emph{fully corrective} version of EM. It is useful to remind ourselves here that whereas in previous sections we looked at minimization problems, the problem here is that of likelihood \emph{maximization}.
\[
M^\text{fc}(\vth^t,Q_t) = \underset{\vth \in \Theta}{\arg\max}\ Q_t(\vth\cond\vth^t)
\]
Since this can be expensive in large scale optimization settings, a \emph{gradient descent} version exists that makes the M-step faster by performing just a gradient step with respect to the $Q$-function i.e.,
\[
M^\text{grad}(\vth^t,Q_t) = \vth^t + \alpha_t\cdot\nabla Q_t(\vth^t\cond\vth^t).
\]

\noindent\textbf{Stochastic EM Construction}
A highly scalable version of the algorithm is the \emph{stochastic update} version that uses the point-wise $Q$-function of a single, randomly chosen sample $Y_t \sim \text{\textsf{Unif}}[n]$ to execute a gradient update at each time step $t$. It can be shown\elink{exer:em-sgd} that on expectation, this executes the sample E-step and the gradient M-step.
\[
M^\text{sto}(\vth^t) = \vth^t + \alpha_t\cdot\nabla Q_{Y_t}(\vth^t\cond\vth^t)
\]

\section{Motivating Applications}
\label{sec:em-app}
We will now look at a few applications of the EM algorithm and see how the E and M steps are executed efficiently. We will revisit these applications while discussing convergence proofs for the EM algorithm.

\subsection{Gaussian Mixture Models}
Mixture models are ubiquitous in applications such as clustering, topic modeling, and segmentation tasks. Let $\cN(\cdot;\vmu,\Sigma)$ denote the multivariate normal distribution with mean $\vmu \in \bR^p$ and covariance $\Sigma \in \bR^{p \times p}$. A mixture model can be constructed by combining two such normal distributions to obtain a density function of the form
\[
f(\cdot,\cdot\cond\bc{\phi_i,\vmu^{\ast,i},\Sigma_i}_{i \in \bc{0,1}}) = \phi^\ast_0\cdot\cN_0 + \phi^\ast_1\cdot\cN_1,
\]
where $\cN_i = \cN(\cdot;\vmu^{\ast,i},\Sigma^\ast_i), i = 0,1$ are the \emph{mixture components} and $\phi^\ast_i \in (0,1), i = 0,1$ are the \emph{mixture coefficients}. We insist that $\phi^\ast_0+\phi^\ast_1 = 1$ to ensure that $f$ is indeed a probability distribution. Note that we consider a mixture with just two components for simplicity. Mixture models with larger number of components can be similarly constructed.

A sample $(\y,z) \in \bR^p \times \bc{0,1}$ can be drawn from this distribution by first tossing a Bernoulli coin with bias $\phi_1$ to choose a component $z \in \bc{0,1}$ and then drawing a sample $\y \sim \cN_{z}$ from that component.

However, despite drawing the samples $(\y_1,z_1),(\y_2,z_2),\ldots,(\y_n,z_n)$, what is presented to us is $\y_1,\y_2,\ldots,\y_n$ i.e., the identities $z_i$ of the components that actually resulted in these draws is hidden from us. For instance in topic modeling tasks, the underlying topics being discussed in documents is hidden from us and we only get to see surface realizations of words in the documents that have topic-specific distributions. The goal here is to recover the mixture components as well as coefficients in an efficient manner from such partially observed draws.

For the sake of simplicity, we will look at a balanced, isotropic mixture i.e., where we are given that $\phi^\ast_0 = \phi^\ast_1 = 0.5$ and $\Sigma^\ast_0 = \Sigma^\ast_1 = I_p$. This will simplify our updates and analysis as the only unknown parameters in the model are $\vmu^{\ast,0}$ and $\vmu^{\ast,1}$. Let $\vM = (\vmu^0,\vmu^1) \in \R^{p\times p}$ denote an ensemble describing such a parametric mixture. Our job is to recover $\vM^\ast = (\vmu^{\ast,0},\vmu^{\ast,1})$.\\

\noindent\textbf{E-step Construction} For any $\vM = (\vmu^0,\vmu^1)$, we have the mixture
\[
f(\cdot,\cdot\cond\vM) = 0.5\cdot\cN(\cdot;\vmu^0,I_p) + 0.5\cdot\cN(\cdot;\vmu^1,I_p),
\]
i.e., $\cN_i = \cN(\cdot;\vmu^i,I_p)$. In this case, the $Q$-function actually has a closed form expression. For any $\y \in \bR^p$, $z \in \bc{0,1}$, and $\vM$, we have
\begin{align*}
f(\y,z\cond\vM) &= \cN_z(\y) = \exp\br{-\frac{\norm{\y - \vmu^z}_2^2}{2}}\\
f(z\cond\y,\vM) &= \frac{f(\y,z\cond\vM)}{f(\y\cond\vM)} = \frac{f(\y,z\cond\vM)}{f(\y,z\cond\vM) + f(\y,1-z\cond\vM)}.
\end{align*}
Thus, even though the marginal likelihood function was inaccessible, the point-wise $Q$-function has a nice closed form. For the E-step construction, for any two ensembles $\vM^t = (\vmu^{t,0},\vmu^{t,1})$ and $\vM = (\vmu^0,\vmu^1)$,
\begin{multline*}
Q_{\y}(\vM\cond\vM^t) = \bE_{z \sim f(\cdot\cond\y,\vM^t)}\bs{\log f(\y,z\cond\vM)}\\
				= f(0\cond\y,\vM^t)\cdot\log f(\y,0\cond\vM) + f(1\cond\y,\vM^t)\cdot\log f(\y,1\cond\vM)\\
				= -\frac{1}{2}\br{w^0_t(\y)\cdot\norm{\y - \vmu^0}_2^2 + w^1_t(\y)\cdot\norm{\y - \vmu^1}_2^2},
\end{multline*}
where $w^z_t(\y) = e^{-\frac{\norm{\y - \vmu^{t,z}}_2^2}{2}}\br{e^{-\frac{\norm{\y - \vmu^{t,0}}_2^2}{2}} + e^{-\frac{\norm{\y - \vmu^{t,1}}_2^2}{2}}}^{-1}$. Note that $w^z_t(\y) \geq 0$ for $z = 0,1$ and $w^0_t(\y) + w^1_t(\y) = 1$. Also note that $w^z_t(\y)$ is larger if $\y$ is closer to $\vmu^{t,z}$ i.e., it measures the affinity of a point to the center. Given a sample of data points $\y_1,\y_2,\ldots,\y_n$, the $Q$-function is
\begin{align*}
Q(\vM\cond\vM^t) &= \frac{1}{n}\sum_{i=1}^nQ_{\y_i}(\vM\cond\vM^t)\\
									 &= -\frac{1}{2n}\sum_{i=1}^n {w^0_t(\y_i)\cdot\norm{\y_i - \vmu^0}_2^2 + w^1_t(\y_i)\cdot\norm{\y_i - \vmu^1}_2^2}
\end{align*}

\begin{figure}
\includegraphics[width=\columnwidth]{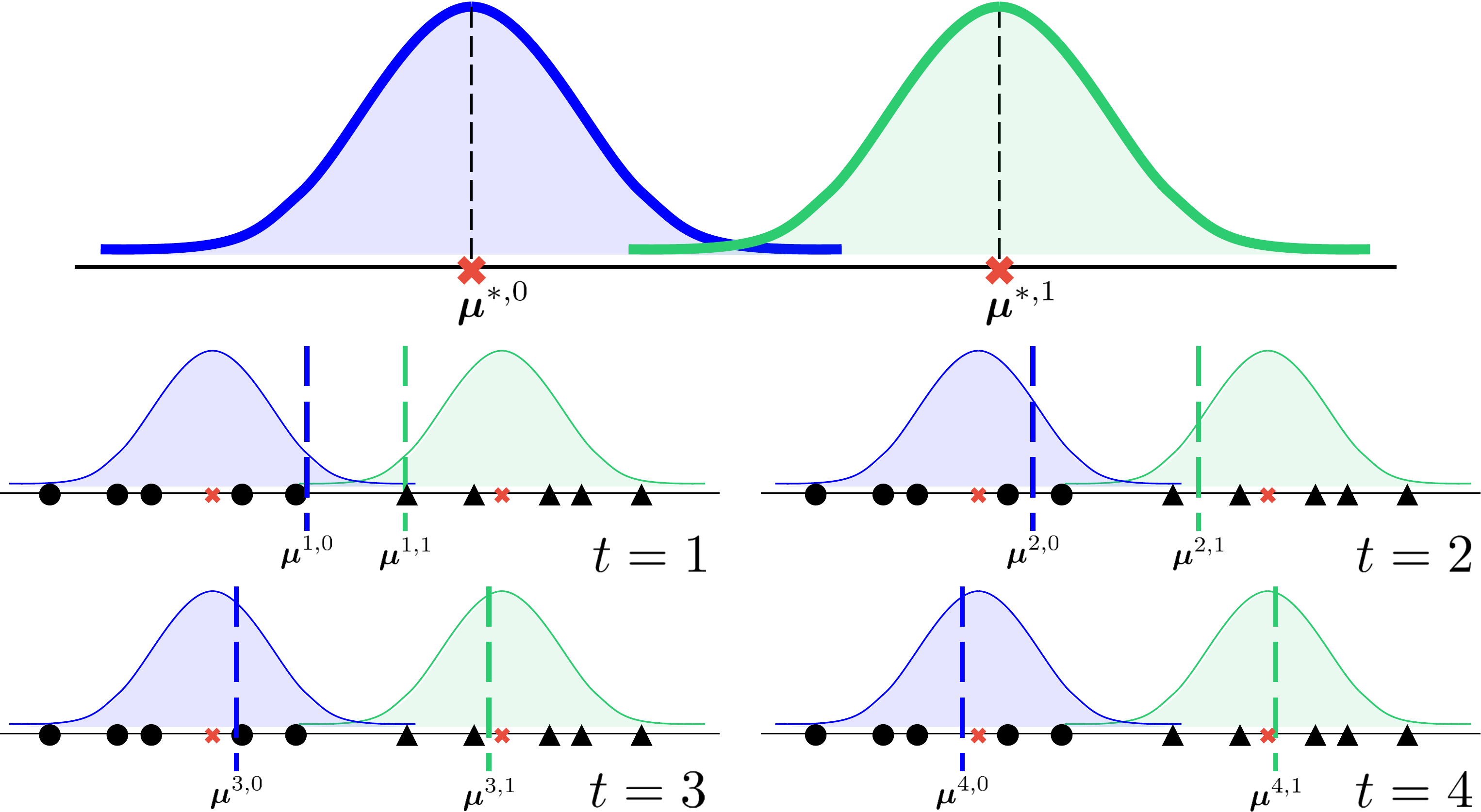}
\caption[Gaussian Mixture Models and the EM Algorithm]{The data is generated from a mixture model: circle points from $\cN(\vmu^{\ast,0},I_p)$ and triangle points from $\cN(\vmu^{\ast,1},I_p)$ but their origin is unknown. The EM algorithm performs soft clustering assignments to realize this and keeps increasing $w^0_t$ values for circle points and increasing $w^1_t$ values for triangle points. As a result, the estimated means $\vmu^{t,z}$ rapidly converge to the true means $\vmu^{\ast,z}$, $z=0,1$.}
\label{fig:gmm}
\end{figure}

\noindent\textbf{M-step Construction} This also has a closed form solution in this case\elink{exer:em-gmm-m}. If $\vM^{t+1} = (\vmu^{t+1,0},\vmu^{t+1,1}) = {\arg\max}_\vM\ Q(\vM\cond\vM^t)$, then
\[
\vmu^{t+1,z} = \sum_{i=1}^n w^z_t(\y_i)\y_i
\]
Note that the M-step can be executed in linear time and does not require an explicit construction of the $Q$-function at all! One just needs to use the M-step repeatedly -- the $Q$-function is implicit in the M-step.

The reader would notice a similarity between the EM algorithm for Gaussian mixture models and the Llyod's algorithm \citep{Lloyd1982} for k-means clustering. In fact\elink{exer:em-em-k-means}, the Llyod's algorithm implements exactly the AM-LVM algorithm (see Algorithm~\ref{algo:em-hard-altopt}) that performs ``hard'' assignments, assigning each data point completely to one of the clusters whereas the EM algorithm makes ``soft'' assignments, allowing each point to have different levels of affinity to different clusters. Figure~\ref{fig:gmm} depicts the working of the EM algorithm on a toy GMM problem.

\subsection{Mixed Regression}
As we have seen in \S~\ref{chap:intro}, regression problems and their variants have several applications in data analysis and machine learning. One such variant is that of mixed regression. Mixed regression is especially useful when we suspect that our data is actually composed of several \emph{sub-populations} which cannot be explained well using a single model.

For example, consider the previous example of predicting family expenditure. Although we may have data from families across a nation, it may be unwise to try and explain it using a single model due to various reasons. The prices of various commodities and services may vary across urban and rural areas and similar consumption in two regions may very well result in different expenditures. Moreover, there may exist parameters such as total income which are not revealed in a survey due to privacy issues, but nevertheless influence expenditure.

Thus, there may actually be several models, each corresponding to a certain income bracket or a certain geographical region, which \emph{together} explain the data very well. This poses a challenge since the income bracket, or geographical location of a family may not have been recorded as a part of the survey due to privacy or other reasons!

To formalize the above scenario, consider two linear models $\bt^{\ast,0}, \bt^{\ast,1} \in \bR^p$. For each data point $\x_i \in \bR^p$, first one of the models is selected by performing a Bernoulli trial with bias $\phi_1$ to get $z_i \in \bc{0,1}$ and then the response is generated as
\[
y_i = \x_i^\top\bt^{\ast,z_i} + \eta_i
\]
where $\eta_i$ is i.i.d. Gaussian noise $\eta_i \sim \cN(0,\sigma_{z_i}^2)$. This can be cast as a parametric model by considering density functions of the form
\[
f(\cdot\cond\cdot,\bc{\phi_z,\bt^{\ast,z},\sigma_z}_{z=0,1}) = \phi_0\cdot g(\cdot\cond\cdot,\bt^{\ast,0},\sigma_0) + \phi_1\cdot g(\cdot\cond\cdot,\bt^{\ast,1},\sigma_1),
\]
where $\sigma_z,\phi_z > 0$, $\phi_0 + \phi_1 = 1$, and for any $(\x,y) \in \bR^p\times\bR$, we have
\[
g(y\cond\x,\bt^{\ast,z},\sigma_z) = \exp\br{-\frac{(y - \x^\top\bt^{\ast,z})^2}{2\sigma_z^2}}.
\]
Note that although such a model generates data in the form of triplets $(\x_1,y_1,z_1),(\x_2,y_2,z_2),\ldots,(\x_n,y_n,z_n)$, we are only allowed to observe $(\x_1,y_1),(\x_2,y_2),\ldots,(\x_n,y_n)$ as the data. For the sake of simplicity, we will yet again look at the special case when the Bernoulli trials are fair i.e., $\phi_0 = \phi_1 = 0.5$ and $\sigma_1 = \sigma_2 = 1$. Thus, the only unknown parameters are the models $\bt^{\ast,0}$ and $\bt^{\ast,1}$. Let $\vW = (\bt^0,\bt^1) \in \R^{p \times p}$ denote the parametric mixed model. Our job is to recover $\vW^\ast = (\bt^{\ast,0},\bt^{\ast,1})$.

A particularly interesting special case arises when we further impose the constraint $\bt^{\ast,0} = -\bt^{\ast,1}$, i.e. the two models in the mixture are tied together to be negative of each other. This model is especially useful in the \emph{phase retrieval} problem. Although we will study this problem in more generality in \S~\ref{chap:phret}, we present a special case here.

Phase retrieval is a problem that arises in several imaging situations such as X-ray crystallography where, after data $\bc{(\vx_i,y_i)}_{i=1,\ldots,n}$ has been generated as $y_i = \ip{\bto}{\vx_i}$, the sign of the response (or more generally, the phase of the response if the response is complex-valued) is omitted and we are presented with just $\bc{(\vx_i,\abs{y_i})}_{i=1,\ldots,n}$. In such a situation, we can use the latent variable $z_i = \sign(y_i)$ to denote the omitted sign information. In this setting, it can be seen that the mixture model with $\bt^{\ast,0} = -\bt^{\ast,1}$ is very appropriate since each data point $(\vx_i,\abs{y_i})$ will be nicely explained by either $\bto$ or $-\bto$ depending on the value of $\sign(y_i)$.

We will revisit this problem in detail in \S~\ref{chap:phret}. For now we move on to discuss the E and M-step constructions for the mixed regression problem. We leave details of the constructions as an exercise\elink{exer:em-mr-em}.\\

\noindent\textbf{E-step Construction} The point-wise $Q$-function has a closed form expression. Given two ensembles $\vW^t = (\bt^{t,0},\bt^{t,1})$ and $\vW = (\bt^{0},\bt^{1})$,
\[
Q_{(\x,y)}(\vW|\vW^t) = -\frac{1}{2}\br{\alpha^0_{t,(\x,y)}\cdot(y - \x^\top\bt^0)^2 - \alpha^1_{t,(\x,y)}\cdot(y - \x^\top\bt^1)^2},
\]
where $\alpha^z_{t,(\x,y)} = e^{-\frac{(y - \x^\top\bt^{t,z})^2}{2}}\bs{e^{-\frac{(y - \x^\top\bt^{t,0})^2}{2}} + e^{-\frac{(y - \x^\top\bt^{t,1})^2}{2}}}^{-1}$. Note that $\alpha^z_{t,(\x,y)} \geq 0$ for $z = 0,1$ and $\alpha^0_{t,(\x,y)} + \alpha^1_{t,(\x,y)} = 1$. Also note that $\alpha^z_{t,(\x,y)}$ is larger if $\bt^{t,z}$ gives less regression error for the point $(\x,y)$ than $\bt^{t,1-z}$ i.e., if $(y - \x^\top\bt^{t,z})^2 < (y - \x^\top\bt^{t,1-z})^2$. Thus, the data point $(\x,y)$ feels greater affinity to the model that fits it better, which is intuitively, an appropriate thing to do.\\

\noindent\textbf{M-step Construction} The maximizer of the $Q$-function has a closed form solution in this case as well. If $\vW^{t+1} = (\bt^{t+1,0},\bt^{t+1,1}) = \arg\max_{\vW}Q(\vW|\vW^t)$, where $Q(\vW|\vW^t)$ is the sample $Q$-function created from a data sample $(\x_1,y_1),(\x_2,y_2),\ldots,(\x_n,y_n)$, then it is easy to see that the M-step update is given by the solution to two weighted least squares problems with weights given by $\alpha^z_{t,(\x_i,y_i)}$ for $z = \bc{0,1}$, which have closed form solutions given by
\[
\bt^{t+1,z} = \br{\sum_{i=1}^n\alpha^z_{t,(\x_i,y_i)}\cdot \x_i\x_i^\top}^{-1}\sum_{i=1}^n\alpha^z_{t,(\x_i,y_i)}\cdot y_i\x_i
\]
Note that to update $\bt^{t,0}$, for example, the M-step essentially performs least squares only over those points $(\x_i,y_i)$ such that $\alpha^0_{t,(\x,y)}$ is large and ignores points where $\alpha^0_{t,(\x,y)} \approx 0$. This is akin to identifying points that belong to sub-population $0$ strongly and performing regression only over them. One need not construct the $Q$-function explicitly here either and may simply keep repeating the M-step. Figure~\ref{fig:mrem} depicts the working of the EM algorithm on a toy mixed regression problem.\\

\begin{figure}
\includegraphics[width=\columnwidth]{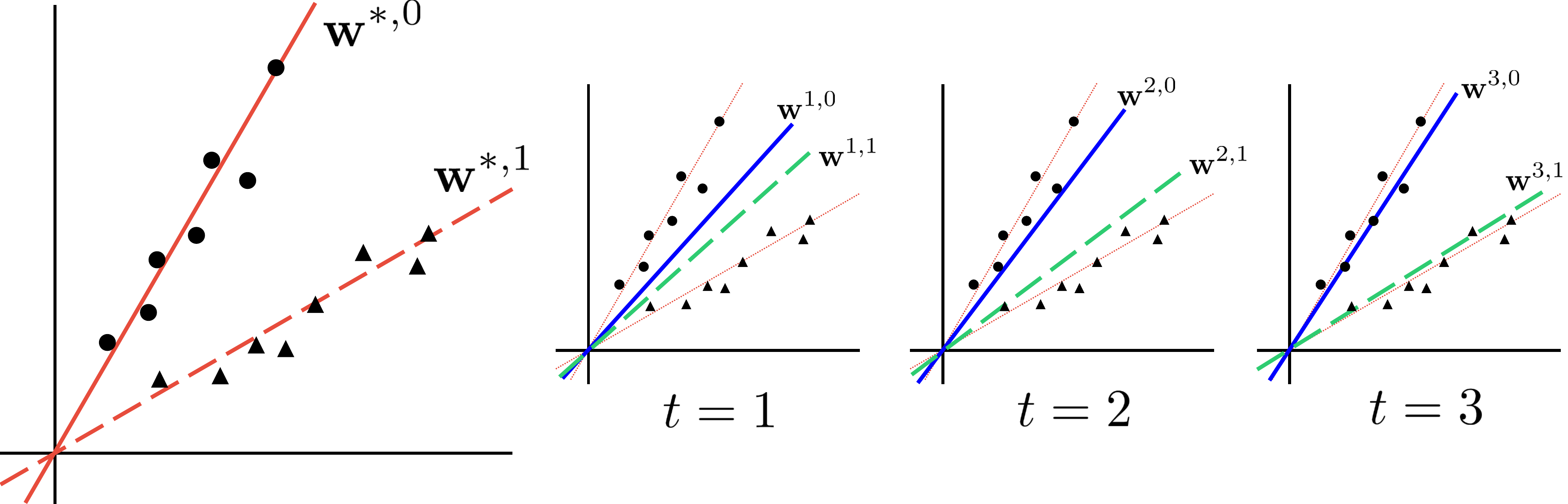}
\caption[Mixed Regression and the EM Algorithm]{The data contains two sub-populations (circle and triangle points) and cannot be properly explained by a single linear model. EM rapidly realizes that the circle points should belong to the solid model and the triangle points to the dashed model. Thus, $\alpha^0_t$ keeps going up for circle points and $\alpha^1_t$ keeps going up for triangle points. As a result, only circle points contribute significantly to learning the solid model and only triangle points contribute significantly to the dashed model.}
\label{fig:mrem}
\end{figure}

\noindent\textbf{Note on Initialization}: when executing the EM algorithm, it is important to initialize the models properly. As we saw in \S~\ref{chap:altmin}, this is true of all alternating strategies. Careless initialization can lead to poor results. For example, if the EM algorithm is initialized for the mixed regression problem with $\vW^1 = (\bt^{1,0},\bt^{1,1})$ such that $\bt^{1,0} = \bt^{1,1}$ then it is easy to see that the EM algorithm will never learn two distinct models and we will have $\bt^{t,0} = \bt^{t,1}$ for all $t$. We will revisit initialization issues shortly when discussing convergence analyses for the EM.

We refer the reader to \citep{BalakrishnanWY2017,YangBW2015} for examples of the EM algorithm applied to other problems such as learning hidden Markov models and regression with missing covariates.

\section{A Monotonicity Guarantee for EM}
\label{sec:em-monotone}
We now present a simple but useful monotonicity property of the EM algorithm that guarantees that the procedure never worsens the likelihood of the data during its successive updates. This assures us that EM does not diverge, although it does not ensure convergence to the true parameter $\popt$ either. For gAM, a similar result was immediate from the nature of the alternating updates.

\begin{theorem}
\label{thm:em-monotone}
The EM algorithm (Algorithm~\ref{algo:em}), when executed with a population E-step and fully corrective M-step, ensures that the population likelihood never decreases across iterations, i.e., for all $t$,
\[
\bE_{\y\sim f_{\popt}}f(\y\cond\vth^{t+1}) \geq \bE_{\y\sim f_{\popt}}f(\y\cond\vth^t),
\]
If executed with the sample E-step on data $\y_1,\ldots,\y_n$, EM ensures that the sample likelihood never decreases across iterations, i.e., for all $t$,
\[
\cL(\vth^{t+1};\y_1,\ldots,\y_n) \geq \cL(\vth^t;\y_1,\ldots,\y_n).
\]
\end{theorem}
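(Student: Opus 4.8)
The plan is to reproduce the classical EM ascent argument, which rewrites the per-iteration change in log-likelihood as a sum of two terms that are separately controlled: a $Q$-function gain that the M-step guarantees to be non-negative, and a conditional-entropy (KL) gain that Gibbs' inequality guarantees to be non-negative.

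First I would derive an exact decomposition of the log-likelihood. Fixing the current iterate $\vth^t$ and any observation $\y$, the factorization $f(\y,\z\cond\vth) = f(\y\cond\vth)\cdot f(\z\cond\y,\vth)$ gives, for every $\z\in\cZ$, the identity $\log f(\y\cond\vth) = \log f(\y,\z\cond\vth) - \log f(\z\cond\y,\vth)$. Since the left-hand side is independent of $\z$, I can take the expectation of both sides with respect to $\z \sim f(\cdot\cond\y,\vth^t)$ without altering it, obtaining
\[
\log f(\y\cond\vth) = Q_\y(\vth\cond\vth^t) - \bE_{\z\sim f(\cdot\cond\y,\vth^t)}\bs{\log f(\z\cond\y,\vth)}.
\]
This is an exact equality (no Jensen slack), unlike the lower bound used earlier to motivate the $Q$-function, and it holds for every $\vth$.

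Next I would evaluate the difference $\log f(\y\cond\vth^{t+1}) - \log f(\y\cond\vth^t)$ by subtracting the identity at $\vth=\vth^t$ from the one at $\vth=\vth^{t+1}$. The two entropy terms combine into $\bE_{\z\sim f(\cdot\cond\y,\vth^t)}\bs{\log\br{f(\z\cond\y,\vth^t)/f(\z\cond\y,\vth^{t+1})}}$, which is exactly the Kullback--Leibler divergence $\text{KL}(f(\cdot\cond\y,\vth^t)\,\|\,f(\cdot\cond\y,\vth^{t+1}))$ and hence non-negative by Gibbs' inequality (a consequence of Jensen's inequality, Lemma~\ref{lem:jensen's}, applied to the concave $\log$). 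What remains is the $Q$-gain $Q_\y(\vth^{t+1}\cond\vth^t) - Q_\y(\vth^t\cond\vth^t)$, which need not be non-negative for an individual $\y$. The crucial point -- and the main thing to handle carefully -- is that the M-step optimizes an \emph{aggregate} $Q$-function, so I must aggregate the pointwise identity \emph{before} invoking M-step optimality; there is no pointwise monotonicity to be had.

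Finally, for the sample version I would sum the pointwise identity over $\y_1,\dots,\y_n$. The aggregated $Q$-gain becomes $n\br{Q^{\text{sam}}(\vth^{t+1}\cond\vth^t) - Q^{\text{sam}}(\vth^t\cond\vth^t)} \geq 0$, since the fully corrective M-step sets $\vth^{t+1} = \arg\max_\vth Q^{\text{sam}}(\vth\cond\vth^t)$, while the summed divergence term is a sum of non-negative quantities; hence $\log\cL(\vth^{t+1};\y_1,\dots,\y_n) \geq \log\cL(\vth^t;\y_1,\dots,\y_n)$, and monotonicity of $\log$ gives the stated inequality for $\cL$ itself. For the population version I would instead take the expectation over $\y\sim f_{\popt}$: the $Q$-gain becomes $Q^{\text{pop}}(\vth^{t+1}\cond\vth^t) - Q^{\text{pop}}(\vth^t\cond\vth^t)\geq 0$ by fully corrective optimality, and the expected divergence stays non-negative. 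I expect the only genuine subtlety to be bookkeeping: the argument produces monotonicity of the expected \emph{log}-likelihood $\bE_{\y\sim f_{\popt}}\bs{\log f(\y\cond\vth^t)}$ (equivalently, a monotone decrease of $\text{KL}(f_{\popt}\,\|\,f_{\vth^t})$ up to an additive constant), which is the quantity EM actually improves; for the sample statement this is equivalent to the claimed monotonicity of $\cL$ because $\log$ is strictly increasing.
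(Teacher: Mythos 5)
Your proof is correct and takes essentially the same route as the paper's: the exact decomposition you derive is the content of Lemma~\ref{lem:em-monotone-point} (stated there at $\vth = \vth^t$, while the paper handles general $\vth$ via the Jensen lower bound $\log f(\y\cond\vth) \geq Q_\y(\vth\cond\vth^t) - R_\y(\vth^t)$, whose slack is precisely the KL divergence you isolate and dispatch with Gibbs' inequality), and both arguments then aggregate before invoking fully corrective M-step optimality, exactly as you caution one must. Your closing caveat is well taken and worth keeping: the argument delivers monotonicity of $\bE_{\y\sim f_{\popt}}\bs{\log f(\y\cond\vth^t)}$ rather than of $\bE_{\y\sim f_{\popt}}\bs{f(\y\cond\vth^t)}$ as literally written in the population part of the statement --- a distinction the paper's proof, which writes out only the sample case and asserts the population case is ``similar,'' passes over --- whereas for the sample case the claim as stated does follow, since $\log$ is strictly increasing.
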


A similar result holds for gradient M-steps too but we do not consider that here. To prove this result, we will need the following simple observation\elink{exer:em-monotone}. Recall that for any $\vth^0 \in \Theta$ and $\y \in \cY$, we defined the terms $Q_{\y}(\vth\cond\vth^0) = \Ee{\z \sim f(\cdot\cond\y,\vth^0)}{\log f(\y,\z\cond\vth)}$ and $R_{\y}(\vth^0) = \bE_{\z \sim f(\cdot\cond\y,\vth^0)}\bs{\log f(\z\cond\y,\vth^0)}$.

\begin{lemma}
\label{lem:em-monotone-point}
For any $\vth^0 \in \Theta$, we have
\[
\log f(\y\cond\vth^0) = Q_{\y}(\vth^0\cond\vth^0) - R_{\y}(\vth^0)
\]
\end{lemma}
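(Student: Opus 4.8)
The plan is to exploit the definition of conditional probability to collapse the two expectations defining $Q_\y$ and $R_\y$ into the single deterministic quantity $\log f(\y\cond\vth^0)$. Recalling that $Q_\y(\vth^0\cond\vth^0) = \bE_{\z\sim f(\cdot\cond\y,\vth^0)}\bs{\log f(\y,\z\cond\vth^0)}$ and $R_\y(\vth^0) = \bE_{\z\sim f(\cdot\cond\y,\vth^0)}\bs{\log f(\z\cond\y,\vth^0)}$, the first thing I would note is that both expectations are taken with respect to the \emph{same} distribution, namely $\z\sim f(\cdot\cond\y,\vth^0)$. Linearity of expectation then lets me combine them into a single expectation of a log-ratio:
\[
Q_\y(\vth^0\cond\vth^0) - R_\y(\vth^0) = \bE_{\z\sim f(\cdot\cond\y,\vth^0)}\bs{\log\frac{f(\y,\z\cond\vth^0)}{f(\z\cond\y,\vth^0)}}.
\]

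Next I would simplify the ratio inside the logarithm using the elementary identity $f(\z\cond\y,\vth^0) = f(\y,\z\cond\vth^0)/f(\y\cond\vth^0)$, i.e., the definition of a conditional density (equivalently, the chain rule for probabilities). Substituting this yields $f(\y,\z\cond\vth^0)/f(\z\cond\y,\vth^0) = f(\y\cond\vth^0)$, which crucially does \emph{not} depend on the summation variable $\z$. Hence, from the point of view of the expectation over $\z$, the integrand has become a constant.

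Finally, since the conditional distribution is a genuine probability distribution, its weights sum to one, $\sum_{\z\in\cZ} f(\z\cond\y,\vth^0) = 1$, so the expectation of a $\z$-independent constant is just the constant itself. This gives $\bE_{\z\sim f(\cdot\cond\y,\vth^0)}\bs{\log f(\y\cond\vth^0)} = \log f(\y\cond\vth^0)$, which is exactly the claimed identity.

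There is no genuine obstacle here; the statement is essentially a one-line consequence of the chain rule for probabilities, and it is precisely the algebraic step that was already implicitly used in the lower-bound derivation of the $Q$-function earlier in the section. The only point requiring mild care is the support of the conditional distribution: the ratio is undefined wherever $f(\z\cond\y,\vth^0)=0$, but such values of $\z$ carry zero weight in the expectation and may be harmlessly excluded, so the manipulation is valid wherever it contributes. The same argument carries over verbatim to the continuous case, with the sum over $\cZ$ replaced by an integral.
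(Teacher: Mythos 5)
Your proof is correct and is exactly the argument the paper intends: the lemma is left as Exercise~\ref{exer:em-monotone}, and the natural solution is precisely your observation that the Jensen-step decomposition from \S~\ref{sec:em-em} becomes an exact identity at $\vth = \vth^0$ because the log-ratio $f(\y,\z\cond\vth^0)/f(\z\cond\y,\vth^0) = f(\y\cond\vth^0)$ is constant in $\z$. Your remark about zero-probability values of $\z$ is the right level of care; nothing further is needed.
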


This is a curious result since we have earlier seen that for any $\vth \in \Theta$, we have the inequality $\log f(\y\cond\vth) \geq Q_{\y}(\vth\cond\vth^0) - R_{\y}(\vth^0)$ by an application of the Jensen's inequality (see \S~\ref{sec:em-em}). The above lemma suggests that the inequality is actually tight. We now prove Theorem~\ref{thm:em-monotone}.

\begin{proof}[Proof (of Theorem~\ref{thm:em-monotone}).]
Consider the sample E-step with $Q(\vth\cond\vth^t) = \frac{1}{n}\sum_{i=1}^nQ_{\y_i}(\vth\cond\vth^t)$. A similar argument works for population E-steps. The fully corrective M-step ensures $Q(\vth^{t+1}\cond\vth^t) \geq Q(\vth^t\cond\vth^t)$ i.e.,
\[
\frac{1}{n}\sum_{i=1}^nQ_{\y_i}(\vth^{t+1}\cond\vth^t) \geq \frac{1}{n}\sum_{i=1}^nQ_{\y_i}(\vth^t\cond\vth^t).
\]
Subtracting the same terms from both sides gives us
\[
\frac{1}{n}\sum_{i=1}^n\br{Q_{\y_i}(\vth^{t+1}\cond\vth^t) - R_{\y_i}(\vth^t)} \geq \frac{1}{n}\sum_{i=1}^n\br{Q_{\y_i}(\vth^t\cond\vth^t) - R_{\y_i}(\vth^t)}.
\]
Using the inequality $\log f(\y\cond\vth^{t+1}) \geq Q_{\y}(\vth^{t+1}\cond\vth^0) - R_{\y}(\vth^0)$ and applying Lemma~\ref{lem:em-monotone-point} gives us
\[
\frac{1}{n}\sum_{i=1}^n\log f(\y_i\cond\vth^{t+1}) \geq \frac{1}{n}\sum_{i=1}^n\log f(\y_i\cond\vth^t),
\]
which proves the result.
\end{proof}

\section{Local Strong Concavity and Local Strong Smoothness}
In order to prove stronger convergence guarantees for the EM algorithm, we need to introduce a few structural properties of parametric distributions. Let us recall the EM algorithm with a population E-step and fully corrective M-step for sake of simplicity.
\begin{enumerate}
	\item (E-step) $Q(\cdot\cond\vth^t) = \bE_{\y \sim f_{\popt}}Q_{\y}(\cdot\cond\vth^t)$
	\item (M-step) $\vth^{t+1} = \underset{\vth \in \Theta}{\arg\max}\ Q_t(\vth\cond\vth^t)$
\end{enumerate}
For any $\vth^0 \in \Theta$, we will use $q_{\vth^0}(\cdot) = Q(\cdot\cond\vth^0)$ as a shorthand for the $Q$-function with respect to $\vth^0$ (constructed at the population or sample level depending on the E-step) and let $M(\vth^0) := \underset{\vth \in \Theta}{\arg\max}\ q_{\vth^0}(\vth)$ denote the output of the M-step if $\vth^0$ is the current parameter. Let $\popt$ denote a parameter that optimizes the population likelihood
\[
\popt \in \underset{\vth \in \Theta}{\arg\max}\ \Ee{\y \sim f(\y|\popt)}{\cL(\vth;\y)} = \underset{\vth \in \Theta}{\arg\max}\ \Ee{\y \sim f(\y|\popt)}{f(\y\cond\vth)}
\]
Recall that our overall goal is indeed to recover a parameter such as $\popt$ that maximizes the population level likelihood (or else sample likelihood if using sample E-steps). Now the $Q$-function satisfies\elink{exer:em-self-consis-q} the following \emph{self-consistency} property.
\[
\popt \in \underset{\vth \in \Theta}{\arg\max}\ q_{\popt}(\vth)
\]
Thus, if we could somehow get hold of the $Q$-function $q_{\popt}(\cdot)$, then a single M-step would solve the problem! However, this is a circular argument since getting hold of $q_{\popt}(\cdot)$ would require finding $\popt$ first.

To proceed along the previous argument, we need to refine this observation. Not only should the M-step refuse to deviate from the optimum $\popt$ if initialized there, it should behave in relatively calm manner in the neighborhood of the optimum as well. The following properties characterize ``nice'' $Q$ functions that ensure this happens.

For sake of simplicity, we will assume that all $Q$-functions are continuously differentiable, as well as that the estimation problem is unconstrained i.e., $\Theta = \R^p$. Note that this ensures $\nabla q_{\popt}(\popt) = \vzero$ due to self-consistency. Also note that since we are looking at maximization problems, we will require the $Q$ function to satisfy ``concavity'' properties instead of ``convexity'' properties.

\begin{definition}[Local Strong Concavity]
A statistical estimation problem with a population likelihood maximizer $\popt$ satisfies the $(r,\alpha)$-Local Strong Concavity (LSC) property if there exist $\alpha, r > 0$, such that the function $q_{\popt}(\cdot)$ is $\alpha$-strongly concave in neighborhood ball of radius $r$ around $\popt$ i.e., for all $\vth^1,\vth^2 \in \cB_2(\popt,r)$,
\[
q_{\popt}(\vth^1) - q_{\popt}(\vth^2) - \ip{\nabla q_{\popt}(\vth^2)}{\vth^1 - \vth^2} \leq -\frac{\alpha}{2}\norm{\vth^1 - \vth^2}_2^2
\]
\end{definition}

The reader would find LSC similar to restricted strong convexity (RSC) in Definition~\ref{defn:res-strong-cvx-smooth-fn}, with the ``restriction'' being the neighborhood $\cB_2(\popt,r)$ of $\popt$. Also note that only $q_{\popt}(\cdot)$ is required to satisfy the LSC property, and not $Q$-functions corresponding to every $\vth$.

We will also require a counterpart to restricted strong smoothness (RSS). For that, we introduce the notion of \emph{Lipschitz} gradients.
\begin{definition}[Lipschitz Gradients]
\label{defn:lip-grad}
A differentiable function $f: \R^p \rightarrow \R$ is said to have $\beta$-Lipschitz gradients if for all $\x,\y\in\R^p$, we have
\[
\norm{\nabla f(\x) - \nabla f(\y)}_2 \leq \beta\cdot\norm{\x-\y}_2.
\]
\end{definition}
We advise the reader to relate this notion to that of Lipschitz functions (Definition~\ref{defn:lip}). It can be shown\elink{exer:em-ss-lip-grad} that all functions with $L$-Lipschitz gradients are also $L$-strongly smooth (Definition~\ref{defn:strong-cvx-smooth-fn}). Using this notion we are now ready to introduce our next properties.

\begin{definition}[Local Strong Smoothness]
\label{defn:lss-em}
A statistical estimation problem with a population likelihood maximizer $\popt$ satisfies the $(r,\beta)$-Local Strong Smoothness (LSS) property if there exist $\beta, r > 0$, such that for all $\vth^1,\vth^2 \in \cB_2(\popt,r)$, the function $q_{\popt}(\cdot)$ satisfies
\[
\norm{\nabla q_{\popt}(M(\vth^1)) - \nabla q_{\popt}(M(\vth^2))}_2 \leq \beta\cdot\norm{\vth^1-\vth^2}_2
\]
\end{definition}

The above property ensures that in the restricted neighborhood around the optimum, the $Q$-function $q_{\popt}(\cdot)$ is strongly smooth. The similarity to RSS is immediate. Note that this property also generalizes the self-consistency property we saw a moment ago.

Self-consistency forces $\nabla q_{\popt}(\popt) = \vzero$ at the optimum. LSS forces such behavior to extend around the optimum as well. To see this, simply set $\vth^2 = \popt$ with LSS and observe the corollary $\norm{\nabla q_{\popt}(M(\vth^1))}_2 \leq \beta\cdot\norm{\vth^1-\popt}_2$. The curious reader may wish to relate this corollary to the Robust Bistability property (Definition~\ref{defn:rob-bistable}) and the \emph{Local Error Bound} property introduced by \cite{LuoT1993}.

The LSS property offers, as another corollary, a useful property of statistical estimation problems called the \emph{First Order Stability} property (introduced by \cite{BalakrishnanWY2017} in a more general setting).

\begin{definition}[First Order Stability \citep{BalakrishnanWY2017}]
A statistical estimation problem with a population likelihood maximizer $\popt$ satisfies the $(r,\gamma)$-First Order Stability (FOS) property if there exist $\gamma >0, r > 0$ such that the the gradients of the functions $q_\vth(\cdot)$ are stable in a neighborhood of $\popt$ i.e., for all $\vth \in \cB_2(\popt,r)$,
\[
\norm{\nabla q_\vth(M(\vth)) - \nabla q_{\popt}(M(\vth))}_2 \leq \gamma\cdot\norm{\vth - \popt}_2
\]
\end{definition}

\begin{lemma}
\label{lemma:lss-lrc-fos}
A statistical estimation problem that satisfies the $(r,\beta)$-LSS property, also satisfies the $(r,\beta)$-FOS property.
\end{lemma}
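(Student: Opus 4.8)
The plan is to exploit two facts that the preceding discussion has already established: the M-step returns a stationary point of its own $Q$-function, and the self-consistency property $\popt \in \arg\max_{\vth} q_{\popt}(\vth)$. Together these force one of the two gradients appearing in the FOS inequality to vanish, after which the LSS bound finishes the argument with $\gamma = \beta$.

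First I would observe that, because the estimation problem is unconstrained ($\Theta = \R^p$) and the $Q$-functions are continuously differentiable, the M-step output $M(\vth)$, being the maximizer of $q_{\vth}(\cdot)$ over $\R^p$, satisfies the first-order optimality condition $\nabla q_{\vth}(M(\vth)) = \vzero$. This is precisely the first term appearing in the FOS bound, so the left-hand side of FOS collapses to $\norm{\nabla q_{\popt}(M(\vth))}_2$.

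Next I would recall self-consistency, which identifies $\popt$ as a maximizer of $q_{\popt}$; hence $M(\popt) = \popt$ and, as already noted in the excerpt, $\nabla q_{\popt}(\popt) = \vzero$. The final step is to instantiate the $(r,\beta)$-LSS inequality at $\vth^1 = \vth$ and $\vth^2 = \popt$ (both lie in $\cB_2(\popt,r)$, since the FOS statement already restricts $\vth$ to this ball and $\popt$ lies in it trivially), giving
\[
\norm{\nabla q_{\popt}(M(\vth)) - \nabla q_{\popt}(M(\popt))}_2 \leq \beta\cdot\norm{\vth - \popt}_2.
\]
Substituting $\nabla q_{\popt}(M(\popt)) = \nabla q_{\popt}(\popt) = \vzero$ then yields $\norm{\nabla q_{\popt}(M(\vth))}_2 \leq \beta\cdot\norm{\vth-\popt}_2$, which is exactly the $(r,\beta)$-FOS property.

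There is little genuine difficulty here; the one point that deserves care is the justification that $M(\popt) = \popt$. This follows from reading the self-consistency property together with the definition of $M(\cdot)$ as the maximizer of $q_{\popt}$, and implicitly relies on the maximizer at $\popt$ being well-defined as a single point — a mild regularity assumption already built into treating $M(\cdot)$ as a single-valued map. Everything else is a direct substitution, so I expect the whole argument to occupy only a few lines.
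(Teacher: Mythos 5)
Your argument is correct and follows essentially the same route as the paper's proof: both use the M-step first-order condition $\nabla q_{\vth}(M(\vth)) = \vzero$ to collapse the FOS left-hand side, then invoke self-consistency ($M(\popt) = \popt$, $\nabla q_{\popt}(\popt) = \vzero$) and apply the LSS inequality at the pair $(\vth, \popt)$. The only cosmetic difference is that the paper phrases the last step via the triangle inequality with a vanishing term, whereas you substitute the zero gradient directly.
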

\begin{proof}
Since $M(\vth)$ maximizes the function $q_\vth(\cdot)$ due to the M-step, and the problem is unconstrained and the $Q$-functions differentiable, we get $\nabla q_{\vth}(M(\vth)) = \vzero$. Thus, we have, using the triangle inequality,
\begin{align*}
&\norm{\nabla q_\vth(M(\vth)) - \nabla q_{\popt}(M(\vth))}_2 = \norm{\nabla q_{\popt}(M(\vth))}_2 \\
&\leq \norm{\nabla q_{\popt}(M(\vth)) - \nabla q_{\popt}(M(\popt))}_2 + \norm{\nabla q_{\popt}(M(\popt))}_2\\
&\leq \beta\cdot\norm{\vth-\popt}_2,
\end{align*}
using self-consistency to get $M(\popt) = \popt$ and $\nabla q_{\popt}(\popt) = \vzero$.
\end{proof}

\section{A Local Convergence Guarantee for EM}
We will now prove a convergence result much stronger than Theorem~\ref{thm:em-monotone}: if EM is initialized ``close'' to the optimal parameter $\popt$ for ``nice'' problems, then it approaches $\popt$ at a linear rate. For the sake of simplicity, we will analyze EM with population E-steps and fully corrective M-steps. We refer the reader to \citep{BalakrishnanWY2017} for analyses of sample E-steps and stochastic M-steps. The following \emph{contraction} lemma will be crucial in the analysis of the EM algorithm.

\begin{lemma}
\label{lem:em-local-contract}
Suppose we have a statistical estimation problem with a population likelihood maximizer $\popt$ that, for some $\alpha,\beta,r > 0$, satisfies the $(r,\alpha)$-LSC and $(r,\beta)$-LSS properties. Then in the region $\cB_2(\popt,r)$, the $M$ operator corresponding to the fully corrective M-step is contractive, i.e., for all $\vth \in \cB_2(\popt,r)$,
\[
\norm{M(\vth) - M(\popt)}_2 \leq \frac{\beta}{\alpha}\cdot\norm{\vth - \popt}_2
\]
\end{lemma}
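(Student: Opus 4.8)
The plan is to bound the single scalar $\norm{\nabla q_{\popt}(M(\vth))}_2$ from above and below and then divide. Everything pivots on the self-consistency facts already recorded in the excerpt, namely $M(\popt) = \popt$ and $\nabla q_{\popt}(\popt) = \vzero$. With these in hand the target inequality is simply $\norm{M(\vth) - \popt}_2 \leq \frac{\beta}{\alpha}\norm{\vth - \popt}_2$, since $M(\popt) = \popt$.

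First I would obtain the \emph{upper} bound directly from $(r,\beta)$-LSS. Instantiating Definition~\ref{defn:lss-em} with $\vth^1 = \vth$ and $\vth^2 = \popt$ gives $\norm{\nabla q_{\popt}(M(\vth)) - \nabla q_{\popt}(M(\popt))}_2 \leq \beta\cdot\norm{\vth - \popt}_2$, and substituting $M(\popt) = \popt$ together with $\nabla q_{\popt}(\popt) = \vzero$ collapses this to
\[
\norm{\nabla q_{\popt}(M(\vth))}_2 \leq \beta\cdot\norm{\vth - \popt}_2.
\]
Note that LSS as defined already folds the operator $M$ into the statement, so this step needs no side conditions.

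Next I would obtain the matching \emph{lower} bound from the $(r,\alpha)$-LSC property. The standard consequence of $\alpha$-strong concavity of $q_{\popt}$, whose maximizer is $\popt$ with $\nabla q_{\popt}(\popt) = \vzero$, is that the gradient grows at least linearly in the displacement from the maximizer: for any $\vth'$ in the ball, $\norm{\nabla q_{\popt}(\vth')}_2 \geq \alpha\cdot\norm{\vth' - \popt}_2$. I would derive this by writing the two strong-concavity inequalities for the pair $(\vth',\popt)$ in both orders, adding them so that the function values cancel, and then using $\nabla q_{\popt}(\popt) = \vzero$ and Cauchy--Schwarz. Applying the resulting estimate at $\vth' = M(\vth)$ yields $\norm{\nabla q_{\popt}(M(\vth))}_2 \geq \alpha\cdot\norm{M(\vth) - \popt}_2$. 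Chaining this with the upper bound gives $\alpha\cdot\norm{M(\vth) - \popt}_2 \leq \beta\cdot\norm{\vth - \popt}_2$, which is the claim.

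The main obstacle is a domain issue in the lower bound: the gradient-growth estimate only holds when the point $M(\vth)$ itself lies in $\cB_2(\popt,r)$, since LSC controls $q_{\popt}$ only on that ball, whereas the upper bound from LSS carried no such requirement. I would handle this by observing that the lemma is used only in the regime where $M(\vth)\in\cB_2(\popt,r)$, which is precisely what the subsequent convergence theorem maintains inductively for the iterates; alternatively one reads the $(r,\alpha)$-LSC hypothesis as holding on a region large enough to contain $M(\vth)$ for every $\vth \in \cB_2(\popt,r)$. This is the one point demanding care — the remainder is the routine two-sided gradient computation sketched above.
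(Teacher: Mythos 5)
Your proposal is correct and takes essentially the same route as the paper: the paper also combines a two-sided application of LSC at the pair $(\popt, M(\vth))$ with the LSS-derived bound $\norm{\nabla q_{\popt}(M(\vth))}_2 \leq \beta\cdot\norm{\vth - \popt}_2$ (packaged there as the FOS property via $\nabla q_{\vth}(M(\vth)) = \vzero$), the only cosmetic difference being that you apply Cauchy--Schwarz to extract a lower bound on the gradient norm while the paper keeps the inner product $\ip{\nabla q_{\popt}(M(\vth))}{\popt - M(\vth)}$ and bounds it from both sides. Your caveat that the LSC step implicitly requires $M(\vth) \in \cB_2(\popt,r)$ is a fair observation that the paper's own proof also leaves unaddressed.
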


Since by the self consistency property, we have $M(\popt) = \popt$ and the EM algorithm sets $\vth^{t+1} = M(\vth^t)$ due to the M-step, Lemma~\ref{lem:em-local-contract} immediately guarantees the  following local convergence property\elink{exer:em-conv-like}.

\begin{theorem}
\label{thm:em-local-conv}
Suppose a statistical estimation problem with population likelihood maximizer $\popt$ satisfies the $(r,\alpha)$-LSC and $(r,\beta)$-LSS properties such that $\beta < \alpha$. Let the EM algorithm (Algorithm~\ref{algo:em}) be initialized with $\vth^1 \in \cB_2(\popt,r)$ and executed with population E-steps and fully corrective M-steps. Then after at most $T = \bigO{\log\frac{1}{\epsilon}}$ steps, we have $\norm{\vth^t - \popt}_2 \leq \epsilon$.
\end{theorem}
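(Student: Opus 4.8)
The plan is to treat the contraction Lemma~\ref{lem:em-local-contract} as the entire engine of the argument and bootstrap it, by induction, into a linear convergence rate. The one place I would be careful is that the contraction bound is only guaranteed inside $\cB_2(\popt,r)$, so the induction must simultaneously track that the iterates never escape this neighborhood.

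First I would record two facts. By self-consistency, $M(\popt)=\popt$, and the EM update is precisely $\vth^{t+1}=M(\vth^t)$. I set the contraction factor $\rho:=\beta/\alpha$, which the hypothesis $\beta<\alpha$ places in $[0,1)$.

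The inductive core: assuming $\vth^t\in\cB_2(\popt,r)$, Lemma~\ref{lem:em-local-contract} applies and yields
\[
\norm{\vth^{t+1}-\popt}_2=\norm{M(\vth^t)-M(\popt)}_2\le\rho\cdot\norm{\vth^t-\popt}_2.
\]
Because $\rho<1$, this immediately gives $\norm{\vth^{t+1}-\popt}_2\le\norm{\vth^t-\popt}_2\le r$, so $\vth^{t+1}$ also lies in $\cB_2(\popt,r)$, closing the induction; the base case is the hypothesis $\vth^1\in\cB_2(\popt,r)$. Unrolling then produces the geometric bound $\norm{\vth^t-\popt}_2\le\rho^{t-1}\norm{\vth^1-\popt}_2\le\rho^{t-1}r$, and solving $\rho^{t-1}r\le\epsilon$ for $t$ yields $T=1+\lceil\log(r/\epsilon)/\log(\alpha/\beta)\rceil=\bigO{\log\frac1\epsilon}$, since $\alpha/\beta>1$ is a fixed constant.

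The hard part is really just the observation flagged at the outset: the contraction lemma is local, so one cannot apply it blindly at each step, and the induction showing the iterates stay within $\cB_2(\popt,r)$ is exactly what legitimizes the recursion. Proving Lemma~\ref{lem:em-local-contract} itself would be the substantive work, but since it is already available I would invoke it directly; its proof would combine LSC (to lower-bound $\norm{\nabla q_{\popt}(M(\vth))}_2=\norm{\nabla q_{\popt}(M(\vth))-\nabla q_{\popt}(\popt)}_2$ against $\alpha\norm{M(\vth)-\popt}_2$ via strong monotonicity of the gradient) with LSS taken at $\vth^2=\popt$ (to upper-bound the same gradient by $\beta\norm{\vth-\popt}_2$), which is where the ratio $\beta/\alpha$ originates. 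Everything beyond the containment check is the routine unrolling of a geometric sequence.
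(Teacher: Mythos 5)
Your proposal is correct and follows essentially the same route as the paper, which derives the theorem directly from Lemma~\ref{lem:em-local-contract} together with $M(\popt)=\popt$ and $\vth^{t+1}=M(\vth^t)$, then unrolls the contraction geometrically. Your explicit induction verifying that every iterate remains in $\cB_2(\popt,r)$ (so the local lemma may legitimately be reapplied) is exactly the detail the paper leaves implicit, and your sketch of the lemma's own proof via LSC and LSS at $\vth^2=\popt$ matches the paper's argument.
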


Note that the above result holds only if $\beta < \alpha$, in other words, if the condition number $\kappa = \beta/\alpha < 1$. We hasten to warn the reader that whereas in previous sections we always had $\kappa \geq 1$, here the LSC and LSS properties are defined differently (LSS involves the M-step whereas LSC does not) and thus it is possible that we have $\kappa < 1$.

Also, since all functions satisfy $(0,0)$-LSC, it is plausible that for well behaved problems, even for $\alpha > \beta$, there should exist some small radius $r(\alpha)$ so that the $(r(\alpha),\alpha)$-LSC property holds. This may require the EM algorithm to be initialized closer to the optimum for the convergence properties to kick in. We now prove Lemma~\ref{lem:em-local-contract} below.

\begin{proof}[Proof of Lemma~\ref{lem:em-local-contract}]
Since we have differentiable $Q$-functions and an unconstrained estimation problem, we immediately get a lot of useful results. We note that this lemma holds even for constrained estimation problems but the arguments are more involved which we wish to avoid. Let $\vth \in \cB_2(\popt,r)$ be any parameter in the $r$-neighborhood of $\popt$.\\

\noindent\textbf{(Apply Local Strong Concavity)} Upon a two sided application of LSC and using $\nabla q_{\popt}(\popt) = \vzero$, we get
\begin{align*}
q_{\popt}(\popt) - q_{\popt}(M(\vth)) - \ip{\nabla q_{\popt}(M(\vth))}{\popt - M(\vth)} &\leq -\frac{\alpha}{2}\norm{M(\vth) - \popt}_2^2\\
q_{\popt}(M(\vth)) - q_{\popt}(\popt) &\leq -\frac{\alpha}{2}\norm{\popt - M(\vth)}_2^2,
\end{align*}
adding which gives us the inequality
\[
\ip{\nabla q_{\popt}(M(\vth))}{\popt - M(\vth)} \geq \alpha\cdot\norm{M(\vth) - \popt}_2^2
\]
\noindent\textbf{(Apply Local Strong Smoothness)} Since $M(\vth)$ maximizes the function $q_\vth(\cdot)$ due to the M-step, we get $\nabla q_{\vth}(M(\vth)) = \vzero$. Thus,
\begin{align*}
\ip{\nabla q_{\popt}(M(\vth))}{\popt - M(\vth)} = \ip{\nabla q_{\popt}(M(\vth)) - \nabla q_{\vth}(M(\vth))}{\popt - M(\vth)}\\
\leq \norm{\nabla q_{\popt}(M(\vth)) - \nabla q_{\vth}(M(\vth))}_2\norm{\popt - M(\vth)}_2
\end{align*}
Using Lemma~\ref{lemma:lss-lrc-fos} to invoke the $(r,\beta)$-FOS property further gives us
\[
\ip{\nabla q_{\popt}(M(\vth))}{\popt - M(\vth)} \leq \beta\cdot\norm{\vth-\popt}_2\norm{\popt - M(\vth)}_2
\]
Combining the two results proved above gives us
\[
\alpha\cdot\norm{M(\vth) - \popt}_2^2 \leq \beta\cdot\norm{M(\vth) - \popt}_2\cdot\norm{\vth - \popt}_2,
\]
which finish the proof.
\end{proof}

\subsection{A Note on the Application of Convergence Guarantees}
We conclude the discussion with some comments on the feasibility of the structural assumptions we used to prove the convergence guarantees, in practical settings. Rigorous proofs that these properties are indeed satisfied in practical applications is beyond the scope of this monograph. These can be found in \citep{BalakrishnanWY2017}. Note that for the convergence guarantees (specifically Lemma~\ref{lem:em-local-contract}) to hold, a problem need only satisfy the LSC and FOS properties, with LSS being equivalent to FOS due to Lemma~\ref{lemma:lss-lrc-fos}.\\

\noindent\textbf{Gaussian Mixture Models} To analyze the LSC and FOS properties, we need to look at the population version of the $Q$-function. Given the point-wise $Q$-function derivation of the in \S~\ref{sec:em-app}, we get for $\vM = (\vmu^0,\vmu^1)$
\[
Q(\vM\cond\vM^\ast) = -\frac{1}{2} \bE_{\y \sim f_{\vM^\ast}}\bs{w^0(\y)\cdot\norm{\y - \vmu^0}_2^2 + w^1(\y)\cdot\norm{\y - \vmu^1}_2^2},
\]
where $w^z(\y) = e^{-\frac{\norm{\y - \vmu^{\ast,z}}_2^2}{2}}\bs{e^{-\frac{\norm{\y - \vmu^{\ast,0}}_2^2}{2}} + e^{-\frac{\norm{\y - \vmu^{\ast,1}}_2^2}{2}}}^{-1}$ for $z = 0,1$. It can be seen that the function $q_{\vM^\ast}(\cdot)$ satisfies $\nabla^2 q_{\vM^\ast}(\cdot) \succeq w\cdot I$ where $w = \min\bc{\E{w^0(\y)},\E{w^1(\y)}} > 0$ and hence this problem satisfies the $(\infty,w)$-LSC property i.e., it is globally strongly concave.

Establishing the FOS property is more involved. However, it can be shown that the problem does satisfy the $(r,\alpha)$-FOS property with $r = \Omega(\norm{\vM^\ast}_2)$ and $\alpha = \exp(-\Om{\norm{\vM^\ast}_2^2})$ under suitable conditions.\\

\noindent\textbf{Mixed Regression} We again use the point-wise $Q$-function construction to construct the population $Q$-function. For any $\vW = (\bt^0,\bt^1)$,
\[
Q_{(\x,y)}(\vW|\vW^\ast) = -\frac{1}{2}\E{\alpha^0_{(\x,y)}\cdot(y - \x^\top\bt^0)^2 - \alpha^1_{(\x,y)}\cdot(y - \x^\top\bt^1)^2},
\]
where $\alpha^z_{(\x,y)} = e^{-\frac{(y - \x^\top\bt^{\ast,z})^2}{2}}\bs{e^{-\frac{(y - \x^\top\bt^{\ast,0})^2}{2}} + e^{-\frac{(y - \x^\top\bt^{\ast,1})^2}{2}}}^{-1}$. Assuming $\E{\x\x^\top} = I$ for sake of simplicity, we get $\nabla^2 q_{\vW^\ast}(\cdot) \succeq \alpha\cdot I$ where $\alpha = \min\bc{\lambda_{\min}\br{\E{w^0(\x,y)\cdot\x\x^\top}}, \lambda_{\min}\br{\E{w^1(\x,y)\cdot\x\x^\top}}}$ i.e., the problem satisfies the $(\infty,w)$-LSC property i.e., it is globally strongly concave. Establishing the FOS property is more involved but the problem does satisfy the $(r,\alpha)$-FOS property with $r = \Omega(\norm{\vW^\ast}_2)$ and $\alpha = \Om{1}$ under suitable conditions.\\

\section{Exercises}
\begin{exer}
\label{exer:em-em-k-means}
Show that for Gaussian mixture models with a balanced isotropic mixture, the AM-LVM algorithm (Algorithm~\ref{algo:em-hard-altopt}) implements exactly recovers Lloyd's algorithm for k-means clustering. Note that AM-LVM in this case prescribes setting $w^0_t(\y) = 1$ if $\norm{\y - \vmu^{t,0}}_2 \leq \norm{\y - \vmu^{t,1}}_2$ and $0$ otherwise and also setting $w^1_t(\y) = 1 - w^0_t(\y)$.
\end{exer}
\begin{exer}
\label{exer:em-sgd}
Show that on expectation, the stochastic EM update rule is, on expectation, equivalent to the sample E and the gradient M-step i.e., $\E{M^{\text{sto}}(\vth^t,Q_t)\cond\vth^t} = M^{\text{grad}}(\vth^t,Q^\text{sam}_t(\cdot\cond\vth^t))$.
\end{exer}
\begin{exer}
\label{exer:em-gmm-m}
Derive the fully corrective and gradient M-step in the Gaussian mixture modeling problem. Show that they have closed forms.
\end{exer}
\begin{exer}
\label{exer:em-mr-em}
Derive the E and M-step constructions for the mixed regression problem with fair Bernoulli trials.
\end{exer}
\begin{exer}
\label{exer:em-monotone}
Prove Lemma~\ref{lem:em-monotone-point}.
\end{exer}
\begin{exer}
\label{exer:em-self-consis-q}
Let $\hat\vth$ be a population likelihood maximizer i.e.,
\[
\hat\vth \in \underset{\vth \in \Theta}{\arg\max}\ \Ee{\y \sim f(\y|\popt)}{f(\y\cond\vth)}
\]
Then show that $\hat\vth \in \underset{\vth\in\Theta}{\arg\max}\ q_{\hat\vth}(\vth)$. \textit{Hint}: One way to show this result is to use Theorem~\ref{thm:em-monotone} and Lemma~\ref{lem:em-monotone-point}.
\end{exer}
\begin{exer}
\label{exer:em-ss-lip-grad}
Show that a function $f$ (whether convex or not) that has $L$-Lipschitz gradients is necessarily $L$-strongly smooth. Also show that for any given $L > 0$, there exist functions that do not have $L$-Lipschitz gradients which are also not $L$-strongly smooth.\\
\textit{Hint}: Use the fundamental theorem for calculus for line integrals for the first part. For the second part try using a quadratic function.
\end{exer}
\begin{exer}
\label{exer:em-conv-like}
For any statistical estimation problem with population likelihood maximizer $\popt$ that satisfies the LSC and LSS properties with appropriate constants, show that parameters close to $\popt$ are approximate population likelihood maximizers themselves. Show this by finding constants $\epsilon_0, D > 0$ (that may depend on the LSC, LSS constants) such that for any $\vth$, if $\norm{\vth - \popt}_2 \leq \epsilon < \epsilon_0$, then
\[
\Ee{\y \sim f(\y|\popt)}{f(\y\cond\vth)} \geq \Ee{\y \sim f(\y|\popt)}{f(\y\cond\popt)} - D\cdot\epsilon.
\]
\text{Hint}: Use Lemma~\ref{lem:em-monotone-point} and Exercise~\ref{exer:em-ss-lip-grad}.
\end{exer}
\begin{exer}
\label{exer:em-altmax-monotone}
Similar to how the EM algorithm ensures monotone progress with respect to the likelihood objective (see \S~\ref{sec:em-monotone}), show that the the AM-LVM algorithm (see Algorithm~\ref{algo:em-hard-altopt}) ensures monotone progress with respect to the following optimization problem
\[
\max_{\substack{\vth\in\Theta\\\hat\z_i,\ldots,\hat\z_n\in\cZ}}\ \cL\br{\vth;\bc{(\y_i,\hat\z_i)}_{i=1}^n}.
\]
More specifically, show that if the iterates obtained by AM-LVM at time $t$ are $\vth^t,\bc{\hat\z^t_i}_{i=1}^n$, then we have for all $t \geq 1$
\[
\cL\br{\vth^{t+1};\bc{(\y_i,\hat\z^{t+1}_i)}_{i=1}^n} \geq \cL\br{\vth^t;\bc{(\y_i,\hat\z^t_i)}_{i=1}^n}
\]
\end{exer}

\section{Bibliographic Notes}
\label{sec:em-bib}
The EM algorithm was given its name and formally introduced in the seminal work of \citet{DempsterLR1977}. Although several results exist that attempt to characterize the convergence properties of the EM algorithm, prominent among them is the work of \citet{Wu1983} which showed that for problems with a uni-modal likelihood function and some regularity conditions that make the estimation problem well-posed, the EM algorithm converges to the (unique) global optimum.

We have largely followed the work of \cite{BalakrishnanWY2017} in the treatment of the topic in this section. Recent results have attempted to give local convergence results for the EM algorithm, similar to the ones we discussed. Examples include analyses of the Baum-Welch algorithm used to learn hidden Markov models \citep{YangBW2015}, the EM algorithm \citep{BalakrishnanWY2017}, and the more general problem of M-estimation  \citep{AndresenS2016} (recall that maximum likelihood estimators are members of the broader class of M-estimators). The recurring message here is that upon proper initialization, the EM algorithm does converge to the global optimum.

For certain problems, such as mixed regression, it is known how to get such initializations in polynomial time under regularity assumptions \citep{YiCS2014} which makes the entire procedure a polynomial time solution to get globally consistent parameters. Other recent advances in the area include extensions to high dimensional estimation estimation problems by way of regularization \citep{YiC2015} or directly incorporating sparsity structure into the procedure \citep{WangGNL2015}. We refer the reader to the work of \citet{BalakrishnanWY2017} and other more recent works for a more detailed review of literature in this area.

As a closing note, we reiterate our previous comment on similarities between EM and alternating minimization. Both have an alternating pattern in their execution with each alternation typically being cheap and easy to execute. Indeed, for several areas in which the EM approach has been found successful, such as learning mixture models, regression with missing or corrupted data, phase retrieval etc, there also exist efficient alternating minimization algorithms. We will look at some of them in later sections. Having said that, the EM algorithm does distinguish itself within the general alternating minimization approach in its single minded goal of approaching the MLE by promoting parameters that increase the likelihood of the data.
\chapter{Stochastic Optimization Techniques}
\label{chap:saddle}

In previous sections, we have looked at specific instances of optimization problems with non-convex objective functions. In \S~\ref{chap:altmin} we looked at problems in which the objective can be expressed as a function of two variables, whereas in \S~\ref{chap:em} we looked at objective functions with latent variables that arose in probabilistic settings. In this section, we will look at the problem of optimization with non-convex objectives in a more general setting.

Several machine learning and signal processing applications such as deep learning, topic modeling etc, generate optimization problems that have non-convex objective functions. The global optimization of non-convex objectives, i.e., finding the global optimum of the objective function, is an NP hard problem in general. Even the seemingly simple problem of minimizing quadratic functions of the kind $\x^\top A\x$ over convex constraint sets becomes NP-hard the moment the matrix $A$ is allowed to have even one negative eigenvalue.

As a result, a much sought after goal in applications with non-convex objectives is to find a local minimum of the objective function. The main hurdle in achieving local optimality is the presence of saddle points which can mislead optimization methods such as gradient descent by stalling their progress. Saddle points are best avoided as they signal inflection in the objective surface and unlike local optima, they need not optimize the objective function in any meaningful way.

The recent years have seen much interest in this problem, particularly with the advent of deep learning where folklore wisdom tells us that in the presence of sufficient data, even locally optimal solutions to the problem of learning the edge weights of a network perform quite well \citep{ChoromanskaHMALC2015}. In these settings, techniques such as convex relaxations, and non-convex optimization techniques that we have studied such as EM, gAM, gPGD, do not apply directly. In these settings, one has to attempt optimizing a non-convex objective directly.

The problem of avoiding or escaping saddle points is actually quite challenging in itself given the wide variety of configurations saddle points can appear in, especially in high dimensional problems. It should be noted that there exist saddle configurations, bypassing which is intractable in itself. For such cases, even finding locally optimal solutions is an NP-hard problem \citep{AnandkumarG2016}.

In our discussion, we will look recent results which show that if the function being optimized possesses certain nice structural properties, then an application of very intuitive algorithmic techniques can guarantee local optimality of the solutions. This will be yet another instance of a result where the presence of a structural property (such as RSC/RSS, MSC/MSS, or LSC/LSS as studied in previous sections) makes the problem well behaved and allows efficient algorithms to offer provably good solutions. Our discussion will largely aligned to the work of \cite{GeHJY2015}. The bibliographic notes will point to other works.

\section{Motivating Applications}
\label{sec:saddle-app}
A wide range of problems in machine learning and signal processing generate optimization problems with non-convex objectives. Of particular interest to us would be the problem of \emph{Orthogonal Tensor Decomposition}. This problem has been shown to be especially useful in modeling a large variety of learning problems including training deep Recurrent Neural Networks \citep{SedghiA2016}, Topic Modeling, learning Gaussian Mixture Models and Hidden Markov Models, Independent Component Analysis \citep{AnandkumarGHKT2014}, and reinforcement learning \citep{AzizzadenesheliLA2016}.

The details of how these machine learning problems can be reduced to tensor decomposition problems will involve getting into the details which will distract us from our main objective. To keep the discussion focused and brief, we request the reader to refer to these papers for the reductions to tensor decomposition. We ourselves will be most interested in the problem of tensor decomposition itself.

We will restrict our study to $4\th$-order tensors which can be interpreted as $4$-dimensional arrays. Tensors are easily constructed using \emph{outer products}, also known as \emph{tensor products}. An outer product of $2\nd$ order produces a $2\nd$ order tensor which is nothing but a matrix. For any $\vu, \vv \in \R^p$, their outer product is defined as $\vu \otimes \vv := \vu\vv^\top \in \R^{p\times p}$ which is a $p \times p$ matrix, whose $(i,j)$-th entry is $\vu_i\vv_j$.

We can similarly construct a $4\th$-order tensors. For any $\vu,\vv,\vw,\vx \in \bR^p$, let $T = \vu\otimes\vv\otimes\vw\otimes\vx \in \R^{p\times p\times p\times p}$. The $(i,j,k,l)$-th entry of this tensor, for any $i,j,k,l \in [p]$, will be $T_{i,j,k,l} = \vu_i\cdot\vv_j\cdot\vw_k\cdot\vx_l$. The set of $4\th$-order tensors is closed under addition and scalar multiplication. We will study a special class of $4\th$-order tensors known as \emph{orthonormal} tensors which have an \emph{orthonormal decomposition} as follows
\[
T = \sum_{i=1}^r \vu_i \otimes \vu_i \otimes \vu_i\otimes \vu_i,
\]
where the vectors $\vu_i$ are orthonormal \emph{components} of the tensor $T$ i.e., $\vu_i^\top\vu_j = 0$ if $i \neq j$ and $\norm{\vu_i}_2 = 1$. The above tensor is said to have rank $r$ since it has $r$ components in its decomposition. If an orthonormal decomposition of a tensor exists, it can be shown to be unique.

Just as a matrix $A \in \R^{p \times p}$ defines a bi-linear form $A: (\vx,\vy) \mapsto \vx^\top A \vy$, similarly a tensor defines a \emph{multi-linear form}. For orthonormal tensors, the multilinear form has a simple expression. In particular, if $T$ has the orthonormal form described above, we have
\begin{equation*}
\begin{array}{cl}
T(\vv,\vv,\vv,\vv) &= \sum_{i=1}^r (\vu_i^\top\vv)^4 \in \R\\
T(I,\vv,\vv,\vv) &= \sum_{i=1}^r (\vu_i^\top\vv)^3\cdot\vu_i \in \R^p\\
T(I,I,\vv,\vv) &= \sum_{i=1}^r (\vu_i^\top\vv)^2\cdot\vu_i\vu_i^\top \in \R^{p \times p}\\
T(I,I,I,\vv) &= \sum_{i=1}^r (\vu_i^\top\vv)\cdot(\vu_i\otimes\vu_i\otimes\vu_i) \in \R^{p \times p \times p}
\end{array}
\end{equation*}
The problem of orthonormal tensor decomposition involves recovering all $r$ components of a rank-$r$, 4-th order tensor $T$. An intuitive way to do this is to iteratively find all the components.

First we recover, say w.l.o.g., the first component $\vu_1$. Then we perform a \emph{peeling/deflation} step by subtracting that component and creating a new tensor $T^{(1)} = T - \vu_1 \otimes \vu_1 \otimes \vu_1\otimes \vu_1$ and repeating the process. Note that the rank of the tensor $T^{(1)}$ is only $r-1$ and thus, this procedure terminates in just $r$ steps.

To execute the above algorithm, all we are required is to solve the individual steps of recovering a single component. This requires us to solve the following optimization problem.

\begin{equation}
\begin{array}{cl}
	\max & T(\vu,\vu,\vu,\vu) = \sum_{i=1}^r (\vu_i^\top\vu)^4\\
	\text{s.t.} & \norm{\vu}_2 = 1,
\end{array}
\tag*{(LRTD)}\label{eq:lrtd}
\end{equation}

This is the non-convex optimization problem\elink{exer:saddle-non-conv} that we will explore in more detail. We will revisit this problem later after looking at some techniques to optimize non-convex objectives.

\section{Saddles and why they Proliferate}
To better understand the challenges posed by saddle points, let us take good old gradient descent as an optimization algorithm. As we studied in \S~\ref{chap:tools}, the procedure involves repeatedly taking steps away from the direction of the gradient.
\[
\x^{t+1} = \x^t - \eta_t\cdot\nabla f(\x^t)
\]
Now, it can be shown\elink{exer:saddle-smooth-nc}\elink{exer:saddle-smooth-nc-2}, that the procedure is guaranteed to make progress at \emph{every} time step, provided the function $f$ is strongly smooth and the step length is small enough. However, the procedure stalls at \emph{stationary points} where the gradient of the function vanishes i.e., $\nabla f(\x) = \vzero$. This includes local optima, which are of interest, and saddle points, which simply stall descent algorithms.

One way to distinguish saddle points from local optima is by using the \emph{second derivative test}. The Hessian of a doubly differentiable function has only positive eigenvalues at local minima and only negative ones at local maxima. Saddles on the other hand are unpredictable. The \emph{simple saddles} which we shall study here, reveal themselves by having both positive and negative eigenvalues in the Hessian. The bibliographic notes discuss more complex saddle structures.

\begin{figure}
\includegraphics[width=0.5\columnwidth]{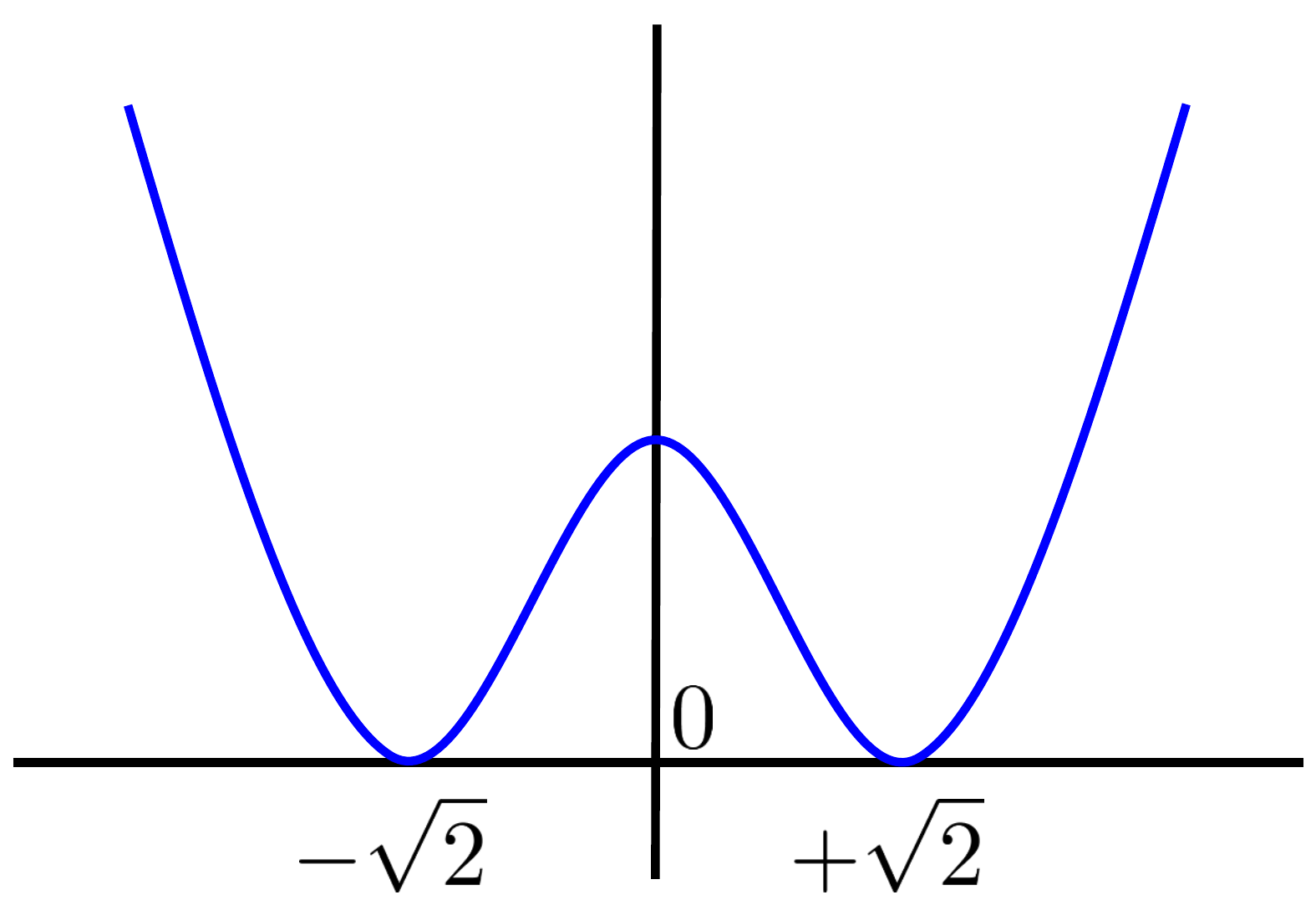}
\includegraphics[width=0.5\columnwidth]{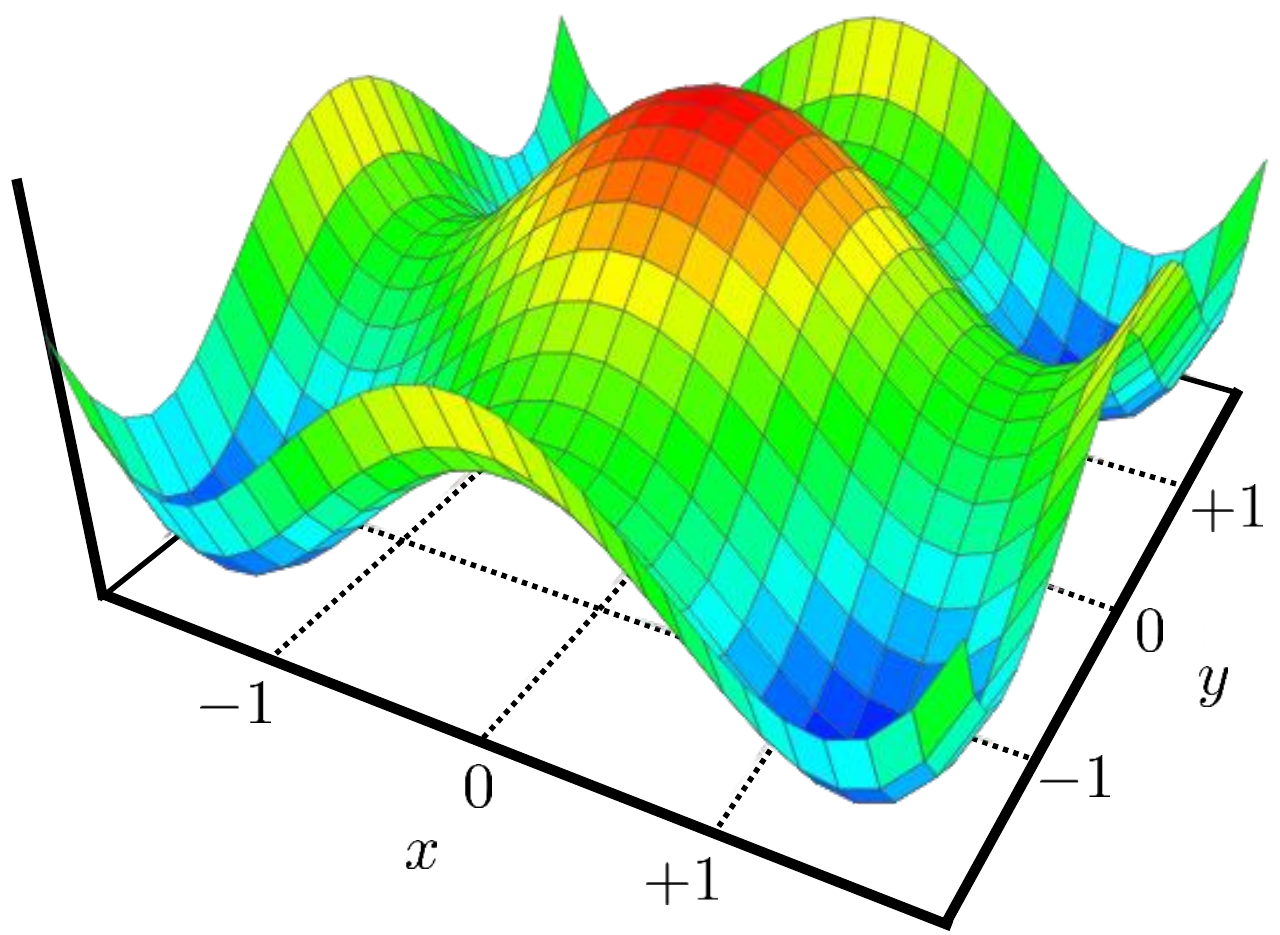}
\caption[Emergence of Saddle Points]{The function on the left $f(x) = x^4 - 4\cdot x^2 + 4$ has two global optima $\bc{-\sqrt 2, \sqrt 2}$ separated by a local maxima at $0$. Using this function, we construct on the right, a higher dimensional function $g(x,y) = f(x) + f(y) +8$ which now has $4$ global minima separated by $4$ saddle points. The number of such minima and saddle points can explode exponentially in learning problems with symmetry (indeed $g(x,y,z) = f(x) + f(y) + f(z) + 12$ has $8$ local minima and saddle points). Plot on the right courtesy \url{academo.org}}
\label{fig:saddle}
\end{figure}

The reasons for the origin of saddle points is quite intriguing too. Figure~\ref{fig:saddle} shows how saddles may emerge and their numbers increase exponentially with increasing dimensionality. Consider the tensor decomposition problem in \eqref{eq:lrtd}. It can be shown\elink{exer:saddle-multi-opt} that all the $r$ components are optimal solutions to this problem. Thus, the problem possesses a beautiful symmetry which allows us to recover the components in any order we like. However, it is also easy to show\elink{exer:saddle-multi-non-opt} that general convex combinations of the components are not optimal solutions to this problem. Thus, we automatically obtain $r$ isolated optima spread out in space, interspersed with saddle points.

The applications we discussed, such as Gaussian mixture models, also have such an internal symmetry -- the optimum is unique only up to permutation. Indeed, it does not matter in which order do we recover the components of a mixture model, so long as we recover all of them. However, this very symmetry gives rise to saddle points \citep{GeHJY2015}, since taking two permutations of the optimal solution and taking a convex combination of them is in general not an optimal solution as well. This gives us, in general, an exponential number of optima, separated by (exponentially many) saddle points.

Before moving forward, we remind the reader that techniques we have studied so far for non-convex optimization, namely EM, gAM, and gPGD are far too specific to be applied to non-convex objectives in general, and to the problems we encounter with tensor decomposition in particular. We need more generic solutions for the task of local optimization of non-convex objectives.

\section{The Strict Saddle Property}
\label{sec:saddle-fsp}
Given that the Hessian of the function is the first point of inquiry when trying to distinguish saddle points from other stationary points, it seems natural to use second order methods to escape points. Indeed, the bibliographic notes discuss several such approaches that use the Hessian, or estimates thereof, to escape saddle points. However, these are expensive and do not scale to high dimensional problems. Consequently, we would be more interested in scalable first order methods.

Given the above considerations, a natural course of action is to identify properties that an objective function should satisfy in order to allow gradient descent-style techniques to escape its saddle points. The recent works of \cite{GeHJY2015,SunQW2015} give an intuitive answer to this question. They observe that if a saddle point $\x$ for a function $f$ contains directions of steep descent, then it is possible, at least in principle, for a gradient descent procedure to discover this direction and ``fall'' along it. The existence of such directions makes a saddle unstable -- it behaves like a local maxima along these directions and a slight perturbation is very likely to cause gradient descent to roll down the function surface. We notice that this will indeed be the case if for some direction $\vu \in \R^p$, we have $\vu^\top \nabla^2f(\x)\vu \ll 0$. Figure~\ref{fig:strict-saddle} depicts a toy case with the function $f(x,y) = x^2-y^2$ which exhibits a saddle at the origin but nevertheless, also presents a direction of steep descent.

\begin{figure}
\centering
\includegraphics[width=0.5\columnwidth]{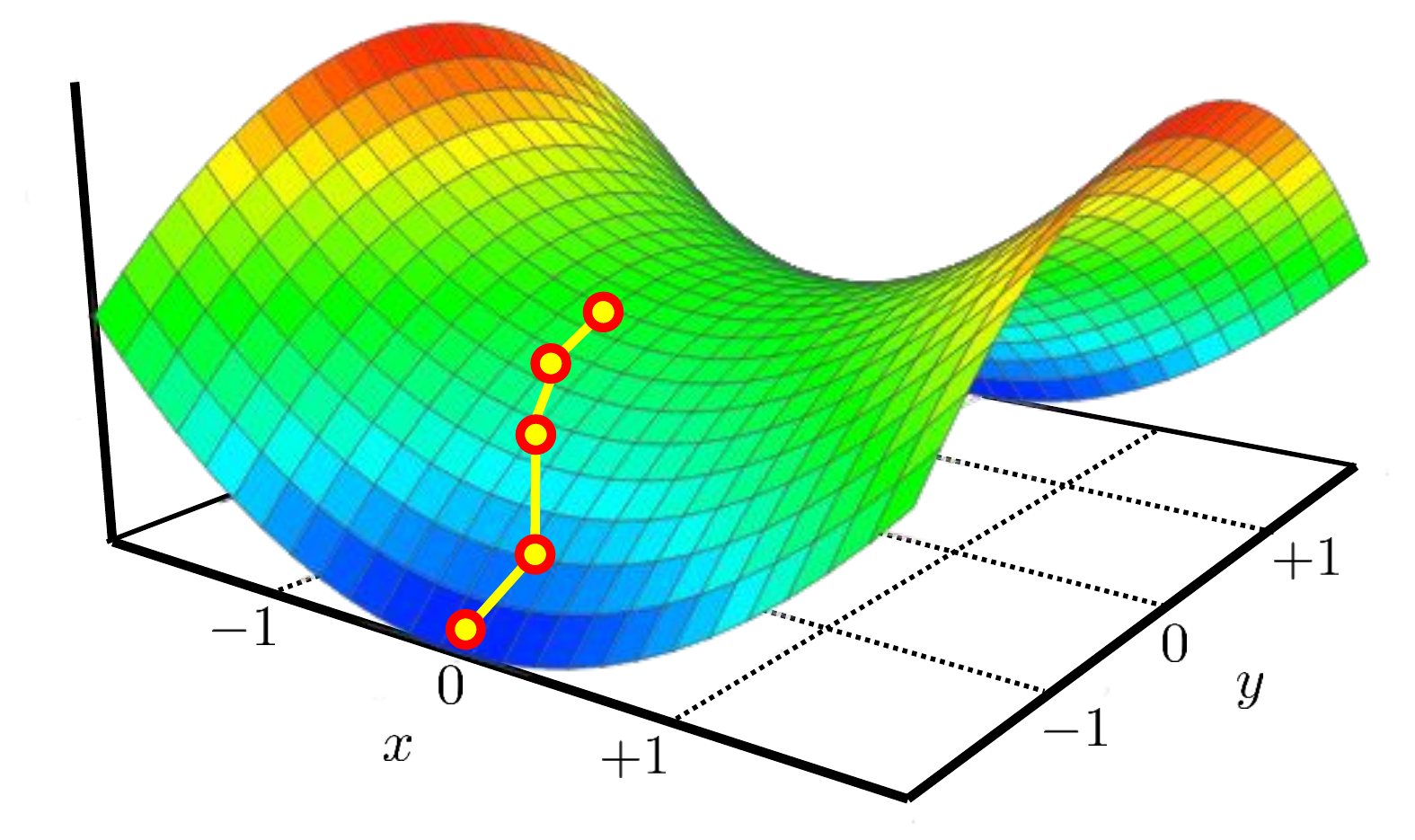}
\caption[The Strict Saddle Property]{The function $f(x,y) = x^2 - y^2$ exhibits a saddle at the origin $(0,0)$. The Hessian of this function at the origin is $-2\cdot I$ and since $\lambda_{\min}(\nabla^2f((0,0))) = -2$, the saddle satisfies the strict-saddle property. Indeed, the saddle does offer a prominent descent path along the $y$ axis which can be used to escape the saddle point. In \S~\ref{sec:saddle_analysis} we will see how the NGD algorithm is able to provably escape this saddle point. Plot courtesy \url{academo.org}}
\label{fig:strict-saddle}
\end{figure}

Consider the following unconstrained optimization problem.
\begin{equation}
	\min_{\x\in\R^p} f(\x)
\tag*{(NOPT)}\label{eq:nopt}
\end{equation}
The following \emph{strict saddle property} formalizes the requirements we have discussed in a more robust manner.
 
\begin{definition}[Strict Saddle Property \citep{GeHJY2015}]
\label{defn:ssp}
A twice differentiable function $f(\x)$ is said to satisfy the $(\alpha, \gamma, \kappa, \xi)$-strict saddle (SSa) property, if for every local minimum $\xopt$ of the function, the function is $\alpha$-strongly convex in the region $\cB_2(\xopt,2\xi)$ and moreover, every point $\x_0 \in \R^p$ satisfies at least one of the following properties: 
\begin{enumerate}
	\item (Non-stationary Point) $\|\nabla f(\x_0)\|_2 \geq \kappa$
	\item	(Strict Saddle Point) $\lambda_{\min}(\nabla^2f(\x_0))\leq -\gamma$
	\item	(Approx. Local Min.) For some local minimum $\xopt$, $\|\x_0-\xopt\|_2 \leq \xi$.
\end{enumerate}
\end{definition}

The above property places quite a few restrictions on the function. The function must be strongly-convex in the neighborhood of every local optima and every point that is not an approximate local minimum, must offer a direction of steep descent. This the point may do by having a steep gradient (case 1) or else (if the point is a saddle point) have its Hessian offer an eigenvector with a large negative eigenvalue which then offers a steep descent direction (case 2). We shall later see that there exist interesting applications that do satisfy this property.

\section{The Noisy Gradient Descent Algorithm}
\label{sec:saddle-algo}
Given that the strict-saddle property assures us of the existence of a steep descent direction until we reach close to an approximate-local minimum, it should come as no surprise that simple techniques should be able to achieve local optimality on functions that are strict saddle. We will now look at a one such simple approach to exploit the strict saddle property and achieve local optimality.

The idea behind the approach is very simple: at every saddle point which may stall gradient descent, the SSa property ensures that there exists a direction of steep descent. If we perturb the gradient, there is a chance it will point in the general direction of the steep descent and escape the saddle. However, if we are at a local minimum or a non-stationary point, then this perturbation would not affect us much.

Algorithm~\ref{algo:saddle-ngd} gives the details of this approach. At each time step, it perturbs the gradient using a unit vector pointing at a random direction in the hope that if we are currently stuck at a saddle point, the perturbation will cause us to discover the steep (enough) descent direction, allowing us to continue making progress. One easy way to obtain a random point on the unit sphere is to sample a random standard Gaussian vector $\vw \sim \cN(\vzero,I_{p \times p})$ and normalize it $\vzeta = \frac{1}{\norm{\vw}_2}\cdot\vw$. Notice that since the perturbation $\vzeta^t$ is chosen uniformly from the unit sphere, we have $\E{\vzeta^t} = \vzero$ and by linearity of expectation, $\E{\vg^t\cond\x^t} = \nabla f(\x^t)$.

Such an unbiased estimate $\vg^t$ is often called a \emph{stochastic gradient} and is widely used in machine learning and optimization. Recall that even the EM algorithm studied in \S~\ref{chap:em} had a variant that used stochastic gradients. In several machine learning applications, the objective can be written as a finite sum $f(\x) = \frac{1}{n}\sum_{i=1}^n f(\x;\vtheta^i)$ where $\vtheta^i$ may denote the $i$-th data point. This allows us to construct a stochastic gradient estimate in an even more inexpensive manner. At each time step, simply sample a data point $I_t \sim \text{\textsf{Unif}}([n])$ and let
\[
\vg^t = \nabla f(\x^t,\vtheta^{I_t}) + \vzeta^t
\]
Note that we still have $\E{\vg^t\cond\x^t} = \nabla f(\x^t)$ but with a much cheaper construction for $\vg^t$. However, in order to simplify the discussion, we will continue to work with the setting $\vg^t = \nabla f(\x^t) + \vzeta^t$. 

We note that we have set the step lengths to be around $1/\sqrt{T}$, where $T$ is the total number of iterations we are going to execute the NGD algorithm. This will seem similar to what we used in the projected gradient descent approach (see Algorithm~\ref{algo:pgd} and Theorem~\ref{thm:pgd-conv-proof}). Although in practice one may set the step length to $\eta_t \approx 1/\sqrt{t}$ here as well, the analysis becomes more involved.

\begin{algorithm}[t]
	\caption{Noisy Gradient Descent (NGD)}
	\label{algo:saddle-ngd}
	\begin{algorithmic}[1]
		{
			\REQUIRE Objective $f$, max step length $\eta_{\max}$, tolerance $\epsilon$
			\ENSURE A locally optimal point $\hat\x \in \R^p$
			\STATE $\x^1 \leftarrow \text{\textsf{INITALIZE}}()$
			\STATE Set $T \leftarrow 1/\eta^2$, where $\eta = \min\bc{\epsilon^2/\log^2(1/\epsilon), \eta_{\max}}$
			\FOR{$t= 1, 2, \ldots, T$}
				\STATE Sample perturbation $\vzeta^t \sim S^{p-1}$\hfill//\texttt{Random pt. on unit sphere}%
				\STATE $\vg^t \leftarrow \nabla f(\x^t) + \vzeta^t$
				\STATE $x^{t+1} \leftarrow \x^t - \eta\cdot\vg^t$
			\ENDFOR
			\STATE \textbf{return} $\x^T$
		}
	\end{algorithmic}
\end{algorithm}

\section{A Local Convergence Guarantee for NGD}
\label{sec:saddle_analysis}
We will now analyze the NGD algorithm for its convergence properties. We note that the complete proof requires results that are quite technical and beyond the scope of this monograph. We will instead, present the essential elements of the proof and point the curious reader to \citep{GeHJY2015} for the complete analysis. To simplify the notation, we will often omit specifying the exact constants as well.

In the following, we will assume that the function $f$ satisfies the strict saddle property and is $\beta$-strongly smooth (see Definition~\ref{defn:strong-cvx-smooth-fn}). We will also assume that the function is bounded i.e., $\abs{f(\vx)} \leq B$ for all $\vx \in \bR^p$ and has $\rho$-Lipschitz Hessians i.e., for every $\x,\y \in \R^p$, we have $\|\nabla^2_\bt f(\x)-\nabla^2_\bt f(\y)\|_2 \leq \rho\cdot\|\x-\y\|_2$ where $\norm{\cdot}_2$ for matrices denotes the spectral/operator norm of the matrix.

Before commencing with the actual proof, we first present an overview of the proof. The definition of the SSa property clearly demarcates three different regimes:
\begin{enumerate}
	\item Non-stationary points, i.e., points where gradient is ``large'' enough: in this case, standard (stochastic) gradient descent is powerful enough to ensure a large enough decrease in the objective function value in a single step\elink{exer:saddle-smooth-nc}
	\item Saddle points, i.e., points where the gradient is close to $\vzero$. Here, the SSa property ensures that at least one highly `negative'' Hessian direction exists: in this case, traditional (stochastic) gradient descent may fail but the additional noise ensures an escape from the saddle point with high probability
	\item Local minima, i.e., points where gradient is close to $\vzero$ but the have a positive semi definite Hessian due to strong convexity: in this case, standard (stochastic) gradient descent by itself would converge to the corresponding local minima
\end{enumerate}

The above three regimes will be formally studied below in three separate lemmata. Note that the analysis for non-stationary points as well as for points near local minima is similar to the standard stochastic gradient descent analysis for convex functions. However, the analysis for saddle points is quite interesting and shows that the added random noise ensures an escape from the saddle point.

To further understand the inner workings of the NGD algorithm, let us perform a warm-up exercise by showing that the NGD algorithm will, with high probability, escape the saddle point in the function $f(x,y) = x^2-y^2$ that we considered in Figure~\ref{fig:strict-saddle}.

\begin{theorem}
\label{thm:saddle-toy}
Consider the function $f(x,y) = x^2 - y^2$ on two variables. If initialized at the saddle point $(0,0)$ of this function with step length $\eta < 1$, with high probability, we have $f(x^t,y^t) \rightarrow -\infty$ as $t \rightarrow \infty$.
\end{theorem}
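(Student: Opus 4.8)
The plan is to exploit the fact that $f$ is separable, so that the NGD recursion decouples into two independent scalar recursions, one along each axis, and then to show that the vertical coordinate blows up while the horizontal one stays bounded. Writing $\nabla f(x,y) = (2x,-2y)$ and recalling $\vg^t = \nabla f(\x^t)+\vzeta^t$, the update $\x^{t+1}=\x^t-\eta\vg^t$ becomes, coordinatewise,
\[
x^{t+1} = (1-2\eta)x^t - \eta\zeta_1^t, \qquad y^{t+1} = (1+2\eta)y^t - \eta\zeta_2^t,
\]
where $(\zeta_1^t,\zeta_2^t)$ is a uniform point on the unit circle. The crucial observation is that the hypothesis $\eta<1$ is exactly what makes $\abs{1-2\eta}<1$, so the $x$-recursion is contractive (stable), whereas $1+2\eta>1$ always holds, so the $y$-recursion is expansive. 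This expansive direction is precisely the escape direction identified by the strict saddle property, corresponding to the negative eigenvalue $\lambda_{\min}(\nabla^2 f)=-2$.

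First I would dispose of the $x$-coordinate. Since $\abs{\zeta_1^t}\le 1$ and $\abs{1-2\eta}<1$, unrolling the recursion from $x^1=0$ and bounding the resulting geometric series gives $\abs{x^t}\le \eta/(1-\abs{1-2\eta})$ for all $t$; hence $(x^t)^2$ stays uniformly bounded and contributes nothing to the divergence of $f$. Next I would analyze the $y$-coordinate. With $a:=1+2\eta>1$ and $y^1=0$, unrolling gives $y^{t} = -\eta\sum_{s=1}^{t-1} a^{t-1-s}\zeta_2^s$. To extract the growth cleanly I would introduce the normalized process $M^t:=y^t/a^{t-1}$, which satisfies $M^{t+1}=M^t-\eta\zeta_2^t/a^t$ with $M^1=0$, so that $M^t = -\eta\sum_{s=1}^{t-1}\zeta_2^s/a^s$. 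Because $a>1$ and $\abs{\zeta_2^s}\le 1$, this series converges absolutely to a limit $M^\infty := -\eta\sum_{s=1}^\infty \zeta_2^s/a^s$, and $y^t = M^t a^{t-1}$.

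Provided $M^\infty\neq 0$, once $t$ is large enough that $\abs{M^t}\ge \abs{M^\infty}/2$, we get $\abs{y^t}\ge (\abs{M^\infty}/2)\,a^{t-1}\to\infty$, and therefore $f(x^t,y^t)=(x^t)^2-(y^t)^2\to-\infty$. The main obstacle is thus the non-degeneracy claim $M^\infty\neq 0$. Here I would use that the $\zeta_2^s$ are independent across time (a fresh direction is sampled each step) and each has a non-atomic (arcsine-type) law on $[-1,1]$. Consequently $M^\infty$ is a convergent sum of independent continuous random variables, hence has an absolutely continuous distribution, so $\Pr\br{M^\infty=0}=0$; this already yields $\abs{y^t}\to\infty$ almost surely.

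To phrase the conclusion as the stated high-probability guarantee, I would invoke anti-concentration: non-atomicity gives $\Pr\br{\abs{M^\infty}<c}\to 0$ as $c\to 0$, so for any prescribed confidence level one can choose $c>0$ with $\abs{M^\infty}\ge c$ on the corresponding high-probability event, on which $\abs{y^t}\to\infty$ by the bound above. A more elementary route to the same effect is to condition on the first step: with constant probability $\zeta_2^1$ is bounded away from $0$, giving $y^2$ a definite head start that the expansion factor $a>1$ then amplifies faster than the bounded noise increments can erode it. Verifying this anti-concentration rigorously — that the perturbations cannot conspire to keep $y^t$ pinned near zero — is the only genuinely delicate part of the argument; the stable/unstable decomposition and the boundedness of $x^t$ are routine linear-recursion bookkeeping.
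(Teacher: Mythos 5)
Your proposal is correct and follows essentially the same route as the paper: decouple the update into a contractive $x$-recursion and an expansive $y$-recursion, unroll both, and conclude that $(x^t)^2$ stays bounded while $(y^t)^2$ blows up. If anything, your version is the more careful one --- you keep the correct factors $1\pm 2\eta$ coming from $\nabla f=(2x,-2y)$ (the paper writes $1\pm\eta$), you claim only boundedness of $x^t$ rather than the paper's slightly inaccurate $x^t\to 0$ under persistent unit-norm noise, and you actually supply the non-degeneracy/anti-concentration argument showing the limit $M^\infty$ of the normalized unstable coordinate is almost surely nonzero, which the paper dismisses with ``it is easy to see.''
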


\begin{proof}
For an illustration, see Figure~\ref{fig:strict-saddle}. Note that the function $f(x,y) = x^2 - y^2$ has trivial minima at the limiting points $(0,\pm\infty)$ where the function value approaches $-\infty$. Thus, the statement of the theorem claims that NGD approaches the ``minimum'' function value.

In any case, we are interested in showing that NGD escapes the saddle point $(0,0)$.  The gradient of $f$ is $\vzero$ at the origin $(0,0)$. Thus, if a gradient descent procedure is initialized at the origin for this function, it will remain stuck there forever making no non-trivial updates.

The NGD algorithm on the other hand, when initialized at the saddle point $(0,0)$, after $t$ iterations, can be shown to reach the point $(x^t,y^t)$ where $x^t = \sum_{\tau=0}^{t-1} (1-\eta)^{t-\tau-1}\zeta_1^\tau$ and $y^t=\sum_{\tau=0}^{t-1} (1+\eta)^{t-\tau-1}\zeta_2^\tau$ and $(\zeta^\tau_1,\zeta^\tau_2) \in \R^2$ is the noise vector added to the gradient at each step. Since $\eta < 1$, as $t \rightarrow \infty$, it is easy to see that with high probability, we have $x^t\rightarrow 0$ while $|y^t|\rightarrow \infty$ which indicates a successful escape from the saddle point, as well as progress towards the global optima. 
\end{proof} 

We now formalize the intuitions developed above. The following lemma shows that even if we are at a saddle point, NGD will still ensure a large drop in function value in not too many steps. The proof of this result is a bit subtle and we will just provide a sketch.

\begin{lemma}
	\label{lem:ngd-saddle}
	If NGD is executed on a function $f: \bR^p \rightarrow \bR$ that is $\beta$-strongly smooth, satisfies the $(\alpha, \gamma, \kappa, \xi)$-SSa property and has $\rho$-Lipschitz Hessians, with step length $\eta \leq 1/(\beta+\rho)^2$, then if an iterate $\x^t$ satisfies $\norm{\nabla f(\x^t)}_2 \leq \sqrt{\eta\beta}$ and $\lambda_{\min}\nabla^2f(\x^t) \leq -\gamma$, then NGD ensures that after at most $s \leq \frac{3\log p}{\eta\gamma}$ steps,
	\[
	\E{f(\x^{t+s})\cond\x^t} \leq f(\x^t) - \eta/4
	\]
\end{lemma}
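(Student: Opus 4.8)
The plan is to show that the injected noise, once amplified by the negative curvature, drives the objective down by a fixed amount. First I would shift coordinates so that $\x^t$ sits at the origin and diagonalize the Hessian $H := \nabla^2 f(\x^t)$, writing its eigenvalues as $\lambda_1 \leq \lambda_2 \leq \cdots \leq \lambda_p$ with orthonormal eigenvectors. By hypothesis $\lambda_1 = \lambda_{\min}(H) \leq -\gamma$, and I will call the corresponding unit eigenvector $\vu$ the \emph{escape direction}. Since the Hessian is $\rho$-Lipschitz, within a ball $\cB_2(\x^t,r)$ of a suitable radius $r$ the gradient is close to its linearization, $\norm{\nabla f(\x) - \nabla f(\x^t) - H(\x - \x^t)}_2 \leq \frac{\rho}{2}\norm{\x-\x^t}_2^2$, so each NGD update behaves, up to a controllable error, like the affine recursion $\x^{\tau+1} - \x^t = (I - \eta H)(\x^\tau - \x^t) - \eta\nabla f(\x^t) - \eta\vzeta^\tau$.

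Next I would analyze the component $m_\tau := \ip{\vu}{\x^\tau - \x^t}$ of the iterate along the escape direction. In the affine model this satisfies $m_{\tau+1} = (1 + \eta\abs{\lambda_1})m_\tau - \eta\ip{\vu}{\nabla f(\x^t)} - \eta\ip{\vu}{\vzeta^\tau}$, where the multiplier $1 + \eta\abs{\lambda_1} \geq 1 + \eta\gamma > 1$ is \emph{expansive}. Because $\vzeta^\tau$ is uniform on the unit sphere, each step injects independent variance $\E{\ip{\vu}{\vzeta^\tau}^2} = 1/p$ into $m_\tau$, which the expansive factor then amplifies geometrically. Unrolling the recursion and keeping only the (nonnegative) noise contribution gives $\E{m_s^2} \gtrsim \frac{\eta^2}{p}\sum_{\tau=0}^{s-1}(1+\eta\gamma)^{2\tau} \gtrsim \frac{\eta}{\gamma p}(1+\eta\gamma)^{2s}$. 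Substituting into the second-order expansion of $f$ and using the small-gradient hypothesis $\norm{\nabla f(\x^t)}_2 \leq \sqrt{\eta\beta}$ to bound the linear term, the escape-direction contribution $\tfrac12\lambda_1\E{m_s^2} \leq -\tfrac{\gamma}{2}\E{m_s^2}$ supplies the decrease. Choosing $s \leq \frac{3\log p}{\eta\gamma}$ makes $(1+\eta\gamma)^{2s} \approx \exp(6\log p)$ large enough that this term dominates both the Taylor remainder and the worst-case per-step increase $\tfrac{\beta\eta^2}{2}$ that the noise causes in the flat and positively curved directions; by the one-step smoothness bound $\E{f(\x^{\tau+1})\cond\x^\tau} \leq f(\x^\tau) - \eta(1-\tfrac{\beta\eta}{2})\norm{\nabla f(\x^\tau)}_2^2 + \tfrac{\beta\eta^2}{2}$ the latter accumulates to only $O(\beta\eta\log p/\gamma)$ over all $s$ steps.

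The main obstacle — and the reason the paper only sketches this lemma — is that the geometric-growth argument is valid only while the iterate stays inside $\cB_2(\x^t,r)$, where the quadratic model is accurate. I would handle this with a stopping-time dichotomy, tuning $r$ against $\rho$ and $\gamma$ using $\eta \leq 1/(\beta+\rho)^2$. Either the trajectory leaves the ball before step $s$, in which case the total displacement is at least $r$; since each step moves by at most $\eta\norm{\vg^\tau}$, this forces the accumulated squared gradient to be large, and the one-step descent inequality above then yields a net drop of at least $\eta/4$. Or the iterate remains inside the ball for all $s$ steps, in which case the amplified negative-curvature analysis of the previous paragraph applies and again gives $\E{f(\x^{t+s})\cond\x^t} \leq f(\x^t) - \eta/4$. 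Combining the two branches establishes the claim. The delicate part throughout is keeping the Taylor-remainder and noise-expansion errors below the $\gamma$-scale decrease, which is precisely what the choices $\eta \leq 1/(\beta+\rho)^2$ and $s \leq 3\log p/(\eta\gamma)$ are engineered to guarantee.
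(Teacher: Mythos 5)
Your core mechanism is the same as the paper's: both arguments pass to the second-order Taylor model $\hat f$ around $\x^t$, use the $\rho$-Lipschitz-Hessian assumption to couple the true iterates with those of the quadratic model, unroll the resulting affine recursion $\hat\x^{\tau+1}-\x^t = (I-\eta\cH)(\hat\x^\tau-\x^t) - \eta\nabla f(\x^t) - \eta\vzeta^\tau$, and exploit the independence and isotropy of the $\vzeta^\tau$ so that the variance injected along the eigendirection with eigenvalue $\leq -\gamma$ is amplified geometrically by $(1+\eta\gamma)^{2\tau}$ and dominates after $s \approx 3\log p/(\eta\gamma)$ steps (your per-direction variance $1/p$ is exactly offset by the $p^{6}$ growth, matching the paper's trace computation $\frac{\eta^2}{2}\mathrm{tr}(\sum_\tau(I-\eta\cH)^{2\tau}\cH) \leq -\eta/2$, which handles all eigendirections at once rather than isolating the escape direction and bounding the rest crudely). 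The one place you genuinely depart from the paper is the localization step: the paper closes the argument by showing the iterates of $f$ and $\hat f$ remain close (deferring to Lemma 19 of \citet{GeHJY2015}), whereas you propose a stopping-time dichotomy --- either the trajectory exits $\cB_2(\x^t,r)$, in which case the large total displacement forces large accumulated gradients and hence descent, or it stays, in which case the quadratic analysis applies. That dichotomy is a legitimate alternative (it is essentially how later work such as \citet{Jin0NKJ17} organizes the proof), but it is also where your sketch is least complete: the exit branch requires $r$ large enough that $\sum_\tau\norm{\nabla f(\x^\tau)}_2^2$ dominates the $O(s)$ contribution of the unit-norm noise, while the stay branch requires $r$ small enough that the Taylor remainder $\frac{\rho}{6}r^3$-type terms stay below the $\eta/4$ decrease, and the conditioning on the random exit event needs the martingale care you allude to. Those constants must be checked to coexist, but the overall plan is sound and faithful to the paper's argument.
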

\begin{proof}
	For the sake of notational simplicity, let $t = 0$. The overall idea of the proof is to rely on that one large negative eigendirection in the Hessian to induce a drop in function value. The hope is that random fluctuations will eventually nudge NGD in the steep descent direction and upon discovery, the larger and larger gradient values will accumulate to let the NGD procedure escape the saddle.
	
	Since the effects of the Hessian are most apparent in the second order Taylor expansion, we will consider the following function
	\[
	\hat f(\x) = f(\x^0) + \ip{\nabla f(\x^0)}{\x - \x^0} + \frac{1}{2}(\x - \x^0)^\top\cH(\x - \x^0),
	\]
	where $\cH = \nabla^2 f(\x^0) \in \bR^{p\times p}$. The proof will proceed by first imagining that NGD was executed on the function $\hat f(\cdot)$ instead, showing that the function value indeed drops, and then finishing off by showing that things do not change too much if NGD is executed on the function $f(\cdot)$ instead. Note that to make this claim, we will need Hessians to vary smoothly which is why we assumed the Lipschitz Hessian condition. This seems to be a requirement for several follow-up results as well \citep{Jin0NKJ17, AgarwalA-ZBHM2017}. An interesting and challenging open problem is to obtain similar results for non-convex optimization without the Lipschitz-Hessian assumption. 
	
We will let $\x^t, t \geq 1$ denote the iterates of NGD when executed on $f(\cdot)$ and $\hat\x^t$ denote the iterates of NGD when executed on $\hat f(\cdot)$. We will fix $\hat\x^0 = \x^0$. Using some careful calculations, we can get
	\[
	\hat\x^t - \hat\x^0 = -\eta\sum_{\tau=0}^{t-1}(I-\eta\cH)^\tau\nabla f(\x^0) - \eta\sum_{\tau=0}^{t-1}(I-\eta\cH)^{t-\tau-1}\vzeta^\tau
	\]
We note that both terms in the right hand expression above correspond to ``small'' vectors. The first term is small as we know that $\hat\x^0$ satisfies $\norm{\nabla f(\hat\x^0)}_2 \leq \sqrt{\eta\beta}$ by virtue of being close to a stationary point as is assumed in the statement of this result. The second term is small as $\vzeta^\tau$ are random unit vectors with expectation $\vzero$. Using these intuitions, we will first show that $\hat f(\hat\x^t)$ is significantly smaller than $\hat f(\x^0) = f(\x^0)$ after sufficiently many steps. Then using the property of Lipschitz Hessians, we will obtain obtain a descent guarantee for $f(x^t)$. 
	
Now, notice that NGD chooses the noise vectors $\vzeta^\tau$ for any $\tau \geq 0$, independently of $\x^0$ and $\cH$. Moreover, for any two $\tau \neq \tau'$, the vectors $\vzeta^\tau,\vzeta^{\tau'}$ are also chosen independently. We also know that the noise vectors are isotropic i.e., $\E{\vzeta^\tau (\vzeta^\tau)^\top}=I_p$. These observations and some straightforward calculations give us the following upper bound on the suboptimality of $\x^T$ with respect to the Taylor approximation $\hat f$. As we have noted, this upper bound can be converted to an upper bound on the suboptimality of $\x^T$ with respect to the actual function $f$ using some more effort.
\begin{align*}
\bE[\hat f(\hat \x^T)-f(\hat\x^0)&]\\
={}& -\eta \nabla f(\x^0)^\top \sum_{\tau=0}^{t-1}(I_p-\eta \cH)^{\tau}\nabla f(\x^0)\\
&{}+\frac{\eta^2}{2}\nabla f(\x^0)^\top \sum_{\tau=0}^{t-1}(I_p-\eta \cH)^{\tau} H \sum_{\tau=0}^{t-1}(I_p-\eta \cH)^{\tau}\nabla f(\x^0)\\
&{}+\frac{\eta^2}{2}\text{tr}\left(\sum_{\tau=0}^{t-1}(I_p-\eta \cH)^{2\tau}\cH\right)\\
={}& -\eta \nabla f(\x^0)^\top B \nabla f(\x^0) + \frac{\eta^2}{2}\text{tr}\left(\sum_{\tau=0}^{t-1}(I_p-\eta \cH)^{2\tau}\cH\right), \label{eq:ngd1}
\end{align*}
where $\text{tr}(\cdot)$ is the trace operator and $B=\sum_{\tau=0}^{t-1}(I_p-\eta \cH)^{\tau}-\frac{\eta}{2}  \sum_{\tau=0}^{t-1}(I_p-\eta \cH)^{\tau} \cH \sum_{\tau=0}^{t-1}(I_p-\eta \cH)^{\tau}$. It is easy to verify that $B \succeq 0$ for all step lengths $\eta\leq \frac{1}{\|\cH\|_2}$ i.e., all $\eta \leq \frac{1}{\beta}$. 
	
For any value of $t \geq \frac{3\log p}{\eta\gamma}$ (which is the setting for the parameter $s$ in the statement of the theorem), the second term in the final expression above can be simplified to give us
\[
\frac{\eta^2}{2}\text{tr}\left(\sum_{\tau=0}^{t-1}(I-\eta \cH)^{2\tau}\cH\right)=\sum_{i=1}^p \lambda_i (\sum_{\tau=0}^{t-1} (1-\eta \lambda_i)^\tau)\leq -\frac{\eta}{2},
\]
where $\lambda_i$ is the $i$-th eigenvalue of $\cH$, by using $\lambda_p \leq -\gamma$. This gives us
\[
\E{\hat f(\hat \x^T)-f(\x^0)}\leq -\frac{\eta}{2}.
\]
Note that the above equation only shows descent for $\hat f(\hat x^T)$. One can now show \cite[Lemma 19]{GeHJY2015} that the iterates obtained by NGD on $\hat f(\cdot)$ do not deviate too far from those obtained on the actual function $f(\cdot)$ using the Lipschitz-Hessian property. The proof is concluded by combining these two results. 
\end{proof}

\begin{lemma}
\label{lem:ngd-non-stationary}
If NGD is executed on a function that is $\beta$-strongly smooth and satisfies the $(\alpha, \gamma, \kappa, \xi)$-SSa property, with step length $\eta \leq \frac{1}{\beta}\cdot\min\bc{1,\kappa^2}$, then for any iterate $\x^t$ that satisfies $\norm{\nabla f(\x^t)}_2 \geq \sqrt{\eta\beta}$, NGD ensures that
\[
\E{f(\x^{t+1})\sep\x^t} \leq f(\x^t) - \frac{\beta}{2}\cdot\eta^2.
\]
\end{lemma}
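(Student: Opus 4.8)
The plan is to run the standard ``descent lemma'' argument for a noisy gradient step and then take expectations over the random perturbation $\vzeta^t$. First I would invoke $\beta$-strong smoothness (Definition~\ref{defn:strong-cvx-smooth-fn}) applied to the NGD update $\x^{t+1} = \x^t - \eta\vg^t$ with $\vg^t = \nabla f(\x^t) + \vzeta^t$, which yields the deterministic bound
\[
f(\x^{t+1}) \leq f(\x^t) - \eta\ip{\nabla f(\x^t)}{\vg^t} + \frac{\beta\eta^2}{2}\norm{\vg^t}_2^2.
\]
This is the only structural property of $f$ needed here; note that the SSa property enters only through the hypotheses on $\x^t$ and the step-size, not through the smoothness computation itself.

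Next I would take the conditional expectation given $\x^t$. Since $\vzeta^t$ is drawn uniformly from the unit sphere $S^{p-1}$ independently of $\x^t$, we have $\E{\vzeta^t} = \vzero$ and $\norm{\vzeta^t}_2 = 1$, so the cross term satisfies $\E{\vg^t \sep \x^t} = \nabla f(\x^t)$ and the variance term satisfies $\E{\norm{\vg^t}_2^2 \sep \x^t} = \norm{\nabla f(\x^t)}_2^2 + 1$. Substituting gives
\[
\E{f(\x^{t+1}) \sep \x^t} \leq f(\x^t) - \eta\norm{\nabla f(\x^t)}_2^2\br{1 - \frac{\beta\eta}{2}} + \frac{\beta\eta^2}{2}.
\]
The remaining step is algebraic: the restriction $\eta \leq \frac{1}{\beta}\min\bc{1,\kappa^2} \leq \frac{1}{\beta}$ forces $1 - \frac{\beta\eta}{2} \geq \frac12$, while the hypothesis $\norm{\nabla f(\x^t)}_2 \geq \sqrt{\eta\beta}$ gives $\norm{\nabla f(\x^t)}_2^2 \geq \eta\beta$; together these bound the gradient term below by $\frac{\beta\eta^2}{2}$, which must then be played off against the noise penalty $\frac{\beta\eta^2}{2}$ to extract the claimed drop of $\frac{\beta}{2}\eta^2$. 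The factor $\min\bc{1,\kappa^2}$ in the step-size bound is precisely what makes the threshold $\sqrt{\eta\beta}$ compatible with the non-stationary regime of the SSa property, since it forces $\eta\beta \leq \kappa^2$ and hence $\sqrt{\eta\beta} \leq \kappa \leq \norm{\nabla f(\x^t)}_2$ whenever case~1 of Definition~\ref{defn:ssp} holds.

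The main obstacle I anticipate is not the gradient term but the variance penalty $\frac{\beta\eta^2}{2}\E{\norm{\vg^t}_2^2 \sep \x^t}$ injected by the unit-norm perturbation: unlike in clean gradient descent, the noise contributes an additive $\frac{\beta\eta^2}{2}$ floor that does \emph{not} vanish relative to the guaranteed descent, so the entire argument rests on the gradient being large enough that its contribution to the decrease strictly dominates this floor. Carefully tracking these constants is the only delicate point; in particular, the bare threshold $\norm{\nabla f(\x^t)}_2 \geq \sqrt{\eta\beta}$ yields a descent estimate that is non-positive but exactly cancels the noise floor, so recovering the stated constant $\frac{\beta}{2}\eta^2$ cleanly requires either a mildly sharper threshold (e.g. $\sqrt{2\eta\beta}$) or absorbing a constant factor, after which everything reduces to the textbook smoothness computation above.
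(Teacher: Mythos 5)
Your proposal is correct and follows essentially the same route as the paper: apply $\beta$-strong smoothness to the noisy update, take conditional expectations using $\E{\vzeta^t\sep\x^t}=\vzero$ and $\norm{\vzeta^t}_2=1$ to obtain $\E{f(\x^{t+1})\sep\x^t} \leq f(\x^t) - \eta\br{1-\frac{\beta\eta}{2}}\norm{\nabla f(\x^t)}_2^2 + \frac{\beta\eta^2}{2}$, and then invoke the step-length and gradient-norm hypotheses. Your observation about the constant is also accurate -- with the bare threshold $\norm{\nabla f(\x^t)}_2 \geq \sqrt{\eta\beta}$ the descent term only cancels the noise floor exactly, so the stated drop of $\frac{\beta}{2}\eta^2$ requires a slightly sharper threshold or absorbing a constant factor, which is consistent with the surrounding text's declared policy of ignoring non-essential constants.
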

\begin{proof}
This is the most carefree case as we are neither close to any local optima, nor a saddle point. Unsurprisingly, the proof of this lemma follows from standard arguments as we have assumed the function to be strongly smooth. Since $\x^{t+1} = \x^t - \eta\cdot(\nabla f(\x^t) + \vzeta^t)$, we have, by an application of the strong smoothness property (see Definition~\ref{defn:strong-cvx-smooth-fn})
\[
f(\x^{t+1}) \leq f(\x^t) - \ip{\nabla f(\x^t)}{\eta\cdot(\nabla f(\x^t) + \vzeta^t)} + \frac{\beta\eta^2}{2}\cdot\norm{\nabla f(\x^t) + \vzeta^t}_2^2.
\]
Using the facts $\norm{\vzeta^t}_2 = 1$, $\E{\vzeta^t\sep\x^t} = \vzero$, we get
\[
\E{f(\x^{t+1})\sep\x^t} \leq f(\x^t) - \eta\br{1 - \frac{\beta\eta}{2}}\norm{\nabla f(\x^t)}_2^2 + \frac{\beta\eta^2}{2},
\]
Since we have $\eta \leq \frac{1}{\beta}\cdot\min\bc{1,\kappa^2}$ by assumption and $\norm{f(\x^t)}_2 \geq \sqrt{\eta\beta}$, we get
\[
\E{f(\x^{t+1}\sep\x^t} \leq f(\x^t) - \frac{\beta}{2}\cdot\eta^2,
\]
which proves the result.
\end{proof}

The final intermediate result is an \emph{entrapment} lemma. It shows that once NGD gets sufficiently close to a local optimum, it gets trapped there for a really long time. Although the function $f$ satisfies strong convexity and smoothness properties in the neighborhood $\cB_2(\xopt,2\xi)$, the proof of this result is still non-trivial due to the perturbations $\vzeta^t$. Had the perturbations not been there, we could have utilized the analysis of the PGD algorithm\elink{exer:saddle-smooth-nc-2} to show that we would converge to the local optimum $\xopt$ at a linear rate.

The problem is that the perturbations do not diminish -- we always have $\norm{\vzeta^t}_2 = 1$. This prevents us from ever converging to the local optima. Moreover, a sequence of unfortunate perturbations may have us kicked out of this nice neighborhood. The next result shows that we will not get kicked out of the neighborhood of $\xopt$ for a really long time.

\begin{lemma}
\label{lem:ngd-local-nbhd}
If NGD is executed on a function that is $\beta$-strongly smooth and satisfies the $(\alpha, \gamma, \kappa, \xi)$-SSa property, with step length $\eta \leq \min\bc{\frac{\alpha}{\beta^2}, \xi^2\log^{-1}(\frac{1}{\delta\xi})}$ for some $\delta > 0$, then if some iterate $\x^t$ satisfies $\norm{\x^t - \xopt}_2 \leq \xi$, then NGD ensures that with probability at least $1 - \delta$, for all $s \in \bs{t,t+\frac{1}{\eta^2}\log\frac{1}{\delta}}$, we have
\[
\norm{\x^s - \xopt}_2 \leq \sqrt{\eta\log\frac{1}{\eta\delta}} \leq \xi.
\]
\end{lemma}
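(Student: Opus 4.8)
The plan is to track the squared distance to the optimum, $\Phi_s := \norm{\x^s - \xopt}_2^2$, and to show it behaves like a mean-reverting process whose typical value is $\bigO{\eta}$, so that it cannot exceed $r^2 := \eta\log\frac{1}{\eta\delta}$ over the prescribed horizon except with probability $\delta$. First I would establish the one-step recursion. Assuming $\x^s$ lies in the neighborhood $\cB_2(\xopt,2\xi)$, where the SSa property grants $\alpha$-strong convexity and the global hypothesis grants $\beta$-strong smoothness, and using $\nabla f(\xopt) = \vzero$, the standard co-coercivity estimate for SC/SS functions gives the contraction
\[
\norm{\x^s - \xopt - \eta\nabla f(\x^s)}_2^2 \leq (1 - \eta\alpha)\norm{\x^s - \xopt}_2^2,
\]
which is valid because the step-length condition $\eta \leq \alpha/\beta^2 \leq 2/(\alpha+\beta)$ (using $\alpha \leq \beta$) kills the residual gradient term. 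Expanding the NGD update $\x^{s+1} = \x^s - \eta\br{\nabla f(\x^s) + \vzeta^s}$ and taking the conditional expectation with $\E{\vzeta^s \sep \x^s} = \vzero$ and $\norm{\vzeta^s}_2 = 1$ yields
\[
\E{\Phi_{s+1} \sep \x^s} \leq (1-\eta\alpha)\Phi_s + \eta^2.
\]
Iterating this in expectation already shows $\E{\Phi_s} \lesssim \eta/\alpha$, confirming the $\sqrt\eta$ scale; the remaining work is to upgrade it to a statement holding uniformly over the whole horizon with high probability.

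For the uniform control I would introduce the stopping time $\tau$ equal to the first index $s \geq t$ at which $\Phi_s > r^2$, and work with the stopped process $\Phi_{s \wedge \tau}$. The second step-length condition $\eta \leq \xi^2/\log\frac{1}{\delta\xi}$ is precisely what forces $r \leq \xi$, so that up to time $\tau$ the iterates stay inside $\cB_2(\xopt,\xi) \subset \cB_2(\xopt,2\xi)$ and the recursion remains valid; this closes the apparent circularity between ``being in the convex region'' and ``staying close''. The martingale part of the increment is
\[
M_{s+1} := \Phi_{s+1} - \E{\Phi_{s+1}\sep\x^s} = -2\eta\ip{\vzeta^s}{\x^s - \xopt - \eta\nabla f(\x^s)},
\]
which is bounded in magnitude by $\bigO{\eta r}$ and, because $\vzeta^s$ is uniform on the sphere so that $\E{\ip{\vzeta^s}{\vu}^2\sep\x^s} = \norm{\vu}_2^2/p$ for $\vu = \x^s - \xopt - \eta\nabla f(\x^s)$, has conditional variance only $\bigO{\eta^2 \Phi_s/p}$.

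The decisive step, and the one I expect to be the main obstacle, is to combine the per-step contraction with these small, bounded increments into a maximal inequality over $T_0 = \eta^{-2}\log\frac{1}{\delta}$ steps. A naive union bound is hopeless since $T_0$ is enormous; the factor $(1-\eta\alpha)$ is essential, as it damps accumulated fluctuations so that the supremum of the mean-$\bigO{\eta}$ reverting process exceeds its mean by only the logarithmic factor appearing in $r^2 = \eta\log\frac{1}{\eta\delta}$. I would formalize this with an exponential supermartingale: bound $\E{\exp(\lambda\Phi_{s+1})\sep\x^s}$ using the recursion together with the sub-Gaussian tail of $M_{s+1}$, choose $\lambda \asymp 1/\eta$ so that $\exp(\lambda\Phi_{s\wedge\tau})$ is a genuine supermartingale, and invoke optional stopping to conclude that $\tau > t + T_0$ holds with probability at least $1-\delta$. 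On that event $\Phi_s \leq r^2$ for every $s$ in the window, which is exactly the claimed bound $\norm{\x^s - \xopt}_2 \leq \sqrt{\eta\log\frac{1}{\eta\delta}} \leq \xi$. The delicate points are calibrating $\lambda$ against the $\eta^2$ drift and the $\bigO{\eta^2\Phi_s/p}$ variance so the exponential process really is a supermartingale up to $\tau$, and checking that the tail over the full horizon absorbs into the single factor $\log\frac1\delta$.
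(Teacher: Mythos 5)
Your proposal follows essentially the same route as the paper: you derive the identical one-step recursion $\E{\Phi_{s+1}\sep\x^s} \leq (1-\eta\alpha)\Phi_s + \eta^2$ from local strong convexity, the Lipschitz-gradient bound $\norm{\nabla f(\x^s)}_2 \leq \beta\norm{\x^s-\xopt}_2$, and the condition $\eta \leq \alpha/\beta^2$, which is exactly where the paper's own argument stops before deferring the high-probability uniform control to \citet[Lemma 16]{GeHJY2015}. Your sketch of that deferred step (stopping time to handle the locality of the SC assumption, plus an exponential supermartingale and optional stopping) is the standard and correct way to complete it, so the proposal is sound.
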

\begin{proof}
Using strong convexity in the neighborhood of $\xopt$ and the fact that $\xopt$, being a local minimum, satisfies $\nabla f(\xopt) = \vzero$, gives us
\begin{align*}
f(\vx^t) &\geq f(\xopt) + \frac{\alpha}{2}\norm{\x^t - \xopt}_2^2\\
f(\xopt) &\geq f(\x^t) + \ip{\nabla f(\x^t)}{\xopt - \x^t} + \frac{\alpha}{2}\norm{\x^t - \xopt}_2^2.
\end{align*}
Together, the above two expressions give us
\[
\ip{\nabla f(\x^t)}{\x^t - \xopt} \geq \alpha\cdot\norm{\x^t - \xopt}_2^2.
\]
Since $f$ is $\beta$-smooth, using the co-coercivity of the gradient for smooth convex functions (recall that $f$ is strongly convex, in this neighborhood of $\xopt$), we conclude that $f$ has $\beta$-Lipschitz gradients, which gives us
\[
\norm{\nabla f(\vx^t)}_2 = \norm{\nabla f(\vx^t) - \nabla f(\xopt)}_2 \leq \beta\cdot\norm{\x^t - \xopt}_2.
\]
Using the above results, $\E{\vzeta^t\sep\x^t} = \vzero$ and $\eta \leq \frac{\alpha}{\beta^2}$ gives us
\begin{align*}
\E{\left.\norm{\x^{t+1} - \xopt}_2^2\ \right|\ \x^t} ={}& \E{\left.\norm{\x^t - \eta(\nabla f(\x^t) + \vzeta^t) - \xopt}_2^2\ \right|\ \x^t}\\
														={}& \norm{\x^t - \xopt}_2^2 - 2\eta\ip{\nabla f(\x^t)}{\x^t - \xopt}\\
														&{}+ \eta^2\norm{\nabla f(\vx^t)}_2^2 + \eta^2\\
														\leq{}& (1-2\eta\alpha+\eta^2\beta^2)\cdot\norm{\x^t - \xopt}_2^2 + \eta^2\\
														\leq{}& (1-\eta\alpha)\cdot\norm{\x^t - \xopt}_2^2 + \eta^2,
\end{align*}
which, upon some manipulation, gives us
\[
\E{\left.\norm{\x^{t+1} - \xopt}_2^2 - \frac{\eta}{\alpha}\ \right|\ \x^t} \leq (1-\eta\alpha)\br{\norm{\x^t - \xopt}_2^2 - \frac{\eta}{\alpha}}
\]
This tells us that on expectation, the distance of the iterates $\x^t$ from the local optimum $\xopt$ will hover around $\sqrt\frac{\eta}{\alpha}$ which indicates towards the claimed result. The subsequent steps in the proof of this result require techniques from martingale theory which we wish to avoid. We refer the reader to \cite[Lemma 16]{GeHJY2015} for the details.
\end{proof}

Notice that the above result traps the iterates within a radius $\xi$ ball around the local minimum $\xopt$. Also notice that all points that are approximate local minima satisfy the preconditions of this theorem due to the SSa property and consequently the NGD gets trapped for these points. We now present the final convergence guarantee for NGD.

\begin{theorem}
\label{thm:conv-ngd}
For any $\epsilon,\delta > 0$, suppose NGD is executed on a function that is $\beta$-strongly smooth, has $\rho$-Lipschitz Hessians, and satisfies the $(\alpha, \gamma, \kappa, \xi)$-SSa property, with a step length $\eta < \eta_{\max} = \min\bc{\frac{\epsilon^2}{\log(1/\epsilon\delta)},\frac{\alpha}{\beta^2},\frac{\xi^2}{\log(1/\xi\delta)},\frac{1}{(\beta+\rho)^2},\frac{\kappa^2}{\beta}}$. Then, with probability at least $1 - \delta$, after $T \geq \log p/\eta^2\cdot\log(2/\delta)$ iterations, NGD produces an iterate $\x^T$ that is $\epsilon$-close to some local optimum $\xopt$ i.e., $\norm{\x^T - \xopt}_2 \leq \epsilon$.
\end{theorem}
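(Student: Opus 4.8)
The plan is to treat the objective value $f(\x^t)$ as a potential that must decrease steadily until some iterate enters the basin of a local minimum, after which entrapment keeps it there. First I would use the strict-saddle property (Definition~\ref{defn:ssp}) to classify each iterate into exactly one of three regimes: a \emph{non-stationary} point with $\norm{\nabla f(\x^t)}_2 \geq \sqrt{\eta\beta}$, a \emph{strict saddle} with small gradient but $\lambda_{\min}(\nabla^2 f(\x^t)) \leq -\gamma$, or an \emph{approximate local minimum} within distance $\xi$ of some $\xopt$. Here one must first check that the thresholds are consistent: since $\eta \leq \kappa^2/\beta$ forces $\sqrt{\eta\beta} \leq \kappa$, any point with $\norm{\nabla f(\x^t)}_2 < \sqrt{\eta\beta}$ fails case~(1) of the SSa property and must therefore be a strict saddle or an approximate local minimum, so the three regimes genuinely cover all iterates.

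Next I would quantify the expected per-step progress in the first two regimes. Lemma~\ref{lem:ngd-non-stationary} yields an expected drop of $\tfrac{\beta}{2}\eta^2$ at each non-stationary step, while Lemma~\ref{lem:ngd-saddle} yields an expected drop of $\eta/4$ over each block of at most $s \leq 3\log p/(\eta\gamma)$ steps initiated at a saddle, i.e. an amortized drop of order $\eta^2\gamma/\log p$ per step. Since $\abs{f} \leq B$, the total achievable decrease is at most $2B$, so the expected number of steps spent in the non-stationary and saddle regimes before some iterate reaches an approximate local minimum is $\bigO{B\log p/\eta^2}$, which fits within the budget $T \geq \log p/\eta^2 \cdot \log(2/\delta)$.

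Then I would invoke the entrapment Lemma~\ref{lem:ngd-local-nbhd}: once an iterate lands within $\xi$ of some $\xopt$, with probability at least $1-\delta$ all subsequent iterates stay within $\sqrt{\eta\log(1/\eta\delta)} \leq \xi$ of $\xopt$ for the next $\tfrac{1}{\eta^2}\log\tfrac{1}{\delta}$ steps, and the choice $\eta < \epsilon^2/\log(1/\epsilon\delta)$ tightens this radius to $\epsilon$. Combining the descent argument (we must reach the good region within the iteration budget) with entrapment (we then remain and contract) produces an iterate $\x^T$ satisfying $\norm{\x^T - \xopt}_2 \leq \epsilon$.

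The hard part will be the probabilistic bookkeeping that glues these three deterministic-looking statements together, since the per-step and per-block guarantees hold only \emph{in expectation}. To upgrade to a high-probability conclusion over all $T$ steps, I would construct a supermartingale whose increments are the actual random changes in $f(\x^t)$ offset by the guaranteed expected drops, and apply a concentration/optional-stopping argument showing that the cumulative decrease concentrates around its mean; since a decrease exceeding $2B$ is impossible, this forces entry into the local-minimum regime with high probability. A second subtlety is that an unfortunate sequence of perturbations $\vzeta^t$ may eject the iterate from a good neighborhood, so I would union-bound the $\delta$-failure events of Lemma~\ref{lem:ngd-local-nbhd} over the (few) restarts of the entrapment clock, and carefully align the block structure of the saddle-escape lemma with the single-step accounting of the other two regimes so that progress is neither double-counted nor lost. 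This martingale control, together with the regime alignment, is where the real work lies; the rest is assembling the three lemmata.
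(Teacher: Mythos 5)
Your plan follows essentially the same route as the paper's proof: the same three-region partition of $\bR^p$ (with the same check that $\sqrt{\eta\beta}\leq\kappa$ makes the partition consistent with the SSa cases), the same use of Lemmata~\ref{lem:ngd-non-stationary} and \ref{lem:ngd-saddle} together with the bound $\abs{f}\leq B$ to force entry into the local-minimum region within the iteration budget, and the same appeal to the entrapment Lemma~\ref{lem:ngd-local-nbhd} followed by a union bound. The only cosmetic difference is in the probabilistic glue: where you propose a supermartingale concentration argument, the paper tracks the stopped expectation $\E{f(\vx^{\tau_j})\cdot\vone_{\kE_{\tau_j}}}$ over epochs to get a constant ($1/2$) success probability per block of $1/\eta^2$ steps and then amplifies over $\log(2/\delta)$ blocks, which is the same optional-stopping idea executed with lighter machinery.
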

\clearpage
\begin{proof}
We partition the space $\bR^p$ into 3 regions
\begin{enumerate}
	\item $\kR_1 = \bc{\vx: \norm{f(\vx)}_2 \geq \sqrt{\eta\beta}}$
	\item $\kR_2 = \bc{\vx: \norm{f(\vx)}_2 < \sqrt{\eta\beta}, \lambda_{\min}(\nabla^2 f(\vx)) \leq -\gamma}$
	\item $\kR_3 = \bR^p\backslash(\kR_1\cup\kR_2)$
\end{enumerate}
Since $\sqrt{\eta\beta}\leq \kappa$ due to the setting of $\eta_{\max}$, the region $\kR_1$ contains all points considered non-stationary by the SSa property (Definition~\ref{defn:ssp}) and possibly some other points as well. For this reason, region $\kR_2$ can be shown to contain only saddle points. Since the SSa property assures us that a point that is neither non-stationary nor a saddle point is definitely an approximate local minimum, we deduce that the region $\kR_3$ contains only approximately local minima.

The proof will use the following line of argument: since Lemmata~\ref{lem:ngd-non-stationary} and \ref{lem:ngd-saddle} assure us that whenever the NGD procedure is in regions $\kR_1$ or $\kR_2$ there is a large drop in function value, we should expect the procedure to enter region $\kR_3$ sooner or later, since the function value cannot go on decreasing indefinitely. However, Lemma~\ref{lem:ngd-local-nbhd} shows that once we are in region $\kR_3$, we are trapped there. In the following analysis, we will ignore all non-essential constants and log factors.

Recall that we let the NGD procedure last for $T = 1/\eta^2\log(2/\delta)$ steps. Below we will show that in any sequence of $1/\eta^2$ steps, there is at least a $1/2$ chance of encountering an iterate $\vx^t \in \kR_3$. Since the entire procedure lasts $\log(1/\delta)$ such sequences, we will conclude, by union bound, that with probability at least $1 - \delta/2$, we will encounter at least one iterate in the region $\kR_3$ in the $T$ steps we execute.

However, Lemma~\ref{lem:ngd-local-nbhd} shows that once we enter the $\kR_3$ neighborhood, with probability at least $1-\delta/2$, we are trapped there for at least $T$ steps. Applying the union bound will establish that with probability at least $1 - \delta$, the NGD procedure will output $\vx^T \in \kR_3$. Since we set $\eta \leq \epsilon^2/\log(1/\epsilon\delta)$, this will conclude the proof.

We now left with proving that in every sequence of $1/\eta^2$ steps, there is at least a $1/2$ chance of NGD encountering an iterate $\vx^t \in \kR_3$. To do so, we set up the notion of \emph{epochs}. These will basically correspond to the amount of time taken by NGD to reduce the function value by a significant amount. The first epoch starts at time $\tau_1 = 0$. Subsequently, we define
\[
\tau_{i+1} = \left\{
\begin{array}{cl}
	\tau_i + 1 &\quad \text{ if } \vx^{\tau_i} \in \kR_1 \cup \kR_3\\
	\tau_i + \frac{1}{\eta} &\quad \text{ if } \vx^{\tau_i} \in \kR_2
\end{array}
\right.
\]
Ignoring constants and other non-essential factors, we can rewrite the results of Lemmata~\ref{lem:ngd-non-stationary} and \ref{lem:ngd-saddle} as follows
\begin{align*}
\E{f(\vx^{\tau_{i+1}}) - f(\vx^{\tau_i})\sep\vx^{\tau_i} \in \kR_1} &\leq -\eta^2\\
\E{f(\vx^{\tau_{i+1}}) - f(\vx^{\tau_i})\sep\vx^{\tau_i} \in \kR_2} &\leq -\eta
\end{align*}
Putting these together gives us
\[
\E{f(\vx^{\tau_{i+1}}) - f(\vx^{\tau_i})\sep\vx^{\tau_i} \notin \kR_3} \leq -\E{(\tau_{i+1}-\tau_i)\sep\vx^{\tau_i} \notin \kR_3}\cdot\eta^2
\]
Define the event $\kE_t := \bc{\nexists\ j \leq t: \vx^j \in \kR_3}$ and let $\vone_E$ denote the indicator variable for event $E$ i.e., $\vone_E = 1$ if $E$ occurs and $\vone_E = 0$ otherwise. Then we have
\begin{align*}
\E{f(\vx^{\tau_{i+1}})\cdot\vone_{\kE_{\tau_{i+1}}} - f(\vx^{\tau_i})\cdot\vone_{\kE_{\tau_i}}} = \E{f(\vx^{\tau_{i+1}})\cdot(\vone_{\kE_{\tau_{i+1}}} - \vone_{\kE_{\tau_i}})} + \E{(f(\vx^{\tau_{i+1}}) - f(\vx^{\tau_i}))\cdot\vone_{\kE_{\tau_i}}}\\
\leq B\cdot(\Pr{\kE_{\tau_{i+1}}} - \Pr{\kE_{\tau_i}}) - \eta^2\cdot\E{\tau_{i+1} - \tau_i\sep\vone_{\kE_{\tau_i}}}\cdot\Pr{\kE_{\tau_i}},
\end{align*}
where we have used the fact that $\abs{f(\x)} \leq B$ for all $\x \in \bR^p$. Since $\kE_{t+1} \Rightarrow \kE_t$, we have $\Pr{\kE_{t+1}} \leq \Pr{\kE_t}$. Summing the expressions above from $i = 1$ to $j$ and using $\x^1 \notin \kR_3$ gives us
\[
\E{f(\vx^{\tau_{j+1}})\cdot\vone_{\kE_{\tau_{j+1}}}} - f(\vx^1) \leq - \eta^2\E{\tau_{j+1}}\cdot\Pr{\kE_{\tau_j}}
\]
However, since the function is bounded $\abs{f(\vx)} \leq B$, the left hand side cannot be smaller than $-2B$. Thus, if $\E{\tau_{j+1}} \geq 4B/\eta^2$ then we must have $\Pr{\kE_{\tau_j}} \leq 1/2$. This concludes the proof.
\end{proof}

We have not presented exact constants in the results to avoid clutter. The NGD algorithm actually requires the step length to be set to $\eta < \eta_{\max} \leq \frac{1}{p}$ where $p$ is the ambient dimensionality. Now, since NGD is run for $\Om{1/\eta^2}$ iterations and each iteration takes $\bigO{p}$ time to execute, the total run-time of NGD is $\bigO{p^3}$ which can be prohibitive. The bibliographic notes discuss more recent results that offer run-times that are linear in $p$. Also note that the NGD procedure requires $\softO{1/\epsilon^4}$ iterations to converge within an $\epsilon$ distance of a local optimum.

\section{Constrained Optimization with Non-convex Objectives}

\begin{algorithm}[t]
	\caption{Projected Noisy Gradient Descent (PNGD)}
	\label{algo:saddle-pngd}
	\begin{algorithmic}[1]
		{
			\REQUIRE Objective $f$, max step length $\eta_{\max}$, tolerance $\epsilon$
			\ENSURE A locally optimal point $\hat\x \in \R^p$
			\STATE $\x^1 \leftarrow \text{\textsf{INITALIZE}}()$
			\STATE Set $T \leftarrow 1/\eta^2$, where $\eta = \min\bc{\epsilon^2/\log^2(1/\epsilon), \eta_{\max}}$
			\FOR{$t= 1, 2, \ldots, T$}
				\STATE Sample perturbation $\vzeta^t \sim S^{p-1}$\hfill//\texttt{Random pt. on unit sphere}%
				\STATE $\vg^t \leftarrow \nabla f(\x^t) + \vzeta^t$
				\STATE $x^{t+1} \leftarrow \Pi_\cW(\x^t - \eta\cdot\vg^t)$\hfill//\texttt{Project onto constraint set}
			\ENDFOR
			\STATE \textbf{return} $\x^T$
		}
	\end{algorithmic}
\end{algorithm}

We will now present extensions to the above discussion to cases where the optimization problem is constrained. We will concentrate on constrained optimization problems with equality constraints.
\begin{equation}
\begin{array}{cl}
	\underset{\x\in\R^p}\min & f(\x)\\
	\text{s.t.} & c_i(\vx) = 0, i \in [m]
\end{array}
\tag*{(CNOPT)}\label{eq:nopt-cons}
\end{equation}

Let $\cW := \bc{\x \in \R^p: c_i(\x) = 0, i \in [m]}$ denote the constraint set. In general $\cW$ is a \emph{manifold} and we will assume that this manifold is \emph{nice} in that it is smooth and does not have corners. It is natural to attempt to solve the problem using a gPGD-like approach. Indeed, Algorithm~\ref{algo:saddle-pngd} extends the NGD algorithm to include a projection step. The algorithm is actually similar to the NGD algorithm save the step that projects the iterates onto the manifold $\cW$. This projection step can be tricky but can be efficiently solved, for instance when the constraint functions $c_i$ are linear (see \cite[Section 6.2]{BoydV2004}).

The complete analysis of this algorithm (see \cite[Appendix B]{GeHJY2015}), although similar in parts to that of the NGD algorithm, is beyond the scope of this monograph. Instead, we just develop the design concepts and intuition used in the analysis. The first step is to convert the above constrained optimization into an unconstrained one so that some of the tools used for NGD may be reapplied here.

A very common way to do so is to first construct the \emph{Lagrangian} \cite[Chapter 5]{BoydV2004} of the problem defined as
\[
\cL(\vx,\vlambda) = f(\x) - \sum_{i=1}^m\lambda_ic_i(\x),
\]
where $\lambda_i$ are \emph{Lagrange multipliers}. It is easy to verify that the solution to the problem \eqref{eq:nopt-cons} coincides with that of the following problem
\[
\min_{\vx \in \bR^p}\max_{\vlambda \in \bR^m}\ \cL(\vx,\vlambda)
\]
Note that the above problem is unconstrained. For any $\x \in \R^p$, define $\vlambda^\ast(\x) := \arg\min_{\vlambda}\ \norm{\nabla f(\x) - \sum_{i=1}^m\lambda_i\nabla c_i(\x)}$ and $\cL^\ast(\x) := \cL(\x,\vlambda^\ast(\x))$. We also define the tangent and normal spaces of the manifold $\cW$ as follows.

\begin{definition}[Normal and Tangent Space]
\label{defn:saddle_tangent}
For a manifold $\cW$ defined as the intersection of $m$ constraints of the form $c_i(\x) = 0$, given any $\x \in \cW$, define its tangent space as $\cT(\x) = \bc{\vv\sep\ip{\nabla c_i(\x)}{\vv} = 0, i \in [m]}$ and its normal space as $\cT^c(\x) = \text{span}\bc{\nabla c_1(\x),\ldots,\nabla c_m(\x)}$.
\end{definition}

If we think of $\cW$ as a smooth surface, then at any point, the tangent space defines the \emph{tangent plane} of the surface at that point and the normal space consists of all vectors orthogonal to the tangent plane. The reason behind defining all the above quantities is the following. In unconstrained optimization we have the first and second order optimality conditions: if $\x^\ast$ is a local minimum for $f(\cdot)$ then $\nabla f(\x^\ast) = \vzero$ and $\nabla^2 f(\x^\ast) \succ 0$.

Along the same lines, for constrained optimization problems, similar conditions exist characterizing stationary and optimal points: one can show \citep{WrightN1999} that if $\xopt$ is a local optimum for the constrained problem \eqref{eq:nopt-cons}, then we must have $\nabla \cL^\ast(\xopt) = \vzero$, as well as $\vv^\top\nabla^2 \cL^\ast(\xopt)\vv \geq 0$ for all $\vv \in \cT(\xopt)$. This motivates us to propose the following extension of the strict saddle property for constrained optimization problems.

\begin{definition}[Strict Constrained Saddle Property \citep{GeHJY2015}]
\label{defn:scsp}
A twice differentiable function $f(\x)$ with a constraint set $\cW$ is said to satisfy the $(\alpha, \gamma, \kappa, \xi)$-strict constrained saddle (SCSa) property, if for every local minimum $\xopt \in \cW$, we have $\vv^\top\cL^\ast(\x')\vv \geq \alpha$ for all $\vv \in \cT(\x'), \norm{\vv}_2 = 1$ and for all $\x'$ in the region $\cB_2(\xopt,2\xi)$, and moreover, any point $\x_0 \in \R^p$ satisfies at least one of the following properties: 
\begin{enumerate}
	\item (Non-Stationary) $\|\nabla \cL^\ast(\x_0)\|_2 \geq \kappa$
	\item	(Strict Saddle) $\vv^\top\nabla^2\cL^\ast(\x_0)\vv \leq -\gamma$ for some $\vv \in \cT(\x_0), \norm{\vv}_2 = 1$
	\item	(Approx. Local Min.) For some local minimum $\xopt$, $\|\x_0-\xopt\|_2 \leq \xi$.
\end{enumerate}
\end{definition}
The following local convergence result can then be shown for the PNGD algorithm.

\begin{theorem}
\label{thm:conv-pngd}
Suppose PNGD is executed on a constrained optimization problem that satisfies the $(\alpha, \gamma, \kappa, \xi)$-SCSa property, has $\rho$-Lipschitz Hessians and whose constraint set is a smooth manifold $\cW$. Then there exists a constant $\eta_{\max} = \Theta(1)$ such that for any $\epsilon,\delta > 0$, if we set $\eta = \min\bc{\eta_{\max},\frac{\epsilon^2}{\log(1/\epsilon\delta)}}$, then with probability at least $1 - \delta$, after $T \geq \log p/\eta^2\cdot\log(2/\delta)$ iterations, PNGD produces an iterate $\x^T$ that is $\epsilon$-close to some local optimum $\xopt$ i.e., $\norm{\x^T - \xopt}_2 \leq \epsilon$.
\end{theorem}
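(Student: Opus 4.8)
The plan is to mirror the analysis of the unconstrained NGD algorithm (Theorem~\ref{thm:conv-ngd}) after reducing the constrained problem to an effectively unconstrained one living on the manifold $\cW$. The key conceptual device is the function $\cL^\ast(\x) = \cL(\x,\vlambda^\ast(\x))$, whose gradient and tangent-space-restricted Hessian are exactly the objects appearing in the SCSa property (Definition~\ref{defn:scsp}). First I would establish that, up to second-order corrections controlled by the $\rho$-Lipschitz-Hessian assumption and the smoothness of $\cW$, one step of PNGD---taking a noisy gradient step in $\R^p$ and then projecting back onto $\cW$---behaves like a step of \emph{Riemannian} noisy gradient descent driven by $\nabla\cL^\ast$ restricted to the tangent space $\cT(\x^t)$ (Definition~\ref{defn:saddle_tangent}). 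Concretely, the normal-space component of $\nabla f(\x^t)$ is absorbed by the Lagrange multipliers $\vlambda^\ast$ and removed by the projection, so the effective descent direction is the tangent-space gradient; this is why $\cL^\ast$, rather than $f$, is the correct potential to track.

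With this reduction in hand, I would re-prove the three regime lemmas of the unconstrained analysis in the manifold setting. In the non-stationary regime ($\norm{\nabla\cL^\ast(\x^t)}_2 \geq \kappa$), strong smoothness along the manifold yields a guaranteed per-step drop in $\cL^\ast$, exactly as in Lemma~\ref{lem:ngd-non-stationary}. In the strict-saddle regime, the tangent vector $\vv$ witnessing $\vv^\top\nabla^2\cL^\ast(\x^t)\vv \leq -\gamma$ provides a steep descent direction \emph{within} $\cT(\x^t)$; since the projected perturbation retains a nontrivial isotropic component in the tangent space, the saddle-escape argument of Lemma~\ref{lem:ngd-saddle}---a second-order Taylor expansion plus accumulation along the negative-curvature direction---goes through with the Hessian replaced by its tangent-space restriction. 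Finally, in the approximate-local-minimum regime, the SCSa property supplies strong convexity of $\cL^\ast$ along $\cT$ throughout $\cB_2(\xopt,2\xi)$, giving the entrapment guarantee of Lemma~\ref{lem:ngd-local-nbhd}.

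I would then assemble these into the global statement via the identical epoch-and-martingale argument used for Theorem~\ref{thm:conv-ngd}: partition the iterates (confined to $\cW$) into the three regions prescribed by the SCSa property, observe that $\cL^\ast$ restricted to $\cW$ is bounded below because $f$ is bounded and so cannot decrease indefinitely, conclude that PNGD must enter the local-minimum region within $\bigO{1/\eta^2}$ steps with probability at least $1/2$, and finally invoke the entrapment lemma over the remaining $\log(1/\delta)$ epochs to show the iterate remains trapped, yielding $\norm{\x^T-\xopt}_2 \leq \epsilon$ with probability $1-\delta$ for the stated setting of $\eta$ and $T$.

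The main obstacle I anticipate is the first step: controlling the interaction between the nonlinear projection onto the curved manifold $\cW$ and both the gradient step and the noise. One must show (a) that the projection distorts the tangent-space dynamics only by curvature terms of order $\eta^2$, so that the per-step descent and escape estimates survive the reduction to $\cL^\ast$, and (b) that projecting the isotropic perturbation $\vzeta^t$ still leaves enough variance along the negative-curvature tangent direction for the saddle-escape probability bound to hold. This is precisely where the smoothness of $\cW$ and the Lipschitz-Hessian condition are indispensable, and it is the portion to which the full treatment in \cite[Appendix B]{GeHJY2015} devotes the most technical effort; the remaining regime lemmas and the epoch argument are then fairly direct adaptations of the unconstrained case.
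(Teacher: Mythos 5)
Your plan coincides with the paper's own treatment: the paper also reduces the PNGD update to the form $\x^{t+1} = \x^t - \eta\cdot(\nabla\cL^\ast(\x^t) + \Pi_{\cT(\x^t)}(\vzeta^t)) + \viota^t$ with $\norm{\viota^t}_2 \leq \eta^2$, and then invokes the three-regime NGD analysis on $\cL^\ast$ restricted to the tangent space, deferring the technical details (including the curvature and noise-projection issues you flag) to the appendix of \cite{GeHJY2015}. Your identification of the main obstacle --- controlling the projection's distortion of the tangent dynamics and the retained variance of the projected noise --- is exactly where the full argument concentrates its effort, so the proposal is correct and follows essentially the same route.
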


The PNGD procedure requires $\softO{1/\epsilon^4}$ iterations to converge within an $\epsilon$ distance of a local optimum, similar to NGD. The proof of the above theorem \cite[Lemma 35]{GeHJY2015} proceeds by showing that the PNGD updates can be rewritten as
\[
\x^{t+1} = \x^t - \eta\cdot(\nabla\cL^\ast(\x^t) + \Pi_{\cT(\x^t)}(\vzeta^t)) + \viota^t,
\]
where $\Pi_{\cT(\x^t)}(\cdot)$ is the projection of a vector onto the tangent space $\cT(\x^t)$ and $\viota^t$ is an error vector with small norm $\norm{\viota^t}_2 \leq \eta^2$.

\section{Application to Orthogonal Tensor Decomposition}
\label{sec:saddle-app-otd}
Recall that we reduced the tensor decomposition problem to solving the following optimization problem \eqref{eq:lrtd}
\begin{equation*}
\begin{array}{cl}
	\max & T(\vu,\vu,\vu,\vu) = \sum_{i=1}^r (\vu_i^\top\vu)^4\\
	\text{s.t.} & \norm{\vu}_2 = 1,
\end{array}
\end{equation*}
The above problem has internal symmetry with respect to permutation of the components, as well as sign flips (both $\vu_i$ and $-\vu_i$ are valid components) which gives rise to saddle points as we discussed earlier. Using some simple but slightly tedious calculations (see \cite[Theorem 44, Lemmata 45, 46]{GeHJY2015}) it is possible to show that the above optimization problem does satisfy the $(3,7/p,1/p^c,1/p^c)$-SCSa property for some constant $c > 0$. All other requirements for Theorem~\ref{thm:conv-pngd} can also be shown to be satisfied.

It is easy to see that any solution to the problem must lie in the span of the components. Since the components $\vu_i$ are orthonormal, this means that it suffices to look for solutions of the form $\vu = \sum_{i=1}^rx_i\vu_i$. This gives us $T(\vu,\vu,\vu,\vu) = \sum_{i=1}^rx_i^4$, and $\norm{\vu}_2 = \norm{\x}_2$ where $\x = [x_1,x_2,\ldots,x_r]$. This allows us to formulate the equivalent problem as
\begin{equation*}
\begin{array}{cl}
	\min & -\norm{\x}_4^4\\
	\text{s.t.} & \norm{\x}_2 = 1,
\end{array}
\end{equation*}
Note that the above problem is non-convex as the objective function is concave. However, it is also possible to show\elink{exer:saddle-multi-opt}\elink{exer:saddle-multi-non-opt} that the only local minima of the optimization problem \eqref{eq:lrtd} are $\pm\vu_i$. This is most fortunate since it shows that all local minima are actually global minima! Thus, the \emph{deflation} strategy alluded to earlier can be successfully applied. We discover one component, say $\vu_1$ by applying the PNGD algorithm to \eqref{eq:lrtd}. Having recovered this component, we create a new tensor $T' = T - \vu_1\otimes\vu_1\otimes\vu_1\otimes\vu_1$, apply the procedure again to discover a second component and so on.

The work of \cite{GeHJY2015} also discusses techniques to recover all $r$ components simultaneously, tensors with different positive weights on the components, as well as tensors of other orders.

\section{Exercises}
\begin{exer}
\label{exer:saddle-non-conv}
Show that the optimization problem in the formulation~\eqref{eq:lrtd} is non-convex in that it has a non-convex objective as well as a non-convex constraint set. Show that constraint set may be convexified without changing the optimum of the problem.
\end{exer}
\begin{exer}
\label{exer:saddle-smooth-nc}
Consider a differentiable function $f$ that is $\beta$-strongly smooth but possibly non-convex. Show that if gradient descent is performed on $f$ with a static step length $\eta \leq \frac{2}{\beta}$ i.e.,
\[
\x^{t+1} = \x^t - \eta\cdot\nabla f(\x^t)
\]
then the function value $f$ will never increase across iterations i.e., $f(\x^{t+1}) \leq f(\x^t)$. This shows that on smooth functions, gradient descent enjoys monotonic progress whenever the step length is small enough.\\
\textit{Hint}: apply the SS property relating consecutive iterates.
\end{exer}
\begin{exer}
\label{exer:saddle-smooth-nc-2}
For the same setting as the previous problem, show that if we have $\eta \in\bs{\frac{1}{2\beta},\frac{1}{\beta}}$ instead, then within $T = \bigO{\frac{1}{\epsilon^2}}$ steps, gradient descent is guaranteed to identify an $\epsilon$-stationary point i.e for some $t \leq T$, we must have $\norm{\nabla f(\x^t)}_2 \leq \epsilon$.\\
\textit{Hint}: first apply SS to show that $f(x^t) - f(x^{t+1}) \geq \frac{1}{4\beta}\cdot\norm{\nabla f(x^t)}_2^2$. Then use the fact that the total improvement in function value over $T$ time steps cannot exceed $f(x^0) - f^\ast$.
\end{exer}
\begin{exer}
\label{exer:saddle-multi-opt}
Show that every component vector $\vu_i$ is a globally optimal point for the optimization problem in~\eqref{eq:lrtd}.\\
\textit{Hint}: Observe the reduction in \S~\ref{sec:saddle-app-otd}. Show that it is equivalent to finding the minimum $L_2$ norm vector(s) among unit $L_1$ norm vectors. Find the Lagrangian dual of this optimization problem and simplify.
\end{exer}
\begin{exer}
\label{exer:saddle-multi-non-opt}
Given a rank-$r$ orthonormal tensor $T$, construct a non-trivial convex combination of its components that has unit $L_2$ norm but achieves a suboptimal objective value in the formulation~\eqref{eq:lrtd}.
\end{exer}

\section{Bibliographic Notes}
Progress in this area was extremely rapid at the time of writing this monograph. The following discussion outlines some recent advances.

The general problem of local optimization has been of interest in the optimization community for several years. Some of the earlier results in the area concentrated on second order or quasi-Newton methods, understandably since the Hessians can be used to distinguish between local optima and saddle points. Some key works include those of \cite{NesterovP2006,DauphinPGCGB2014,SunQW2015,Yuan2015}. However, these techniques can be expensive and hence, are not used in high dimensional applications.

Algorithms for specialized applications, such as the EM algorithm we studied in \S~\ref{chap:em}, have received independent attention with respect to their local convergence properties. We refer the reader to the bibliographic notes \S~\ref{sec:em-bib} for the same. In recent years, the problem of optimization of general non-convex problems has been looked at from several perspectives. Below we outline some of the major thrust areas.\\

\noindent\textbf{First Order Methods} Given their efficiency, the question of whether first order methods can offer local optimality has captured the interest of the field. In particular, much attention has been paid to the standard gradient descent and its variants. The work of \cite{LeeSJR2016} showed that when initialized at a random point, gradient descent avoids saddle points almost surely, although the work provides no definite rate of convergence in general.

The subsequent works of \cite{SunQW2015,GeHJY2015,AnandkumarG2016} introduced structural properties such as the strict saddle property, and demonstrated that crisp convergence rates can be ensured for problems that do satisfy these structural properties. It is notable that the work of \cite{SunQW2015} reconsidered second order methods while \cite{GeHJY2015} were able to show that noisy stochastic gradient descent itself suffices.

The technique of using randomly perturbed (stochastic) gradients to escape saddle points receives attention in a more general framework of \emph{Langevin Dynamics} which study cases when the perturbations are non-isotropic or else are applied at a scale that adapts to the problem. The recent work of \cite{ZhangLC2017} shows a powerful result that offers, for empirical risk minimization problems that are ubiquitous in machine learning, a convergence guarantee that ensures convergence to a local optimum of the population risk functional. This is a useful result since a majority of works in literature focus on identifying local optima of the empirical risk functional, which depend on the training data, but which may correspond to bad solutions with respect to the population risk.\\

\noindent\textbf{Non-smooth Optimization} All techniques we discussed in thus far in this monograph require the objective function to at least be differentiable. In fact, we go one step ahead to assume (restricted) smoothness. The work of \cite{ReddiHSPS2016} shows how variance reduction techniques may be exploited to ensure $\epsilon$-convergence to a first-order stationary point $\x^t$ (effectively a saddle point) where we have $\norm{f(\vx^t)}_2 \leq \epsilon$, in no more than $\bigO{1/\epsilon}$ iterations of stochastic mini-batch gradient descent even when the objective function is non-convex and non-smooth at the same time. We urge the reader to confer this with Exercise~\ref{exer:saddle-smooth-nc-2} where it took $\bigO{1/\epsilon^2}$ iterations to do the same, that too using full gradient descent. Also notable is the fact that in our analysis (see Theorem~\ref{thm:conv-ngd}), it took NGD $\bigO{1/\epsilon^4}$ steps to reach a local optimum. However, two caveats exist: 1) the method proposed in \citep{ReddiHSPS2016} only reaches a saddle point, and 2) it assumes a \emph{finite-sum} objective, i.e., one that has a decomposable form.\\

\noindent\textbf{Accelerated Optimization} The works of \cite{CarmonDHS2017,AgarwalA-ZBHM2017} extend the work of \cite{ReddiHSPS2016} by offering faster than $\bigO{1/\epsilon^2}$ convergence to a stationary point for general (non finite-sum) non-convex objectives. However, it should be noted that these techniques assume a smooth objective and convergence is guaranteed only to a saddle point, not a local minimum. Whereas the work of \cite{CarmonDHS2017} invokes a variant of Nesterov's accelerated gradient technique to offer $\epsilon$-convergence in $\bigO{1/\epsilon^{7/4}}$ iterations, the work of \cite{AgarwalA-ZBHM2017} employs a second-order method to offer $\epsilon$-convergence, also in $\bigO{1/\epsilon^{7/4}}$ iterations.\\

\noindent\textbf{High-dimensional Optimization} When considering problems in high-dimensions, it becomes difficult to execute algorithms whose run-times scale super-linearly in the ambient dimensionality. This is one reason why first-order methods are preferred. However, as we saw, the overall run-time of Theorem~\ref{thm:conv-ngd} scales as $\bigO{p^3}$ which makes the method prohibitive even for moderate dimensional problems. The work of \cite{Jin0NKJ17} proposes another perturbed gradient method that offers a run-time that depends only quasi-linearly on the ambient dimension. It should be noted that the works of \cite{CarmonDHS2017,AgarwalA-ZBHM2017} also offer run-times that are linear in the ambient dimensionality although, as noted earlier, convergence is only guaranteed to a saddle point by these methods.\\

\noindent\textbf{Escaping Higher-order Saddles} In our discussion, we were occupied with the problem of avoiding getting stuck at simple saddle points which readily reveal themselves by having distinctly positive and negative eigenvalues in the Hessian. However, there may exist more complex \emph{degenerate saddle} points where the Hessian has only non-negative eigenvalues and thus, masquerades as a local minima. Such configurations yield complex cases such as \emph{monkey saddles} and \emph{connected saddles}. We did not address these. The work of \cite{AnandkumarG2016} proposes a method based on the Cubic Regularization algorithm of \cite{NesterovP2006} which is able to escape some of these more complex saddle points and achieve convergence to a point that enjoys \emph{third order optimality}.\\

\noindent\textbf{Training Deep Networks} Given the popularity of deep networks in several areas of learning and signal processing, as well as the fact that the task of training deep networks corresponds to non-convex optimization, a lot of recent efforts have focused on the problem of efficiently and provably training deep networks using non-convex optimization techniques. Some notable works include provable methods for training multi-layered perceptrons \citep{GoelKlivans2017,ZhongSJBD2017,LiYuan2017} and special cases of convoluational networks known as non-overlapping convolutional networks \citep{BrutzkusGloberson2017}. Whereas the works \citep{BrutzkusGloberson2017,ZhongSJBD2017,LiYuan2017} utilize gradient-descent based techniques, \cite{GoelKlivans2017} uses an application of isotonic regression and kernel techniques. The work of \cite{LiYuan2017} shows that the inclusion of an \emph{identity map} into the network eases optimization by making the training problem well-posed.

\part{Applications}

\chapter{Sparse Recovery}
\label{chap:spreg}

In this section, we will take a look at the sparse recovery and sparse linear regression as applications of non-convex optimization. These are extremely well studied problems and find applications in several practical settings. This will be the first of four ``application'' sections where we apply non-convex optimization techniques to real-world problems.

\section{Motivating Applications}
We will take the following two running examples to motivate the sparse regression problem in two different settings.\\

\noindent\textbf{Gene Expression Analysis} The availability of DNA micro-array gene expression data makes it possible to identify genetic explanations for a wide range of phenotypical traits such as physiological properties or even disease progressions.	In such data, we are given say, for $n$ human test subjects participating in the study, the expression levels of a large number $p$ of genes (encoded as a real vector $\x_i \in \R^p$), and the corresponding phenotypical trait $y_i \in \R$. Figure~\ref{fig:gene} depicts this for a hypothetical study on Type-I diabetes. For the sake of simplicity, we are considering cases where the phenotypical trait can be modeled as a real number -- this real number may indicate the severity of a condition or the level of some other biological measurement. More expressive models exist in literature where the target phenotypical trait is itself represented as a vector \cite[see for example,][]{JainT2015}.

\begin{figure}[t]
\includegraphics[width=\columnwidth]{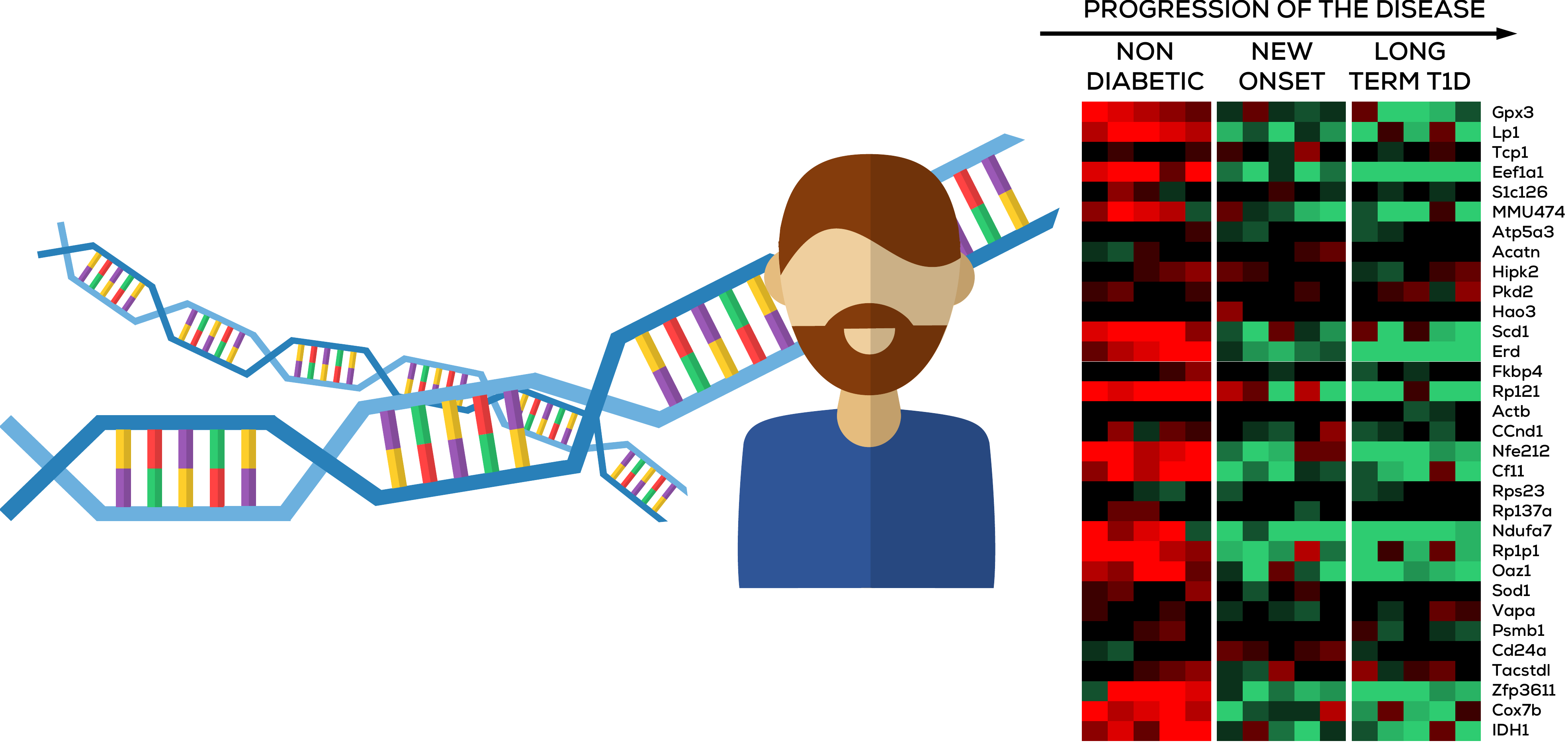}
\caption[Gene Expression Analysis as Sparse Regression]{Gene expression analysis can help identify genetic bases for physiological conditions. The expression matrix on the right has 32 rows and 15 columns: each row represents one gene being tracked and each column represents one test subject. A bright red (green) shade in a cell indicates an elevated (depressed) expression level of the corresponding gene in the test subject with respect to a reference population. A black/dark shade indicates an expression level identical to the reference population. Notice that most genes do not participate in the progression of the disease in a significant manner. Moreover, the number of genes being tracked is much larger than the number of test subjects. This makes the problem of gene expression analysis, an ideal application for sparse recovery techniques. Please note that names of genes and expression levels in the figure are illustrative and do not correspond to actual experimental observations. Figure adapted from \citep{WilsonELRYSD-SMCS2003}.}%
\label{fig:gene}
\end{figure}

For the sake of simplicity, we assume that the phenotypical response is linearly linked to the gene expression levels i.e. for some $\bto \in \R^p$, we have $y_i = \x_i^\top\bto + \eta_i$ where $\eta_i$ is some noise. The goal then is to use gene expression data to deduce an estimate for $\bto$. Having access to the model $\bto$ can be instrumental in discovering possible genetic bases for diseases, traits etc. Consequently, this problem has significant implications for understanding physiology and developing novel medical interventions to treat and prevent diseases/conditions.

However, the problem fails to reduce to a simple linear regression problem for two important reasons. Firstly, although the number of genes whose expression levels are being recorded is usually very large (running into several tens of thousands), the number of samples (test subjects) is usually not nearly as large, i.e. $n \ll p$. Traditional regression algorithms fall silent in such data-starved settings as they usually expect $n > p$. Secondly, and more importantly, we do not expect all genes being tracked to participate in realizing the phenotype. Indeed, the whole objective of this exercise is to identify a small set of genes which most prominently influence the given phenotype. Note that this implies that the vector $\bto$ is very sparse. Traditional linear regression cannot guarantee the recovery of a sparse model.\\

\noindent\textbf{Sparse Signal Transmission and Recovery} The task of transmitting and acquiring signals is a key problem in engineering. In several application areas such as magnetic resonance imagery and radio communication, \emph{linear} measurement techniques, for example, sampling, are commonly used to acquire a signal. The task then is to reconstruct the original signal from these measurements. For sake of simplicity, suppose we wish to sense/transmit signals represented as vectors in $\R^p$. For various reasons (conserving energy, protection against data corruption etc), we may want to not transmit the signal directly and instead, create a \emph{sensing mechanism} wherein a signal $\bt \in \R^p$ is encoded into a signal $\y \in \R^n$ and it is $\y$ that is transmitted. At the receiving end $\y$ must be decoded back into $\bt$. A popular way of creating sensing mechanisms -- also called \emph{designs} -- is to come up with a set of $n$ linear functionals $\x_i: \R^p \rightarrow \R$ and for any signal $\bt \in \R^p$, record the values $y_i = \x_i^\top\bt$. If we denote $X = [\x_1,\ldots,\x_n]^\top$ and $\y = [y_1,\ldots,y_n]^\top$, then $\y = X\bt$ is transmitted. Note as a special case that if $n = p$ and $\x_i = \e_i$, then $X = I_{p\times p}$ and $\y = \bt$, i.e. we transmit the original signal itself.

If $p$ is very large then we naturally look for designs with $n \ll p$. However, elementary results in algebra dictate that the recovery of $\bt$ from $\y$ cannot be guaranteed even if $n = p - 1$. There is irrecoverable loss of information and there could be (infinitely) many signals $\w$ all of which map to the same transmitted signal $\y$ making it impossible to recover the original signal uniquely. A result similar in spirit called the Shannon-Nyquist theorem holds for analog or continuous-time signals. Although this seems to spell doom for any efforts to perform \emph{compressed} sensing and transmission, these negative results can actually be overcome by observing that in several useful settings, the signals we are interested in, are actually very sparse i.e. $\bt \in \B_0(s) \subset \R^p$, $s \ll p$. This realization is critical since it allows the possibility of specialized design matrices to be used to transmit sparse signals in a highly compressed manner i.e. with $n \ll p$ but without any loss of information. However the recovery problem now requires a sparse vector to be recovered from the transmitted signal $\y$ and the design matrix $X$.

\section{Problem Formulation}
\label{sec:spreg-prob-form}
In both the examples considered above, we wish to recover a sparse linear model $\bt \in \B_0(s) \subset \R^p$ that fits the data, i.e., $\y_i \approx \x_i^\top\bt$, hence the name \emph{sparse recovery}. In the gene analysis problem, the support of such a model $\bt$ is valuable in revealing the identity of the genes involved in promoting the given phenotype/disease. Similarly, in the sparse signal recovery problem, $\bt$ is the (sparse) signal itself.

This motivates the sparse linear regression problem. In the following, we shall use $\x_i \in \R^p$ to denote the features (gene expression levels/measurement functionals). Each feature will constitute a data point. There will be a \emph{response} variable $y_i \in \R$ (phenotype response/measurement) associated with each data point. We will assume that the response variables are being generated using some underlying sparse \emph{model} $\bto \in \B_0(s)$ (gene influence pattern/sparse signal) as $y_i = \x_i^\top\bto + \eta_i$ where $\eta_i$ is some benign noise.

In both the gene expression analysis problem, as well as the sparse signal recovery problem, recovering $\bto$ from the data $(\x_1,y_1),\ldots,(\x_n,y_n)$ then requires us to solve the following optimization problem: $\underset{\bt\in\R^p, \norm{\bt}_0 \leq s}{\min}\ \sum_{i=1}^n\br{y_i - \x_i^\top\bt}^2$. Rewriting the above in more succinct notation gives us
\begin{equation}
\underset{\substack{\bt\in\R^p\\\norm{\bt}_0 \leq s}}{\min}\ \norm{\y - X\bt}_2^2,\tag*{(SP-REG)}\label{eq:spreg}
\end{equation}
where $X = [\x_1,\ldots,\x_n]^\top$ and $\y = [y_1,\ldots,y_n]^\top$. It is common to model the additive noise as \emph{white noise} i.e. $\eta_i \sim \cN(0,\sigma^2)$ for some $\sigma > 0$. It should be noted that the sparse regression problem in \eqref{eq:spreg} is an NP-hard problem \citep{Natarajan1995}.

\section{Sparse Regression: Two Perspectives}
Although we cast both the gene analysis and sparse signal recovery problems in the same framework of sparse linear regression, there are subtle but crucial differences between the two problem settings.

Notice that the problem in sparse signal recovery is to come up with both, a design matrix $X$, as well as a recovery algorithm $\A: \R^n \times \R^{n \times p} \rightarrow \R^p$ such that all sparse signals can be accurately recovered from the measurements, i.e. $\forall \bt \in \B_0(s) \subset \R^p$, we have $\A(X\bt, X) \approx \bt$.

On the other hand, in the gene expression analysis task, we do not have as direct a control over the effective design matrix $X$. In this case, the role of the design matrix is played by the gene expression data of the $n$ test subjects. Although we may choose which individuals we wish to include in our study, even this choice does not give us a direct handle on the properties of the design matrix. Our job here is restricted to coming up with an algorithm $\A$ which can, given the gene expression data for $p$ genes in $n$ test subjects, figure out a sparse set of $s$ genes which collectively promote the given phenotype i.e. for any $\bt \in \B_0(s) \subset \R^p$ and given $X \in \bR^{n \times p}$, we desire $\A(X\bt, X) \approx \bt$.

The distinction between the two settings is now apparent: in the first setting, the design matrix is totally in our control. We may design it to have specific properties as required by the recovery algorithm. However, in the second case, the design matrix is mostly given to us. We have no fine control over its properties.

This will make an important difference in the algorithms that operate in these settings since algorithms for sparse signal recovery would be able to make very stringent assumptions regarding the design matrix since we ourselves create this matrix from scratch. However, for the same reason, algorithms working in \emph{statistical learning} settings such as the gene expression analysis problem, would have to work with relaxed assumptions that can be expected to be satisfied by natural data. We will revisit this point later once we have introduced the reader to algorithms for performing sparse regression.

\section{Sparse Recovery via Projected Gradient Descent}
The formulation in \eqref{eq:spreg} looks strikingly similar to the convex optimization problem \eqref{eq:cons-cvx-opt} we analyzed in \S~\ref{chap:tools} if we take $f(\bt) = \norm{\y - X\bt}_2^2$ as the (convex) objective function and $\cC = \B_0(s)$ as the (non-convex) constraint set. Given this, it is indeed tempting to adapt Algorithm~\ref{algo:pgd} to solve this problem. The only difference would be that the projection step would now have to project onto a non-convex set $\B_0(s)$. However, as we have seen in \S~\ref{sec:non-cvx-proj}, this can be done efficiently. The resulting algorithm is a variant of the gPGD algorithm (Algorithm~\ref{algo:gpgd}) that we studied in \S~\ref{chap:pgd} and is referred to as \emph{Iterative Hard Thresholding} (IHT) in literature. The IHT algorithm is outlined in Algorithm~\ref{algo:iht}.

\begin{algorithm}[t]
	\caption{Iterative Hard-thresholding (IHT)}
	\label{algo:iht}
	\begin{algorithmic}[1]
			\REQUIRE Data $X, \y$, step length $\eta$, projection sparsity level $k$
			\ENSURE A sparse model $\bth \in \B_0(k)$
			\STATE $\bt^1 \leftarrow \vzero$
			\FOR{$t = 1, 2, \ldots$}
				\STATE $\z^{t+1} \leftarrow \btt - \eta\cdot \frac{1}{n}X^\top(X\btt - \y)$
				\STATE $\btn \leftarrow \Pi_{\B_0(k)}(\z^{t+1})$ \hfill//\texttt{see \S~\ref{sec:non-cvx-proj}}
			\ENDFOR
			\STATE \textbf{return} {$\btt$}
	\end{algorithmic}
\end{algorithm}

It should come as no surprise that Algorithm~\ref{algo:iht} turns out to be extremely simple to implement, as well as extremely fast in execution, given that only gradient steps are required. Indeed, IHT is a method of choice for practitioners given its ease of use and speed. However, much less is clear about the recovery guarantees of this algorithm, especially since we have already stated that \eqref{eq:spreg} is NP-hard to solve \citep{Natarajan1995}. Note that since the problem involves non-convex constraints, Theorem~\ref{thm:pgd-conv-proof} no longer applies. This seems to destroy all hope of proving a recovery guarantee, until one observes that the NP-hardness result does not preclude the possibility of solving this problem efficiently when there is special structure in the design matrix $X$.

Indeed if $X = I_{p\times p}$, it is trivial to recover \emph{any} underlying sparse model $\bto$ by simply returning $\y$. This toy case actually holds the key to efficient sparse recovery. Notice that when $X = I$, the design matrix is an \emph{isometry} -- it completely preserves the geometry of the space $\R^p$. However, one could argue that this is an uninteresting and expensive design with $n = p$. In a long line of illuminating results, which we shall now discuss, it was revealed that even if the design matrix is not a global isometry such as $I$ but a \emph{restricted isometry} that only preserves the geometry of sparse vectors, recovery is possible with $n \ll p$.

The observant reader would be wondering why are we not applying the gPGD analysis from \S~\ref{chap:pgd} directly here, and if notions similar to the RSC/RSS notions discussed there make sense here too. We request the reader to read on. We will find that not only do those notions extend here, but have beautiful interpretations. Moreover, instead of directly applying the gPGD analysis (Theorem~\ref{thm:gpgd-rsc-rss-proof}), we will see a simpler convergence proof tailored to the sparse recovery problem which also gives a sharper result.

\section{Restricted Isometry and Other Design Properties}
As we observed previously, design matrices such as $I_p$ which preserve the geometry of signals/models seem to allow for recovery of sparse signals. We now formalize this intuition further and develop specific conditions on the design matrix $X$ which guarantee \emph{universal recovery} i.e. for every $\bt \in \B_0(s)$, it is possible to uniquely recover $\bt$ from the measurements $X\bt$.

It is easy to see that a design matrix that \emph{identifies} sparse vectors cannot guarantee universal recovery. Suppose we have a design matrix $X \in \R^{n \times p}$ such that for some $\bt_1, \bt_2 \in \B_0(s)$ and $\bt_1 \neq \bt_2$, we get $\y_1  = \y_2$ where $\y_1 = X\bt_1$ and $\y_2 = X\bt_2$. In this case, it is information theoretically impossible to distinguish between $\bt_1$ and $\bt_2$ on the basis of measurements made using $X$ i.e. using $\y_1$ (or $\y_2$). Consequently, this design matrix cannot be used for universal recovery since it produces measurements that confuse between sparse vectors. It can be seen\elink{exer:spreg-confuse} that such a design matrix will not identify just one pair of sparse vectors but an infinite set of pairs (indeed an entire subspace) of sparse vectors.

Thus, it is clear that the design matrix must preserve the geometry of the set of sparse vectors while projecting them from a high $p$-dimensional space to a low $n$-dimensional space. Recall that in sparse recovery settings, we usually have $n \ll p$. The \emph{Nullspace Property} presented below, and the others thereafter, are formalizations of this intuition. For any subset of coordinates $S \subset [p]$, let us define the set $\cC(S) := \bc{\w \in \R^p, \norm{\w_S}_1 \geq \norm{\w_{\bar S}}_1}$. This is the (convex) set of points that place a majority of their weight on coordinates in the set $S$. Define $\cC(k) := \bigcup_{S: |S| = k}\cC(S)$ to be the (non-convex\elink{exer:spreg-c-k-non-conv}) set of points that place a majority of their weight on some $k$ coordinates. Note that $\cC(k) \supset \B_0(k)$ since $k$-sparse vectors put \emph{all} their weight on some $k$ coordinates.

\begin{definition}[Nullspace Property \citep{CohenDDeV2009}]
\label{defn:ns-prop}
A matrix $X \in \R^{n \times p}$ is said to satisfy the null-space property of order $k$ if $\text{ker}(X) \cap \cC(k) = \bc{\vzero}$, where $\text{ker}(X) = \{\w \in \bR^p: X\w = \vzero\}$ is the kernel of the linear transformation induced by $X$ (also called its null-space).
\end{definition}
If a design matrix satisfies this property, then vectors in its null-space are disallowed from concentrating a majority of their weight on any $k$ coordinates. Clearly no $k$-sparse vector is present in the null-space either. If a design matrix has the null-space property of order $2s$, then it can never identify two $s$-sparse vectors\elink{exer:spreg-nsp-iden} -- something that we have already seen as essential to ensure global recovery. A strengthening of the Nullspace Property gives us the Restricted Eigenvalue Property.

\begin{definition}[Restricted Eigenvalue Property \citep{RaskuttiWY2010}]
\label{def:re}
A matrix $X \in \R^{n \times p}$ is said to satisfy the restricted eigenvalue property of order $k$ with constant $\alpha$ if for all $\bt \in \cC(k)$, we have $\frac{1}{n}\norm{X\bt}_2^2 \geq \alpha\cdot\norm{\bt}_2^2$.
\end{definition}
This means that not only are $k$-sparse vectors absent from the null-space, they actually retain a good fraction of their length after projection as well. This means that if $k = 2s$, then for any $\bt_1,\bt_2 \in \B_0(s)$, we have $\frac{1}{n}\norm{X(\bt_1 - \bt_2)}_2^2 \geq \alpha\cdot\norm{\bt_1 - \bt_2}_2^2$. Thus, the distance between \emph{any two} sparse vectors never greatly diminished after projection. Such behavior is the hallmark of an isometry, which preserves the geometry of vectors. The next property further explicates this and is, not surprisingly, called the Restricted Isometry Property.

\begin{definition}[Restricted Isometry Property \citep{CandesT2005}]
A matrix $X \in \R^{n \times p}$ is said to satisfy the restricted isometry property (RIP) of order $k$ with constant $\delta_k \in [0,1)$ if for all $\bt \in \B_0(k)$, we have
\begin{center}
	$(1-\delta_k)\cdot\norm{\bt}_2^2 \leq \frac{1}{n}\norm{X\bt}_2^2 \leq (1+\delta_k)\cdot\norm{\bt}_2^2$.
\end{center}
\end{definition}
The above property is most widely used in analyzing sparse recovery and compressive sensing algorithms. However, it is a bit restrictive since it requires the distortion parameters to be of the kind $(1 \pm \delta)$ for $\delta \in [0,1)$. A generalization of this property that is especially useful in settings where the properties of the design matrix are not strictly controlled by us, such as the gene expression analysis problem, is the following notion of restricted strong convexity and smoothness.

\begin{definition}[Restricted Strong Convexity/Smoothness Property \citep{JainTK2014, JalaliJR2011}]
\label{defn:rsc-rss}
A matrix $X \in \R^{n \times p}$ is said to satisfy the $\alpha$-restricted strong convexity (RSC) property and the $\beta$-restricted smoothness (RSS) property of order $k$ if for all $\bt \in \B_0(k)$, we have
\begin{center}
	$\alpha\cdot\norm{\bt}_2^2 \leq \frac{1}{n}\norm{X\bt}_2^2 \leq \beta\cdot\norm{\bt}_2^2$.
\end{center}
\end{definition}

The only difference between the RIP and the RSC/RSS properties is that the former forces constants to be of the form $1 \pm \delta_k$ whereas the latter does not impose any such constraints. The reader will notice the similarities in the definition of restricted strong convexity and smoothness as given here and Definition~\ref{defn:res-strong-cvx-smooth-fn} where we defined restricted strongly convexity and smoothness notions for general functions. The reader is invited to verify\elink{exer:spreg-rsc-is-rsc} that the two are indeed related.

Indeed, Definition~\ref{defn:res-strong-cvx-smooth-fn} can be seen as a generalization of Definition~\ref{defn:rsc-rss} to general functions \citep{JainTK2014}. For twice differentiable functions, both definitions can be seen as placing restrictions on the (restricted) eigenvalues of the Hessian of the function.

It is a useful exercise to verify\elink{exer:spreg-hierarchy} that these properties fall in a hierarchy: RSC-RSS $\Rightarrow$ REP $\Rightarrow$ NSP for an appropriate setting of constants. We will next establish the main result of this section: if the design matrix satisfies the RIP condition with appropriate constants, then the IHT algorithm does indeed guarantee universal sparse recovery. Subsequently, we will give pointers to recent results that guarantee universal recovery in gene expression analysis-like settings.

\section{Ensuring RIP and other Properties}
\label{sec:rip-ensure}
Since properties such as RIP, RE and RSC are so crucial for guaranteed sparse recovery, it is important to study problem settings in which these are satisfied by actual data. A lot of research has gone into explicit construction of matrices that provably satisfy the RIP property.\\

\noindent\textbf{Random Designs}: The simplest of these results are the so-called random design constructions which guarantee that if the matrix is sampled from certain well behaved distributions, then it will satisfy the RIP property with high probability. For instance, the work of \citet{BaraniukDdVW2008} shows the following result:

\begin{theorem}\cite[Theorem 5.2]{BaraniukDdVW2008}
Let $\cD$ be a distribution over matrices in $\bR^{n \times p}$ such that for any fixed $\vv \in \bR^p, \epsilon > 0$,
\[
\Prr{X \sim \cD^{n \times p}}{\abs{\norm{X\vv}_2^2 - \norm{\vv}_2^2} > \epsilon\cdot\norm{\vv}_2^2} \leq 2\exp(-\Omega(n))
\]
Then, for any $k < p/2$, matrices $X$ generated from this distribution also satisfy the RIP property at order $k$ with constant $\delta$ with probability at least $1 - 2\exp(-\Omega(n))$ whenever $n \geq \Omega\br{\frac{k}{\delta^2}\log\frac{p}{k}}$.
\end{theorem}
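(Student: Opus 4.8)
The plan is to establish RIP via the classical three-ingredient recipe: reduce the requirement to a single fixed sparse support, control a finite net on the corresponding low-dimensional subspace using the hypothesized concentration inequality, and then union bound over all supports and all net points. First I would observe that verifying RIP of order $k$ amounts to showing, for \emph{every} $k$-subset $S \subseteq [p]$, that the rescaled coordinate submatrix $\frac{1}{\sqrt n}X_S$ approximately preserves norms on the $k$-dimensional coordinate subspace $\bc{\vv \in \bR^p : \supp(\vv) \subseteq S}$. Since there are $\binom{p}{k}$ such supports, the outer layer of the argument will be a union bound over all of them, and within each support it suffices to control the quadratic form on the unit sphere of that subspace.

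Next, for a fixed support $S$, I would discretize the unit sphere of its subspace with a $\rho$-net $\cN_S$ (say $\rho \asymp \delta$). A standard volumetric estimate gives $\abs{\cN_S} \leq \br{1 + 2/\rho}^k$. Applying the hypothesized tail bound with accuracy $\epsilon \asymp \delta$ to each individual net point and union bounding over $\cN_S$ ensures that $\abs{\norm{X\vv}_2^2 - \norm{\vv}_2^2} \leq \epsilon\cdot\norm{\vv}_2^2$ holds simultaneously for all $\vv \in \cN_S$, except with probability at most $2\abs{\cN_S}\exp(-\Omega(\delta^2 n))$. The key point, somewhat hidden inside the statement's $\Omega(n)$, is that the concentration exponent scales like $\epsilon^2$; this quadratic dependence on the accuracy is precisely what manufactures the $1/\delta^2$ factor in the final sample requirement.

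The third step is the net-to-sphere extension: control of the quadratic deviation on the net must be shown to propagate to the entire sphere. Writing $\delta_S$ for the smallest constant such that $\abs{\norm{X\vv}_2^2 - \norm{\vv}_2^2} \leq \delta_S$ holds for all unit vectors $\vv$ in the subspace, a short argument bounds $\delta_S$ by the net accuracy $\epsilon$ plus a multiple of $\rho\cdot\delta_S$ (the error incurred by snapping an arbitrary unit vector to its nearest net point). Choosing $\rho$ small enough lets one absorb the second term into the left-hand side and conclude $\delta_S \leq \delta$. Finally I would collect the counting: the total failure probability is at most $\binom{p}{k}\br{1+2/\rho}^k \cdot 2\exp(-\Omega(\delta^2 n))$; using $\binom{p}{k}\leq(ep/k)^k$ and $\br{1+2/\rho}^k = \exp(O(k))$ bounds the combinatorial prefactor by $\exp\br{O\br{k\log(p/k)}}$, and demanding that this be dominated by the exponential decay forces $n = \Omega\br{\delta^{-2}k\log(p/k)}$, at which point the residual failure probability is again $2\exp(-\Omega(n))$, exactly as claimed.

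I expect the net-to-sphere extension to be the main obstacle, since it requires care to track the interplay between the net resolution $\rho$, the concentration accuracy $\epsilon$, and the target RIP constant $\delta$ so that the self-referential inequality for $\delta_S$ actually closes and yields the clean bound $\delta_S \leq \delta$. Everything else — the volumetric net bound, the per-point tail estimate, and the combinatorial accounting of $\binom{p}{k}$ supports — is routine union-bound bookkeeping.
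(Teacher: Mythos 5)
The monograph states this result without proof, citing it directly from \citet{BaraniukDdVW2008}, so there is no in-paper argument to compare against; your proposal is, however, precisely the canonical proof from that reference (fix a support, cover its unit sphere with a $\rho$-net of size $(1+2/\rho)^k$, apply the concentration hypothesis at each net point, close the self-referential inequality to pass from the net to the whole sphere, and union bound over the $\binom{p}{k}\leq(ep/k)^k$ supports), and it is correct. One remark worth keeping: you rightly observe that the theorem as stated hides the accuracy dependence inside $\Omega(n)$, and that the argument only yields the claimed $n = \Omega\br{\frac{k}{\delta^2}\log\frac{p}{k}}$ if the concentration exponent is read as $\Omega(\epsilon^2 n)$ — this is exactly the form assumed in the cited source, so your reading is the intended one, but it is an implicit strengthening of the hypothesis as literally written here.
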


Thus, a distribution over matrices that, for every \emph{fixed} vector, acts as an almost isometry with high probability, is also guaranteed to, with very high probability, generate matrices that act as a restricted isometry \emph{simulataneously} over all sparse vectors. Such matrix distributions are easy to construct -- one simply needs to sample each entry of the matrix independently according to one of the following distributions:
\begin{enumerate}
	\item sample each entry from the Gaussian distribution $\cN(0,1/n)$.
	\item set each entry to $\pm 1/\sqrt n$ with equal probability.
	\item set each entry to $0$ w.p. $2/3$ and $\pm \sqrt{3/n}$ w.p. $1/6$.
\end{enumerate}

The work of \citet{AgarwalNW2012} shows that the RSC/RSS properties are satisfied whenever rows of the matrix $X$ are drawn from a sub-Gaussian distribution over $p$-dimensional vectors with a non-singular covariance matrix. This result is useful since it shows that real-life data, which can often be modeled as vectors being drawn from sub-Gaussian distributions, will satisfy these properties with high probability. This is crucial for sparse recovery and other algorithms to be applicable to real life problems such as the gene-expression analysis problem.

If one can tolerate a slight blowup in the number of rows of the matrix $X$, then there exist better constructions with the added benefit of allowing fast matrix vector products. The initial work of \citet{CandesT2005} itself showed that selecting each row of a Fourier transform matrix independently with probability $\bigO{k\frac{\log^6 p}{p}}$ results in an RIP matrix with high probability. More recently, this was improved to $\bigO{k\log^2 k\frac{\log p}{p}}$ in the work of \cite{HavivR17}. A matrix-vector product of a $k$-sparse vector with such a matrix takes only $\bigO{k\log^2 p}$ time whereas a dense matrix filled with Gaussians would have taken up to $\bigO{k^2\log p}$ time. There exist more involved hashing-based constructions that simultaneously offer reduced sample complexity and fast matrix-vector multiplications \citep{NelsonPW2014}.\\

\noindent\textbf{Deterministic Designs}: There exist far fewer and far weaker results for deterministic constructions of RIP matrices. The initial results in this direction all involved constructing \emph{incoherent} matrices. A matrix $X \in \bR^{n \times p}$ with unit norm columns is said to be $\mu$-incoherent if for all $i \neq j \in [p]$, $\ip{X_i}{X_j} \leq \mu$. A $\mu$-incoherent matrix always satisfies\elink{exer:spreg-inc-rip} RIP at order $k$ with parameter $\delta = (k-1)\mu$.

Deterministic constructions of incoherent matrices with $\mu = \bigO{\frac{\log p}{\sqrt n\log n}}$ are well known since the work of \citet{Kashin1975}. However, such constructions require $n = \tilde\Omega\br{\frac{k^2\log^2 p}{\delta^2}}$ rows which is quadratically more than what random designs require. The first result to improve upon these constructions came from the work of \citet{BourgainDFKK2011} which gave deterministic combinatorial constructions that assured the RIP property with $n = \softO{\frac{k^{(2-\epsilon)}}{\delta^2}}$ for some constant $\epsilon > 0$. However, till date, substantially better constructions are not known.

\section{A Sparse Recovery Guarantee for IHT}
We will now establish a convergence result for the IHT algorithm. Although the analysis for the gPGD algorithm (Theorem~\ref{thm:gpgd-rsc-rss-proof}) can be adapted here, the following proof is much more tuned to the sparse recovery problem and offers a tighter analysis and several problem-specific insights.
\begin{theorem}
\label{thm:iht-conv-proof-rip}
Suppose $X \in \R^{n \times p}$ is a design matrix that satisfies the RIP property of order $3s$ with constant $\delta_{3s} < \frac{1}{2}$. Let $\bto \in B_0(s) \subset \R^p$ be any arbitrary sparse vector and let $\y = X\bto$. Then the IHT algorithm (Algorithm~\ref{algo:iht}), when executed with a step length $\eta = 1$, and a projection sparsity level $k = s$, ensures $\norm{\btt - \bto}_2 \leq \epsilon$ after at most $t = \bigO{\log\frac{\norm{\bto}_2}{\epsilon}}$ iterations of the algorithm.
\end{theorem}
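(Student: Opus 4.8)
The plan is to establish a per-iteration \emph{contraction} of the form $\norm{\bt^{t+1} - \bto}_2 \le 2\delta_{3s}\cdot\norm{\bt^t - \bto}_2$, and then, since $\delta_{3s} < \frac12$ makes the factor $2\delta_{3s} < 1$, to iterate this bound from the initialization $\bt^1 = \vzero$ to obtain geometric convergence. With the step length fixed to $\eta = 1$ and the noiseless model $\y = X\bto$, the gradient step becomes $\z^{t+1} = \bt^t - \frac1n X^\top X(\bt^t - \bto)$, so the entire argument can be phrased in terms of the error vector $\bt^t - \bto$.

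First I would isolate the supports that matter. Writing $S^\ast = \supp(\bto)$, $S_t = \supp(\bt^t)$ and $S_{t+1} = \supp(\bt^{t+1})$, the key observation is that the new error $\bt^{t+1} - \bto$ lives on $\Gamma := S_{t+1}\cup S^\ast$ (of size at most $2s$), while the operator analysis takes place on $S := S_t\cup S_{t+1}\cup S^\ast$ (of size at most $3s$). Since $\bt^{t+1} = \Pi_{\B_0(s)}(\z^{t+1})$ is the best $s$-sparse approximation of $\z^{t+1}$ and $\bto$ is itself $s$-sparse, Projection Property-O (Lemma~\ref{lem:proj-prop-o}) gives $\norm{\z^{t+1} - \bt^{t+1}}_2 \le \norm{\z^{t+1} - \bto}_2$. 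Outside $\Gamma$ both $\bt^{t+1}$ and $\bto$ vanish, so the contributions of those coordinates are identical on the two sides and cancel; restricting the inequality to $\Gamma$ and then applying the triangle inequality yields
\[
\norm{\bt^{t+1} - \bto}_2 \le 2\cdot\norm{(\z^{t+1} - \bto)|_\Gamma}_2,
\]
which crucially replaces the full-support vector $\z^{t+1} - \bto$ by its restriction to a set of size at most $2s$.

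Next I would bring in the RIP. Writing $P_\Gamma$ for the coordinate restriction onto $\Gamma$, we have $(\z^{t+1} - \bto)|_\Gamma = P_\Gamma\bigl(I - \tfrac1n X^\top X\bigr)(\bt^t - \bto)$, and since $\bt^t - \bto$ is supported on $S$ and $\Gamma\subseteq S$, this operator may be sandwiched between projections onto $S$. The RIP of order $3s$ says precisely that the eigenvalues of $\frac1n X_S^\top X_S$ lie in $[1-\delta_{3s},\,1+\delta_{3s}]$, hence $\norm{I_{|S|} - \tfrac1n X_S^\top X_S}_2 \le \delta_{3s}$; because the restriction operators have norm at most one, this gives $\norm{(\z^{t+1} - \bto)|_\Gamma}_2 \le \delta_{3s}\norm{\bt^t - \bto}_2$ and therefore the desired contraction with ratio $2\delta_{3s}$.

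Finally, iterating $\norm{\bt^{t+1} - \bto}_2 \le (2\delta_{3s})^t\norm{\bto}_2$ (using $\bt^1 = \vzero$) shows that $t = \bigO{\log(\norm{\bto}_2/\epsilon)}$ iterations suffice, as claimed. I expect the main obstacle to be the second step: the naive bound $\norm{\bt^{t+1}-\bto}_2 \le 2\norm{\z^{t+1}-\bto}_2$ is useless because $\z^{t+1} - \bto$ has dense support, on which RIP says nothing. It is the careful support-cancellation argument that confines the analysis to the $3s$-sized set $S$, simultaneously making RIP applicable and producing the exact $\delta_{3s} < \frac12$ threshold.
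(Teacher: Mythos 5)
Your proposal is correct and follows essentially the same route as the paper's proof: Projection Property-O, cancellation of the coordinates outside the union of supports, the triangle inequality to get the factor of $2$, and the order-$3s$ RIP bound $\norm{I - \frac{1}{n}X_S^\top X_S}_2 \le \delta_{3s}$ to obtain the contraction ratio $2\delta_{3s} < 1$. The only cosmetic difference is that you restrict the projection inequality to $\Gamma = S_{t+1}\cup S^\ast$ of size $2s$ before enlarging to the $3s$-sized set for the RIP step, whereas the paper works with $I^t = S^t\cup S^{t+1}\cup S^\ast$ throughout; your operator-norm phrasing of the final RIP bound is, if anything, slightly cleaner than the paper's.
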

\begin{proof}
We start off with some notation. Let $S^\ast := \supp(\bto)$ and $S^t := \supp(\btt)$. Let $I^t := S^t \cup S^{t+1} \cup S^\ast$ denote the union of the supports of the two consecutive iterates and the optimal model. The reason behind defining this quantity is that we are assured that while analyzing this update step, the two \emph{error vectors} $\btt - \bto$ and $\btn - \bto$, which will be the focal point of the analysis, have support within $I^t$. Note that $|I^t| \leq 3s$. Please refer to the notation section at the beginning of this monograph for the interpretation of the notation $\x_I$ and $A_I$ for a vector $\x$, matrix $A$ and set $I$.

With $\eta = 1$, we have (refer to Algorithm~\ref{algo:iht}), $\z^{t+1} = \btt - \frac{1}{n}X^\top(X\btt - \y)$. However, due to the (non-convex) projection step $\btn = \Pi_{\B_0(k)}(\z^{t+1})$, applying projection property-O gives us
\[
\norm{\btn - \z^{t+1}}_2^2 \leq \norm{\bto - \z^{t+1}}_2^2.
\]
Note that none of the other projection properties are applicable here since the set of sparse vectors is a non-convex set. Now, by Pythagoras' theorem, for any vector $\vv \in \R^p$, we have $\norm{\vv}_2^2 = \norm{\vv_I}_2^2 + \norm{\vv_{\bar I}}_2^2$ which gives us
\[
\norm{\btn_I - \z^{t+1}_I}_2^2 + \norm{\btn_{\bar I} - \z^{t+1}_{\bar I}}_2^2  \leq \norm{\bto_I - \z^{t+1}_I}_2^2 + \norm{\bto_{\bar I} - \z^{t+1}_{\bar I}}_2^2
\]
Now it is easy to see that $\btn_{\bar I} = \bto_{\bar I} = \vzero$. Hence we have
\[
\norm{\btn_I - \z^{t+1}_I}_2 \leq \norm{\bto_I - \z^{t+1}_I}_2
\]
Using the fact that $y = X\bto$, and denoting $\bar X = \frac{1}{\sqrt n} X$ we get
\[
\norm{\btn_I - (\btt_I - \bar X_I^\top \bar X(\btt - \bto))}_2 \leq \norm{\bto_I - (\btt_I - \bar X_I^\top \bar X(\btt - \bto))}_2
\]
Adding and subtracting $\bto$ from the expression inside the norm operator on the LHS, rearranging, and applying the triangle inequality for norms gives us
\[
\norm{\btn_I - \bto_I}_2 \leq 2\norm{(\btt_I - \bto_I) - \bar X_I^\top \bar X(\btt - \bto)}_2
\]
As $\btt_I = \btt, \btn_I = \btn$, observing that $X(\btt - \bto) = X_I(\btt - \bto)$ gives us
\begin{align*}
\norm{\btn - \bto}_2 &\leq 2\norm{(I - \bar X_I^\top\bar X_I)(\btt - \bto)}_2\\
										 &\leq 2\br{\norm{\btt - \bto}_2 - \norm{\bar X_I^\top\bar X_I(\btt - \bto)}_2}\\
										 &\leq 2\delta_{3s}\norm{\btt - \bto}_2,
\end{align*}
which finishes the proof. The second inequality above follows due to the triangle inequality and the third inequality follows from the fact that RIP implies\elink{exer:spreg-rip-eigen} that for any $\abs I \leq 3s$, the smallest eigenvalue of the matrix $\bar X_I^\top\bar X_I$ is lower bounded by $(1-\delta_{3s})$.
\end{proof}

We note that this result holds even if the hard thresholding level is set to $k > s$. It is easy to see that the condition $\delta_{3s} < \frac 12$ is equivalent to the \emph{restricted condition number} (over $3s$-sparse vectors) of the corresponding sparse recovery problem being upper bounded by $\kappa_{3s} < 3$. Similar to Theorem~\ref{thm:gpgd-rsc-rss-proof}, here also we require an upper bound on the restricted condition number of the problem. It is interesting to note that a direct application of Theorem~\ref{thm:gpgd-rsc-rss-proof} would have instead required $\delta_{2s} < \frac 13$ (or equivalently $\kappa_{2s} < 2$) which can be shown to be a harsher requirement than what we have achieved. Moreover, applying Theorem~\ref{thm:gpgd-rsc-rss-proof} would have also required us to set the step length to a specific quantity $\eta = \frac{1}{1+\delta_s}$ while executing the gPGD algorithm whereas while executing the IHT algorithm, we need only set $\eta = 1$.

An alternate proof of this result appears in the work of \cite{GargK2009} which also requires the condition $\delta_{2s} < \frac{1}{3}$. The above result extends to a more general setting where there is additive noise in the model $\y = X\bto + \veta$. In this setting, it is known (see for example, \cite[][Theorem 3]{JainTK2014} or \cite[][Theorem 2.3]{GargK2009}) that if the objective function in question (for the sparse recovery problem the objective function is the least squares objective) satisfies the $(\alpha,\beta)$ RSC/RSS properties at level $2s$, then the following is guaranteed for the output $\hat\bt$ of the IHT algorithm (assuming the algorithm is run for roughly $\bigO{\log n}$ iterations)
\[
\norm{\hat\bt - \bto}_2 \leq \frac{3\sqrt s}{\alpha}\norm{\frac{X^\top\veta}{n}}_\infty
\]
The consistency of the above solution can be verified in several interesting situations. For example, if the design matrix has normalized columns i.e. $\norm{X_i}_2 \leq \sqrt{n}$ and the noise $\eta_i$ is generated i.i.d. and independently of the design $X$ from some Gaussian distribution $\cN(0,\sigma^2)$, then the quantity $\norm{\frac{X^\top\veta}{n}}_\infty$ is of the order of $\sigma\sqrt{\frac{\log p}{n}}$ with high probability. In the above setting IHT guarantees with high probability
\[
\norm{\hat\bt - \bto}_2 \leq \softO{\frac{\sigma}{\alpha}\sqrt\frac{s\log p}{n}},
\]
i.e. $\norm{\hat\bt - \bto}_2 \rightarrow 0$ as $n \rightarrow \infty$, thus establishing consistency.

\section{Other Popular Techniques for Sparse Recovery}
\label{sec:spreg-other}
The IHT method is a part of a larger class of \emph{hard thresholding techniques}, which include algorithms such as Iterative Hard Thresholding (IHT) \citep{Blumensath2011}, Gradient Descent with Sparsification (GraDeS) \citep{GargK2009}, and Hard Thresholding Pursuit (HTP) \citep{Foucart2011}. Apart from these gradient descent-style techniques, several other approaches have been developed for the sparse recovery problem over the years. Here we briefly survey them.

\subsection{Pursuit Techniques}
A popular non-convex optimization technique for sparse recovery and a few related optimization problems is that of discovering support elements iteratively. This technique is embodied in pursuit-style algorithms. We warn the reader that the popular \emph{Basis Pursuit} algorithm is actually a convex relaxation technique and not related to the other pursuit algorithms we discuss here. The terminology is a bit confusing but seems to be a matter of legacy.

The pursuit family of algorithms includes Orthogonal Matching Pursuit (OMP) \citep{TroppG2007}, Orthogonal Matching Pursuit with Replacement (OMPR) \citep{JainTD2011}, Compressive Sampling Matching Pursuit (CoSaMP) \citep{NeedellT2008}, and the Forward-backward (FoBa) algorithm \citep{Zhang2011}.

Pursuit methods work by gradually discovering the elements in the support of the true model vector $\bto$. At every time step, these techniques add a new support element to an active support set (which is empty to begin with) and solve a traditional least-squares problem on the active support set. This least-squares problem has no sparsity constraints, and is hence a convex problem which can be solved easily.

The support set is then updated by adding a new support element. It is common to add the coordinate where the gradient of the objective function has the highest magnitude among coordinates not already in the support. FoBa-style techniques augment this method by having \emph{backward} steps where support elements that were erroneously picked earlier are discarded when the error is detected.

Pursuit-style methods are, in general, applicable whenever the structure in the (non-convex) constraint set in question can be represented as a combination of a small number of \emph{atoms}. Examples include sparse recovery, where the atoms are individual coordinates: every $s$-sparse vector is a linear combination of some $s$ of these atoms.

Other examples include low-rank matrix recovery, which we will study in detail in \S~\ref{chap:matrec}, where the atoms are rank-one matrices. The SVD theorem tells us that every $r$-rank matrix can indeed be expressed as a sum of $r$ rank-one matrices. There exist works \citep{TewariRD2011} that give generic methods to perform sparse recovery in such structurally constrained settings.

\subsection{Convex Relaxation Techniques for Sparse Recovery}
Convex relaxation techniques have been extremely popular for the sparse recovery problem. In fact they formed the first line of attack on non-convex optimization problems starting with the seminal work of \cite{CandesT2005,CandesRT2006,Donoho2006} that, for the first time, established polynomial time, globally convergent solutions for the compressive sensing problem.

A flurry of work then followed on relaxation-based techniques \citep{Candes2008,DonohoMM2009,Foucart2010,NegahbanRWY2012} that vastly expanded the scope of the problems being studied, the techniques being applied, as well as their analyses. It is important to note that all methods, whether relaxation based or not, have to assume some design property such as NSP/REP/RIP/RSC-RSS that we discussed earlier, in order to give provable guarantees.

The relaxation approach converts non-convex problems to convex problems first before solving them. This approach, applied to the sparse regression problem, gives us the so-called LASSO problem which has been studied extensively. Consider the sparse recovery problem \eqref{eq:spreg}.
\begin{equation*}
\underset{\substack{\bt\in\R^p\\\norm{\bt}_0 \leq s}}{\min}\ \norm{\y - X\bt}_2^2.
\end{equation*}
Non-convexity arises in the problem due to the non-convex constraint $\norm{\bt}_0 \leq s$ as the sparsity operator is not a valid norm. The relaxation approach fixes this problem by changing the constraint to use the $L_1$ norm instead i.e.
\begin{equation}
\underset{\substack{\bt\in\R^p\\\norm{\bt}_1 \leq R}}{\min}\ \norm{\y - X\bt}_2^2,\tag*{(LASSO-1)}\label{eq:lasso-1}
\end{equation}
or by using its regularized version instead
\begin{equation}
\underset{\substack{\bt\in\R^p}}{\min}\ \frac{1}{2n}\norm{\y - X\bt}_2^2 + \lambda_n\norm{\bt}_1.\tag*{(LASSO-2)}\label{eq:lasso-2}
\end{equation}
The choice of the $L_1$ norm is motivated mainly by its convexity as well as formal results that assure us that the relaxation gap is small or non-existent. Both the above formulations \eqref{eq:lasso-1} and \eqref{eq:lasso-2} are indeed convex but include parameters such as $R$ and $\lambda_n$ that must be tuned properly to ensure proper convergence. Although the optimization problems \eqref{eq:lasso-1} and \eqref{eq:lasso-2} are vastly different from \eqref{eq:spreg}, a long line of beautiful results, starting from the seminal work of \cite{CandesT2005,CandesRT2006,Donoho2006}, showed that if the design matrix $X$ satisfies RIP with appropriate constants, and if the parameters of the relaxations $R$ and $\lambda_n$ are appropriately tuned, then the solutions to the relaxations are indeed solutions to the original problems as well.

Below we state one such result from the recent text by \citet{HastieTW2016}. We recommend this text to any reader looking for a well-curated compendium of techniques and results on the relaxation approach to several non-convex optimization problems arising in machine learning.

\begin{theorem}\cite[Theorem 11.1]{HastieTW2016}
Consider a sparse recovery problem $\y = X\bto + \veta$ where the model $\bto$ is $s$-sparse and the design matrix $X$ satisfies the restricted-eigenvalue condition (see Definition~\ref{def:re}) of the order $s$ with constant $\alpha$, then the following hold
\begin{enumerate}
	\item Any solution $\hat\bt_1$ to \eqref{eq:lasso-1} with $R = \norm{\bto}_1$ satisfies
	\[
	\norm{\hat\bt_1 - \bto}_2 \leq \frac{4}{\alpha}\sqrt s \norm{\frac{X^\top\veta}{n}}_\infty.
	\]
	\item Any solution $\hat\bt_2$ to \eqref{eq:lasso-2} with $\lambda_n \geq 2\norm{X^\top\veta/n}_\infty$ satisfies
	\[
	\norm{\hat\bt_1 - \bto}_2 \leq \frac{3}{\alpha}\sqrt s \lambda_n.
	\]
\end{enumerate}
\end{theorem}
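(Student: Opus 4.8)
The proof rests on three standard ingredients that I would combine in both parts: a \emph{basic inequality} coming from optimality/feasibility of the LASSO solution, a \emph{cone condition} confining the error vector to a set on which the restricted-eigenvalue (RE) property bites, and the RE property itself. Throughout, write $\Delta := \hat\bt_1 - \bto$ (resp. $\hat\bt_2 - \bto$) for the error vector, let $S := \supp(\bto)$ with $\abs{S} \leq s$, and recall $\bto_{\bar S} = \vzero$. In each case the aim is to turn an upper bound on $\tfrac1n\norm{X\Delta}_2^2$ of the form $(\text{const})\cdot\norm{\Delta}_1$ into a bound on $\norm{\Delta}_2$, by using RE to lower bound $\tfrac1n\norm{X\Delta}_2^2$ by $\alpha\norm{\Delta}_2^2$ and the cone condition to convert $\norm{\Delta}_1$ into $\sqrt{s}\,\norm{\Delta}_2$.

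For part 1 (the constrained form \eqref{eq:lasso-1} with $R = \norm{\bto}_1$), first I would derive the cone condition. Since $\bto$ is feasible and $\hat\bt_1$ optimal, $\norm{\hat\bt_1}_1 \leq \norm{\bto}_1$; decomposing over $S$ and $\bar S$ and using the triangle inequality gives $\norm{\Delta_{\bar S}}_1 \leq \norm{\Delta_S}_1$, i.e. $\Delta \in \cC(S) \subset \cC(s)$. Next comes the basic inequality: optimality gives $\norm{\y - X\hat\bt_1}_2^2 \leq \norm{\y - X\bto}_2^2$, and substituting $\y = X\bto + \veta$ and expanding yields
\[
\tfrac1n\norm{X\Delta}_2^2 \leq \tfrac2n\ip{X^\top\veta}{\Delta} \leq 2\norm{\tfrac{X^\top\veta}{n}}_\infty\norm{\Delta}_1,
\]
the last step by H\"older. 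Applying RE on the left (legitimate since $\Delta \in \cC(s)$) and, on the right, the cone bound $\norm{\Delta}_1 \leq 2\norm{\Delta_S}_1 \leq 2\sqrt{s}\,\norm{\Delta}_2$ (Cauchy--Schwarz, $\abs{S} \leq s$), gives $\alpha\norm{\Delta}_2^2 \leq 4\sqrt{s}\,\norm{\frac{X^\top\veta}{n}}_\infty\norm{\Delta}_2$, and dividing through by $\norm{\Delta}_2$ yields the claimed bound.

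For part 2 (the regularized form \eqref{eq:lasso-2}), the structure is identical, but the constant $\lambda_n$ enters through the regularizer. From $G(\hat\bt_2) \leq G(\bto)$ with $G(\bt) = \frac{1}{2n}\norm{\y - X\bt}_2^2 + \lambda_n\norm{\bt}_1$ I would obtain
\[
\tfrac{1}{2n}\norm{X\Delta}_2^2 \leq \norm{\tfrac{X^\top\veta}{n}}_\infty\norm{\Delta}_1 + \lambda_n\br{\norm{\bto}_1 - \norm{\hat\bt_2}_1}.
\]
The hypothesis $\lambda_n \geq 2\norm{X^\top\veta/n}_\infty$ lets me absorb the noise term into $\tfrac{\lambda_n}{2}\norm{\Delta}_1$, and using $\norm{\bto}_1 - \norm{\hat\bt_2}_1 \leq \norm{\Delta_S}_1 - \norm{\Delta_{\bar S}}_1$ gives, after dropping the nonnegative left side, the cone condition $\norm{\Delta_{\bar S}}_1 \leq 3\norm{\Delta_S}_1$. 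Feeding this back into the same inequality bounds the right side by $\tfrac{3\lambda_n}{2}\norm{\Delta_S}_1 \leq \tfrac{3\lambda_n}{2}\sqrt{s}\,\norm{\Delta}_2$; combining with RE, which now reads $\tfrac{1}{2n}\norm{X\Delta}_2^2 \geq \tfrac{\alpha}{2}\norm{\Delta}_2^2$, and cancelling $\norm{\Delta}_2$ delivers $\norm{\Delta}_2 \leq \frac{3}{\alpha}\sqrt{s}\,\lambda_n$.

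The main obstacle I anticipate is not any single calculation but ensuring the RE property is invoked over the \emph{correct} cone. In part 1 the error lies in $\cC(s)$, exactly the set in Definition~\ref{def:re}, so RE applies verbatim. In part 2, however, the error lies in the strictly larger cone $\cC_3(S) := \bc{\w : \norm{\w_{\bar S}}_1 \leq 3\norm{\w_S}_1}$, and $\cC_3(S) \not\subseteq \cC(s)$ in general; a careful statement therefore needs the RE lower bound to hold over this enlarged cone (equivalently, to quote a version of Definition~\ref{def:re} with cone constant $3$). I would flag this explicitly, and note that the enlarged-cone RE is precisely what random sub-Gaussian designs furnish (cf. \S~\ref{sec:rip-ensure}), so the requirement is mild and the bound non-vacuous.
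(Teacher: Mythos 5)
Your proof is correct. Note that the paper itself offers no proof of this statement -- it is quoted verbatim from \citet{HastieTW2016} -- so the only meaningful comparison is with the standard argument in that reference, which is exactly what you have reproduced: basic inequality from optimality, cone condition on the error, H\"older to extract $\norm{X^\top\veta/n}_\infty$, Cauchy--Schwarz to trade $\norm{\Delta_S}_1$ for $\sqrt{s}\,\norm{\Delta}_2$, and the restricted-eigenvalue lower bound to close the loop. All the constants check out in both parts. Your closing caveat is also substantively right and worth keeping: in part 2 the error only lands in the cone $\bc{\w : \norm{\w_{\bar S}}_1 \leq 3\norm{\w_S}_1}$, which strictly contains the set $\cC(s)$ over which Definition~\ref{def:re} as stated in this monograph imposes the lower bound, so the theorem as transcribed here implicitly requires the RE condition over the enlarged cone (this is how the condition is actually stated in the cited text, with a tunable cone constant). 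That mismatch is an imprecision in the monograph's transcription, not a gap in your argument.
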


The reader can verify that the above bounds are competitive with the bounds for the IHT algorithm that we discussed previously. We refer the reader to \cite[Chapter 11]{HastieTW2016} for more consistency results for the LASSO formulations.

\subsection{Non-convex Regularization Techniques}
Instead of performing a complete convex relaxation of the problem, there exist approaches that only partly relax the problem. A popular approach in this direction uses $L_q$ regularization with $0 < q < 1$ \citep{Chartrand2007,Foucart-Lai2009,WangXT2011}. The resulting problem still remains non-convex but becomes a little well behaved in terms of having objective functions that are almost-everywhere differentiable. For instance, the following optimization problem may be used to solve the noiseless sparse recovery problem.
\begin{align*}
\underset{\bt\in\R^p}{\min}&\ \norm{\bt}_q,\\
\text{s.t.}&\ \y = X\bt
\end{align*}
For noisy settings, one may replace the constraint with a soft constraint such as $\norm{\y - X\bt}_2 \leq \epsilon$, or else move to an unconstrained version like LASSO with the $L_1$ norm replaced by the $L_q$ norm. The choice of the regularization norm $q$ is dictated by application and usually any value within a certain range within the interval $(0,1)$ can be chosen.

There has been interest in characterizing both the global and the local optima of these optimization problems for their recovery properties \citep{Chen-Gu2015}. In general, $L_q$ regularized formulations, if solved exactly, can guarantee recovery under much weaker conditions than what LASSO formulations, and IHT require. For instance, the RIP condition that $L_q$-regularized formulations need in order to guarantee universal recovery can be as weak as $\delta_{2k+1} < 1$ \citep{Chartrand2007}. This is very close to the requirement $\delta_{2k} < 1$ that must be made by any algorithm in order to ensure that the solution even be unique. However, solving these non-convex regularized problems at large scale itself remains challenging and an active area of research.

\begin{figure}[t]
\centering \includegraphics[width=0.5\columnwidth]{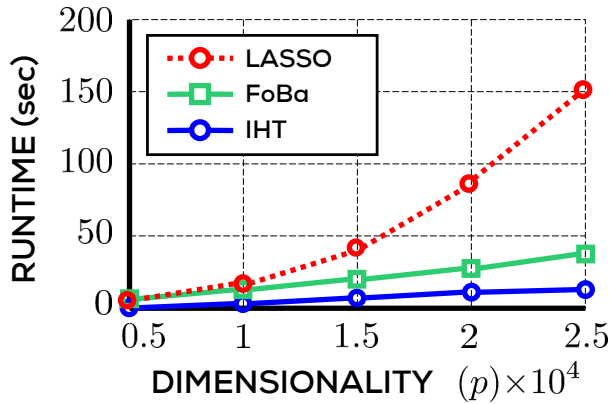}
\caption[Relaxation vs. Non-convex Methods for Sparse Recovery]{An empirical comparison of run-times offered by the LASSO, FoBA and IHT methods on sparse regression problems with varying dimensionality $p$. All problems enjoyed sparsity $s = 100$ and were offered $n = 2s\cdot\log p$ data points. IHT is clearly the most scalable of the methods followed by FoBa. The relaxation technique does not scale very well to high dimensions. Figure adapted from \citep{JainTK2014}.}%
\label{fig:spreg-comparison-spreg}
\end{figure}

\subsection{Empirical Comparison}
To give the reader an appreciation of the empirical performance of the various methods we have discussed for sparse recovery, Figure~\ref{fig:spreg-comparison-spreg} provides a comparison of some of these methods on a synthetic sparse regression problem. The graph plots the running time taken by the various methods to solve the same sparse linear regression problem with sparsity $s = 100$ but with dimensionalities increasing from $p = 5000$ to $p= 25000$. The graph indicates that non-convex optimization methods such as IHT and FoBa are far more scalable than relaxation-based methods. It should be noted that although pursuit-style techniques are scalable, they can become sluggish if the true support set size $s$ is not very small since these techniques discover support elements one by one.

\section{Extensions}
\label{sec:spreg-ext}
In the preceding discussion, we studied the problem of sparse linear regression and the IHT technique to solve the problem. These basic results can be augmented and generalized in several ways. The work of \cite{NegahbanRWY2012} greatly expanded the scope of sparse recovery techniques beyond simple least-squares to the more general M-estimation problem. The work of \cite{BhatiaJK2015} offered solutions to the \emph{robust} sparse regression problem  where the responses may be corrupted by an adversary. We will explore the robust regression problem in more detail in \S~\ref{chap:rreg}. We discuss a few more such extensions below.

\subsection{Sparse Recovery in Ill-Conditioned Settings}
As we discussed before, the bound on the RIP constant $\delta_{3s} < \frac{1}{2}$ as required by Theorem~\ref{thm:iht-conv-proof-rip}, effectively places a bound on the \emph{restricted condition number} $\kappa_{3s}$ of the design matrix. In our case the bound translates to $\kappa_{3s} = \frac{1 + \delta_{3s}}{1 - \delta_{3s}} < 3$. However, in cases such as the gene expression analysis problem where the design matrix is not totally under our control, the restricted condition number might be much larger than $3$.

For instance, it can be shown that if the expression levels of two genes are highly correlated then this results in ill-conditioned design matrices. In such settings, it is much more appropriate to assume that the design matrix satisfies restricted strong convexity and smoothness (RSC/RSS) which allows us to work with design matrices with arbitrarily large condition numbers. It turns out that the IHT algorithm can be modified \cite[see for example,][]{JainTK2014} to work in these ill-conditioned recovery settings.

\begin{theorem}
\label{thm:iht-conv-proof-rsc-rss}
Suppose $X \in \R^{n \times p}$ is a design matrix that satisfies the restricted strong convexity and smoothness property of order $2k+s$ with constants $\alpha_{2k+s}$ and $\beta_{2k+s}$ respectively. Let $\bto \in B_0(s) \subset \R^p$ be any arbitrary sparse vector and let $\y = X\bto$. Then the IHT algorithm, when executed with a step length $\eta < \frac{2}{\beta_{2k+s}}$, and a projection sparsity level $k \geq 32\br{\frac{\beta_{2k+s}}{\alpha_{2k+s}}}^2s$, ensures $\norm{\btt - \bto}_2 \leq \epsilon$ after $t = \bigO{\frac{\beta_{2k+s}}{\alpha_{2k+s}}\log\frac{\norm{\bto}_2}{\epsilon}}$ iterations of the algorithm.
\end{theorem}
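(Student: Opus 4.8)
The plan is to run the entire analysis in terms of the least-squares objective $f(\bt) = \frac{1}{2n}\norm{X\bt - \y}_2^2$, whose gradient $\frac1n X^\top(X\bt - \y)$ is exactly what the IHT update uses. First I would translate the matrix RSC/RSS property (Definition~\ref{defn:rsc-rss}) into the functional RSC/RSS property (Definition~\ref{defn:res-strong-cvx-smooth-fn}): for any $\u,\v$ whose difference is $(2k+s)$-sparse, the Bregman residual $f(\v) - f(\u) - \ip{\nabla f(\u)}{\v-\u} = \frac1{2n}\norm{X(\v-\u)}_2^2$ lies between $\frac{\alpha_{2k+s}}{2}\norm{\u-\v}_2^2$ and $\frac{\beta_{2k+s}}{2}\norm{\u-\v}_2^2$. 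Since $\y = X\bto$ and $\bto$ is $s$-sparse, $\bto$ is the global minimizer with $f(\bto)=0$ and $\nabla f(\bto) = \vzero$, and I would track the potential $\Phi_t = f(\btt) - f(\bto)$. The device I would exploit is that, with $\eta = 1/\beta$, the update $\btn = \Pi_{\B_0(k)}\br{\btt - \frac1\beta\nabla f(\btt)}$ is precisely the minimizer over $\B_0(k)$ of the RSS quadratic surrogate $\hat f_{\btt}(\bt) := f(\btt) + \ip{\nabla f(\btt)}{\bt - \btt} + \frac\beta2\norm{\bt - \btt}_2^2$, because completing the square rewrites $\hat f_{\btt}(\bt)$ as $\frac\beta2\norm{\bt - \z^{t+1}}_2^2$ plus a term independent of $\bt$. (For the stated range $\eta < 2/\beta$ the same computation goes through with $\beta$ replaced by $1/\eta$, changing only constants.)

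The per-step engine is then the chain $f(\btn) \le \hat f_{\btt}(\btn) \le \hat f_{\btt}(\bt)$ valid for every $k$-sparse $\bt$, where the first inequality is RSS (applied to the $2k$-sparse vector $\btn - \btt$) and the second is optimality of $\btn$ over $\B_0(k)$ (Projection Property-O, Lemma~\ref{lem:proj-prop-o}, applied to the quadratic $\hat f_{\btt}$). The naive choice $\bt = \bto$, legal since $s \le k$, combined with RSC to trade the gradient inner product for $f(\bto) - f(\btt)$, yields $\Phi_{t+1} \le (\kappa - 1)\Phi_t$ with $\kappa = \beta/\alpha$ -- exactly the bound already obtained for gPGD in Theorem~\ref{thm:gpgd-rsc-rss-proof}, and useless once $\kappa \ge 2$. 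This is the main obstacle, and it is an artifact of the weak competitor: the projection onto $\B_0(k)$ is being charged as though it retained only $s$ coordinates.

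The crux of the proof is therefore a refined hard-thresholding lemma that cashes in the surplus budget $k - s$. The surrogate minimum equals the discarded tail energy, $\hat f_{\btt}(\btn) = \frac\beta2\norm{\z^{t+1}_{\overline{T}}}_2^2$ up to an additive constant, where $T = \supp(\btn)$ is the top-$k$ support of $\z^{t+1}$. Because $T$ collects the $k$ largest coordinates while the relevant support $\supp(\btt)\cup\supp(\bto)$ has at most $k+s$ elements, every coordinate dropped by the projection is dominated in magnitude by the $k-s$ retained coordinates lying outside that relevant support; a counting/energy argument then bounds the tail by an $\bigO{\sqrt{s/(k-s)}}$ fraction of $\norm{\btt - \bto}_2$. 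Feeding this sharpened competitor estimate through RSC in place of the crude $\bto$ bound replaces the fatal $(\kappa-1)$ factor by one of the form $1 - \Theta(1/\kappa)$, provided $\sqrt{s/(k-s)} \le c'/\kappa$ for a small absolute constant $c'$, which is exactly what the hypothesis $k \ge 32\kappa^2 s$ guarantees. I expect forcing the constants of this lemma to line up with the clean threshold $32\kappa^2$ to be the most delicate part of the argument.

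Finally I would assemble the geometric decay. The refined per-step bound gives $\Phi_{t+1} \le \br{1 - \frac{c}{\kappa}}\Phi_t$ for an absolute constant $c$, hence $\Phi_t \le \br{1 - c/\kappa}^{t-1}\Phi_1$. Strong convexity at $\bto$, using $\nabla f(\bto) = \vzero$, converts the function gap to iterate error via $\frac\alpha2\norm{\btt - \bto}_2^2 \le \Phi_t$, while RSS at the initializer $\bt^1 = \vzero$ bounds $\Phi_1 = f(\vzero) = \frac1{2n}\norm{X\bto}_2^2 \le \frac\beta2\norm{\bto}_2^2$. Combining these gives $\norm{\btt - \bto}_2^2 \le \kappa\br{1 - c/\kappa}^{t-1}\norm{\bto}_2^2$, so that $\norm{\btt - \bto}_2 \le \epsilon$ after $t = \bigO{\kappa \log\frac{\norm{\bto}_2}{\epsilon}} = \bigO{\frac{\beta}{\alpha}\log\frac{\norm{\bto}_2}{\epsilon}}$ iterations, matching the claim; the $1/\eta$ bookkeeping for a general step length $\eta < 2/\beta$ affects only the constant $c$.
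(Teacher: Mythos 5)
The monograph does not actually prove this theorem: it states the result, notes that it mimics Theorem~\ref{thm:gpgd-rsc-rss-proof}, and explicitly omits the proof as ``a bit tedious,'' deferring to \citet{JainTK2014}. So your proposal can only be judged on its own terms. Its skeleton is sound, and you have correctly located where the entire content of the theorem lives: the surrogate-minimization reading of the IHT step, the observation that the naive competitor $\bto$ merely reproduces the $\kappa < 2$ barrier of Theorem~\ref{thm:gpgd-rsc-rss-proof}, the reduction of everything to a refined hard-thresholding lemma that cashes in the slack $k - s$, and the final conversion between the function gap and $\norm{\btt - \bto}_2$ are all the right moves and match the strategy of the cited source.

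The gap is that the refined lemma --- which \emph{is} the proof --- is only gestured at, and the gesture does not survive scrutiny. First, the counting claim is false as stated: there need not be \emph{any} retained coordinates outside $\supp(\btt)\cup\supp(\bto)$ (take $\supp(\btn) = \supp(\btt)$ with $\abs{\supp(\btt)} = k$ disjoint from $\supp(\bto)$); the correct count is that if $d$ relevant coordinates are dropped then at least $d - s$ irrelevant ones are retained, and the resulting ratio $d/(d-s)$ does not by itself yield your $\sqrt{s/(k-s)}$ factor --- the tight expansiveness bound $\norm{\btn - \bto}_2^2 \leq \br{1 + 2\sqrt{s/(k-s)}}\norm{\z^{t+1}_J - \bto}_2^2$ for $J \supseteq \supp(\btn)\cup\supp(\bto)$ is true but requires a different argument. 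Second, and more seriously, the step ``feeding this through RSC replaces $(\kappa-1)$ by $1 - \Theta(1/\kappa)$'' is exactly where the difficulty lives and is not automatic. In the function-value architecture you set up, any tail-energy lemma must be applied on an index set $I$ containing $\supp(\btn)$, so $\abs{I} \geq k$ and the achievable energy ratio $\frac{\abs{I}-k}{\abs{I}-s}$ is only about $1/2$, not $O(1/\kappa^2)$; a constant ratio fed into your surrogate/RSC chain again fails to contract once $\kappa \geq 2$. Switching to the iterate-distance potential, the expansiveness bound combined with the crude estimate $\norm{\nabla_J f(\btt)}_2 \leq \beta\norm{\btt-\bto}_2$ gives only a $1 - \Theta(1/\kappa^2)$ contraction and forces $k = \Om{\kappa^4 s}$. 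Recovering the advertised $k \geq 32\kappa^2 s$ and $\bigO{\kappa\log(\norm{\bto}_2/\epsilon)}$ iterations needs the additional co-coercivity-type inequality $\norm{\nabla_J f(\btt)}_2^2 \leq \beta\cdot\ip{\nabla f(\btt)}{\btt-\bto}$, which at $\eta = 1/\beta$ yields $\norm{\z^{t+1}_J - \bto}_2^2 \leq (1 - 1/\kappa)\norm{\btt-\bto}_2^2$, or else the more intricate gradient-mass-capture argument of \citet{JainTK2014}; neither ingredient appears in your plan. A smaller point: for $\eta \in (1/\beta, 2/\beta)$ the quadratic surrogate with curvature $1/\eta$ no longer upper-bounds $f$, so the first link of your chain $f(\btn) \leq \hat f_{\btt}(\btn)$ breaks; your argument as written only covers $\eta \leq 1/\beta$.
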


Note that the above result does not place any restrictions on the condition number or the RSC/RSS constants of the problem. The result also mimics Theorem~\ref{thm:gpgd-rsc-rss-proof} in its dependence on the (restricted) condition number of the optimization problem i.e. $\kappa_{2k+s} = \frac{\beta_{2k+s}}{\alpha_{2k+s}}$. The proof of this result is a bit tedious, hence omitted.

\subsection{Recovery from a Union of Subspaces}
If we look closely, the set $\B_0(s)$ is simply a union of $p \choose s$ linear subspaces, each subspace encoding a specific sparsity pattern. It is natural to wonder whether the methods and analyses described above also hold when the vector to be recovered belongs to a general union of subspaces. More specifically, consider a family of linear subspaces $\cH_1,\ldots,\cH_L \subset \R^p$ and denote the union of these subspaces by $\cH = \bigcup_{i=1}^L\cH_i$. The restricted strong convexity and restricted strong smoothness conditions can be appropriately modified to suit this setting by requiring a design matrix $X: \R^p \rightarrow \R^n$ to satisfy, for every $\bt_1, \bt_2 \in \cH$,
\[
\alpha\cdot\norm{\bt_1 - \bt_2}_2^2 \leq \norm{X(\bt_1 - \bt_2)}_2^2 \leq L\cdot\norm{\bt_1 - \bt_2}_2^2
\]
It turns out that IHT, with an appropriately modified projection operator $\Pi_{\cH}(\cdot)$, can ensure recovery of vectors that are guaranteed to reside in a small union of low-dimensional subspaces. Moreover, a linear rate of convergence, as we have seen for the IHT algorithm in the sparse regression case, can still be achieved. We refer the reader to the work of \cite{Blumensath2011} for more details of this extension.

\subsection{Dictionary Learning}
A very useful extension of sparse recovery, or sparse \emph{coding} emerges when we attempt to learn the design matrix as well. Thus, all we are given are observations $\vy_1,\ldots,\vy_m \in \bR^n$ and we wish to learn a design matrix $X \in \bR^{n \times p}$ such that the observations $\vy_i$ can be represented as sparse combinations $\vw_i \in \bR^p$ of the columns of the design matrix i.e. $\vy_i \approx X\vw_i$ such that $\norm{\vw_i}_0 \leq s \ll p$. The problem has several applications in the fields of computer vision and signal processing and has seen a lot of interest in the recent past.

The alternating minimization technique where one alternates between estimating the design matrix and the sparse representations, is especially popular for this problem. Methods mostly differ in the exact implementation of these alternations. Some notable works in this area include \citep{AgarwalAJN2016,AroraGM2014,GribonvalJB2015,SpielmanWW2012}.

\section{Exercises}
\begin{exer}
\label{exer:spreg-confuse}
Suppose a design matrix $X \in \bR^{n \times p}$ satisfies $X\w_1 = X\w_2$ for some $\w_1 \neq \w_2 \in \bR^p$. Then show that there exists an entire subspace $\cH \subset \bR^p$ such that for all $\w,\w' \in \cH$, we have $X\w = X\w'$.
\end{exer}
\begin{exer}
\label{exer:spreg-c-k-non-conv}
Show that the set $\cC(k) := \bigcup_{S: |S| = k}\cC(S)$ is non-convex.
\end{exer}
\begin{exer}
\label{exer:spreg-nsp-iden}
Show that if a design matrix $X$ satisfies the null-space property of order $2s$, then for any two distinct $s$-sparse vectors $\vv^1,\vv^2 \in \cB_0(s)$, $\vv^1 \neq \vv^2$, it must be the case that $X\vv^1 \neq X\vv^2$. 
\end{exer}
\begin{exer}
\label{exer:spreg-rsc-is-rsc}
Show that the RSC/RSS notion introduced in Definition~\ref{defn:rsc-rss} is equivalent to the RSC/RSS notion in Definition~\ref{defn:res-strong-cvx-smooth-fn} defined in \S~\ref{chap:pgd} for an appropriate choice of function and constraint sets.
\end{exer}
\begin{exer}
\label{exer:spreg-hierarchy}
Show that RSC-RSS $\Rightarrow$ REP $\Rightarrow$ NSP i.e. a matrix that satisfies the RSC/RSS condition for some constants, must satisfy the REP condition for some constants which in turn must force it to satisfy the null-space property.
\end{exer}
\begin{exer}
\label{exer:spreg-inc-rip}
Show that every $\mu$-incoherent matrix satisfies the RIP property at order $k$ with parameter $\delta = (k-1)\mu$.
\end{exer}
\begin{exer}
\label{exer:spreg-rip-eigen} 
Suppose the matrix $X \in \R^{n \times p}$ satisfies RIP at order $s$ with constant $\delta_s$. Then show that for any set $I \subset [p], \abs I \leq s$, the smallest eigenvalue of the matrix $X_I^\top X_I$ is lower bounded by $(1-\delta_{s})$.
\end{exer}
\begin{exer}
\label{exer:spreg-monotone}
Show that the RIP constant is monotonic in its order i.e. if a matrix $X$ satisfies RIP of order $k$ with constant $\delta_k$, then it also satisfies RIP for all orders $k' \leq k$ with $\delta_{k'} \leq \delta_{k}$.
\end{exer}

\section{Bibliographic Notes}
\label{sec:spreg-bib}
The literature on sparse recovery techniques is too vast for this note to cover. We have already covered several directions in \S\S~\ref{sec:spreg-other} and \ref{sec:spreg-ext} and point the reader to references therein.
\chapter{Low-rank Matrix Recovery}
\label{chap:matrec}

In this section, we will look at the problem of low-rank matrix recovery in detail. Although simple to motivate as an extension of the sparse recovery problem that we studied in \S~\ref{chap:spreg}, the problem rapidly distinguishes itself in requiring specific tools, both algorithmic and analytic. We will start our discussion with a milder version of the problem as a warm up and move on to the problem of low-rank matrix completion which is an active area of research.

\section{Motivating Applications}
We will take the following two running examples to motivate the problem of low-rank matrix recovery.\\

\noindent\textbf{Collaborative Filtering} Recommendation systems are popularly used to model the preference patterns of users, say at an e-commerce website, for items being sold on that website, although the principle of recommendation extends to several other domains that demand \emph{personalization} such as education and healthcare. Collaborative filtering is a popular technique for building recommendation systems.

The collaborative filtering approach seeks to exploit co-occurring patterns in the observed behavior across users in order to predict future user behavior. This approach has proven successful in addressing users that interact very sparingly with the system. Consider a set of $m$ users $u_1,\ldots,u_m$, and $n$ items $a_1,\ldots,a_n$. Our goal is to predict the preference score $s_{(i,j)}$ that is indicative of the interest user $u_i$ has in item $a_j$.

However, we get direct access to (noisy estimates of) actual preference scores for only a few items per user by looking at clicks, purchases etc. That is to say, if we consider the $m \times n$ \emph{preference matrix}  $A = [A_{ij}]$ where $A_{ij} = s_{(i,j)}$ encodes the (true) preference of the $i\th$ user for the $j\th$ item, we get to see only $k \ll m\cdot n$ entries of $A$, as depicted in Figure~\ref{fig:matrec-cf}. Our goal is to recover the remaining entries.

\begin{figure}
\includegraphics[width=\columnwidth]{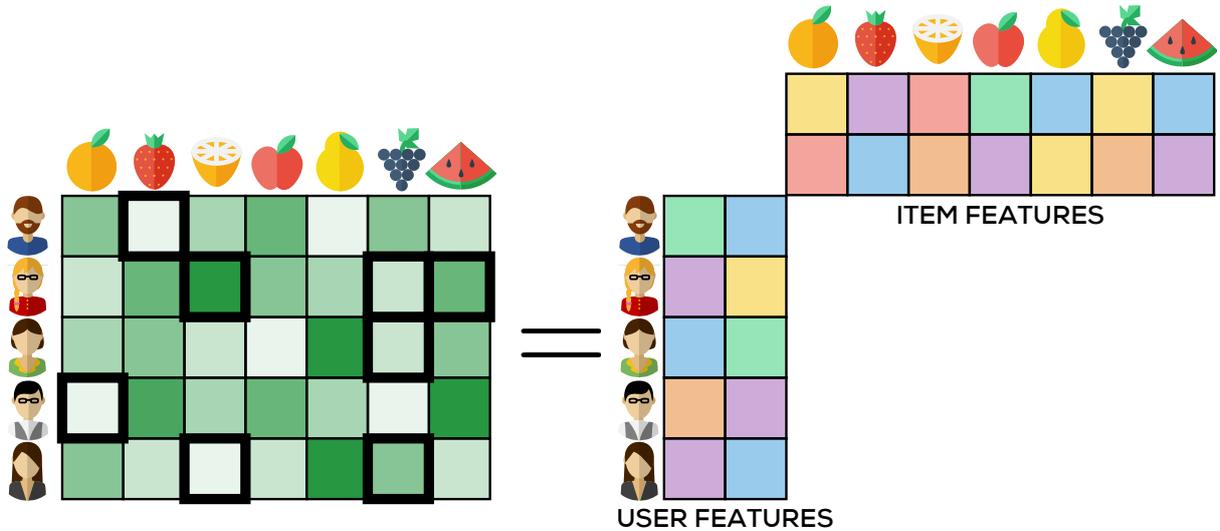}
\caption[Low-rank Matrix Completion for Recommendation]{In a typical recommendation system, users rate items very infrequently and certain items may not get rated even once. The figure depicts a ratings matrix. Only the matrix entries with a bold border are observed. Low-rank matrix completion can help recover the unobserved entries, as well as reveal hidden features that are descriptive of user and item properties, as shown on the right hand side.}%
\label{fig:matrec-cf}
\end{figure}

The problem of paucity of available data is readily apparent in this setting. In its nascent form, the problem is not even well posed and does not admit a unique solution. A popular way of overcoming these problems is to assume a low-rank structure in the preference matrix. 

As we saw in Exercise~\ref{exer:pgd-low-rank}, this is equivalent to assuming that there is an $r$-dimensional vector $\vu_i$ denoting the $i\th$ user and an $r$-dimensional vector $\va_j$ denoting the $j\th$ such that $s_{(i,j)} \approx \ip{\vu_i}{\va_j}$. Thus, if $\Omega \subset [m] \times [n]$ is the set of entries that have been observed by us, then the problem of recovering the unobserved entries can be cast as the following optimization problem:
\[
\underset{\substack{X \in \R^{m \times n}\\\rank(X) \leq r}}{\min}\ \sum_{(i,j) \in \Omega}\br{X_{ij} - A_{ij}}^2.
\]
This problem can be shown to be NP-hard \citep{HardtMRW2014} but has generated an enormous amount of interest across communities. We shall give special emphasis to this \emph{matrix completion} problem in the second part of this section.\\

\noindent\textbf{Linear Time-invariant Systems} Linear Time-invariant (LTI) systems are widely used in modeling dynamical systems in fields such as engineering and finance. The \emph{response behavior} of these systems is characterized by a model vector $\vh = [h(0),h(1),\ldots,h(2N-1)]$. The \emph{order} of such a system is given by the rank of the following Hankel matrix
\[
\text{hank}(\vh) =\left[
\begin{array}{cccc}
	h(0) & h(1) & \ldots & h(N)\\
	h(1) & h(2) & \ldots & h(N+1)\\
	\vdots & \vdots & \ddots & \vdots\\
	h(N-1) & h(N) & \ldots & h(2N-1)
\end{array}
\right]
\]
Given a sequence of inputs $\va = [a(1),a(2),\ldots,a(N)]$ to the system, the output of the system is given by
\[
y(N) = \sum_{t=0}^{N-1}a(N-t)h(t)
\]
In order to recover the model parameters of a system, we repeatedly apply i.i.d. Gaussian impulses $a(i)$ to the system for $N$ time steps and then observe the output of the system. This process is repeated, say $k$ times, to yield observation pairs $\bc{(\va^i,y^i)}_{i=1}^k$. Our goal now, is to take these observations and identify an LTI vector $\vh$ that best fits the data. However, for the sake of accuracy and ease of analysis \citep{FazelPST2013}, it is advisable to fit a low-order model to the data. Let the matrix $A \in \R^{k\times N}$ contain the i.i.d. Gaussian impulses applied to the system. Then the problem of fitting a low-order model can be shown to reduce to the following constrained optimization problem with a rank objective and an affine constraint.
\begin{equation*}
\begin{array}{cc}
	\min & \rank(\text{hank}(\vh))\\
	\text{s.t.} & A\vh = \vy,
\end{array}
\end{equation*}
The above problem is a non-convex optimization problem due to the objective being the minimization of the rank of a matrix. Several other problems in metric embedding and multi-dimensional scaling, image compression, low rank kernel learning and spatio-temporal imaging can also be reduced to low rank matrix recovery problems \citep{JainMD2010, RechtFP2010}.

\section{Problem Formulation}
The two problems considered above can actually be cast in a single problem formulation, that of \emph{Affine Rank Minimization} (ARM). Consider a low rank matrix $X^\ast \in \R^{m \times n}$ that we wish to recover and an affine transformation $\cA: \R^{m \times n} \rightarrow \R^k$. The transformation can be seen as a concatenation of $k$ real valued affine transformations $\cA_i: \R^{m \times n} \rightarrow \R$. We are given the transformation $\cA$, and its (possibly noisy) action $\y = \cA(X^\ast) \in \R^k$ on the matrix $X^\ast$ and our goal is to recover this matrix by solving the following optimization problem.
\begin{equation}
\begin{array}{cl}
	\min & \rank(X)\\
	\text{s.t.} & \cA(X) = \y,
\end{array}
\tag*{(ARM)}\label{eq:arm}
\end{equation}
This problem can be shown to be NP-hard due to a reduction to the sparse recovery problem\elink{exer:matrec-np-hard}. The LTI modeling problem can be easily seen to be an instance of ARM with the Gaussian impulses being delivered to the system resulting in a $k$-dimensional affine transformation of the Hankel matrix corresponding to the system. However, the Collaborative Filtering problem is also an instance of ARM. To see this, for any $(i,j) \in [m]\times[n]$, let $O^{(i,j)} \in \bR^{m\times n}$ be the matrix such that its $(i,j)$-th entry $O^{(i,j)}_{ij}=1$ and all other entries are zero. Then, simply define the affine transformation $\cA_{(i,j)} : X \mapsto \text{tr}(X^\top O^{(i,j)}) = X_{ij}$. Thus, if we observe $k$ user-item ratings, the ARM problem effectively operates with a $k$-dimensional affine transformation of the underlying rating matrix.

Due to its similarity to the sparse recovery problem, we will first discuss the general ARM problem. However, we will find it beneficial to cast the collaborative filtering problem as a \emph{Low-rank Matrix Completion} problem instead. In this problem, we have an underlying low rank matrix $X^\ast$ of which, we observe entries in a set $\Omega \subset {[m]\times[n]}$. Then the low rank matrix completion problem can be stated as
\begin{equation}
\underset{\substack{X \in \R^{m \times n}\\ \rank(X) \leq r}}{\min}\ \norm{\Pi_{\Omega}(X - X^\ast)}_F^2,\tag*{(LRMC)}\label{eq:lrmc}
\end{equation}
where $\Pi_\Omega(X)$ is defined, for any matrix $X$ as
\[
\Pi_\Omega(X)_{i,j} = \left\{
\begin{array}{cc}
	X_{i,j} & \text{ if } (i,j) \in \Omega\\
	0 & \text{ otherwise}.
\end{array}
\right.
\]
The above formulation succinctly captures our objective to find a \emph{completion} of the ratings matrix that is both, low rank, as well as agrees on the user ratings that are actually observed. As pointed out earlier, this problem is NP-hard \citep{HardtMRW2014}.

Before moving on to present algorithms for the ARM and LRMC problems, we discuss some matrix design properties that would be required in the convergence analyses of the algorithms.

\section{Matrix Design Properties}
Similar to sparse recovery, there exist design properties that ensure that the general NP-hardness of the ARM and LRMC problems can be overcome in well-behaved problem settings. In fact given the similarity between ARM and sparse recovery problems, it is tempting to try and import concepts such as RIP into the matrix-recovery setting.

In fact this is exactly the first line of attack that was adopted in literature. What followed was a beautiful body of work that generalized, both structural notions such as RIP, as well as algorithmic techniques such as IHT, to address the ARM problem. Given the generality of these constructs, as well as the smooth transition it offers having studied sparse recovery, we feel compelled to present them to the reader.

\subsection{The Matrix Restricted Isometry Property}
The generalization of RIP to matrix settings, referred to as matrix RIP, follows in a relatively straightforward manner and was first elucidated by \cite{RechtFP2010}. Quite in line with the sparse recovery setting, the intuition dictates that recovery should be possible only if the affine transformation does not identify two distinct low-rank matrices. A more robust version dictates that no low-rank matrix should be distorted significantly by this transformation which gives us the following.

\begin{definition}[Matrix Restricted Isometry Property \citep{RechtFP2010}]
\label{defn:matrix-rip}
A linear map $\cA: \R^{m \times n} \rightarrow \R^k$ is said to satisfy the matrix restricted isometry property of order $r$ with constant $\delta_r \in [0,1)$ if for all matrices $X$ of rank at most $r$, we have
\begin{center}
	$(1-\delta_r)\cdot\norm{X}_F^2 \leq \norm{\cA(X)}_2^2 \leq (1+\delta_r)\cdot\norm{X}_F^2$.
\end{center}
\end{definition}
Furthermore, the work of \cite{RechtFP2010} also showed that linear maps or affine transformations arising in random measurement models, such as those in image compression and LTI systems, do satisfy RIP with requisite constants whenever the number of affine measurements satisfies $k = \bigO{nr}$ \citep{OymakRS15a}. Note however, that these are settings in which the design of the affine map is within our control. For settings, where the restricted condition number of the affine map is not within our control, more involved analysis is required. The bibliographic notes point to some of these results.

Given the relatively simple extension of the RIP definitions to the matrix setting, it is all the more tempting to attempt to apply gPGD-style techniques to solve the ARM problem, particularly since we saw how IHT succeeded in offering scalable solutions to the sparse recovery problem. The works of \citep{GoldfarbM2011,JainMD2010} showed that this is indeed possible. We will explore this shortly.

\subsection{The Matrix Incoherence Property}
We begin this discussion by warning the reader that there are two distinct notions prevalent in literature, both of which are given the same name, that of the matrix incoherence property. The first of these notions was introduced in \S~\ref{sec:rip-ensure} as a property that can be used to ensure the RIP property in matrices. However a different property, but bearing the same name, finds application in matrix completion problems which we now introduce. We note that the two properties are not known to be strongly related in a formal sense and the coincidental clash of the names seems to be a matter of legacy.

Nevertheless, the intuition behind the second notion of matrix incoherence is similar to that for RIP in that it seeks to make the problem well posed. Consider the matrix $A = \sum_{t=1}^r s_t\cdot\ve_{i_t}\bar\ve_{j_t}^\top \in \bR^{m \times n}$ where $\ve_i$ are the canonical orthonormal vectors in $\bR^m$ and $\bar\ve_j$ are the canonical orthonormal vectors in $\bR^n$. Clearly $A$ has rank at most $r$.

However, this matrix $A$ is non-zero only at $r$ locations. Thus, it is impossible to recover the entire matrix uniquely unless these very $r$ locations $\bc{(i_t,j_t)}_{t=1,\ldots,r}$ are actually observed. Since in recommendation settings, we only observe a few random entries of the matrix, there is a good possibility that none of these entries will ever be observed. This presents a serious challenge for the matrix completion problem -- the low rank structure is not sufficient to ensure unique recovery!

To overcome this and make the LRMC problem well posed with a unique solution, an additional property is imposed. This so-called matrix incoherence property prohibits low rank matrices that are also sparse. A side effect of this imposition is that for incoherent matrices, observing a small random set of entries is enough to uniquely determine the unobserved entries of the matrix.

\begin{definition}[Matrix Incoherence Property \citep{CandesR2009}]
\label{defn:matrix-incoherence}
A matrix $A \in \R^{m \times n}$ of rank $r$ is said to be incoherent with parameter $\mu$ if its left and right singular matrices have bounded row norms. More specifically, let $A = U \Sigma V^\top$ be the SVD of $A$. Then $\mu$-incoherence dictates that $\norm{U^i}_2 \leq \frac{\mu\sqrt r}{\sqrt m}$ for all $i \in [m]$ and $\norm{V^j}_2 \leq \frac{\mu\sqrt r}{\sqrt n}$ for all $j \in [n]$. A stricter version of this property requires all entries of $U$ to satisfy $\abs{U_{ij}} \leq \frac{\mu}{\sqrt m}$ and all entries of $V$ to satisfy $\abs{V_{ij}} \leq \frac{\mu}{\sqrt n}$.
\end{definition}

A low rank incoherent matrix is guaranteed to be \emph{far}, i.e., well distinguished, from any sparse matrix, something that is exploited by algorithms to give guarantees for the LRMC problem.

\begin{algorithm}[t]
	\caption{Singular Value Projection (SVP)}
	\label{algo:svp}
	\begin{algorithmic}[1]
			\REQUIRE Linear map $\cA$, measurements $\y$, target rank $q$, step length $\eta$
			\ENSURE A matrix $\hat X$ with rank at most $q$
			\STATE $X^1 \leftarrow \vzero^{m \times n}$
			\FOR{$t = 1, 2, \ldots$}
				\STATE $Y^{t+1} \leftarrow X^t - \eta\cdot \cA^\top(\cA(X^t) - \y)$
				\STATE Compute top $q$ singular vectors/values of $Y^{t+1}$: $U^t_q,\Sigma^t_q,V^t_q$
				\STATE $X^{t+1} \leftarrow U^t_q\Sigma^t_q(V^t_q)^\top$
			\ENDFOR
			\STATE \textbf{return} {$X^t$}
	\end{algorithmic}
\end{algorithm}

\section{Low-rank Matrix Recovery via Proj. Gradient Descent}
We will now apply the gPGD algorithm to the ARM problem. To do so, first consider the following reformulation of the ARM problem to make it more compatible to the projected gradient descent iterations.
\begin{equation}
\begin{array}{cl}
	\min & \frac{1}{2}\norm{\cA(X) - \y}_2^2\\
	\text{s.t.} & \rank(X) \leq r
\end{array}
\tag*{(ARM-2)}\label{eq:arm-2}
\end{equation}
Applying the gPGD algorithm to the above formulation gives us the \emph{Singular Value Projection} (SVP) algorithm (Algorithm~\ref{algo:svp}). Note that in this case, the projection needs to be carried out onto the set of low rank matrices. However, as we saw in \S~\ref{sec:non-cvx-proj}, this can be efficiently done by computing the singular value decomposition of the iterates.

SVP offers ease of implementation and speed similar to IHT. Moreover, it applies to ARM problems in general. If the Matrix RIP property is appropriately satisfied, then SVP guarantees a linear rate of convergence to the optimum, much like IHT. All these make SVP a very attractive choice for solving low rank matrix recovery problems.

Below, we give a convergence proof for SVP in the noiseless case, i.e., when $\y = \cA(X^\ast)$. The proof is similar in spirit to the convergence proof we saw for the IHT algorithm in \S~\ref{chap:spreg} but differs in crucial aspects since sparsity in this case is apparent not in the signal domain (the matrix is not itself sparse) but the spectral domain (the set of singular values of the matrix is sparse). The analysis can be extended to noisy measurements as well and can be found in \citep{JainMD2010}.

\section{A Low-rank Matrix Recovery Guarantee for SVP}
We will now present a convergence result for the SVP algorithm. As before, although the general convergence result for gPGD can indeed be applied here, we will see, just as we did in \S~\ref{chap:spreg}, that the problem specific analysis we present here is finer and reveals more insights about the ARM problem structure.

\begin{theorem}
\label{thm:svp-conv-proof-rip}
Suppose $\cA: \R^{m \times n} \rightarrow \R^k$ is an affine transformation that satisfies the matrix RIP property of order $2r$ with constant $\delta_{2r} \leq \frac{1}{3}$. Let $X^\ast \in \R^{m \times n}$ be a matrix of rank at most $r$ and let $\y = \cA(X^\ast)$. Then the SVP Algorithm (Algorithm~\ref{algo:svp}), when executed with a step length $\eta = 1/(1+\delta_{2r})$, and a target rank $q = r$, ensures $\norm{X^t - X^\ast}_F^2 \leq \epsilon$ after $t = \bigO{\log\frac{\norm{\y}_2^2}{2\epsilon}}$ iterations of the algorithm.
\end{theorem}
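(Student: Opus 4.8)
The plan is to follow a function-value analysis rather than the direct error-contraction argument used for IHT, since this is precisely what lets the whole proof run on matrix RIP of order $2r$ alone. I will work with the smooth objective $f(X) = \frac12\norm{\cA(X) - \y}_2^2$, whose gradient is $\nabla f(X) = \cA^\top(\cA(X) - \y)$ and whose Hessian is the fixed linear operator $\Psi := \cA^\top\cA$. Writing $\Delta^t := X^t - X^\ast$ and using the noiseless assumption $\y = \cA(X^\ast)$, we have $f(X^\ast) = 0$ and $\nabla f(X^t) = \Psi(\Delta^t)$. Since $X^t$ and $X^\ast$ each have rank at most $r$, every iterate difference $\Delta^t$ has rank at most $2r$; this is the only place the order of the RIP is exercised, which is why order $2r$ suffices.

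First I would record two consequences of matrix RIP of order $2r$. As $f$ is quadratic, for any $Z$ with $\rank(Z - X^t) \leq 2r$ we have the exact identity $f(Z) = f(X^t) + \ip{\nabla f(X^t)}{Z - X^t} + \frac12\norm{\cA(Z - X^t)}_2^2$, so the upper RIP bound yields the strong-smoothness estimate $f(Z) \leq f(X^t) + \tilde g(Z)$, where $\tilde g(Z) := \ip{\nabla f(X^t)}{Z - X^t} + \frac{1+\delta_{2r}}{2}\norm{Z - X^t}_F^2$. Dually, the lower RIP bound gives the restricted-strong-convexity estimate $\ip{\nabla f(X^t)}{X^\ast - X^t} = -\norm{\cA(\Delta^t)}_2^2 \leq -(1-\delta_{2r})\norm{\Delta^t}_F^2$.

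The crux is to recognise the SVD-truncation step as an exact minimiser of the surrogate $\tilde g$ over rank-$\leq q$ matrices. Expanding $\norm{Y^{t+1} - Z}_F^2 = \norm{X^t - \eta\nabla f(X^t) - Z}_F^2$ and discarding terms independent of $Z$ shows that minimising it is equivalent to minimising $\ip{\nabla f(X^t)}{Z - X^t} + \frac{1}{2\eta}\norm{Z - X^t}_F^2$; the choice $\eta = 1/(1+\delta_{2r})$ makes $\frac{1}{2\eta} = \frac{1+\delta_{2r}}{2}$, so that this coincides exactly with $\tilde g(Z)$. By the Eckart--Young--Mirsky theorem (Theorem~\ref{thm:eym-thm}), $X^{t+1}$ is this minimiser, and since $X^\ast$ is feasible ($\rank(X^\ast) \leq r = q$) we get $\tilde g(X^{t+1}) \leq \tilde g(X^\ast)$. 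Chaining the smoothness estimate at $Z = X^{t+1}$ with this optimality and then the restricted-strong-convexity estimate gives
\[
f(X^{t+1}) \leq f(X^t) + \tilde g(X^\ast) \leq f(X^t) - \frac{1 - 3\delta_{2r}}{2}\norm{\Delta^t}_F^2.
\]

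Finally I would convert this per-step decrease into geometric convergence. Using $f(X^t) = \frac12\norm{\cA(\Delta^t)}_2^2 \leq \frac{1+\delta_{2r}}{2}\norm{\Delta^t}_F^2$ to lower bound $\norm{\Delta^t}_F^2$ turns the recurrence into $f(X^{t+1}) \leq \frac{4\delta_{2r}}{1+\delta_{2r}}\,f(X^t)$, a contraction with ratio strictly below $1$ once $\delta_{2r} < \frac13$ (the boundary value $\frac13$ is the break-even point, and is the reason for the hypothesis). Since $X^1 = \vzero$ gives $f(X^1) = \frac12\norm{\y}_2^2$, iterating yields $f(X^t) \leq \br{\frac{4\delta_{2r}}{1+\delta_{2r}}}^{t-1}\frac{\norm{\y}_2^2}{2}$, and the lower RIP bound $\norm{\Delta^t}_F^2 \leq \frac{2}{1-\delta_{2r}}f(X^t)$ then delivers $\norm{X^t - X^\ast}_F^2 \leq \epsilon$ after $t = \bigO{\log\br{\norm{\y}_2^2/\epsilon}}$ steps. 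The main obstacle --- and the only genuinely non-convex step --- is the surrogate-minimiser identification in the third paragraph: because the low-rank set is non-convex, the convex projection properties I and II (Lemmata~\ref{lem:proj-prop-1} and \ref{lem:proj-prop-2}) fail, and the argument must instead lean on projection property-O (Lemma~\ref{lem:proj-prop-o}) in the guise of Eckart--Young--Mirsky, made usable only by the precise tuning of $\eta$ to the smoothness constant.
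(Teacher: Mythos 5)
Your proposal is correct and is essentially the paper's own argument in majorization--minimization clothing: your surrogate $\tilde g$ and the identification of the SVD truncation as its minimizer are exactly the paper's completing-the-square step together with the inequality $\norm{X^{t+1} - Y^{t+1}}_F^2 \leq \norm{X^\ast - Y^{t+1}}_F^2$, and the upper/lower RIP bounds play identical roles in both. The only (immaterial) difference is that your final ordering of the two RIP bounds yields the contraction factor $\frac{4\delta_{2r}}{1+\delta_{2r}}$ instead of the paper's slightly sharper $\frac{2\delta_{2r}}{1-\delta_{2r}}$; both degenerate to $1$ exactly at $\delta_{2r} = \frac13$ and give the same $\bigO{\log(\norm{\y}_2^2/\epsilon)}$ iteration count.
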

\begin{proof}
Notice that the notions of \emph{sparsity} and \emph{support} are very different in ARM than what they were for sparse regression. Consequently, the exact convergence proof for IHT (Theorem~\ref{thm:iht-conv-proof-rip}) is not applicable here. We will first establish an intermediate result that will show, that after $t = \bigO{\log\frac{\norm{\y}_2^2}{2\epsilon}}$ iterations, SVP ensures $\norm{\cA(X^t) - \y}_2^2 \leq \epsilon$. We will then use the matrix RIP property (Definition~\ref{defn:matrix-rip}) to deduce
\[
\norm{\cA(X^t) - \y}_2^2 = \norm{\cA(X^t - X^\ast)}_2^2 \geq (1 - \delta_{2r})\cdot\norm{X^t - X^\ast}_F^2,
\]
which will conclude the proof. To prove this intermediate result, let us denote the objective function as
\[
f(X) = \frac{1}{2}\norm{\cA(X) - \y}_2^2 = \frac{1}{2}\norm{\cA(X - X^\ast)}_2^2.
\]
An application of the matrix RIP property then gives us
\begin{align*}
&f(X^{t+1})\\
&= f(X^t) + \ip{\cA(X^t - X^\ast)}{\cA(X^{t+1} - X^t)} + \frac{1}{2}\norm{\cA(X^{t+1} - X^{t})}_2^2\\
										&\leq f(X^t) + \ip{\cA(X^t - X^\ast)}{\cA(X^{t+1} - X^t)} + \frac{(1 + \delta_{2r})}{2}\norm{X^{t+1} - X^{t}}_F^2.
\end{align*}
The following steps now introduce the intermediate variable $Y^{t+1}$ into the analysis in order to link the successive iterates by using the fact that $X^{t+1}$ was the result of a non-convex projection operation.
\begin{align*}
&\ip{\cA(X^t - X^\ast)}{\cA(X^{t+1} - X^t)} + \frac{(1 + \delta_{2r})}{2}\cdot\norm{X^{t+1} - X^{t}}_F^2\\
&\quad = \frac{1 + \delta_{2r}}{2}\cdot\norm{X^{t+1} - Y^{t+1}}_F^2 - \frac{1}{2(1+\delta_{2r})}\cdot\norm{\cA^\top(\cA(X^t - X^\ast))}_F^2\\
&\quad \leq \frac{1 + \delta_{2r}}{2}\cdot\norm{X^\ast - Y^{t+1}}_F^2 - \frac{1}{2(1+\delta_{2r})}\cdot\norm{\cA^\top(\cA(X^t - X^\ast))}_F^2\\
&\quad = \ip{\cA(X^t - X^\ast)}{\cA(X^\ast - X^t)} + \frac{(1 + \delta_{2r})}{2}\cdot\norm{X^\ast - X^{t}}_F^2\\
&\quad \leq \ip{\cA(X^t - X^\ast)}{\cA(X^\ast - X^t)} + \frac{(1 + \delta_{2r})}{2(1 - \delta_{2r})}\cdot\norm{\cA(X^\ast - X^{t})}_2^2\\
&\quad = -f(X^t) - \frac{1}{2}\norm{\cA(X^\ast - X^{t})}_2^2 + \frac{(1 + \delta_{2r})}{2(1 - \delta_{2r})}\cdot\norm{\cA(X^\ast - X^{t})}_2^2.
\end{align*}
The first step uses the identity $Y^{t+1} = X^t - \eta\cdot \cA^\top(\cA(X^t) - \y)$ from Algorithm~\ref{algo:svp}, the fact that we set $\eta = \frac{1}{1 + \delta_{2r}}$, and elementary rearrangements. The second step follows from the fact that $\norm{X^{t+1} - Y^{t+1}}_F^2 \leq \norm{X^\ast - Y^{t+1}}_F^2$ by virtue of the SVD step which makes $X^{t+1}$ the best rank-$(2r)$ approximation to $Y^{t+1}$ in terms of the Frobenius norm. The third step simply rearranges things in the reverse order of the way they were arranged in the first step, the fourth step uses the matrix RIP property and the fifth step makes elementary manipulations. This, upon rearrangement, and using $\norm{\cA(X^{t} - X^\ast)}_2^2 = 2f(X^t)$, gives us
\[
f(X^{t+1}) \leq \frac{2\delta_{2r}}{1 - \delta_{2r}}\cdot f(X^t).
\]
Since $f(X^1) = \norm{\y}_2^2$, as we set $X^1 = \vzero^{m \times n}$, if $\delta_{2r} < 1/3$ (i.e., $\frac{2\delta_{2r}}{1 - \delta_{2r}} < 1$), we have the claimed convergence result.
\end{proof}

One can, in principle, apply the SVP technique to the matrix completion problem as well. However, on the LMRC problem, SVP is outperformed by gAM-style approaches which we study next. Although the superior performance of gAM on the LMRC problem was well documented empirically, it took some time before a theoretical understanding could be obtained. This was first done in the works of \cite{Keshavan2012, JainNS2013}. These results set off a long line of works that progressively improved both the algorithm, as well as its analysis.

\begin{algorithm}[t]
	\caption{AltMin for Matrix Completion (AM-MC)}
	\label{algo:altmin-lrmc}
	\begin{algorithmic}[1]
			\REQUIRE Matrix $A \in \R^{m \times n}$ of rank $r$ observed at entries in the set $\Omega$, sampling probability $p$, stopping time $T$
			\ENSURE A matrix $\hat X$ with rank at most $r$
			\STATE Partition $\Omega$ into $2T+1$ sets $\Omega_0,\Omega_1,\ldots,\Omega_{2T}$ uniformly and randomly%
			\STATE $U^1 \leftarrow \text{SVD}(\frac{1}{p}\Pi_{\Omega_0}(A),r)$, the top $r$ left singular vectors of $\frac{1}{p}\Pi_{\Omega_0}(A)$%
			\FOR{$t = 1, 2, \ldots, T$}
				\STATE $V^{t+1} \leftarrow \arg\min_{V \in \R^{n \times r}}\ \norm{\Pi_{\Omega_{t}}(U^tV^\top - A)}_F^2$
				\STATE $U^{t+1} \leftarrow \arg\min_{U \in \R^{m \times r}}\ \norm{\Pi_{\Omega_{T+t}}(U(V^{t+1})^\top - A)}_F^2$
			\ENDFOR
			\STATE \textbf{return} {$U^\top(V^\top)^\top$}
	\end{algorithmic}
\end{algorithm}

\section{Matrix Completion via Alternating Minimization}
We will now look at the alternating minimization technique for solving the low-rank matrix completion problem. As we have observed before, the LRMC problem admits an equivalent reformulation where the low rank structure constraint is eliminated and instead, the solution is described in terms of two low-rank components
\begin{equation}
\underset{\substack{U \in \R^{m \times k}\\ V \in \R^{n \times k}}}{\min}\ \norm{\Pi_{\Omega}(UV^\top - X^\ast)}_F^2.\tag*{(LRMC*)}\label{eq:lrmc2}
\end{equation}
In this case, fixing either $U$ or $V$ reduces the above problem to a simple least squares problem for which we have very efficient and scalable solvers. As we saw in \S~\ref{chap:altmin}, such problems are excellent candidates for the gAM algorithm to be applied. The AM-MC algorithm (see Algorithm~\ref{algo:altmin-lrmc}) applies the gAM approach to the reformulated LMRC problem. The AM-MC approach is the choice of practitioners in the context of collaborative filtering \citep{ChenH2012,KorenBV2009,ZhouWSP2008}. However, AM-MC, like other gAM-style algorithms, does require proper initialization and tuning.

\section{A Low-rank Matrix Completion Guarantee for AM-MC}
We will now analyze the convergence properties of the AM-MC method for matrix completion. To simplify the presentation, we will restrict ourselves to the case when the matrix $A \in \R^{m \times n}$ is rank one. This will allow us to present the essential arguments without getting involved with technical details. Let $A = \vu^\ast(\vv^\ast)^\top$ be a $\mu$-incoherent matrix that needs to be recovered, where $\norm{\vu^\ast}_2 = \norm{\vv^\ast}_2 = 1$. It is easy to see that $\mu$-incoherence implies $\norm{\vu^\ast}_\infty \leq \frac{\mu}{\sqrt m}$ and $\norm{\vv^\ast}_\infty \leq \frac{\mu}{\sqrt n}$.

We will also assume the Bernoulli sampling model i.e., that the set of observed indices $\Omega$ is generated by selecting each entry $(i,j)$ for inclusion in $\Omega$ in an i.i.d. fashion with probability $p$. More specifically, $(i,j)\in \Omega$ iff $\delta_{ij} = 1$ where $\delta_{ij}=1$ with probability $p$ and $0$ otherwise. 

For simplicity, we will assume that each iteration of the AM-MC procedure receives a fresh set of samples $\Omega$ from $A$. This can be achieved in practice by randomly partitioning the available set of samples into as many groups as the iterations of the procedure. The completion guarantee will proceed in two steps. In the first step, we will show that the initialization step in Algorithm~\ref{algo:altmin-lrmc} itself brings AM-MC within a constant distance of the optimal solution. Next, we will show that this close initialization is sufficient for AM-MC to ensure a linear rate of convergence to the optimal solution.\\

\noindent{\bf Initialization}: We will now show that the initialization step (Step 2 in Algorithm~\ref{algo:altmin-lrmc}) provides a point $(\vu^1, \vv^1)$ which is at most a constant $c>0$ distance away from $(\vu^\ast, \vv^\ast)$. To this we need a Bernstein-style argument which we provide here for the rank-$1$ case.

\begin{theorem}\cite[Theorem 1.6]{Tropp2012}
\label{thm:matrix-bern}
Consider a finite sequence $\bc{Z_k}$ of independent random matrices of dimension $m \times n$. Assume each matrix satisfies $\E{Z_k} = \vzero$ and $\norm{Z_k}_2 \leq R$ almost surely and denote
\[
\textstyle\sigma^2 := \max\bc{\norm{\sum_k Z_kZ_k^\top}_2, \norm{\sum_k Z_k^\top Z_k}_2}.
\]
Then, for all $t \geq 0$, we have
\[
\textstyle\Pr{\norm{\sum_k Z_k}_2 \geq t} \leq (m+n)\cdot\exp\br{\frac{-t^2}{\sigma^2 + Rt/3}}.
\]
\end{theorem}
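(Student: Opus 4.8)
The plan is to establish this as the \emph{matrix Bernstein inequality} via the matrix Laplace transform method of Ahlswede--Winter and Tropp. First I would perform a \emph{Hermitian dilation} to reduce the rectangular problem to a symmetric one: to each $Z_k \in \bR^{m \times n}$ associate the self-adjoint matrix $\mathcal{H}(Z_k) := \begin{bmatrix} \vzero & Z_k \\ Z_k^\top & \vzero \end{bmatrix} \in \bR^{(m+n)\times(m+n)}$. A direct computation shows $\norm{\mathcal{H}(Z)}_2 = \norm{Z}_2$ and $\mathcal{H}(Z)^2 = \begin{bmatrix} ZZ^\top & \vzero \\ \vzero & Z^\top Z \end{bmatrix}$, while $\mathcal{H}$ is linear. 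Hence $\norm{\sum_k Z_k}_2 = \lambda_{\max}(S)$ for $S := \sum_k X_k$ with $X_k := \mathcal{H}(Z_k)$, the $X_k$ being independent, zero-mean, bounded by $\norm{X_k}_2 \leq R$, and satisfying $\norm{\sum_k \E{X_k^2}}_2 = \sigma^2$. This passage to $(m+n)$-dimensional symmetric matrices is exactly where the ambient dimension factor $(m+n)$ in the final bound will originate.

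Second, I would invoke the matrix Laplace transform bound: for every $\theta > 0$,
\[
\Pr{\lambda_{\max}(S) \geq t} \leq e^{-\theta t}\cdot \E{\text{tr}\exp\br{\theta S}},
\]
which follows by exponentiating, applying Markov's inequality, and using that $\lambda_{\max}(\exp(\theta S)) \leq \text{tr}\exp(\theta S)$ since $\exp(\theta S)$ is positive definite. The entire difficulty now concentrates in controlling the trace-exponential moment $\E{\text{tr}\exp(\theta S)}$.

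Third --- and this is the genuinely deep step --- I would prove subadditivity of the matrix cumulant generating function,
\[
\E{\text{tr}\exp\br{\theta \sum_k X_k}} \leq \text{tr}\exp\br{\sum_k \log \E{e^{\theta X_k}}}.
\]
Because the $X_k$ do not commute, this is \emph{not} a mere consequence of independence; it rests on \emph{Lieb's concavity theorem} (concavity of $A \mapsto \text{tr}\exp(H + \log A)$ over positive definite $A$), applied together with Jensen's inequality (Lemma~\ref{lem:jensen's}) and peeled off one independent summand at a time. I expect this to be the main obstacle, and indeed the only ingredient that is not elementary; the remainder of the argument is scalar calculus on eigenvalues.

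Fourth, I would bound each factor. For a zero-mean Hermitian $X$ with $\norm{X}_2 \leq R$, a comparison of the scalar function $z \mapsto (e^{\theta z} - \theta z - 1)/z^2$ (monotone on $[-R,R]$) on the eigenvalues yields the semidefinite estimate
\[
\E{e^{\theta X}} \preceq \exp\br{g(\theta)\,\E{X^2}}, \qquad g(\theta) = \frac{\theta^2/2}{1 - \theta R/3},
\]
valid for $0 < \theta < 3/R$. Monotonicity of the matrix logarithm and of the trace-exponential then combine with the subadditivity bound to give $\E{\text{tr}\exp(\theta S)} \leq (m+n)\exp\br{g(\theta)\sigma^2}$. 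Substituting into the Laplace bound leaves the standard scalar Bernstein trade-off $\inf_{0 < \theta < 3/R}\br{-\theta t + g(\theta)\sigma^2}$; choosing $\theta = t/(\sigma^2 + Rt/3)$ (which lies in the admissible range) produces the denominator $\sigma^2 + Rt/3$ in the exponent, completing the argument.
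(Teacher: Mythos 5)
The paper does not actually prove this result---it imports it verbatim from the cited reference---and your sketch is precisely the argument given there: Hermitian dilation to reduce to the self-adjoint case, the matrix Laplace transform tail bound, subadditivity of the matrix cumulant generating function via Lieb's concavity theorem, and the scalar eigenvalue comparison yielding $\E{e^{\theta X}} \preceq \exp\br{g(\theta)\E{X^2}}$ with $g(\theta) = \frac{\theta^2/2}{1-\theta R/3}$. One observation worth recording: carrying out your final optimization with $\theta = t/(\sigma^2 + Rt/3)$ gives the exponent $\frac{-t^2/2}{\sigma^2 + Rt/3}$, which is the bound in the original reference, whereas the statement as transcribed here drops the factor $1/2$ and also omits the expectations in the definition of the variance proxy (it should read $\sigma^2 = \max\bc{\norm{\sum_k \E{Z_kZ_k^\top}}_2, \norm{\sum_k \E{Z_k^\top Z_k}}_2}$, exactly as you implicitly assumed when writing $\norm{\sum_k \E{X_k^2}}_2 = \sigma^2$); your derivation therefore recovers the correct form of the inequality rather than the printed one.
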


Below we apply this inequality to analyze the initialization step for AM-MC in the rank-$1$ case. We point the reader to \citep{Recht2011} and \citep{KeshavanMO2010} for a more precise argument analyzing the initialization step in the general rank-$r$ case.

\begin{theorem}
\label{thm:mc_init}
For a rank one matrix $A$ satisfying the $\mu$-incoherence property, let the observed samples $\Omega$ be generated with sampling probability $p$ as described above. Let $\vu^1, \vv^1$ be the singular vectors of $\frac{1}{p}P_{\Omega}(A)$ corresponding to its largest singular value. Then for any $\epsilon > 0$, if $p \geq \frac{45\mu^2}{\epsilon^2}\cdot\frac{\log(m+n)}{\min\bc{m,n}}$, then with probability at least $1-1/(m+n)^{10}$:
\[
\norm{\frac{1}{p}\Pi_\Omega(A)-A}_2\leq \epsilon, \quad \norm{\vu^1 - \vu^\ast}_2 \leq \epsilon, \quad \norm{\vv^1 - \vv^\ast}_2 \leq \epsilon.
\]
Moreover, the vectors $\vu^1$ and $\vv^1$ are also $2\mu$-incoherent.
\end{theorem}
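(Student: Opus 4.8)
The plan is to treat $M := \frac{1}{p}\Pi_\Omega(A)$ as a random perturbation of $A$ and proceed in three stages: first establish the spectral bound $\norm{M - A}_2 \le \epsilon$ via the matrix Bernstein inequality (Theorem~\ref{thm:matrix-bern}), then convert this into closeness of the leading singular vectors by a perturbation (Wedin / Davis--Kahan) argument, and finally establish incoherence of $\vu^1,\vv^1$ through an entrywise analysis.

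For the spectral bound I would write $M - A = \sum_{i,j} Z_{ij}$ with $Z_{ij} = \br{\frac{\delta_{ij}}{p} - 1}A_{ij}\,\ve_i\bar\ve_j^\top$, which are independent and mean-zero, so Theorem~\ref{thm:matrix-bern} applies directly. Since $\abs{\frac{\delta_{ij}}{p}-1} \le \frac{1}{p}$ and $\mu$-incoherence gives $\abs{A_{ij}} = \abs{\vu^\ast_i\vv^\ast_j} \le \frac{\mu^2}{\sqrt{mn}}$, the almost-sure range is $R \le \frac{\mu^2}{p\sqrt{mn}}$. Computing $Z_{ij}Z_{ij}^\top = \br{\frac{\delta_{ij}}{p}-1}^2 A_{ij}^2\,\ve_i\ve_i^\top$ and using $\sum_j (\vv^\ast_j)^2 = 1$, the row-variance term reduces to $\norm{\sum_{ij}\E{Z_{ij}Z_{ij}^\top}}_2 \le \frac{1}{p}\max_i (\vu^\ast_i)^2 \le \frac{\mu^2}{pm}$, and symmetrically the column term is at most $\frac{\mu^2}{pn}$, giving $\sigma^2 \le \frac{\mu^2}{p\min\bc{m,n}}$. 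Substituting the stated lower bound on $p$ into Bernstein with $t = \epsilon$ makes the exponent at least a large constant multiple of $\log(m+n)$, yielding a failure probability of at most $(m+n)^{-10}$; the constant $45$ is chosen precisely to absorb the $\sigma^2 + R\epsilon/3$ denominator. This stage is routine once the two moment computations are in hand.

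The singular-vector bounds then follow from a standard perturbation theorem. Since $A = \vu^\ast(\vv^\ast)^\top$ has top singular value $1$ and a spectral gap of $1$ to the rest of its spectrum, and $\norm{M-A}_2 \le \epsilon$, Wedin's $\sin\theta$ theorem bounds the angle between the leading singular vectors of $M$ and those of $A$ by $\bigO{\epsilon}$; fixing the signs of $\vu^1,\vv^1$ to align with $\vu^\ast,\vv^\ast$ converts this into $\norm{\vu^1 - \vu^\ast}_2 \le \epsilon$ and $\norm{\vv^1 - \vv^\ast}_2 \le \epsilon$ (any constant being folded back into the choice of $p$). I would also record the easy consequence $\abs{\sigma_1 - 1} \le \epsilon$ for use in the last stage.

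The incoherence claim is where the real difficulty lies, and I expect it to be the main obstacle. Using $\vu^1 = \frac{1}{\sigma_1}M\vv^1$ together with the rank-one structure $M_{ij} = \frac{\delta_{ij}}{p}\vu^\ast_i\vv^\ast_j$, each coordinate factors as $\vu^1_i = \vu^\ast_i\cdot\bs{\frac{1}{\sigma_1 p}\sum_j\delta_{ij}\vv^\ast_j\vv^1_j}$, so it suffices to show the bracketed scalar is at most $2$ uniformly in $i$. Its conditional mean over the sampling is $\frac{\ip{\vv^\ast}{\vv^1}}{\sigma_1}\approx 1$, so everything reduces to concentration of $\sum_j\delta_{ij}\vv^\ast_j\vv^1_j$ around $p\ip{\vv^\ast}{\vv^1}$. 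The obstacle is that $\vv^1$ is itself a function of the sampling pattern $\bc{\delta_{ij}}$, so the summands are not independent, and a naive Cauchy--Schwarz bound loses a fatal $1/\sqrt{p}$ factor. Breaking this dependence is the hard part. I would handle it by splitting $\vv^1 = \ip{\vv^1}{\vv^\ast}\vv^\ast + \vv^1_\perp$: the component along $\vv^\ast$ produces a sum $\sum_j\delta_{ij}(\vv^\ast_j)^2$ of genuinely independent bounded terms that concentrates around $p$ by a scalar Bernstein/Chernoff bound (union-bounded over the $m$ rows), while the orthogonal remainder $\vv^1_\perp$ must be controlled by a leave-one-out / decoupling device that renders $\vv^1_\perp$ approximately independent of row $i$'s sampling --- the mechanism underlying the entrywise analyses of \citep{Recht2011, KeshavanMO2010}. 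The symmetric argument bounds $\norm{\vv^1}_\infty$, and combined with $\sigma_1 \ge 1-\epsilon$ these yield $2\mu$-incoherence for a small enough absolute constant $\epsilon$.
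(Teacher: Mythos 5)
Your proposal follows essentially the same route as the paper: the same decomposition of $\frac{1}{p}\Pi_\Omega(A)-A$ into independent mean-zero rank-one summands, the same two moment bounds ($R \leq \mu^2/(p\sqrt{mn})$ and $\sigma^2 \leq \mu^2/(p\min\{m,n\})$) fed into the matrix Bernstein inequality, and the same Davis--Kahan/Wedin perturbation step for the singular vectors. The only divergence is the incoherence claim, which the paper does not prove but simply defers to \citep{JainN2015}; you correctly identify the dependence of $\vv^1$ on the sampling pattern as the genuine obstacle there, and the leave-one-out decoupling you propose is indeed the mechanism used in that literature.
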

\begin{proof}
Notice that the statement of the theorem essentially states that once enough entries in the matrix have been observed (as dictated by the requirement $p \geq \frac{45\mu^2}{\epsilon^2}\cdot\frac{\log(m+n)}{\min\bc{m,n}}$) an SVD step on the incomplete matrix will yield components $\vu^1,\vv^1$ that are very close to the components of the complete matrix $\vu^\ast,\vv^\ast$. Moreover, since $\vu^\ast,\vv^\ast$ are incoherent by assumption, the estimated components $\vu^1,\vv^1$ will be so too.

To apply Theorem~\ref{thm:matrix-bern} to prove this result, we will first express $\frac{1}{p}\Pi_{\Omega}(A)$ as a sum of random matrices. We first rewrite $\frac{1}{p}\Pi_{\Omega}(A)=\frac{1}{p}\sum_{ij}\delta_{ij} A_{ij}\ve_i\ve_j^\top=\sum_{ij} W_{ij}$ where $\delta_{ij} = 1$ if $(i,j) \in \Omega$ and $0$ otherwise. Note that the Bernoulli sampling model assures us that the random variables $W_{ij}=\frac{1}{p}\delta_{ij} A_{ij}\ve_i\ve_j^\top$ are independent and that $\E{\delta_{ij}} = p$. This gives us $\E{W_{ij}}=A_{ij}\ve_i\ve_j^\top$. Note that $\sum_{ij}A_{ij}\ve_i\ve_j^\top = A$.

The matrices $Z_{ij} = W_{ij} - A_{ij}\ve_i \ve_j^\top$ shall serve as our \emph{random matrices} in the application of Theorem~\ref{thm:matrix-bern}.  Clearly $\E{Z_{ij}} = \vzero$. We also have $\max_{ij}\|W_{ij}\|_2 \leq \frac{1}{p} \max_{ij} |A_{ij}|\leq \frac{\mu^2}{p\sqrt{mn}}$ due to the incoherence assumption. Applying the triangle inequality gives us $\max_{ij} \norm{Z_{ij}}_2 \leq \max_{ij}\|W_{ij}\|_2 + \max_{ij}\|A_{ij}\ve_i \ve_j^\top\|_2 \leq \br{1 + \frac{1}{p}}\frac{\mu^2}{\sqrt{mn}} \leq \frac{2\mu^2}{p\sqrt{mn}}$.

Moreover, as $A_{ij} = \vu^\ast_i\vv^\ast_j$ and $\norm{\vv^\ast}_2 = 1$, we have $\E{\sum_{ij} W_{ij} W_{ij}^\top}=\frac{1}{p}\sum_i\sum_j A_{ij}^2 \ve_i \ve_i^\top = \frac{1}{p} \sum_{i} (\vu^\ast_i)^2\ve_i \ve_i^\top$. Due to incoherence $\norm{\vu^\ast}_\infty \leq \frac{\mu}{\sqrt m}$, we get $\norm{\E{\sum_{ij} W_{ij} W_{ij}^\top}}_2\leq \frac{\mu^2}{p\cdot m}$, which can be shown to give us
\[
\norm{\E{\sum_{ij} Z_{ij} Z_{ij}^\top}}_2 \leq \br{\frac{1}{p}-1}\cdot\frac{\mu^2}{m} \leq \frac{\mu^2}{p\cdot m}
\]
Similarly, we can also get $\norm{\E{\sum_{ij} Z_{ij}^\top Z_{ij}}}_2\leq \frac{\mu^2}{p\cdot n}$. Now using Theorem~\ref{thm:matrix-bern} gives us, with probability at least $1 - \delta$,
\[
\norm{\frac{1}{p}\Pi_\Omega(A) - A}_2 \leq \frac{2\mu^2}{3p\sqrt{mn}}\log\bs{\frac{m+n}{\delta}} + \sqrt{\frac{\mu^2}{p\cdot\min\bc{m,n}}\log\bs{\frac{m+n}{\delta}}}
\]
If $p \geq \frac{45\mu^2\log(m+n)}{\epsilon^2\cdot\min\bc{m,n}}$, we have with probability at least $1-1/(m+n)^{10}$,
\[
\norm{\frac{1}{p}\Pi_\Omega(A) - A}_2 \leq \epsilon.
\]
The proof now follows by applying the Davis-Kahan inequality \citep{GolubVL1996} with the above bound. It can be shown \citep{JainN2015} that the vectors that are recovered as a result of this initialization are incoherent as well.
\end{proof}

\noindent{\bf Linear Convergence}: We will now show that, given the initialization above, the AM-MC procedure converges to the true solution with a linear rate of convergence. This will involve showing a few intermediate results, such as showing that the alternation steps preserve incoherence. Since the Theorem~\ref{thm:mc_init} shows that $\vu^1$ is $2\mu$-incoherent, this will establish the incoherence of all future iterates. Preserving incoherence will be crucial in showing the next result which shows that successive iterates get increasingly close to the optimum. Put together, these will establish the convergence result. First, recall that in the $t\th$ iteration of the AM-MC algorithm, $\vv^{t+1}$ is updated as 
\[
\vv^{t+1} = \underset{\vv}{\arg\min}\ \sum_{ij} \delta_{ij}(\vu_i^t \vv_j-\vu^\ast_i\vv^\ast_j)^2, 
\]
which gives us
\begin{equation}
\vv^{t+1}_j=\frac{\sum_{i} \delta_{ij}\vu^\ast_i \vu^t_i}{\sum_i \delta_{ij} (\vu^t_i)^2}\cdot \vv^\ast_j.
\label{eq:v-update}
\end{equation}
Note that this means that if $\vu^\ast=\vu^t$, then $\vv^{t+1}=\vv^\ast$. Also, note that if $\delta_{ij}=1$ for all $(i,j)$ which happens when the sampling probability satisfies $p=1$, we have $\vv^{t+1} = \frac{\ip{\vu^t}{\vu^\ast}}{\norm{\vu^t}_2^2}\cdot\vv^\ast$. This is reminiscent of the \emph{power method} used to recover the leading singular vectors of a matrix. Indeed if we let $\tilde\vu = \vu^t/\norm{\vu^t}_2$, we get $\norm{\vu^t}_2\cdot\vv^{t+1} = \ip{\tilde\vu}{\vu^\ast}\cdot\vv^\ast$ if $p=1$.

This allows us to rewrite the update \eqref{eq:v-update} as a noisy power update.
\begin{equation}
\norm{\vu^t}_2\cdot\vv^{t+1} = \ip{\tilde\vu}{\vu^\ast}\cdot\vv^\ast - B^{-1}(\ip{\tilde\vu}{\vu^\ast} B - C)\vv^\ast
\label{eq:v-update-mod}
\end{equation}
where $B,C\in \R^{n\times n}$ are diagonal matrices with $B_{jj}=\frac{1}{p}\sum_i \delta_{ij} (\tilde{\vu}_i)^2$ and $C_{jj}=\frac{1}{p}\sum_i \delta_{ij} \tilde{\vu}_i \vu^\ast_i$. The following two lemmata show that if $\vu^t$ is $2\mu$ incoherent and if $p$ is large enough, then: a) $\vv^{t+1}$ is also $2\mu$ incoherent, and b) the angular distance between $\vv^{t+1}$ and $\vv^\ast$ decreases as compared to that between $\vu^t$ and $\vu^\ast$. The following lemma will aid the analysis.
\begin{lemma}
\label{lem:ip-preserve}
Suppose $\va,\vb\in\R^n$ are two fixed $\mu$-incoherent unit vectors. Also suppose $\delta_i, i \in [n]$ are i.i.d. Bernoulli random variables such that $\delta_i = 1$ with probability $p$ and 0 otherwise. Then, for any $\epsilon > 0$, if $p > \frac{27\mu^2\log n}{n\epsilon^2}$, then with probability at least $1-1/n^{10}$, $\abs{\frac{1}{p}\sum_i\delta_i\va_i\vb_i - \ip{\va}{\vb}} \leq \epsilon$.
\end{lemma}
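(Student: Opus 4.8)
The plan is to recognize this as a textbook Bernstein-type concentration statement and to apply the matrix Bernstein inequality (Theorem~\ref{thm:matrix-bern}) with scalar ($1\times 1$) summands, exactly as was done in the proof of Theorem~\ref{thm:mc_init}. First I would introduce the independent random variables $Z_i := \frac{1}{p}\delta_i\va_i\vb_i$ and observe that, since $\E{\delta_i}=p$, we have $\E{Z_i}=\va_i\vb_i$ and hence $\E{\sum_i Z_i}=\sum_i\va_i\vb_i=\ip{\va}{\vb}$. Thus the quantity we must control, namely $\frac{1}{p}\sum_i\delta_i\va_i\vb_i-\ip{\va}{\vb}$, is exactly $\sum_i Y_i$ where $Y_i:=Z_i-\va_i\vb_i$ are independent and mean-zero. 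Treating each $Y_i$ as a $1\times 1$ matrix lets me reuse Theorem~\ref{thm:matrix-bern} directly, with $m+n=2$.

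The two ingredients the inequality needs are a uniform almost-sure bound $R$ and a variance proxy $\sigma^2$, both of which follow from $\mu$-incoherence. Incoherence of the unit vectors $\va,\vb$ means $\norm{\va}_\infty,\norm{\vb}_\infty\le \mu/\sqrt n$, so that $\abs{\va_i\vb_i}\le \mu^2/n$ for every $i$. Since $\abs{\delta_i-p}\le 1$, this yields the boundedness estimate $\abs{Y_i}\le \frac{\mu^2}{pn}=:R$. For the variance, $\E{Y_i^2}=\frac{1-p}{p}(\va_i\vb_i)^2\le \frac{1}{p}(\va_i\vb_i)^2$, and summing while using $\sum_i(\va_i\vb_i)^2\le\norm{\va}_\infty^2\norm{\vb}_2^2\le \mu^2/n$ (here $\norm{\vb}_2=1$) gives $\sum_i\E{Y_i^2}\le \frac{\mu^2}{pn}=:\sigma^2$. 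It is worth highlighting that incoherence controls both quantities by the \emph{same} value $\mu^2/(pn)$, which is what makes the final constants clean.

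Applying Theorem~\ref{thm:matrix-bern} at $t=\epsilon$ then gives $\Pr{\abs{\sum_i Y_i}\ge\epsilon}\le 2\exp\br{\frac{-\epsilon^2}{\sigma^2+R\epsilon/3}}$. It remains to substitute the hypothesis $p>\frac{27\mu^2\log n}{n\epsilon^2}$, which forces $\sigma^2=R=\frac{\mu^2}{pn}<\frac{\epsilon^2}{27\log n}$. Assuming without loss of generality that $\epsilon\le 1$ (the regime in which the lemma is invoked), the denominator is at most $\frac{4}{3}\cdot\frac{\epsilon^2}{27\log n}=\frac{4\epsilon^2}{81\log n}$, so the exponent is at least $\frac{81}{4}\log n>20\log n$, whence the failure probability is at most $2n^{-81/4}\le n^{-10}$ for every $n\ge 2$. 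This establishes the claimed bound.

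The main obstacle here is not conceptual but bookkeeping: one must verify the two incoherence-driven estimates and then check that the specific constant $27$ in the lower bound on $p$ is large enough to drive the Bernstein tail below $1/n^{10}$. A minor point to handle with care is the restriction $\epsilon\le 1$, which ensures that the sub-Gaussian term $\sigma^2$ (rather than the sub-exponential term $R\epsilon/3$) dominates the Bernstein denominator; for larger $\epsilon$ the constant would merely need adjustment, but such values are irrelevant to the matrix-completion analysis that consumes this lemma.
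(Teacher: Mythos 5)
Your proof is correct and follows essentially the same route as the paper: both define the mean-zero summands $(\delta_i/p-1)\va_i\vb_i$, use incoherence to bound the almost-sure magnitude and the variance proxy by the same quantity $\mu^2/(pn)$, and conclude via Bernstein's inequality (the paper invokes the scalar version directly rather than the $1\times 1$ specialization of Theorem~\ref{thm:matrix-bern}, but this is the same inequality). Your explicit remark that the constant $27$ and the restriction $\epsilon\leq 1$ are what make the tail drop below $n^{-10}$ is a fair unpacking of the paper's ``simple manipulations.''
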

\begin{proof}
Define $Z_i = \br{\frac{\delta_i}{p}-1}\va_i\vb_i$. Using the incoherence of the vectors, we get $\E{Z_i} = 0$, $\sum_{i=1}^n\E{Z_i^2} = \br{\frac{1}{p}-1}\sum_{i=1}^n(\va_i\vb_i)^2 \leq \frac{\mu^2}{pn}$ since $\norm{\vb}_2 = 1$, and $\abs{Z_i} \leq \frac{\mu^2}{pn}$ almost surely. Applying the Bernstein inequality gives us
\[
\Pr{\abs{\frac{1}{p}\sum_i\delta_i\va_i\vb_i - \ip{\va}{\vb}} > t} \leq \exp\br{\frac{-3pnt^2}{6\mu^4+2\mu^2t}},
\]
which upon simple manipulations, gives us the result.
\end{proof}

\begin{lemma}
\label{lem:incoherence-preserve}
With probability at least $\min\bc{1 - 1/n^{10}, 1 - 1/m^{10}}$, if a pair of iterates $(\vu^t,\vv^t)$ in the execution of the AM-MC procedure are $2\mu$-incoherent, then so are the next pair of iterates $(\vu^{t+1},\vv^{t+1})$.
\end{lemma}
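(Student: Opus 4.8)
The plan is to show that each of $\vu^{t+1}$ and $\vv^{t+1}$ is $2\mu$-incoherent, i.e.\ that the normalized iterate has $\ell_\infty$ norm at most $2\mu/\sqrt n$ (resp.\ $2\mu/\sqrt m$); by the symmetry of the two update steps it suffices to treat $\vv^{t+1}$. First I would exploit the fresh-samples assumption: since the index set $\Omega_t$ used to form $\vv^{t+1}$ is drawn independently of all earlier samples, conditioned on $\vu^t$ we may treat $\tilde\vu := \vu^t/\norm{\vu^t}_2$ as a \emph{fixed} $2\mu$-incoherent unit vector while the $\delta_{ij}$ remain fresh Bernoulli$(p)$ variables. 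This independence is precisely what licenses an appeal to Lemma~\ref{lem:ip-preserve}, which requires deterministic incoherent vectors.

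Next, using the closed-form update \eqref{eq:v-update} together with the diagonal matrices $B,C$ appearing in \eqref{eq:v-update-mod}, each coordinate satisfies $\norm{\vu^t}_2\,\vv^{t+1}_j = (C_{jj}/B_{jj})\,\vv^\ast_j$, where $B_{jj}=\frac1p\sum_i\delta_{ij}\tilde\vu_i^2$ and $C_{jj}=\frac1p\sum_i\delta_{ij}\tilde\vu_i\vu^\ast_i$ are each sums over $i\in[m]$. Applying Lemma~\ref{lem:ip-preserve} in dimension $m$ once with $(\va,\vb)=(\tilde\vu,\tilde\vu)$ and once with $(\va,\vb)=(\tilde\vu,\vu^\ast)$, and union bounding over the $n$ indices $j$, gives $\abs{B_{jj}-1}\le\epsilon_0$ and $\abs{C_{jj}-\ip{\tilde\vu}{\vu^\ast}}\le\epsilon_0$ for all $j$ simultaneously, with failure probability at most $1/m^{10}$ once the polynomial union-bound factor is absorbed into the generous exponent, provided $p$ is at least a constant multiple of $\mu^2\log m/(m\epsilon_0^2)$.

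I would then form the incoherence ratio. Since $\norm{\vu^t}_2$ cancels,
\[
\frac{\abs{\vv^{t+1}_j}}{\norm{\vv^{t+1}}_2}=\frac{(\abs{C_{jj}}/B_{jj})\,\abs{\vv^\ast_j}}{\sqrt{\sum_k(C_{kk}/B_{kk})^2(\vv^\ast_k)^2}},
\]
and bounding the numerator from above and the denominator from below via the concentration estimates together with $\norm{\vv^\ast}_\infty\le\mu/\sqrt n$ and $\norm{\vv^\ast}_2=1$ yields a bound of the form $\frac{(1+\epsilon_0)(\abs{\ip{\tilde\vu}{\vu^\ast}}+\epsilon_0)}{(1-\epsilon_0)(\abs{\ip{\tilde\vu}{\vu^\ast}}-\epsilon_0)}\cdot\frac{\mu}{\sqrt n}$. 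The task reduces to certifying that the leading factor is at most $2$.

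The main obstacle is exactly controlling that leading factor: the denominator forces $\ip{\tilde\vu}{\vu^\ast}$ to be bounded away from $0$, as otherwise normalization can inflate the incoherence constant without limit. I would discharge this using the inductive context in which the lemma is invoked: the initialization (Theorem~\ref{thm:mc_init}) places $\vu^1$ within a small constant distance of $\vu^\ast$, and the accompanying contraction argument keeps every later iterate close, so $\ip{\tilde\vu}{\vu^\ast}$ stays near $1$; choosing $\epsilon_0$ a sufficiently small constant then forces the factor below $2$. Finally I would run the identical argument on the $U$-update, now treating the freshly-established $2\mu$-incoherent $\vv^{t+1}$ as the fixed vector and using the independent sample set $\Omega_{T+t}$ in dimension $n$, and take a union bound over the two directions to obtain the stated failure probability $\min\bc{1-1/n^{10},\,1-1/m^{10}}$.
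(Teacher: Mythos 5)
Your proposal is correct and follows essentially the same route as the paper's proof: fresh samples let you treat $\tilde\vu$ as fixed so that Lemma~\ref{lem:ip-preserve} yields $\abs{B_{jj}-1}\le\epsilon$ and $\abs{C_{jj}-\ip{\tilde\vu}{\vu^\ast}}\le\epsilon$ for all $j$, after which the closed-form update and the incoherence of $\vv^\ast$ give the entrywise bound. The only (cosmetic) difference is that you explicitly normalize $\vv^{t+1}$ and hence need $\ip{\tilde\vu}{\vu^\ast}$ bounded away from zero via the initialization, whereas the paper bounds the unnormalized entries using $\norm{\vu^t}_2\ge 1-\epsilon$ and absorbs the normalization elsewhere -- your version is, if anything, slightly more careful on this point.
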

\begin{proof}
Since $\norm{\tilde\vu}_2 = 1$, using Lemma~\ref{lem:ip-preserve} tells us that with high probability, for all $j$, we have $\abs{B_{jj} - 1} \leq \epsilon$ as well as $\abs{C_{jj} - \ip{\tilde\vu}{\vu^\ast}} \leq \epsilon$. Also, using triangle inequality, we get $\norm{\vu^t}_2 \geq 1 - \epsilon$. Using these and the incoherence of $\vv^\ast$ in the update equation for $\vv^{t+1}$ \eqref{eq:v-update-mod}, we have
\begin{align*}
\abs{\vv^{t+1}_j} &= \frac{1}{1-\epsilon}\abs{\ip{\tilde\vu}{\vu^\ast}\vv^\ast_j - \frac{1}{B_{jj}}(\ip{\tilde\vu}{\vu^\ast}B_{jj} - C_{jj})\vv^\ast_j}\\
									&\leq \frac{1}{1-\epsilon}\abs{\ip{\tilde\vu}{\vu^\ast}\vv^\ast_j} + \frac{1}{1-\epsilon}\abs{\frac{1}{B_{jj}}(\ip{\tilde\vu}{\vu^\ast}B_{jj} - C_{jj})\vv^\ast_j}\\
									&\leq \frac{1}{(1-\epsilon)^2}\br{\abs{\ip{\tilde\vu}{\vu^\ast}} + \abs{\ip{\tilde\vu}{\vu^\ast}(1+\epsilon) - (\ip{\tilde\vu}{\vu^\ast} - \epsilon)}}\frac{\mu}{\sqrt n} \leq \frac{1+2\epsilon}{(1-\epsilon)^2}\frac{\mu}{\sqrt n}
\end{align*}
For $\epsilon < 1/6$, the result now holds.
\end{proof}

We note that whereas Lemma~\ref{lem:ip-preserve} is proved for fixed vectors, we seem to have inappropriately applied it to $\tilde\vu$ in the proof of Lemma~\ref{lem:incoherence-preserve} which is not a fixed vector as it depends on the randomness used in sampling the entries of the matrix revealed to the algorithm. However notice that the AM-MC procedure in Algorithm~\ref{algo:altmin-lrmc} uses fresh samples $\Omega_t$ and $\Omega_{T+t}$ for each iteration. This ensures that $\tilde\vu$ does behave like a fixed vector with respect to Lemma~\ref{lem:ip-preserve}.

\begin{lemma}
\label{lem:lin-conv-altmin}
For any $\epsilon > 0$, if $p > \frac{80\mu^2\log(m+n)}{\epsilon^2\min\bc{m,n}}$ and $\vu^t$ is $2\mu$-incoherent, the next iterate $\vv^{t+1}$ satisfies
\[
1- \ip{\frac{\vv^{t+1}}{\norm{\vv^{t+1}}_2}}{\vv^\ast}^2 \leq \frac{\epsilon}{(1-\epsilon)^3}\br{1- \ip{\frac{\vu^t}{\norm{\vu^t}_2}}{\vu^\ast}^2}
\]
Similarly, for any $2\mu$-incoherent iterate $\vv^{t+1}$, the next iterate satisfies 
\[
1- \ip{\frac{\vu^{t+1}}{\norm{\vu^{t+1}}_2}}{\vu^\ast}^2 \leq \frac{\epsilon}{(1-\epsilon)^3}\br{1- \ip{\frac{\vv^{t+1}}{\norm{\vv^{t+1}}_2}}{\vv^\ast}^2}.
\]
\end{lemma}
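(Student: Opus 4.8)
The plan is to exploit the fact, visible from the closed form \eqref{eq:v-update}, that each iterate $\vv^{t+1}$ is merely a \emph{diagonal rescaling} of the target $\vv^\ast$. Writing $\tilde\vu = \vu^t/\norm{\vu^t}_2$ and $\cos\theta_u = \ip{\tilde\vu}{\vu^\ast}$ (so that the left-hand quantity to be contracted is exactly $\sin^2\theta_u = 1 - \cos^2\theta_u$), the update \eqref{eq:v-update-mod} gives $\vv^{t+1}_j = d_j\,\vv^\ast_j/\norm{\vu^t}_2$ with $d_j = C_{jj}/B_{jj}$. First I would record that, since $\norm{\vv^\ast}_2 = 1$, the squared weights $(\vv^\ast_j)^2$ form a probability distribution over $j\in[n]$, and a direct computation of $\ip{\vv^{t+1}}{\vv^\ast}$ and $\norm{\vv^{t+1}}_2^2$ reduces the angular error to a variance:
\[
1 - \ip{\frac{\vv^{t+1}}{\norm{\vv^{t+1}}_2}}{\vv^\ast}^2 = \frac{\sum_j d_j^2(\vv^\ast_j)^2 - \br{\sum_j d_j(\vv^\ast_j)^2}^2}{\sum_j d_j^2(\vv^\ast_j)^2} = \frac{\mathrm{Var}(d)}{\E{d^2}},
\]
where $\mathrm{Var}$ and $\E$ are taken with respect to the distribution $(\vv^\ast_j)^2$ (the scalar $\norm{\vu^t}_2$ cancels in the ratio). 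Thus the whole task is to show that $d_j$ is nearly constant across $j$, with fluctuations controlled by $\sin^2\theta_u$.

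Next I would establish the two concentration facts that pin down $d_j$. For fixed $j$, both $B_{jj} = \tfrac1p\sum_i\delta_{ij}(\tilde\vu_i)^2$ and $C_{jj} = \tfrac1p\sum_i\delta_{ij}\tilde\vu_i\vu^\ast_i$ are fresh-sample sums, so Lemma~\ref{lem:ip-preserve} (applied over the $m$ rows, using that $\tilde\vu$ is $2\mu$-incoherent and $\vu^\ast$ is $\mu$-incoherent) yields $\abs{B_{jj} - 1} \leq \epsilon$ with high probability. The decisive quantity, however, is the centered combination, for which I would use the algebraic identity
\[
C_{jj} - \cos\theta_u\,B_{jj} = \frac1p\sum_i \delta_{ij}\,\tilde\vu_i\,(\vu^\ast_i - \cos\theta_u\,\tilde\vu_i).
\]
This sum has expectation $\ip{\tilde\vu}{\vu^\ast} - \cos\theta_u = 0$, and its Bernstein variance proxy is $\tfrac1p\sum_i(\tilde\vu_i)^2(\vu^\ast_i - \cos\theta_u\tilde\vu_i)^2 \leq \norm{\tilde\vu}_\infty^2\,\norm{\vu^\ast - \cos\theta_u\tilde\vu}_2^2 = \tfrac{4\mu^2}{pm}\sin^2\theta_u$, using $\norm{\vu^\ast - \cos\theta_u\tilde\vu}_2^2 = \sin^2\theta_u$ and $\norm{\tilde\vu}_\infty \leq 2\mu/\sqrt m$. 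The scalar Bernstein bound of Theorem~\ref{thm:matrix-bern} then gives $\abs{C_{jj} - \cos\theta_u B_{jj}}$ of order $\epsilon\sin\theta_u$ with high probability, once $p$ is as large as assumed. It is precisely the factor $\sin^2\theta_u$ retained in the variance proxy that turns this noise term into a contraction rather than a constant.

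Finally I would assemble the pieces. Writing $d_j = \cos\theta_u + r_j$ with $r_j = (C_{jj} - \cos\theta_u B_{jj})/B_{jj}$, the two bounds above give $\abs{r_j} \lesssim \epsilon\sin\theta_u/(1-\epsilon)$ uniformly in $j$; since $\mathrm{Var}(d) = \mathrm{Var}(r) \leq \E{r^2} \leq \max_j r_j^2$ (the weights sum to one), the numerator is at most $\tfrac{\epsilon^2}{(1-\epsilon)^2}\sin^2\theta_u$ up to constants, and the prefactors can be tracked to land inside $\epsilon/(1-\epsilon)^3$. The denominator $\E{d^2}$ is bounded below by a constant because $\cos\theta_u$ stays bounded away from $0$ — guaranteed at $t=1$ by the initialization estimate of Theorem~\ref{thm:mc_init} and propagated inductively by the very contraction being proved. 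A union bound over the $n$ columns controls all $j$ at once, and the second inequality follows by the identical argument with $\vu\leftrightarrow\vv$ and $m\leftrightarrow n$ exchanged, which is legitimate since $\vv^{t+1}$ is $2\mu$-incoherent by Lemma~\ref{lem:incoherence-preserve}.

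I expect Step 2 to be the main obstacle. The tempting route writes $\vu^\ast = \cos\theta_u\,\tilde\vu + \sin\theta_u\,\vu_\perp$ and applies Lemma~\ref{lem:ip-preserve} to the orthogonal pair $(\tilde\vu,\vu_\perp)$, but $\vu_\perp = (\vu^\ast - \cos\theta_u\tilde\vu)/\sin\theta_u$ has $\ell_\infty$-norm blowing up like $1/\sin\theta_u$ and is therefore \emph{not} incoherent, so that lemma cannot be invoked as a black box. One must instead apply Bernstein directly to the already-centered, unnormalized sum above, keeping $\sin\theta_u$ inside the $\ell_2$ factor rather than pulling it out. Care is also needed because the Bernstein large-deviation bound contributes a lower-order term (from the almost-sure bound $R$) that does not carry the $\sin\theta_u$ factor; ensuring this term is dominated is exactly what forces the stated density $p \gtrsim \mu^2\log(m+n)/(\epsilon^2\min\{m,n\})$.
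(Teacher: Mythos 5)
Your reduction of the angular error to the weighted variance ratio $\mathrm{Var}(d)/\E{d^2}$ is correct and is really the same computation the paper performs: the paper bounds $\abs{\ip{\vv^{t+1}}{\vv_\bot}}$ for a unit $\vv_\bot$ orthogonal to $\vv^\ast$ and then invokes Pythagoras, which is your numerator in disguise. Likewise, isolating the centered combination $C_{jj}-\ip{\tilde\vu}{\vu^\ast}B_{jj}=\frac1p\sum_i\delta_{ij}\tilde\vu_i(\vu^\ast_i-\ip{\tilde\vu}{\vu^\ast}\tilde\vu_i)$ and keeping $\sin\theta_u$ inside the variance proxy is exactly the paper's key step (its vectors $Z_{ij}$ carry the factor $\abs{\ip{\tilde\vu}{\vu^\ast}\tilde\vu_i-\vu^\ast_i}\leq\sin\theta_u$), and your caveat that $\vu_\perp$ is not incoherent, so Lemma~\ref{lem:ip-preserve} cannot be applied to it as a black box, is well taken.

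The gap is in how you aggregate over columns. You bound $\mathrm{Var}(d)\leq\max_j r_j^2$ and therefore need $\abs{C_{jj}-\ip{\tilde\vu}{\vu^\ast}B_{jj}}\leq\epsilon\sin\theta_u$ \emph{uniformly} in $j$. For a single column this is a sum over only $m$ Bernoulli indicators, each term of magnitude up to $R=\frac1p\norm{\tilde\vu}_\infty\norm{\vw}_\infty$ with $\vw=\vu^\ast-\ip{\tilde\vu}{\vu^\ast}\tilde\vu$; since the only available bounds are $\norm{\vw}_\infty\leq\min\bc{\sin\theta_u,\bigO{\mu/\sqrt m}}$, the $Rt/3$ term in Bernstein at deviation level $t=\epsilon\sin\theta_u$ forces $p\gtrsim\frac{\mu\log(m+n)}{\epsilon\sqrt m}$, i.e.\ roughly $\mu\sqrt m/\epsilon$ expected observations per column. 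The stated hypothesis supplies only $pm=\bigO{\mu^2\log(m+n)/\epsilon^2}$ observations per column, independent of $m$, so once $\sqrt m\gg\mu/\epsilon$ your uniform bound is not merely unprovable but typically false: a column that happens to sample the coordinate $i$ maximizing $\abs{\tilde\vu_i\vw_i}$ already incurs $\abs{r_j}\approx\frac{2\mu}{p\sqrt m}\sin\theta_u\gg\epsilon\sin\theta_u$. The paper never controls individual columns: it bounds the weighted aggregate $\norm{(\ip{\tilde\vu}{\vu^\ast}B-C)\vv^\ast}_2^2=\sum_j(\vv^\ast_j)^2\br{C_{jj}-\ip{\tilde\vu}{\vu^\ast}B_{jj}}^2$ in a single Bernstein application over all $mn$ index pairs, where the extra factor $\abs{\vv^\ast_j}\leq\mu/\sqrt n$ shrinks the almost-sure bound to $\bigO{\frac{\mu^2}{p\sqrt{mn}}\sin\theta_u}$ and renders the $R$-term harmless at the stated sampling rate. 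Replacing your $\max_j r_j^2$ with $\sum_j(\vv^\ast_j)^2 r_j^2\leq\norm{B^{-1}}_2^2\,\norm{(\ip{\tilde\vu}{\vu^\ast}B-C)\vv^\ast}_2^2$ repairs the argument and lands you back on the paper's proof.
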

\begin{proof}
Using the modified form of the update for $\vu^{t+1}$ \eqref{eq:v-update-mod}, we get, for any unit vector $\vv_\bot$ such that $\ip{\vv_\bot}{\vv^\ast} = 0$,
\begin{align*}
\norm{\vu^t}_2\cdot\ip{\vv^{t+1}}{\vv_\bot} &= \ip{\vv_\bot}{B^{-1}(\ip{\tilde\vu}{\vu^\ast} B - C)\vv^\ast}\\
&\leq \norm{B^{-1}}_2\norm{(\ip{\tilde\vu}{\vu^\ast} B - C)\vv^\ast}_2\\
&\leq \frac{1}{1-\epsilon}\norm{(\ip{\tilde\vu}{\vu^\ast} B - C)\vv^\ast}_2,
\end{align*}
where the last step follows from an application of Lemma~\ref{lem:ip-preserve}. To bound the other term let $Z_{ij} = \frac{1}{p}\delta_{ij}(\ip{\tilde\vu}{\vu^\ast}(\tilde\vu_i)^2 - \tilde\vu_i\vu^\ast_i)\vv^\ast_j\ve_j \in \R^n$. Clearly $\sum_{i=1}^m\sum_{j=1}^nZ_{ij} = (\ip{\tilde\vu}{\vu^\ast} B - C)\vv^\ast$. Note that due to fresh samples being used by Algorithm~\ref{algo:altmin-lrmc} at every step, the vector $\tilde\vu$ appears as a constant vector to the random variables $\delta_{ij}$. Given this, note that
\begin{align*}
\E{\sum_{i=1}^mZ_{ij}} &= \sum_{i=1}^m(\ip{\tilde\vu}{\vu^\ast}(\tilde\vu_i)^2 - \tilde\vu_i\vu^\ast_i)\vv^\ast_j\ve_j\\
											 &= \ip{\tilde\vu}{\vu^\ast}\sum_{i=1}^m(\tilde\vu_i)^2\vv^\ast_j\ve_j - \sum_{i=1}^m\tilde\vu_i\vu^\ast_i\vv^\ast_j\ve_j\\
											 &= (\ip{\tilde\vu}{\vu^\ast}\norm{\tilde\vu}_2^2 - \ip{\tilde\vu}{\vu^\ast})\vv^\ast_j\ve_j = \vzero,
\end{align*}
since $\norm{\tilde\vu}_2 = 1$. Thus $\E{\sum_{ij}Z_{ij}} = \vzero$ as well. Now, we have $\max_i(\tilde\vu_i)^2 = \frac{1}{\norm{\vu^t}_2^2}\cdot\max_i(\vu^t_i)^2 \leq \frac{4\mu^2}{m\norm{\vu^t}_2^2} \leq \frac{\mu^2}{m(1-\epsilon)}$ since $\norm{\vu^t - \vu^\ast}_2 \leq \epsilon$. This allows us to bound
\begin{align*}
\abs{\E{\sum_{ij} Z_{ij}^\top Z_{ij}}} &= \frac{1}{p}\sum_{i=1}^m\sum_{j=1}^n (\ip{\tilde\vu}{\vu^\ast}(\tilde\vu_i)^2 - \tilde\vu_i\vu^\ast_i)^2 (\vv^\ast_j)^2\\
											&= \frac{1}{p}\sum_{i=1}^m(\tilde\vu_i)^2(\ip{\tilde\vu}{\vu^\ast}\tilde\vu_i - \vu^\ast_i)^2\\
											&\leq \frac{\mu^2}{pm(1-\epsilon)}\sum_{i=1}^m \ip{\tilde\vu}{\vu^\ast}^2(\tilde\vu_i)^2 + (\vu^\ast_i)^2 - 2\ip{\tilde\vu}{\vu^\ast}\tilde\vu_i\vu^\ast_i\\
											&\leq \frac{8\mu^2}{pm} (1- \ip{\tilde\vu}{\vu^\ast}^2),
\end{align*}
where we set $\epsilon = 0.5$. In the same way we can show $\norm{\E{\sum_{ij} Z_{ij}^\top Z_{ij}}}_2 \leq \frac{8\mu^2}{pm} (1- \ip{\tilde\vu}{\vu^\ast}^2)$ as well. Using a similar argument we can show $\norm{Z_{ij}}_2 \leq \frac{4\mu^2}{p\sqrt{mn}}\sqrt{1- \ip{\tilde\vu}{\vu^\ast}^2}$. Applying the Bernstein inequality now tells us, that for any $\epsilon > 0$, if $p > \frac{80\mu^2\log(m+n)}{\epsilon^2\min\bc{m,n}}$, then with probability at least $1 - 1/n^{10}$, we have
\[
\norm{(\ip{\tilde\vu}{\vu^\ast} B - C)\vv^\ast}_2 \leq \epsilon\cdot\sqrt{1- \ip{\tilde\vu}{\vu^\ast}^2}.
\]
Since $\norm{\vu^t}_2 \geq 1 -\epsilon$ is guaranteed by the initialization step, we now get,
\[
\ip{\vv^{t+1}}{\vv_\bot} \leq \frac{\epsilon}{(1-\epsilon)^2}\sqrt{1- \ip{\tilde\vu}{\vu^\ast}^2}.
\]
If $\vv^{t+1}_\bot$ and $\vv^{t+1}_\parallel$ be the components of $\vv^{t+1}$ perpendicular and parallel to $\vv^\ast$. Then the above guarantees that $\norm{\vv^\bot}_2 \leq c\cdot\sqrt{1- \ip{\tilde\vu}{\vu^\ast}^2}$. This gives us, upon applying the Pythagoras theorem,
\[
\norm{\vv^{t+1}}^2_2 = \norm{\vv^{t+1}_\bot}^2_2 + \norm{\vv^{t+1}_\parallel}^2_2 \leq \frac{\epsilon}{(1-\epsilon)^2}\br{1- \ip{\tilde\vu}{\vu^\ast}^2} + \norm{\vv^{t+1}_\parallel}^2_2
\]
Since $\norm{\vv^{t+1}_\parallel}_2 = \ip{\vv^{t+1}}{\vv^\ast}$ and $\norm{\vv^{t+1}}_2 \geq 1-\epsilon$ as $\norm{\vv^{t+1} - \vv^\ast} \leq \epsilon$ due to the initialization, rearranging the terms gives us the result.
\end{proof}

Using these results, it is easy to establish the main theorem.

\begin{theorem}
\label{thm:altmin-conv}
Let $A=\vu^\ast(\vv^\ast)^\top$ be a unit rank matrix where $\vu^\ast \in \R^m$ and $\vv^\ast \in \R^n$ are two $\mu$-incoherent unit vectors. Let the matrix be observed at a set of indices $\Omega\subseteq [m]\times [n]$ where each index is observed with probability $p$. Then if $p \geq C\cdot\frac{\mu^2\log(m+n)}{\epsilon^2\min\bc{m,n}}$ for some large enough constant $C$, then with probability at least $1 - 1/\min\bc{m,n}^{10}$, AM-MC generates iterates which are $2\mu$-incoherent. Moreover, within $\bigO{\log\frac{1}{\epsilon}}$ iterations, AM-MC also ensures that $\norm{\frac{\vu^t}{\norm{\vu^t}_2}-\vu^\ast}_2 \leq \epsilon$ and $\norm{\frac{\vv^t}{\norm{\vv^t}_2} - \vv^\ast}_2 \leq \epsilon$.
\end{theorem}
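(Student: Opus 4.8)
The plan is to assemble the four preparatory results into a single induction over iterations that tracks incoherence and angular contraction simultaneously. First I would use the initialization guarantee (Theorem~\ref{thm:mc_init}) as the base case: with the stated sampling probability, the leading singular vectors $\vu^1,\vv^1$ of $\frac{1}{p}\Pi_{\Omega_0}(A)$ are $2\mu$-incoherent and satisfy $\norm{\vu^1 - \vu^\ast}_2 \leq \epsilon_0$ and $\norm{\vv^1 - \vv^\ast}_2 \leq \epsilon_0$ for a small absolute constant $\epsilon_0$ (obtained by taking $C$ large enough). In particular $1 - \ip{\frac{\vu^1}{\norm{\vu^1}_2}}{\vu^\ast}^2$ is bounded by a constant strictly below one, which seeds the contraction. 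Here I would fix the signs of $\vu^\ast,\vv^\ast$ in the factorization $A=\vu^\ast(\vv^\ast)^\top$ so that these inner products are positive and near one; since each update only increases them, they stay positive throughout.

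Next I would run the induction. Assume $\vu^t$ is $2\mu$-incoherent. Lemma~\ref{lem:incoherence-preserve} then gives that $\vv^{t+1}$ is $2\mu$-incoherent, and a symmetric application shows $\vu^{t+1}$ is $2\mu$-incoherent, so all iterates remain $2\mu$-incoherent. With incoherence in hand, Lemma~\ref{lem:lin-conv-altmin} applies at every half-step and yields
\[
1 - \ip{\frac{\vv^{t+1}}{\norm{\vv^{t+1}}_2}}{\vv^\ast}^2 \leq \gamma\br{1 - \ip{\frac{\vu^t}{\norm{\vu^t}_2}}{\vu^\ast}^2},
\]
and analogously for the $\vu$-update, with $\gamma = \frac{\epsilon_0}{(1-\epsilon_0)^3}$. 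Choosing $\epsilon_0$ a sufficiently small absolute constant forces $\gamma<1$; this is exactly the choice that fixes $C$. Composing the two contractions per loop iteration, the squared sine of the angle decays geometrically, so after $T=\bigO{\log\frac{1}{\epsilon}}$ iterations both $1 - \ip{\frac{\vu^T}{\norm{\vu^T}_2}}{\vu^\ast}^2$ and $1 - \ip{\frac{\vv^T}{\norm{\vv^T}_2}}{\vv^\ast}^2$ drop below $\epsilon^2$. The stated requirement $p \geq C\frac{\mu^2\log(m+n)}{\epsilon^2\min\bc{m,n}}$ is more than enough for this: even after the partition of $\Omega$ into $2T+1$ disjoint sets reduces the effective per-step sampling probability to $p/(2T+1)$, it still comfortably dominates the $\Omega(\mu^2\log(m+n)/\min\bc{m,n})$ each lemma needs at a constant contraction parameter.

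The translation to the claimed Euclidean bound is routine: for unit vectors $\norm{\frac{\vu^T}{\norm{\vu^T}_2} - \vu^\ast}_2^2 = 2\br{1 - \ip{\frac{\vu^T}{\norm{\vu^T}_2}}{\vu^\ast}}$, and since the inner product is positive we have $1 - \ip{}{} \leq 1 - \ip{}{}^2 \leq \epsilon^2$, whence $\norm{\frac{\vu^T}{\norm{\vu^T}_2} - \vu^\ast}_2 \leq \sqrt 2\,\epsilon$, and likewise for $\vv$; rescaling $\epsilon$ by the hidden constant gives the statement exactly.

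The step I expect to be the main obstacle is the probabilistic bookkeeping rather than any single inequality. Lemmas~\ref{lem:ip-preserve} and \ref{lem:lin-conv-altmin} are proved for \emph{fixed} incoherent vectors, whereas $\frac{\vu^t}{\norm{\vu^t}_2}$ and $\frac{\vv^t}{\norm{\vv^t}_2}$ are random, depending on the samples that produced them. The resolution, which must be invoked carefully, is that Algorithm~\ref{algo:altmin-lrmc} draws a fresh set $\Omega_t$ (resp. $\Omega_{T+t}$) at each half-step, so the randomness determining the current iterate is independent of the $\delta_{ij}$ driving the next update; conditioned on the previous iterate, that iterate genuinely behaves as a fixed vector and the lemmas apply verbatim. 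Finally, each lemma invocation fails with probability at most an inverse polynomial in $m$ and $n$, so a union bound over the $2T+1 = \bigO{\log\frac{1}{\epsilon}}$ invocations keeps the total failure probability below $1/\min\bc{m,n}^{10}$ after adjusting constants.
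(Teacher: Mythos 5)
Your proposal is correct and follows essentially the same route the paper intends: the paper states Theorem~\ref{thm:altmin-conv} follows directly by assembling Theorem~\ref{thm:mc_init}, Lemma~\ref{lem:incoherence-preserve}, and Lemma~\ref{lem:lin-conv-altmin}, which is exactly the induction you describe, including the two points the paper itself flags (the fresh-sample trick that lets the fixed-vector lemmas apply to random iterates, and the union bound over the $2T+1$ sample splits). Your handling of the constant-versus-target-accuracy distinction for $\epsilon_0$ and the conversion from the squared-sine bound to the Euclidean bound are the right finishing touches.
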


\section{Other Popular Techniques for Matrix Recovery}
As has been the case with the other problems we have studied so far, the first approaches to solving the ARM and LRMC problems were relaxation based approaches \citep{CandesR2009,CandesT2009,RechtFP2010,Recht2011}. These approaches relax the non-convex rank objective in the \eqref{eq:arm} formulation using the (convex) \emph{nuclear norm}
\begin{equation*}
\begin{array}{cl}
	\min & \norm{X}_\ast\\
	\text{s.t.} & \cA(X) = \y,
\end{array}
\label{eq:arm-conv}
\end{equation*}
where the nuclear norm of a matrix $\norm{X}_\ast$ is the sum of all singular values of the matrix $X$. The nuclear norm is known to provide the tightest convex envelope of the rank function, just as the $\ell_1$ norm provides a relaxation to the sparsity norm $\norm{\cdot}_0$ \citep{RechtFP2010}. Similar to sparse recovery, under matrix-RIP settings, these relaxations can be shown to offer exact recovery \citep{RechtFP2010,HastieTW2016}.

\begin{figure}[t]
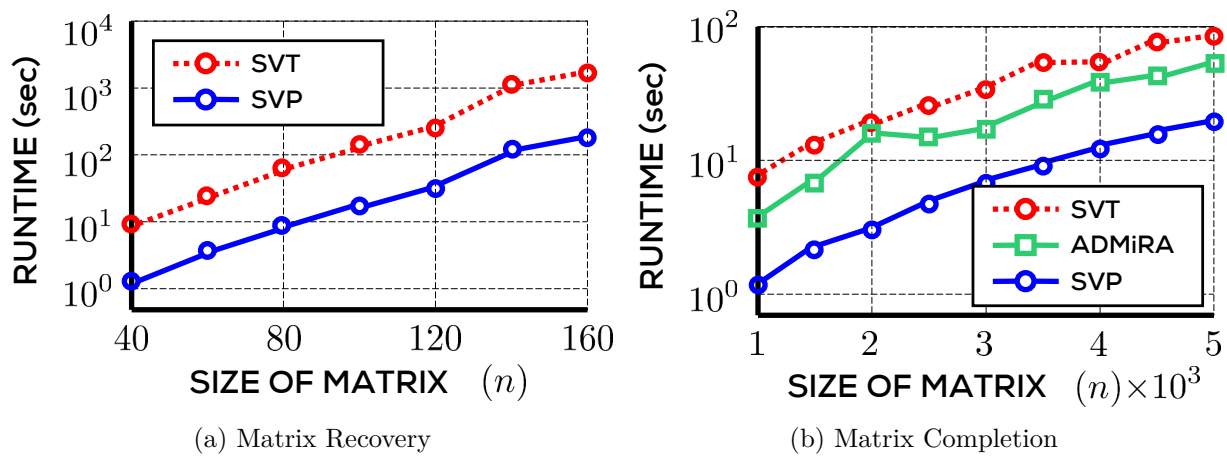

\begin{subfigure}[t]{.5\columnwidth}
\centering \includegraphics[width=\columnwidth]{matrec-comp.pdf}
\caption{Matrix Recovery}
\label{fig:matrec-comparison-matrec}
\end{subfigure}
\hfill
\begin{subfigure}[t]{.5\columnwidth}
\centering \includegraphics[width=\columnwidth]{matcomp-comp.pdf}
\caption{Matrix Completion}
\label{fig:matrec-comparison-matcomp}
\end{subfigure}%
\caption[Relaxation vs. Non-convex Methods for Matrix Recovery]{An empirical comparison of run-times offered by the SVT, ADMiRA and SVP methods on synthetic matrix recovery and matrix completion problems with varying matrix sizes. The SVT method due to \cite{CaiCS2010} is an efficient implementation of the nuclear norm relaxation technique. ADMiRA is a pursuit-style method due to \cite{LeeB2010}. For the ARM task in Figure~\ref{fig:matrec-comparison-matrec}, the rank of the true matrix was set to $r=5$ whereas it was set to $r=2$ for the LRMC task in Figure~\ref{fig:matrec-comparison-matcomp}. SVT is clearly the most scalable of the methods in both cases whereas the relaxation-based SVT technique does not scale very well to large matrices. Note however, that for the LRMC problem, AM-MC (not shown in the figure) outperforms even SVP. Figures adapted from \citep{MekaJCD2008}.}%
\label{fig:matrec-comparison}
\end{figure}

Also similar to sparse recovery, there exist pursuit-style techniques for matrix recovery, most notable among them being the ADMiRA method \citep{LeeB2010} that extends the orthogonal matching pursuit approach to the matrix recovery setting. However, this method can be a bit sluggish when recovering matrices with slightly large rank since it discovers a matrix with larger and larger rank incrementally.

Before concluding, we present the reader with an empirical performance of these various methods. Figure~\ref{fig:matrec-comparison} provides a comparison of these methods on synthetic matrix recovery and matrix completion problems with increasing dimensionality of the (low-rank) matrix being recovered. The graphs indicate that non-convex optimization methods such as IHT are far more scalable, often by an order of magnitude, than relaxation-based methods.

\section{Exercises}
\begin{exer}
\label{exer:matrec-np-hard}
Show that low-rank matrix recovery is NP-hard.\\
\textit{Hint}: Take the sparse recovery problem in~\eqref{eq:spreg} and reduce it to the reformulation~\eqref{eq:arm-2} of the matrix recovery problem.\end{exer}
\begin{exer}
\label{exer:matrec-monotone}
Show that the matrix RIP constant is monotonic in its order i.e., if a linear map $\cA$ satisfies matrix RIP of order $r$ with constant $\delta_r$, then it also satisfies matrix RIP for all orders $r' \leq r$ with $\delta_{r'} \leq \delta_{r}$.
\end{exer}

\section{Bibliographic Notes}
There are a lot of aspects of low-rank matrix recovery that we have not covered in our discussion. Here we briefly point out some of these.

Similar to the sparse regression setting, the problem of ill-conditioned problems requires special care in the matrix recovery setting as well. For the general ARM problem, the work of \cite{JainTK2014} does this by first proposing appropriate versions of RSC-RSS properties (see Definition~\ref{defn:rsc-rss}) for the matrix case, and the suitably modifying SVP-style techniques to function even in high condition number settings. The final result is similar to the one for sparse recovery (see Theorem~\ref{thm:iht-conv-proof-rsc-rss}) wherein a more ``relaxed'' projection step is required by using a rank $q > r$ while executing the SVP algorithm.

It turns out to be challenging to prove convergence results for SVP for the LRMC problem. This is primarily due to the difficulty in establishing the matrix RIP property for the affine transformation used in the problem. The affine map simply selects a few elements of the matrix and reproduces them which makes establishing RIP properties harder in this setting. Specifically, even though the initialization step can be shown to yield a matrix that satisfies matrix RIP \citep{JainMD2010}, if the underlying matrix is low-rank and incoherent, it becomes challenging to show that RIP-ness is maintained across iterates. \cite{JainN2015} overcome this by executing the SVP algorithm in a \emph{stage-wise} fashion which resembles ADMiRA-like pursuit approaches.

Several works have furthered the alternating minimization approach itself by reducing its sample complexity \citep{Hardt2014}, giving recovery guarantees independent of the condition number of the problem \citep{HardtW2014,JainN2015}, designing universal sampling schemes for recovery \citep{BhojanapalliJ2014}, as well as tackling settings where some of the revealed entries of the matrix may be corrupted \citep{ChenXCS2016, CherapanamjeriGJ2017}.

Another interesting line of work for matrix completion is that of \citep{GeLM2016, SunL2015} which shows that under certain regularization assumptions, the matrix completion problem does not have any non-optimal stationary points once one gets close-enough to the global minimum. Thus, one can use any method for convex optimization such as alternating minimization, gradient descent, stochastic gradient descent, and its variants we studied in \S~\ref{chap:saddle}, once one is close enough to the global minimum. 
\chapter{Robust Linear Regression}
\label{chap:rreg}

\newcommand{\oc}{\vb^\ast}
\newcommand{\ovy}{\vy^\ast}
\newcommand{\mc}{\vb^\ast}
\newcommand{\seto}{S_\ast}
\newcommand{\sett}{S_t}
\newcommand{\setn}{S_{t+1}}

In this section, we will look at the problem of robust linear regression. Simply put, it is the task of performing linear regression in the presence of adversarial \emph{outliers} or \emph{corruptions}. Let us take a look at some motivating applications. 

\section{Motivating Applications}
\label{sec:rreg-intro}
The problem of regression has widespread application in signal processing, financial and economic analysis, as well as machine learning and data analytics. In most real life applications, the data presented to the algorithm has some amount of noise in it. However, at times, data may be riddled with missing values and corruptions, and that too of a malicious nature. It is useful to design algorithms that can assure stable operation even in the presence of such corruptions.\\

\noindent\textbf{Face Recognition} The task of face recognition is widely useful in areas such as biometrics and automated image annotation. In biometrics, a fundamental problem is to identify if a new face image belongs to that of a registered individual or not. This problem can be cast as a regression problem by trying to fit various features of the new image to corresponding features of existing images of the individual in the registered database. More specifically, assume that images are represented as $n$-dimensional feature vectors say, using simple pixel-based features. Also assume that there already exist $p$ images of the person in the database.

Our task is to represent the new image $\x^t \in \R^n$ in terms of the database images $X = [\x_1,\ldots,\x_p] \in \bR^{n \times p}$ of that person. A nice way to do this is to perform a linear interpolation as follows
\[
\min_{\bt \in \bR^p} \norm{\x^t - X\bt}_2^2 = \sum_{i=1}^n(\x^t_i - X^i\bt)^2.
\]
If the person is genuine, then there will exist a combination $\bto$ such that for all $i$, we have $\x^t_i \approx X^i\bto$ i.e., all features can be faithfully reconstructed. Thus, the fit will be nice and we will admit the person. However, the same becomes problematic if the new image has occlusions. For example, if the person is genuine but wearing a pair of sunglasses or sporting a beard. In such cases, some of the pixels $\x^t_i$ will appear corrupted, cause us to get a poor fit, and result in a false alarm. More specifically
\[
\x^t_i = X^i\bto + \oc_i
\]
where $\oc_i = 0$ on uncorrupted pixels but can take abnormally large and unpredictable values for corrupted pixels, such as those corresponding to the sunglasses. Being able to still correctly identify the person involves computing the least squares fit in the presence of such corruptions. The challenge is to do this without requiring any manual effort to identify the locations of the corrupted pixels. Figure~\ref{fig:rreg-facerec} depicts this problem setting visually.\\

\begin{figure}[t]
\includegraphics[width=\columnwidth]{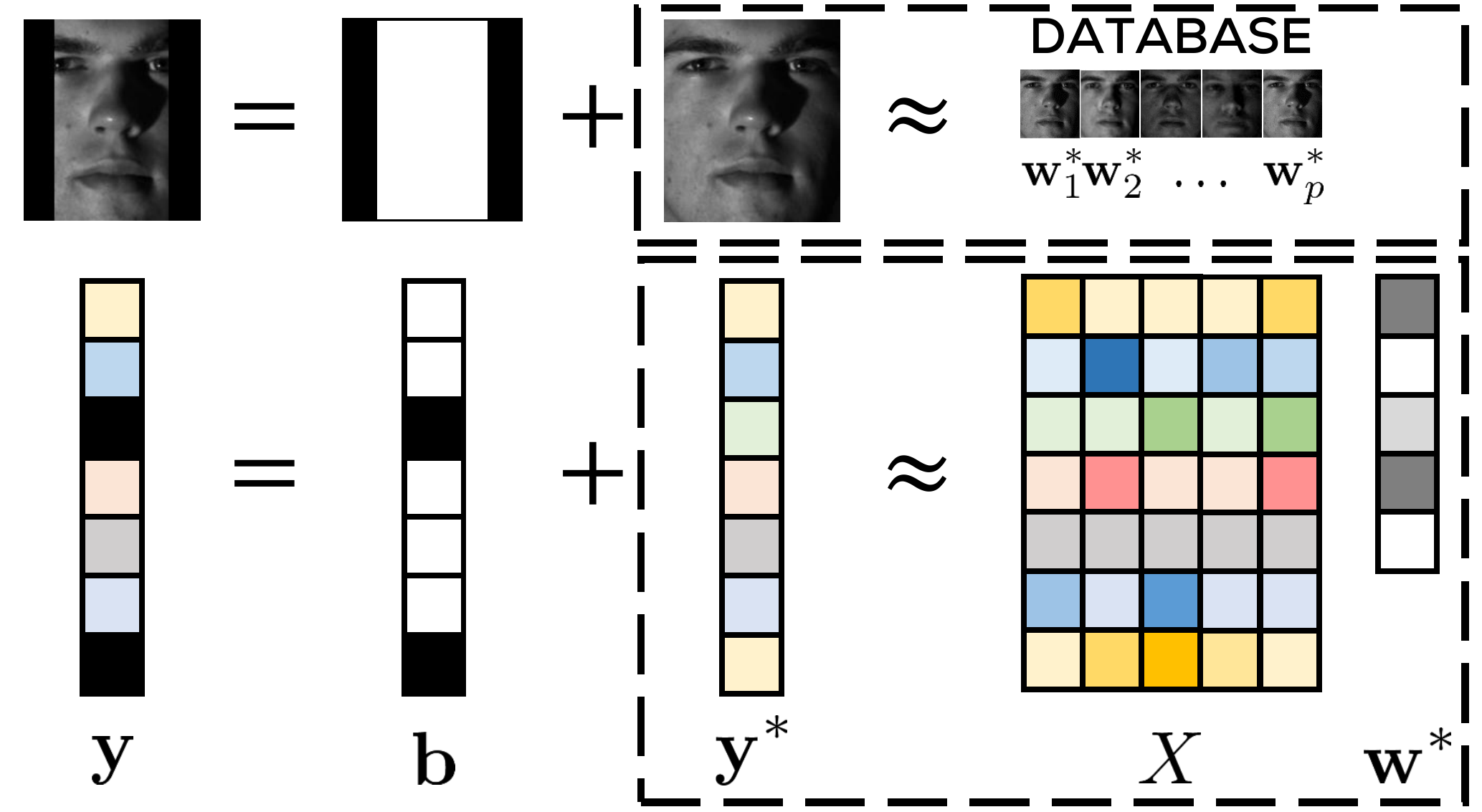}
\caption[Face Recognition under Occlusions]{A corrupted image $\vy$ can be interpreted as a combination of a clean image $\ovy$ and a corruption mask $\oc$ i.e., $\vy = \ovy + \oc$. The mask encodes the locations of the corrupted pixels as well as the values of the corruptions. The clean image can be (approximately) recovered as an affine combination of existing images in a database as $\ovy \approx X\bto$. Face reconstruction and recognition in such a scenario constitutes a robust regression problem. Note that the corruption mask $\oc$ is sparse since only a few pixels are corrupted. Images courtesy the Yale Face Database B.}%
\label{fig:rreg-facerec}
\end{figure}

\noindent\textbf{Time Series Analysis} This is a problem that has received much independent attention in statistics and signal processing due to its applications in modeling sequence data such as weather data, financial data, and DNA sequences. However, the underlying problem is similar to that of regression. A sequence of timestamped observations $\bc{y_t}$ for $t = 0, 1, \ldots$ are made which constitute the time series. Note that the ordering of the samples is critical here.

The popular auto-regressive (AR) time series model uses a generative mechanism wherein the observation $y_t$ at time $t$ is obtained as a fixed linear combination of $p$ previous observations plus some noise.
\[
y_t = \sum_{i=1}^p\bto_iy_{t-i} + \eta_t
\]
We can cast this into a regression framework by constructing covariates $\x_t := \bs{y_{t-1},y_{t-2},\ldots,y_{t-p}}^\top \in \bR^p$ and rewriting the above as
\[
y_t = \x_t^\top\bto + \eta_t,
\]
where $\bto \in \bR^p$ is an unknown model and $\eta_i$ is Gaussian noise generated independent of previous observations $\eta_t | \bc{y_{t-1},y_{t-2},\ldots} \sim \cN(0,\sigma^2)$. The number $p$ is known as the \emph{order} of the time series and captures how many historical observations affect the current one.

Is not uncommon to encounter situations where the time series experiences gross corruptions. Examples may include observation or sensing errors or unmodeled factors such as dips and upheavals in stock prices due to political or socio-economic events. Thus, we have
\[
y_t = \x_t^\top\bto + \eta_t + \oc_t
\]
where $\oc_t$ can take unpredictable values for corrupted time instances and $0$ otherwise. In time series literature, two corruption models are popular. In the \emph{additive} model, observations at time steps subsequent to a corruption are constructed using the uncorrupted values i.e., $\oc_t$ does not influence the values $y_\tau$ for $\tau > t$. Of course, observers may detect the corruption at time $t$ but the underlying time series goes on as though nothing happened. 

However, in the \emph{innovative} model, observations at time instances subsequent to a corruption use the corrupted value i.e., $\oc_t$ is involved in constructing the values $y_\tau$ for $\tau = t+1,\ldots,t+p$. Innovative corruptions are simpler to handle as although the observation at the moment of the corruption i.e., $y_t$, appears to deviate from that predicted by the base model i.e., $\x_t^\top\bto$, subsequent observations fall in line with the predictions once more (unless there are more corruptions down the line). In the additive model however, observations can seem to deviate from the predictions of the base model for several iterations. In particular, $y_\tau$ can disagree with $\x_\tau^\top\bto$ for times $\tau = t,t+1,\ldots,t+p$, even if there is only a single corruption at time $t$.

A time series analysis technique which seeks to study the ``usual'' behavior of the model involved, such as stock prices, might wish to exclude such aberrations, whether additive or innovative. However it is unreasonable, as well as error prone, to expect manual exclusion of such corruptions which motivates the problem of robust time series analysis. Note that time series analysis is a more challenging problem than regression since the ``covariates'' in this case $\x_t, \x_{t+1},\ldots$ are heavily correlated with each other as they share a large number of coordinates whereas in regression they are usually assumed to be independent.

\section{Problem Formulation}
Let us recall the regression model studied in \S~\ref{sec:em-pml} and \S~\ref{sec:spreg-prob-form}. In a linear regression setting, responses are modeled as a (possibly sparse) linear function of the data features to which some additive noise is added, i.e., we have
\[
y_i = \x_i^\top\bto + \eta_i.
\]
However, notice that in previous discussions, we termed the noise as benign, and even found it appropriate to assume it could be modeled as a Gaussian random variable. This is usually acceptable when the noise is expected to be non-deliberate or the result of small and unstructured perturbations. However, the outliers or corruptions in the examples that we just saw, seem to defy these assumptions.

In the face recognition case, salt and pepper noise due to sensor disturbances, or lighting changes can be considered benign noise. However, it is improper to consider structured occlusions such as sunglasses or beards as benign as some of them may even be a part of a malicious attempt to fool the system. Even in the time series analysis setting, there might be malicious forces at play in the stock market which cause stock prices to significantly deviate from their usual variations and trends.

Such corruptions can severely disrupt the modeling process and hamper our understanding of the system. To handle them, we will need a more refined model that distinguishes between benign and unstructured errors, and errors that are deliberate, malicious and structured. The \emph{robust regression} model best describes this problem setting
\[
y_i = \x_i^\top\bto + \eta_i + \oc_i
\]
where the variable $\oc_i$ encodes the \emph{additional} corruption introduced into the response. Our goal is to take a set of $n$ (possibly) corrupted data points $(\x_i,y_i)_{i=1}^n$ and recover the underlying parameter vector $\bto$, i.e.,
\begin{equation}
\underset{\substack{\bt\in\R^p, \vb\in\R^n\\\norm{\vb}_0 \leq k}}{\min}\ \norm{\y - X\bt - \vb}_2^2,
\tag*{(ROB-REG)}\label{eq:rreg}
\end{equation}

The variables $\oc_i$ can be unbounded in magnitude and of arbitrary sign. However, we assume that only a few data points are corrupted i.e., the vector $\mc = [\oc_1,\oc_2,\ldots,\oc_n]$ is sparse $\norm{\mc}_0 \leq k$. Indeed it is impossible\elink{exer:rreg-impossible} to recover the model $\bto$ if more than half the points are corrupted i.e., $k \geq n/2$. A worthy goal is to develop algorithms that can tolerate as large a value of $k$ as possible. We will study how two non-convex optimization techniques, namely gAM and gPGD, can be used to solve this problem. We point to other approaches, as well as extensions such as robust sparse recovery, in the bibliographic notes.

\section{Robust Regression via Alternating Minimization}
The key to applying alternating minimization to the robust regression problem is to identify the two critical parameters in this problem. Let us assume that there is no Gaussian noise in the model i.e., $\y = X\bto + \mc$ where $\mc$ is $k$-sparse but can contain unbounded entries in its support.

\begin{algorithm}[t]
	\caption{AltMin for Robust Regression (AM-RR)}
	\label{algo:torrent}
	\begin{algorithmic}[1]
			\REQUIRE Data $X, \y$, number of corruptions $k$
			\ENSURE An accurate model $\bth \in \bR^p$
			\STATE $\bt^1 \leftarrow \vzero$, $S_1 = [1:n-k]$
			\FOR{$t = 1, 2, \ldots$}
				\STATE $\btn \leftarrow \arg\min_{\bt \in \bR^p}\ \sum_{i \in \sett}(y_i - \x_i^\top\bt)^2$
				\STATE $\setn \leftarrow \arg\min_{|S| = n-k}\ \sum_{i \in S}(y_i - \x_i^\top\btn)^2$
			\ENDFOR
			\STATE \textbf{return} {$\btt$}
	\end{algorithmic}
\end{algorithm}

It can be seen that $\bto$ and $\bar{\supp(\mc)} =: \seto$, i.e., the true model and the locations of the uncorrupted points, are the two most crucial elements since given one, finding the other is very simple. Indeed, if someone were to magically hand us $\bto$, it is trivial to identify $\seto$ by simply identifying data points where $y_i = \x_i^\top\bto$. On the other hand, given $\seto$, it is simple to obtain $\bto$ by simply solving a least squares regression problem on the set of data points in the set $\seto$. Thus we can rewrite the robust regression problem as
\begin{equation}
\underset{\substack{\bt \in \bR^p\\|S| = n-k}}{\min}\ \norm{\y_S - X^S\bt}_2^2
\tag*{(ROB-REG-2)}\label{eq:rreg-2}
\end{equation}
This gives us a direct way of applying the gAM approach to this problem as outlined in the AM-RR algorithm (Algorithm~\ref{algo:torrent}). The work of \cite{BhatiaJK2015} showed that this technique and its variants offer scalable solutions to the robust regression problem.

In order to execute the gAM protocol, AM-RR maintains a model estimate $\btt$ and an \emph{active set} $S_t \subset [n]$ of points that are deemed clean at the moment. At every time step, true to the gAM philosophy, AM-RR first fixes the active set and updates the model, and then fixes the model and updates the active set. The first step turns out to be nothing but least squares over the active set. For the second step, it is easy to see that the optimal solution is achieved simply by taking the $n-k$ data points with the smallest residuals (by magnitude) with respect to the updated model and designating them to be the active set.

\section{A Robust Recovery Guarantee for AM-RR}
To simplify the analysis, we will assume that there is no Gaussian noise in the model i.e., $\y = X\bto + \mc$ where $\mc$ is $k$-sparse. To present the analysis of the AM-RR algorithm, we will need the notions of subset strong convexity and smoothness.
\begin{definition}[Subset Strong Convexity/Smoothness Property \citep{BhatiaJK2015}]
\label{defn:ssc-sss}
A matrix $X \in \R^{n \times p}$ is said to satisfy the $\alpha_k$-subset strong convexity (SSC) property and the $\beta_k$-subset smoothness property (SSS) of order $k$ if for all sets $S \subset [n]$ of size $|S| \leq k$, we have, for all $\vv \in \bR^p$,
\begin{center}
	$\alpha_k\cdot\norm{\vv}_2^2 \leq \norm{X^S\vv}_2^2 \leq \beta_k\cdot\norm{\vv}_2^2$.
\end{center}
\end{definition}
The SSC/SSS properties require that the design matrix formed by taking any subset of $k$ points from the data set of $n$ points act as an approximate isometry on all $p$ dimensional points. These properties are related to the traditional RSC/RSS properties and it can be shown (see for example, \citep{BhatiaJK2015}) that RIP-inducing distributions over matrices (see \S~\ref{sec:rip-ensure}) also produce matrices that satisfy the SSC/SSS properties, with high probability. However, it is interesting to note that whereas the RIP definition is concerned with column subsets of the design matrix, SSC/SSS concerns itself with row subsets.

The nature of the SSC/SSS properties is readily seen to be very appropriate for AM-RR to succeed. Since the algorithm uses only a subset of data points to estimate the model vector, it is essential that smaller subsets of data points of size $n-k$ (in particular the true subset of clean points $S_\ast$) also allow the model to be recovered. This is equivalent\elink{exer:rreg-ssc} to requiring that the design matrices formed by smaller subsets of data points not identify distinct model vectors. This is exactly what the SSC property demands.

Given this, we can prove the following convergence guarantee for the AM-RR algorithm. The reader would notice that the algorithm, despite being a gAM-style algorithm, does not require precise and careful initialization of the model and active set. This is in stark contrast to other gAM-style approaches we have seen so far, namely EM and AM-MC, both of which demanded careful initialization.

\begin{theorem}
Let $X \in \bR^{n \times p}$ satisfy the SSC property at order $n-k$ with parameter $\alpha_{n-k}$ and the SSS property at order $k$ with parameter $\beta_k$ such that $\beta_k/\alpha_{n-k} < \frac{1}{\sqrt 2 + 1}$. Let $\bto \in \bR^p$ be an arbitrary model vector and $\y = X\bto + \mc$ where $\norm{\mc}_0 \leq k$ is a sparse vector of possibly unbounded corruptions. Then AM-RR yields an $\epsilon$-accurate solution $\norm{\btt - \bto}_2 \leq \epsilon$ in no more than $\bigO{\log\frac{\norm{\mc}_2}{\epsilon}}$ steps.
\end{theorem}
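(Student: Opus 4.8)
The plan is to track the two quantities that govern the problem in the noiseless regime $\y = X\bto + \mc$: the model error $\bt^{t}-\bto$ and the amount of corruption that survives in the current active set, measured by $\norm{\mc_{C_t}}_2$, where $C_t := \sett\cap\supp(\mc)$ collects the corrupted indices mistakenly retained in $\sett$. Each of the two update steps of AM-RR can be turned into a clean inequality relating these quantities, and chaining the two will yield a one-step geometric contraction. The analysis will only ever invoke the subset isometry properties at the two orders actually assumed, $n-k$ (for SSC) and $k$ (for SSS).

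First I would analyze the model-update step. Writing the normal equations for the least-squares fit over $\sett$, namely $(X^{\sett})^\top X^{\sett}(\bt^{t+1}-\bto) = (X^{\sett})^\top \mc_{\sett}$, and taking the inner product with $\bt^{t+1}-\bto$, the SSC property at order $n-k$ lower bounds the left side by $\alpha_{n-k}\norm{\bt^{t+1}-\bto}_2^2$. Since $\mc_{\sett}$ is supported only on $C_t$ with $\abs{C_t}\le k$, Cauchy--Schwarz together with the SSS property at order $k$ upper bounds the right side by $\sqrt{\beta_k}\,\norm{\bt^{t+1}-\bto}_2\,\norm{\mc_{C_t}}_2$. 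Dividing through gives the first inequality
\[
\norm{\bt^{t+1}-\bto}_2 \le \frac{\sqrt{\beta_k}}{\alpha_{n-k}}\,\norm{\mc_{C_t}}_2,
\]
so the model is recovered exactly the moment no corruption leaks into the active set.

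The crux is the active-set-update step, where I would establish $\norm{\mc_{C_{t+1}}}_2 \le 2\sqrt{\beta_k}\,\norm{\bt^{t+1}-\bto}_2$ via a \emph{swapping} argument. Let $r_i := y_i - \x_i^\top\bt^{t+1}$ be the residuals, let $M_{t+1} := \seto\setminus\setn$ be the clean points wrongly discarded, and $C_{t+1} := \setn\cap\supp(\mc)$ the corrupted points wrongly kept; since $\abs{\setn}=\abs{\seto}=n-k$ these sets share a common cardinality $m\le k$. Because $\setn$ holds the $n-k$ smallest residuals, every $r_i^2$ on a kept index in $C_{t+1}$ is at most every $r_j^2$ on a discarded index in $M_{t+1}$, so with equal cardinalities $\sum_{i\in C_{t+1}} r_i^2 \le \sum_{i\in M_{t+1}} r_i^2$. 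On the clean discarded points $\oc_i=0$, hence their residuals are purely $-\x_i^\top(\bt^{t+1}-\bto)$ and the right side is at most $\beta_k\norm{\bt^{t+1}-\bto}_2^2$ by SSS at order $k$ (here $m\le k$ is exactly what licenses the order-$k$ property rather than order $n-k$). Writing $\oc_i = r_i + \x_i^\top(\bt^{t+1}-\bto)$ on $C_{t+1}$ and applying the triangle inequality together with SSS once more then delivers the claimed bound.

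Finally I would chain the two inequalities into $\norm{\mc_{C_{t+1}}}_2 \le \frac{2\beta_k}{\alpha_{n-k}}\norm{\mc_{C_t}}_2$, a geometric contraction with ratio $\eta_0 = 2\beta_k/\alpha_{n-k}$. The hypothesis $\beta_k/\alpha_{n-k} < \frac{1}{\sqrt2+1} < \tfrac12$ guarantees $\eta_0<1$; a finer bookkeeping that fuses the two half-steps without the lossy triangle inequality is what recovers the sharper stated threshold $\sqrt2-1$. Starting from the trivial estimate $\norm{\mc_{C_1}}_2 \le \norm{\mc}_2$ and feeding the decay back through the first inequality yields $\norm{\bt^{t}-\bto}_2 = \bigO{\eta_0^{\,t}\norm{\mc}_2}$, so $T = \bigO{\log(\norm{\mc}_2/\epsilon)}$ iterations suffice. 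I expect the swapping argument to be the main obstacle, both in getting the cardinality bookkeeping right and in keeping every use of the isometry properties inside the orders $k$ and $n-k$; a pleasant by-product is that the recursion holds from the arbitrary start $\bt^1=\vzero$, $S_1=[1:n-k]$, which explains why AM-RR, unlike EM or AM-MC, needs no careful initialization.
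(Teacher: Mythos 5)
Your proof is correct, and while it shares the paper's overall architecture -- track the surviving corruption mass $\norm{\mc_{\sett}}_2$ as the potential and show a per-iteration geometric contraction, using SSC at order $n-k$ for the least-squares step and SSS at order $k$ plus the residual-minimality of $\setn$ for the set-update step -- the execution is genuinely different. The paper writes the closed form $\btn = \bto + \br{(X^{\sett})^\top X^{\sett}}^{-1}(X^{\sett})^\top\mc_{\sett}$, compares the \emph{full} residual sums $\norm{\vr^{t+1}_{\setn}}_2^2 \leq \norm{\vr^{t+1}_{\seto}}_2^2$, reduces to the symmetric difference via the identity $\norm{X^S\vv}_2^2 - \norm{X^T\vv}_2^2 \leq \norm{X^{S\backslash T}\vv}_2^2$, and then solves a quadratic inequality in $\norm{\mc_{\setn}}_2$ to get the one-step bound $\norm{\mc_{\setn}}_2 \leq (\sqrt 2+1)\frac{\beta_k}{\alpha_{n-k}}\norm{\mc_{\sett}}_2$. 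You instead work variationally from the normal equations (no explicit inverse) to get $\norm{\btn-\bto}_2 \leq \frac{\sqrt{\beta_k}}{\alpha_{n-k}}\norm{\mc_{C_t}}_2$, and handle the set update by an element-wise exchange argument on the equal-cardinality symmetric-difference sets $C_{t+1}$ and $M_{t+1}$ followed by a triangle inequality, giving $\norm{\mc_{C_{t+1}}}_2 \leq 2\sqrt{\beta_k}\norm{\btn-\bto}_2$. Chaining yields the ratio $2\beta_k/\alpha_{n-k}$, which is actually \emph{smaller} than the paper's $(\sqrt 2+1)\beta_k/\alpha_{n-k}$: your argument tolerates $\beta_k/\alpha_{n-k} < 1/2$, a weaker hypothesis than the stated $\beta_k/\alpha_{n-k} < \frac{1}{\sqrt 2+1} = \sqrt 2 - 1 \approx 0.414$. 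Your closing remark therefore has the comparison backwards -- there is no need for ``finer bookkeeping to recover the sharper threshold $\sqrt 2 - 1$,'' since $\sqrt2-1 < 1/2$ and your condition is already the more permissive of the two; this mislabeling is harmless to the proof. Your modular two-lemma route buys interpretability (exact recovery the moment no corruption leaks into the active set, and an explicit handle on how many corrupted points survive each pruning), at the cost of a $\sqrt{\beta_k}$-versus-$\beta_k$ bookkeeping in the final conversion from $\norm{\mc_{\sett}}_2 \leq \epsilon$ to $\norm{\btt-\bto}_2 \leq \epsilon$ -- a units-level constant that the paper's own last step also glosses over and that does not affect the $\bigO{\log\frac{\norm{\mc}_2}{\epsilon}}$ iteration count.
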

\begin{proof}
Let $\vr^t = \y - X\btt$ denote the vector of residuals at time $t$ and let $C_t = (X^{\sett})^\top X^{\sett}$ and $\seto = \bar{\supp(\mc)}$. Then the model update step of AM-RR solves a least squares problem ensuring
\begin{align*}
\btn &= C_t^{-1}(X^{\sett})^\top\y_{\sett}\\
&= C_t^{-1}(X^{\sett})^\top(X^{\sett}\bto + \mc_{\sett})\\
&= \bto + C_t^{-1}(X^{\sett})^\top\mc_{\sett}.
\end{align*}
The residuals with respect to this new model can be computed as
\[
\vr^{t+1} = \y - X\btn = \mc + XC_t^{-1}(X^{\sett})^\top\mc_{\sett}.
\]
However, the active-set update step selects the set with smallest residuals, in particular, ensuring that
\[
\norm{\vr^{t+1}_{\setn}}^2_2 \leq \norm{\vr^{t+1}_{\seto}}^2_2.
\]
Plugging in the expression for $\vr^{t+1}$ into both sides of the equation, using $\mc_{\seto} = \vzero$ and the fact that that for any matrix $X$ and vector $\vv$ we have
\[
\norm{X^S\vv}_2^2 - \norm{X^T\vv}_2^2 = \norm{X^{S\backslash T}\vv}_2^2 - \norm{X^{T\backslash S}\vv}_2^2 \leq \norm{X^{S\backslash T}\vv}_2^2,
\]
gives us, upon some simplification,
\begin{align*}
\norm{\mc_{\setn}}_2^2 \leq{}& \norm{X^{\seto\backslash\setn}C_t^{-1}(X^{\sett})^\top\mc_{\sett}}_2^2\\
&{}- 2(\mc_{\setn})^\top X^{\setn}C_t^{-1}(X^{\sett})^\top\mc_{\sett}\\
\leq{}& \frac{\beta_k^2}{\alpha_{n-k}^2}\norm{\mc_{\sett}}_2^2 + 2\cdot\frac{\beta_k}{\alpha_{n-k}}\cdot\norm{\mc_{\setn}}_2\cdot\norm{\mc_{\sett}}_2,
\end{align*}
where the last step follows from an application of the SSC/SSS properties by noticing that $|\seto\backslash\setn| \leq k$ and that $\mc_{\sett}$ and $\mc_{\setn}$ are all $k$-sparse vectors since $\mc$ itself is a $k$-sparse vector. Solving the above equation gives us
\[
\norm{\mc_{\setn}}_2 \leq (\sqrt 2 + 1)\cdot\frac{\beta_k}{\alpha_{n-k}}\cdot\norm{\mc_{\sett}}_2
\]
The above result proves that in $t = \bigO{\log\frac{\norm{\mc}_2}{\epsilon}}$ iterations, the alternating minimization procedure will identify an active set $\sett$ such that $\norm{\mc_{\sett}}_2 \leq \epsilon$. It is easy\elink{exer:rreg-ls} to see that a least squares step on this active set will yield a model $\bth$ satisfying
\[
\norm{\btt - \bto}_2 = \norm{C_t^{-1}(X^{\sett})^\top\mc_{\sett}}_2 \leq \frac{\beta_k}{\alpha_{n-k}}\cdot\epsilon \leq \epsilon,
\]
since $\beta_k/\alpha_{n-k} < 1$. This concludes the convergence guarantee.
\end{proof}

The crucial assumption in the previous result is the requirement $\beta_k/\alpha_{n-k} < \frac{1}{\sqrt 2 + 1}$. Clearly, as $k \rightarrow 0$, we have $\beta_k \rightarrow 0$ but if the matrix $X$ is well conditioned we still have $\alpha_{n-k} > 0$. Thus, for small enough $k$, it is assured that we will have $\beta_k/\alpha_{n-k} < \frac{1}{\sqrt 2 + 1}$. The point at which this occurs is the so-called \emph{breakdown point} of the algorithm -- it is the largest number $k$ such that the algorithm can tolerate $k$ possibly adversarial corruptions and yet guarantee recovery.

Note that the quantity $\kappa_k = \beta_k/\alpha_{n-k}$ acts as the effective condition number of the problem. It plays the same role as the condition number did in the analysis of the gPGD and IHT algorithms. It can be shown that for RIP-inducing distributions (see \citep{BhatiaJK2015}), AM-RR can tolerate $k = \Omega(n)$ corruptions.

For the specific case of the design matrix being generated from a Gaussian distribution, it can be shown that we have $\alpha_{n-k} = \Om{n-k}$ and $\beta_k = \bigO{k}$. This in turn can be used to show that we have $\beta_k/\alpha_{n-k} < \frac{1}{\sqrt 2 + 1}$ whenever $k \leq n/70$. This means that AM-RR can tolerate up to $n/70$ corruptions when the design matrix is Gaussian. This indicates a high degree of robustness in the algorithm since these corruptions can be completely adversarial in terms of their location, as well as their magnitude. Note that AM-RR is able to ensure this without requiring any specific initialization.

\section{Alternating Minimization via Gradient Updates}
\label{sec:rreg-hyb}
Similar to the gradient-based EM heuristic we looked at in \S~\ref{sec:em-implement}, we can make the alternating minimization process in AM-RR much cheaper by executing a gradient step for the alternations. More specifically, we can execute step 3 of AM-RR as
\[
\btn \leftarrow \btt - \eta\cdot\sum_{i \in \sett}(\x_i^\top\bt - y_i)\x_i,
\]
for some step size parameter $\eta$. It can be shown (see \citep{BhatiaJK2015}) that this process enjoys the same linear rate of convergence as AM-RR. However, notice that both alternations (model update as well as active set update) in this gradient-descent version can be carried out in (near-)linear time. This is in contrast with AM-RR which takes super-quadratic time in each alternation to discover the least squares solution. In practice, this makes the gradient version much faster (often by an order of magnitude) as compared to the fully corrective version.

However, once we have obtained a reasonably good estimate of $\seto$, it is better to execute the least squares solution to obtain the final solution in a single stroke. Thus, the gradient and fully corrective steps can be mixed together to great effect. In practice, such \emph{hybrid} techniques offer the fastest convergence. We refer the reader to \S~\ref{sec:rreg-empcomp} for a brief discussion on this and to \citep{BhatiaJK2015} for details.

\section{Robust Regression via Projected Gradient Descent}
\label{sec:rreg-gpgd}
It is possible to devise an alternate formulation for the robust regression problem that allows us to apply the gPGD technique instead. The work of \citep{BhatiaJKK2017} uses this alternate formulation to arrive at a solution that enjoys consistency properties. Note that if someone gave us a good estimate $\hat\vb$ of the corruption vector, we could use it to clean up the responses as $\y - \hat\vb$, and re-estimate the model as
\[
\hat\bt(\hat\vb) = \underset{\bt \in \bR^p}{\arg\min}\ \norm{(\y - \hat\vb) - X\bt}_2^2 = (X^\top X)^{-1}X^\top(\y-\hat\vb).
\]
The residuals corresponding to this new model estimate would be
\[
\norm{\y - \hat\vb - X\cdot\hat\bt(\hat\vb)}_2^2 = \norm{(I - P_X)(\y - \hat\vb)}_2^2,
\]
where $P_X = X(X^\top X)^{-1}X^\top$. The above calculation shows that an equivalent formulation for the robust regression problem \eqref{eq:rreg} is the following
\[
\underset{\norm{\vb}_0 = k}\min\ \norm{(I - P_X)(\y - \vb)}_2^2,
\]
to which we can apply gPGD (Algorithm~\ref{algo:gpgd}) since it now resembles a sparse recovery problem! This problem enjoys\elink{exer:rreg-rip} the restricted isometry property whenever the design matrix $X$ is sampled from an RIP-inducing distribution (see \S~\ref{sec:rip-ensure}). This shows that an application of the gPGD technique will guarantee recovery of the optimal corruption vector $\mc$ at a linear rate. Once we have an $\epsilon$-optimal estimate $\hat\vb$ of $\mc$, a good model $\hat\bt$ can be found by solving the least squares problem.
\[
\hat\bt(\hat\vb) = (X^\top X)^{-1}X^\top(\y-\hat\vb),
\]
as before. It is a simple exercise to show that if $\norm{\hat\vb - \mc}_2 \leq \epsilon$, then
\[
\norm{\hat\bt(\hat\vb) - \bto} \leq \bigO{\frac{\epsilon}{\alpha}},
\]
where $\alpha$ is the RSC parameter of the problem. Note that this shows how the gPGD technique can be applied to perform recovery not only when the parameter is sparse in the model domain (for instance in the gene expression analysis problem), but also when the parameter is sparse in the data domain, as in the robust regression example.

\begin{figure}[t]
\begin{subfigure}[t]{.5\columnwidth}
\centering \includegraphics[width=\columnwidth]{rreg-comp.pdf}
\caption{Run-time Comparison}
\label{fig:rreg-comparison-rreg}
\end{subfigure}
\hfill
\begin{subfigure}[t]{.5\columnwidth}
\centering \includegraphics[width=\columnwidth]{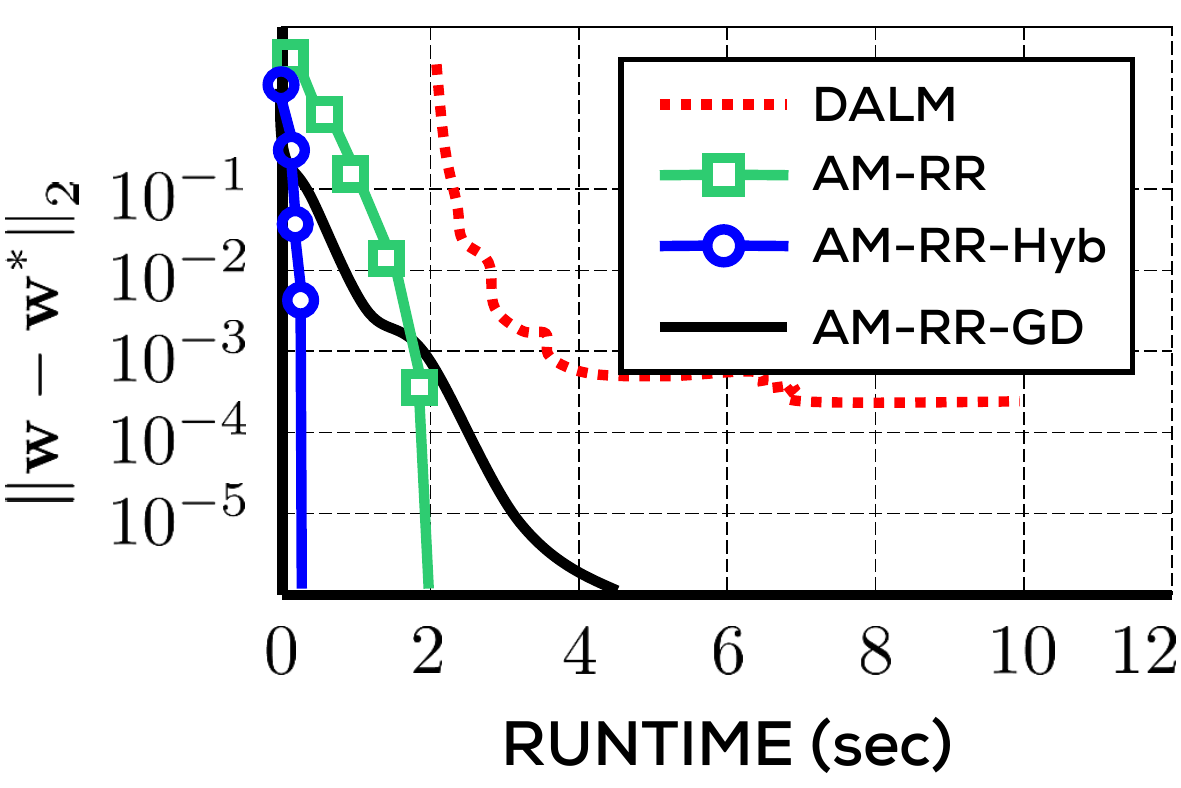}
\caption{Convergence Comparison}
\label{fig:rreg-comparison-hyb}
\end{subfigure}%
\caption[Empirical Performance on Robust Regression Problems]{An empirical comparison of the performance offered by various approaches for robust regression. Figure~\ref{fig:rreg-comparison-rreg} (adapted from \citep{BhatiaJKK2017}) compares Extended LASSO, a state-of-the-art relaxation method by \cite{NguyenT2013b}, AM-RR and the gPGD method from \S~\ref{sec:rreg-gpgd} on a robust regression problem in $p = 1000$ dimensions with $30\%$ data points corrupted. Non-convex techniques such as AM-RR and gPGD are more than an order of magnitude faster, and scale much better, than Extended LASSO. Figure~\ref{fig:rreg-comparison-hyb} (adapted from \citep{BhatiaJK2015}) compares various solvers on a robust regression problem in $p = 300$ dimensions with $1800$ data points of which $40\%$ are corrupted. The solvers include the gAM-style solver AM-RR, a variant using gradient-based updates, a hybrid method (see \S~\ref{sec:rreg-hyb}), and the DALM method \citep{YangZBSM2013}, a state-of-the-art solver for relaxed LASSO-style formulations. The hybrid method is the fastest of all the techniques. In general, all AM-RR variants are much faster than the relaxation-based method.}%
\label{fig:rreg-comparison}
\end{figure}

\section{Empirical Comparison}
\label{sec:rreg-empcomp}
Before concluding, we present some discussion on the empirical performance of various algorithms on the robust regression problem. Figure~\ref{fig:rreg-comparison-rreg} compares the running times of various robust regression approaches on synthetic problems as the number of data points available in the dataset increases. Similar trends are seen if the data dimensionality increases. The graph shows that non-convex techniques such as AM-RR and gPGD offer much more scalable solutions than relaxation-based methods. Figure~\ref{fig:rreg-comparison-hyb} similarly demonstrates how variants of the basic AM-RR procedure can offer significantly faster convergence. Figure~\ref{fig:faceamrr} looks at a realistic example of face reconstruction under occlusions and demonstrates how AM-RR is able to successfully recover the face image even when significant portions of the face are occluded.

\begin{figure}[t]
\includegraphics[width=\columnwidth]{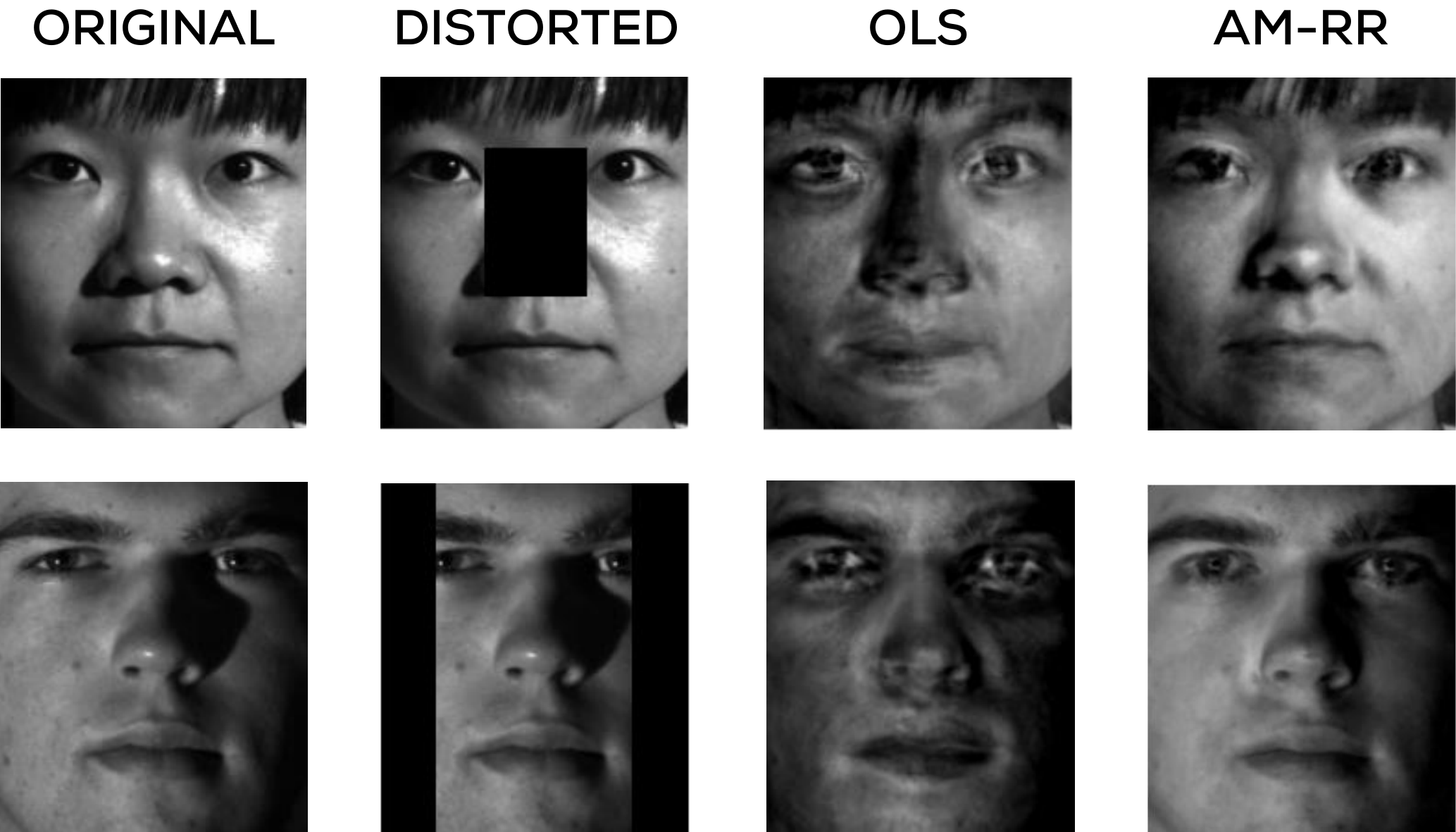}
\caption[Robust Face Reconstruction]{An experiment on face reconstruction using robust regression techniques. Two face images were taken and different occlusions were applied to them. Using the model described in \S~\ref{sec:rreg-intro}, reconstruction was attempted using both, ordinary least squares (OLS) and robust regression (AM-RR). It is clear that AM-RR achieves far superior reconstruction of the images and is able to correctly figure out the locations of the occlusions. Images courtesy the Yale Face Database B.}%
\label{fig:faceamrr}
\end{figure}

\section{Exercises}
\begin{exer}
\label{exer:rreg-impossible}
Show that it is impossible to recover the model vector if a fully adaptive adversary is able to corrupt more than half the responses i.e., if $k \geq n/2$. A fully adaptive adversary is one that is allowed to perform corruptions after observing the clean covariates as well as the uncorrupted responses.\\
\textit{Hint}: The adversary can make it impossible to distinguish between two models, the real model, and another one of its choosing.
\end{exer}
\begin{exer}
\label{exer:rreg-ssc}
Show that the SSC property ensures that there exists no subset $S$ of the data, $|S| \leq k$ and no two distinct model vectors $\vv^1,\vv^2\in\R^p$ such that $X^S\vv^1 = X^S\vv^2$.
\end{exer}
\begin{exer}
\label{exer:rreg-ls}
Show that executing the AM-RR algorithm for a single step starting from an active set $S_t$ such that $\norm{\mc_{\sett}}_2 \leq \epsilon$ ensures in the very next step that $\norm{\btt - \bto}_2 \leq \epsilon$.
\end{exer}
\begin{exer}
\label{exer:rreg-rip}
Show that if the design matrix $X$ satisfies RIP, then the objective function $f(\vb) = \norm{(I - P_X)(\y - \vb)}_2^2,$ enjoys RIP (of the same order as $X$ but with possibly different constants) as well.
\end{exer}
\begin{exer}
\label{exer:rreg-1-2}
Prove that ~\eqref{eq:rreg}~and~\eqref{eq:rreg-2} are equivalent formulations i.e., they yield the same model.
\end{exer}

\section{Bibliographic Notes}
The problem of robust estimation has been well studied in the statistics community. Indeed, there exist entire texts devoted to this area \citep{RousseeuwL1987,MaronnaMY2006} which look at robust estimators for regression and other problems. However, these methods often involve estimators, such as the least median of squares estimator, that have an exponential time complexity.

It is notable that these infeasible estimators often have attractive theoretical properties such as a high breakdown point. For instance, the work of \citet{Rousseeuw1984} shows that the least median of squares method enjoys a breakdown point of as high as $n/2 - p$. In contrast, AM-RR is only able to handle $n/70$ errors. However, whereas the gradient descent version of AM-RR can be executed in near-linear time, the least median of squares method requires time exponential in $p$.

There have been relaxation based approaches to solving robust regression and time series problems as well. Chief of them include the works of \cite{ChenD2012, ChenCM2013} which look at the Dantzig selector methods and the \emph{trimmed product} techniques to perform estimation, and the works of \cite{WrightYGSM2009, NguyenT2013}. The work of \cite{ChenCM2013} considers corruptions not only in the responses, but in the covariates as well. However, these methods tend to scale poorly to really large scale problems owing to the non-smooth nature of the optimization problems that they end up solving. The non-convex optimization techniques we have studied, on the other hand, require linear time or else have closed-form updates. 

Recent years have seen the application of non-convex techniques to robust estimation. However these works can be traced back to the classical work of \citet{FischlerB1981} that developed the RANSAC algorithm that is very widely used in fields such as computer vision. The RANSAC algorithm samples multiple candidate active sets and returns the least squares estimate on the set with least residual error.

Although the RANSAC method does not enjoy strong theoretical guarantees in the face of an adaptive adversary and a large number of corruptions, the method is seen to work well when there are very few outliers. Later works, such as that of \citet{SheO2011} applied soft-thresholding techniques to the problem followed by the work of \cite{BhatiaJK2015} which applied the alternating minimization algorithm we studied here. \cite{BhatiaJK2015} also looked at the problem of robust sparse recovery.

The time series literature has also seen the application of various techniques for robust estimation including robust M-estimators in both the additive and innovative outlier models \citep{MartinZ1978, StockingD1987} and least trimmed squares \citep{CrouxJ2008}.
\chapter{Phase Retrieval}
\label{chap:phret}

\newcommand{\dist}[1]{\text{dist}\br{#1}}

In this section, we will take a look at the phase retrieval problem, a non-convex optimization problem with applications in several domains. We briefly mentioned this problem in \S~\ref{chap:em} when we were discussing mixed regression. At a high level, phase retrieval is equivalent to discovering a complex signal using observations that reveal only the magnitudes of (complex) linear measurements over that signal. The phases of the measurements are not revealed to us.

Over reals, this reduces to solving a system of quadratic equations which is known to be computationally intractable to solve exactly. Fortunately however, typical phase retrieval systems usually require solving quadratic systems that have nice randomized structures in the coefficients of the quadratic equations. These structures can be exploited to efficiently solve these systems. In this section we will look at various algorithms that achieve this. 

\section{Motivating Applications}
The problem of phase retrieval arises in areas of signal processing where the phase of a signal being measured is irrevocably lost, or at best, unreliably obtained. The recovery of the signal in such situations becomes closely linked to our ability to perform phase retrieval.\\

\noindent\textbf{X-ray Crystallography} The goal in X-ray crystallography is to find the structure of a small molecule by bombarding it with X-rays from various angles and measuring the trajectories and intensities of the diffracted photons on a film. These quantities can be used to glean the internal three-dimensional structure of the electron cloud within the crystal, revealing the atomic arrangements therein. This technique has been found to be immensely useful in imaging specimens with a crystalline structure and has been historically significant in revealing the interior composition of several compounds, both inorganic and organic.

A notable example is that of nucleic acids such as DNA whose structure was revealed in the seminal work of \cite{FranklinG1953a} which immediately led Francis and Crick to propose its double helix structure. The reader may be intrigued by the now famous \emph{Photo 51} which provided critical support to the helical structure-theory of DNA \citep{FranklinG1953b}. We refer the reader to the expository work of \cite{Lucas2008} for a technical and historical account of how these discoveries came to be.\\

\noindent\textbf{Transmission Electron Microscopy (TEM)} This technique uses a focused beam of electrons instead of high energy photons to image the object of study. This technique works best with ultra thin specimens which do not absorb electrons but let the beam of electrons pass through. The electrons interact with the atomic structure of the specimen and are captured at the other end using a sensing mechanism. TEM is capable of resolutions far higher than those possible using photonic imaging techniques due to the extremely small de-Broglie wavelength of electrons.\\

\noindent\textbf{Coherent Diffraction Imaging (CDI)} This is a widely used technique for studying nanostructures such as nanotubes, nanocrystals and the like. A highly coherent beam of X-rays is made incident on the object of study and the diffracted rays allowed to interfere to produce a diffraction pattern which is used to recover the structure of the object. A key differentiating factor in CDI is the absence of any lenses to focus light onto the specimen, as opposed to other methods such as TEM/X-Ray crystallography which use optical or electromagnetic lenses to focus the incident beam and then refocus the diffracted beam. The absence of any lenses in CDI is very advantageous since it results in aberration-free patterns. Moreover, this way the resolution of the technique is only dependent on the wavelength and other properties of the incident rays rather than the material of the lens etc.

\section{Problem Formulation}
Bombarding a structure with X-rays or other beams can be shown to be equivalent to taking random Fourier measurements of its internal structure. More specifically, let $\bto \in \bC^p$ denote the vector that represents the density of electrons throughout the crystal, signifying its internal structure. Then, under simplifying regularity assumptions such as perfect periodicity, X-ray crystallography can be modeled as transforming $\bto$ into $\y = X\bto \in \bC^n$ where $X \in \bC^{n\times p}$ has each of its rows sampled from a Fourier measurement matrix, i.e, $X_{ab}=\exp\br{-\frac{2\pi(a-1)(b-1)}{p}}, b = 1,\ldots,p$ for some random $a \in [p]$.

Unfortunately, the measurements made by the sensors in the above applications are not able to observe the Fourier measurements exactly. Instead of observing complex valued transformations $y_k \in \bC$ of the signal $\y$, all we observe are the real value magnitudes $\abs{y_k} \in \bR$, the phase being lost in the signal acquisition process. A natural question arises whether it is still possible to recover $\bto$ or not.

Note that if the signal and the measurement matrices are real then $|y_k|^2=(\x_k^\top\bto)^2$. Thus, the goal is simply to recover a model $\bt$ by solving a system of $n$ quadratic equations described above. Although problem of solving a system of quadratic equations is intractable in general, in the special case of phase retrieval, it is possible to exploit the fact that the vectors $\x_k$ are sampled randomly, to develop recovery algorithms that are efficient, as well as require only a small number $n$ of measurements.

We note that the state of the art in phase retrieval literature is yet unable to guarantee recovery from random Fourier measurements, which closely model situations arising in crystallography and other imaging applications. Current analyses mostly consider the case when the sensing matrix $X$ has random Gaussian entries. A notable exception to the above is the work of \cite{CandesLS2015} which is able to address	\emph{coded diffraction} measurements. These are more relevant to what is used in practice but still complicated to design. For sake of simplicity, we will only focus on the Gaussian measurement case in our discussions. This would allow us to study the algorithmic techniques used to solve the problem without getting involved in the intricacies of Fourier or coded diffraction measurements.

To formalize our problem statement, we let $\bto \in \bC^p$ be an unknown $p$-dimensional signal, $X = [\x_1,\x_2,\ldots,\x_n]^\top \in \bC^{n \times p}$ be a measurement matrix. Our goal is to recover $\bto$ given $X$ and $\abs{\y}$, where $\y = [y_1,y_2,\ldots,y_k]^\top \in \bC^n = X\bto$ and $\abs{y_k} = \abs{\x_i^\top\bto}$. Our focus would be on the special case where $X_{kj} \stackrel{\text{i.i.d.}}{\sim} \cN(0,1) +i\cdot \cN(0,1)$ where $i^2=-1$. We would like to recover $\bto$ efficiently using only $n = \softO{p}$ measurements. We will study how gAM and gPGD-style techniques can be used to solve this problem. We will point to approaches using the relaxation technique in the bibliographic notes.\\

\noindent{\bf Notation}: We will abuse the notation $\x^\top$ to denote the complex row-conjugate of a complex vector $\x \in \bC^p$, something that is usually denoted by $\x^\ast$. A random vector $\x = \va + i\vb \in \bC^n$ will be said to be distributed according to the standard Gaussian distribution over $\bC^n$, denoted as $\cN_\bC(\vzero,I)$ if $\va,\vb \in \bR^n$ are independently distributed as standard (real valued) Gaussian vectors i.e., $\va,\vb \sim \cN(\vzero,I_n)$.

\section{Phase Retrieval via Alternating Minimization}
One of the most popular approaches for solving the phase retrieval problem is a simple application of the gAM approach, first proposed by \cite{GerchbergO1972} more than 4 decades ago. The intuition behind the approach is routine -- there exist two parameters of interest in the problem -- the phase of the data points i.e., $y_k/\abs{y_k}$, and the true signal $\bto$. Just as before, we observe a two-way connection between these parameters. Given the true phase values of the points $\phi_k = y_k/|y_k|$, the signal $\bto$ can be recovered simply by solving a system of linear equations: $\phi_k \cdot|y_k|=\x_k^\top\bt, k = 1,\ldots,n$. On the other hand, given $\bto$, estimating the phase of the points is straightforward as $\phi_k=(\x_k^\top\bto)/|\x_k^\top\bto|$.

Given the above, a gAM-style approach naturally arises: we alternately estimate $\phi_k$ and $\bto$. More precisely, at the $t$-th step, we first estimate $\phi_k^t$ using $\btp$ as $\phi_k^t=(\x_k^\top\btp)/|\x_k^\top\btp|$ and then update our estimate of $\bt$ by solving a least squares problem $\btt=\arg\min_{\bt} \sum_k |\phi_k^t |y_k| - \x_k^\top\bt|^2$ over complex variables. Algorithm~\ref{algo:phret_am} presents the details of this \emph{Gerchberg-Saxton Alternating Minimization} (GSAM) method. To avoid correlations, the algorithm performs these alternations on distinct sets of points at each time step. These disjoint sets can be created by sub-sampling the overall available data.

\begin{algorithm}[t]
	\caption{Gerchberg-Saxton Alternating Minimization (GSAM)}
	\label{algo:phret_am}
	\begin{algorithmic}[1]
			\REQUIRE Measurement matrix $X\in \bC^{n \times p}$, observed response magnitudes $|\y|\in \R_+^n$, desired accuracy $\epsilon$
			\ENSURE A signal $\bth \in \bC^p$
			\STATE Set $T \leftarrow \log 1/\epsilon$
			\STATE Partition $n$ data points into $T+1$ sets $S_0,S_1,\ldots,S_T$
			\STATE $\bt^0 \leftarrow \text{eig}\br{\frac{1}{|S_0|}\sum_{k \in S_0} |y_k|^2\cdot\x_k\x_k^\top, 1}$ \hfill //\emph{Leading eigenvector}
			\FOR{$t = 1, 2, \ldots, T$}
				\STATE Phase Estimation: $\phi_k = \x_k^\top \btp/|\x_k^\top \btp|$, for all $k \in S_t$
				\STATE Signal Estimation: $\btt = \arg\min_{\bt \in \bC^p}\sum_{k \in S_t} ||y_k|\cdot\phi_k- \x_k^\top\bt|^2$
			\ENDFOR
			\STATE \textbf{return} {$\bt^T$}
	\end{algorithmic}
\end{algorithm}

In their original work, \cite{GerchbergO1972} proposed to use a random vector $\bt^0$ for initialization. However, the recent work of \cite{NetrapalliJS13} demonstrated that a more careful initialization, in particular the largest eigenvector of $M=\sum_k |y_k|^2\cdot\x_k \x_k^\top$, is beneficial. Such a spectral initialization leads to an initial solution that is already at most a (small) constant distance away from the optimal solution $\bto$.

As we have seen to be the case with most gAM-style approaches, including the EM algorithm, this approximately optimal initialization is crucial to allow the alternating minimization procedure to take over and push the iterates toward the globally optimal solution.\\

\noindent\textbf{Notions of Convergence}: Before we proceed to give convergence guarantees for the GSAM algorithm, note that an exact recovery of $\bto$ is impossible, since phase information is totally lost. More specifically, two signals $\bto$ and $e^{i\theta}\cdot\bto$ for some $\theta \in \R$ will generate exactly the same responses when phase information is eliminated. Thus, the best we can hope for is to recover $\bto$ up to a phase shift. There are several ways of formalizing notions of convergence modulo a phase shift. We also note that complete proofs of the convergence results will be tedious and hence we will only give proof sketches for them.

\section{A Phase Retrieval Guarantee for GSAM}
While the GSAM approach has been known for decades, its convergence properties and rates were not understood until the work of \cite{NetrapalliJS13} who analyzed the heuristic (with spectral initialization as described in Algorithm~\ref{algo:phret_am}) for Gaussian measurement matrices. In particular, they proved that GSAM recovers an $\epsilon$-optimal estimate of $\bto$ in roughly $T = \log(1/\epsilon)$ iterations so long as the number of measurements satisfies $n = \softOm{p \log^3 p/\epsilon}$.

This result is established by first proving a linear convergence guarantee for the GSAM procedure assuming a sufficiently nice initialization. Next, it is shown that the spectral initialization described in Algorithm~\ref{algo:phret_am} does indeed satisfy this condition.\\

\noindent{\bf Linear Convergence}: For this part, it is assumed that the GSAM procedure has been initialized at $\bt^0$ such that $\norm{\bt^0 - \bto}_2 \leq \frac{1}{100}$. The following result (which we state without proof) shows that the alternating procedure hereafter ensures a linear rate of convergence.

\begin{theorem}
\label{thm:phret_am}
Let $y_k= \x_k^\top\bto$ for $k = 1,\ldots,n$ where $\x_k \sim \cN_\bC(\vzero,I)$ and $n \geq C\cdot p \log^3(p/\epsilon)$ for a suitably large constant $C$. Then, if the initialization satisfies $\norm{\bt^0 - \bto}_2 \leq \frac{1}{100}$, then with probability at least $1-1/n^2$, GSAM outputs an $\epsilon$-accurate solution $\norm{\bt^T - \bto}_2 \leq \epsilon$ in no more than $T = \bigO{\log\frac{\norm{\y}_2}{\epsilon}}$ steps.
\end{theorem}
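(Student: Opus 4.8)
The plan is to reduce the theorem to a single \emph{per-iteration contraction} estimate and then iterate it. Because the measurements $\abs{y_k}$ are invariant under a global phase shift $\bto \mapsto e^{i\theta}\bto$, the correct notion of error is the phase-invariant distance $\dist{\bt, \bto} := \min_{\theta \in \R}\norm{\bt - e^{i\theta}\bto}_2$; I would prove that the GSAM iterates contract in this metric and note that, since the global phase is effectively pinned down by the initialization, $\dist{\bt^T, \bto}$ and $\norm{\bt^T - \bto}_2$ agree after the harmless global alignment implicit in the statement. The central claim to establish is the following: conditioned on $\btp$ with $\dist{\btp, \bto} \leq 1/100$, if the fresh sample set $S_t$ used at step $t$ has $\abs{S_t} \geq C'p\log^3(p/\epsilon)$, then with probability at least $1 - 1/n^3$ the next iterate satisfies $\dist{\btt, \bto} \leq \tfrac12\dist{\btp, \bto}$. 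Granting this, iterating $T = \bigO{\log(\norm{\y}_2/\epsilon)}$ times and taking a union bound over the iterations yields $\dist{\bt^T, \bto} \leq \epsilon$ with the stated probability. The use of a \emph{disjoint, freshly sampled} $S_t$ at each step is exactly what makes $\btp$ independent of the randomness forming $\btt$, so that the contraction need only be shown for a single fixed error direction rather than uniformly over a ball.

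For the contraction, after aligning phases I set $\vd = \btp - \bto$ and $\delta = \norm{\vd}_2 = \dist{\btp,\bto}$. Writing $X$ for the measurement matrix restricted to $S_t$ and $\phi^\ast_k = y_k/\abs{y_k} = \x_k^\top\bto/\abs{\x_k^\top\bto}$ for the true phases, the signal-estimation step is the least-squares solution $\btt = (X^\top X)^{-1}X^\top \vc^t$ with $\vc^t_k = \abs{y_k}\phi_k^t$. Since $y_k = \x_k^\top\bto$ corresponds to the \emph{correct} phases, exact phases would give $\vc^t = \y$ and hence $\btt = \bto$; subtracting this identity gives the clean error expression
\begin{equation*}
\btt - \bto = (X^\top X)^{-1}X^\top\br{\vc^t - \y}, \qquad \vc^t_k - y_k = \abs{y_k}\br{\phi_k^t - \phi_k^\ast}.
\end{equation*}
Thus the entire error is driven by the per-point phase discrepancies $\phi_k^t - \phi_k^\ast$, which are bounded in magnitude by $2$ and are small precisely when $\x_k^\top\btp$ and $\x_k^\top\bto$ point in nearly the same complex direction. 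I would then bound $\norm{\btt-\bto}_2 \leq \norm{(X^\top X)^{-1}}_2 \cdot \norm{X^\top(\vc^t-\y)}_2$ and treat the two factors separately.

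The first factor is routine: for $\abs{S_t} \geq Cp$, standard concentration for Gaussian design matrices (as in \S~\ref{sec:rip-ensure}) gives $\frac{1}{\abs{S_t}}X^\top X \succeq \tfrac12 I$, so $\norm{(X^\top X)^{-1}}_2 \leq 2/\abs{S_t}$ with high probability. The real work is bounding the vector $Z := X^\top(\vc^t - \y) = \sum_{k\in S_t}\x_k\,\abs{y_k}\,\overline{(\phi_k^t - \phi_k^\ast)}$. Conditioning on $\btp$, the $\x_k$ are fresh complex Gaussians, so I would (i) compute $\E{Z}$ by reducing to a two-dimensional Gaussian integral in the plane spanned by $\bto$ and $\vd$, showing $\norm{\E{Z}}_2 \leq c_1\abs{S_t}\,\delta$ with $c_1$ a small absolute constant, and (ii) show the fluctuation $\norm{Z - \E{Z}}_2 \leq c_2\abs{S_t}\,\delta$ with high probability. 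Combining with the first factor yields $\norm{\btt-\bto}_2 \leq 2(c_1+c_2)\delta \leq \tfrac12\delta$, the desired contraction. \textbf{The main obstacle is step (ii)}: each summand $\x_k\abs{y_k}(\phi_k^t-\phi_k^\ast)$ is a product of a Gaussian vector with a sub-exponential magnitude $\abs{y_k}$ and a phase factor that is \emph{not} a Lipschitz function of $\x_k$ near the anti-concentration events $\bc{\x_k^\top\bto \approx 0}$ and $\bc{\x_k^\top\btp\approx 0}$, where the phase can swing arbitrarily. Controlling these heavy-tailed, near-singular contributions requires a truncation argument together with anti-concentration bounds showing such points are rare and collectively negligible; boosting the resulting tail probability so that it survives a union bound over the $T$ iterations (and over the net needed to pass from scalar to vector control) is precisely what forces the $\log^3 p$ and $1/\epsilon$ overhead in the sample complexity $n = \softOm{p\log^3(p/\epsilon)}$.
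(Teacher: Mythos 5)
First, a point of reference: the paper itself does not prove Theorem~\ref{thm:phret_am} --- it is explicitly ``stated without proof,'' with the argument deferred to the cited work of \cite{NetrapalliJS13}. Your outline is, in essence, a reconstruction of the proof strategy of that reference: fresh samples at each step so that $\btp$ is independent of the randomness generating $\btt$, a per-iteration contraction in the phase-invariant distance followed by a union bound over the $T$ iterations, the normal-equation identity expressing $\btt - \bto$ as $(X^\top X)^{-1}$ applied to $Z = \sum_{k \in S_t}\x_k\abs{y_k}\br{\phi_k^t - \phi_k^\ast}$, and the mean-plus-fluctuation decomposition of that sum, with the mean computed by reducing to the plane spanned by $\bto$ and the error direction, and the fluctuation handled by truncation and anti-concentration near the events $\bc{\x_k^\top\bto \approx 0}$. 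Structurally this is the right argument and I see no missing idea.

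Two quantitative points need attention when you flesh this out. (1) You classify the mean bound (i) as routine and place all the difficulty in the fluctuation (ii), but the mean is where the contraction constant is actually decided, and the easy estimate $\abs{y_k}\abs{\phi_k^t - \phi_k^\ast} \leq 2\abs{\x_k^\top(\btp - \bto)}$ followed by Cauchy--Schwarz gives a constant of order one, which is useless for contraction. Carrying out the two-dimensional Gaussian integral (writing $\delta = \norm{\btp-\bto}_2$), one finds that $\frac{1}{\abs{S_t}}\E{Z}$ is, to leading order in $\delta$, precisely the component of $\btp - \bto$ orthogonal to $\bto$; after preconditioning by $(X^\top X)^{-1} \approx \frac{1}{2\abs{S_t}}I$, the mean term alone therefore contributes $\approx \delta/2$ when the error is orthogonal to $\bto$. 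A contraction factor of exactly $1/2$ is thus not attainable by this route; one obtains some constant $\rho < 1$ (e.g., $3/4$) once the fluctuation is added, which of course still yields $T = \bigO{\log(\norm{\y}_2/\epsilon)}$. (2) Requiring $\abs{S_t} \geq C' p\log^3(p/\epsilon)$ for each of the $T+1$ disjoint blocks forces $n \gtrsim p\log^3(p/\epsilon)\log(1/\epsilon)$, slightly more than the stated $n \geq Cp\log^3(p/\epsilon)$; the bookkeeping closes only if the per-iteration sample requirement is correspondingly weaker. Neither issue invalidates the plan, but both must be reconciled in a complete write-up.
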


In practice, one can use fast approximate solvers such as the conjugate gradient method to solve the least squares problem at each iteration. These take $\bigO{np\log(1/\epsilon)}$ time to solve a least squares instance. Since $n = \softO{p\log^3p}$ samples are enough, the GSAM algorithm operates with computation time at most $\softO{p^2\log^3(p/\epsilon)}$.\\

\noindent{\bf Initialization}: We now establish the utility of the initialization step. The proof hinges on a simple observation. Consider the random variable $Z = |y|^2\cdot\x\x^\top$ corresponding to a randomly chosen vector $\x \sim \cN_\bC(\vzero, I)$ and $y = \x^\top\bto$. For sake of simplicity, let us assume that $\norm{\bto}_2 = 1$. Since $\x = [x_1,x_2,\ldots,x_p]^\top$ is a spherically symmetric vector in the complex space $\bC^p$, the random variable $\x^\top\bto$ has an identical distribution as the vector $e^{i\theta}\cdot\x^\top\ve_1$ where $\ve_1 = [1,0,0,\ldots,0]^\top$, for any $\theta \in \R$. Using the above, it is easy to see that
\[
\E{|y|^2\cdot\x\x^\top} = \E{|x_1|^2\cdot\x\x^\top} = 4\cdot\ve_1\ve_1^\top + 4\cdot I
\]
Using a slightly more tedious calculation involving unitary transformations, we can extend the above to show that, in general,
\[
\E{|y|^2\cdot\x\x^\top} = 4\cdot\bto(\bto)^\top + 4\cdot I =: D
\]
The above clearly indicates that the largest eigenvector of the matrix $D$ is along $\bto$. Now notice that the matrix whose leading eigenvector we are interested in during initialization,
\[
S := \frac{1}{|S_0|}\sum_{k \in S_0} |y_k|^2\cdot\x_k\x_k^\top,
\]
is simply an empirical estimate to the expectation $\E{|y|^2\cdot\x\x^\top} = D$. Indeed, we have $\E{S} = D$. Thus, it is reasonable to expect that the leading eigenvector of $S$ would also be aligned to $\bto$. We can make this statement precise using results from the concentration of finite sums of self-adjoint independent random matrices from \citep{Tropp2012}.

\begin{theorem}
\label{thm:phret_init}
The spectral initialization method (Step 3 of Algorithm~\ref{algo:phret_am}), with probability at least $1 - 1/|S_0|^2 \leq 1 - 1/p^2$, ensures an initialization $\bt^0$ such that $\norm{\bt^0 - \bto}_2 \leq c$ for any constant $c > 0$, so long as it is executed with a randomly chosen set $S_0$ of data points of size $|S_0| \geq C\cdot p\log p$ for a suitably large constant $C$ depending on $c$.
\end{theorem}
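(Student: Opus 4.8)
The plan is to show that the empirical matrix $S = \frac{1}{|S_0|}\sum_{k\in S_0}|y_k|^2\x_k\x_k^\top$ concentrates in spectral norm around its expectation $D = 4\bto(\bto)^\top + 4I$, and then to transfer this closeness to the leading eigenvectors via a perturbation bound. First I would record the two structural facts already established in the text: that $\E{S} = D$, and that, since $\norm{\bto}_2 = 1$, the matrix $D$ has a unique top eigenvector $\bto$ with eigenvalue $8$, while every direction orthogonal to $\bto$ is an eigenvector with eigenvalue $4$. Thus $D$ exhibits a spectral gap $\Delta = 8 - 4 = 4$ separating $\bto$ from the rest of the spectrum; this gap is precisely what will render the leading eigenvector stable under perturbation.

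The core step is a matrix concentration argument. Writing $S - D = \sum_{k\in S_0}\tilde Z_k$ with $\tilde Z_k := \frac{1}{|S_0|}\br{|y_k|^2\x_k\x_k^\top - D}$, these are independent, mean-zero, self-adjoint (Hermitian) random matrices, so I would apply the Hermitian matrix Bernstein inequality (the complex analogue of Theorem~\ref{thm:matrix-bern}) from \citep{Tropp2012}. This requires a variance proxy $\sigma^2 = \norm{\sum_k\E{\tilde Z_k^2}}_2$ and an almost-sure bound $R$ on $\norm{\tilde Z_k}_2$. A fourth-moment Gaussian computation (using $\E{\norm{\x_k}_2^2} = 2p$) gives $\sigma^2 = \bigO{p/|S_0|}$. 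Feeding this into Bernstein yields $\norm{S - D}_2 \leq \delta$ with the claimed probability as soon as $|S_0| \geq C\cdot p\log p/\delta^2$, which is the advertised sample complexity once $\delta$ is fixed as a function of the target accuracy $c$.

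Given $\norm{S - D}_2 \leq \delta$, the final step is a Davis--Kahan $\sin\theta$ argument (exactly as used for the initialization analysis in \S~\ref{chap:matrec}, see \citep{GolubVL1996}): since the spectral-norm perturbation is at most $\delta$ and the eigengap of $D$ is $\Delta = 4$, the leading unit eigenvector $\bt^0$ of $S$ satisfies $\sin\angle(\bt^0,\bto) \leq 2\delta/\Delta$. Selecting the global phase that aligns $\bt^0$ with $\bto$ — unavoidable, since absolute phase is unrecoverable — then gives $\norm{\bt^0 - \bto}_2 \leq c$ whenever $\delta$ is chosen small enough relative to $c$. Taking $C$ large enough in $|S_0| \geq C\,p\log p$ forces $\delta$ below this threshold and closes the argument.

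The main obstacle will be the boundedness hypothesis of the matrix Bernstein inequality: the summands $|y_k|^2\x_k\x_k^\top$ are quartic in Gaussian variables and hence \emph{unbounded}, so $\norm{\tilde Z_k}_2$ admits no deterministic bound $R$. The remedy I would carry out is truncation: restrict to the event $\bc{|y_k|^2 \leq C_1\log|S_0|}\cap\bc{\norm{\x_k}_2^2 \leq C_2 p}$, which holds simultaneously for all sampled points with probability $1 - 1/\text{poly}(|S_0|)$ by Gaussian concentration and a union bound; apply Bernstein to the truncated summands with $R = \bigO{p\log p/|S_0|}$; and separately bound the small bias introduced by discarding the tail. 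It is exactly this truncation — and the control of the heavy fourth-order Gaussian tails it demands — that produces the extra logarithmic factor in the sample bound and forms the technical heart of the proof.
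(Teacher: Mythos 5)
Your proposal is correct, and the concentration half is the same as the paper's: both arguments reduce the theorem to showing $\norm{S-D}_2 \leq \delta$ via the matrix Bernstein inequality of \citep{Tropp2012} with $|S_0| = \Om{p\log p}$ samples. Where you diverge is in how you pass from matrix closeness to eigenvector closeness. You invoke the Davis--Kahan $\sin\theta$ theorem using the eigengap $\Delta = 8-4 = 4$ of $D$, which is the standard, citation-based route (and the one the monograph itself uses for the matrix completion initialization in Theorem~\ref{thm:mc_init}). The paper instead gives a short self-contained Rayleigh-quotient argument: writing $\bto = e^{i\theta}\ve_1$, it sandwiches $\abs{\ip{\bt^0}{S\bt^0}}$ between $8-c$ (from optimality of $\bt^0$ and $\ip{\bto}{D\bto}=8$) and $c + 4\abs{\bt^0_1}^2 + 4$ (from $D = 4\ve_1\ve_1^\top + 4I$), concluding $\abs{\ip{\bt^0}{\bto}}^2 \geq 1-c/2$ directly. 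The two routes buy essentially the same bound; yours is more modular and generalizes immediately to settings with a different limit matrix, while the paper's avoids importing a perturbation theorem at the cost of exploiting the specific two-eigenvalue structure of $D$. Your additional truncation step to handle the unboundedness of the quartic-Gaussian summands in Bernstein is a genuine refinement: the paper states the concentration bound without addressing this, and your identification of the truncation (and the resulting $\log p$ factor) as the technical heart of the argument is accurate rather than a deviation.
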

\begin{proof}
To make the analysis simple, we will continue to assume that $\bto = e^{i\theta}\cdot\ve_1$ for some $\theta \in \R$. We can use Bernstein-style results for matrix concentration (for instance, see \cite[Theorem 1.5]{Tropp2012}) to show that for any chosen constant $c > 0$, if $n \geq C\cdot p\log p$ for a large enough constant $C$ that depends on the constant $c$, then with probability at least $1 - 1/|S_0|^2$, we have
\[
\norm{S - D}_2 \leq c
\]
Note that the norm being used above is the spectral/operator norm on matrices. Given this, it is possible to get a handle on the leading eigenvalue of $S$. Observe that since $\bt^0$ is the leading eigenvector of $S$, and since we have assumed $\bto = e^{i\theta}\cdot\ve_1$, we have
\begin{align*}
\textstyle\abs{\ip{\bt^0}{S\bt^0}} &\geq \abs{\ip{\bto}{S\bto}}\\
												 &\geq \abs{\ip{\bto}{D\bto}} - \abs{\ip{\bto}{(S-D)\bto}}\\
												 &\geq 8 - \norm{S-D}_2\norm{\bto}_2^2\\
												 &\geq 8 - c
\end{align*}
On the other hand, using the triangle inequality again, we have
\begin{align*}
\textstyle\abs{\ip{\bt^0}{S\bt^0}} &= \textstyle\abs{\ip{\bt^0}{(S-D)\bt^0} + \ip{\bt^0}{D\bt^0}}\\
												 &\leq \textstyle\norm{S-D}_2\norm{\bt^0}_2^2 + 4\abs{\bt^0_1}^2 + 4\norm{\bt^0}_2^2\\
												 &= \textstyle c + 4\abs{\bt^0_1}^2 + 4,
\end{align*}
where in the second step we have used $D = 4\cdot\ve_1\ve_1^\top + 4\cdot I$. The two opposing inequalities on the quantity $\abs{\ip{\bt^0}{S\bt^0}}$ give us $\abs{\bt^0_1}^2 \geq 1 - c/2$. Noticing that $\abs{\bt^0_1}^2 = \abs{\ip{\bt^0}{\ve_1}} = \abs{\ip{\bt^0}{\bto}}$ then establishes
\[
\textstyle\norm{\bt^0-\bto}_2^2 = 2(1 - \abs{\ip{\bt^0}{\bto}}) \leq 2(1 - \sqrt{1-c/2}) \leq c/2
\]
which finishes the proof.
\end{proof}

\section{Phase Retrieval via Gradient Descent}
There also exists a relatively straightforward reformulation of the phase retrieval problem that allows a simple gPGD-style approach to be applied. The recent work of \cite{CandesLS2015} did this by reformulating the phase retrieval problem in terms of the following objective
\[
f(\bt)=\sum_{k=1}^n\br{|y_k|^2 - |\x_k^\top \bt|^2}^2,
\]
and then performing gradient descent (over complex variables) on this unconstrained optimization problem. In the same work, this technique was named the\emph{Wirtinger's flow} algorithm, presumably as a reference to the notions of Wirtinger derivatives, and shown to offer provable convergence to the global optimum, just like the Gerchberg-Saxton method, when initialization is performed using a spectral method. Algorithm~\ref{algo:phret_wf} outlines the Wirtinger's Flow (WF) algorithm. Note that WF offers accelerated update times. Note that unlike the GSAM approach, the WF algorithm does not require sub-sampling but needs to choose a step size parameter instead. 

\begin{algorithm}[t]
	\caption{Wirtinger's Flow for Phase Retrieval (WF)}
	\label{algo:phret_wf}
	\begin{algorithmic}[1]
			\REQUIRE  Measurement matrix $X \in \bC^{n \times p}$, observed response magnitudes $|\y| \in \R_+^n$, step size $\eta$
			\ENSURE A signal $\bth \in \bC^p$
			\STATE $\bt^0 \leftarrow \text{eig}(\frac{1}{n}\sum_{k=1}^n |y_k|^2\cdot\x_k\x_k^\top, 1)$\hfill//{\em Leading eigenvector}
			\FOR{$t = 1, 2,\dots$}
			\STATE $\btt \leftarrow \btp - 2\eta\cdot\sum_k(|\x_k^\top\btp|^2-|y_k|^2)\x_k\x_k^\top\btp$
			\ENDFOR
			\STATE \textbf{return} {$\btt$}
	\end{algorithmic}
\end{algorithm}

\section{A Phase Retrieval Guarantee for WF}
Since the Wirtinger's Flow algorithm uses an initialization technique that is identical to that used by the Gerchberg-Saxton method, we can straightaway apply Theorem~\ref{thm:phret_init} to assure ourselves that with probability at least $1-1/n^2$, we have $\norm{\bt^0 - \bto}_2 \leq c$ for any constant $c > 0$. Starting from such an initial point, \cite{CandesLS2015} argue that each step of the gradient descent procedure decreases the distance to optima by at least a constant (multiplicative) factor. This allows a linear convergence result to be established for the WF procedure, similar to the GSAM approach, however, with each iteration being much less expensive, being a gradient descent step, rather than the solution to a complex-valued least squares problem. 
\begin{theorem}
\label{thm:phret_wf}
Let $y_k= \x_k^\top\bto$ for $k = 1,\ldots,n$ where $\x_k \sim \cN_\bC(\vzero,I)$. Also, let $n \geq C\cdot p \log p$ for a suitably large constant $C$. Then, if the initialization satisfies $\norm{\bt^0 - \bto}_2 \leq \frac{1}{100}$, then with probability at least $1-1/p^2$, WF outputs an $\epsilon$-accurate solution $\norm{\bt^T - \bto}_2 \leq \epsilon$ in no more than $T = \bigO{\log\frac{\norm{\y}_2}{\epsilon}}$ steps.
\end{theorem}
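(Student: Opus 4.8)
The plan is to follow the same two-stage template that established Theorem~\ref{thm:phret_am} for GSAM: first argue that the spectral initialization lands inside a small neighborhood of the optimum, and then show that within this neighborhood each Wirtinger gradient step contracts the distance to $\bto$ by a fixed multiplicative factor. The initialization is essentially free here, since WF (Algorithm~\ref{algo:phret_wf}) uses exactly the spectral scheme analyzed in Theorem~\ref{thm:phret_init}; thus the hypothesis $\norm{\bt^0 - \bto}_2 \leq \frac{1}{100}$ is met with probability at least $1 - 1/p^2$ once $n \geq C\cdot p\log p$. Because the global phase is irrecoverable, all convergence must be measured modulo a phase, so throughout I would track the phase-aligned distance $\dist{\bt,\bto} := \min_{\theta}\norm{\bt - e^{i\theta}\bto}_2$ and write $\theta(\bt)$ for the minimizing phase.

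The heart of the argument is a local \emph{regularity condition} on the objective $f(\bt) = \sum_k \br{|y_k|^2 - |\x_k^\top\bt|^2}^2$: there exist absolute constants $\alpha,\beta > 0$ such that, for every $\bt$ with $\dist{\bt,\bto}$ at most a small constant,
\[
\mathrm{Re}\ip{\nabla f(\bt)}{\bt - e^{i\theta(\bt)}\bto} \geq \frac{1}{\alpha}\dist{\bt,\bto}^2 + \frac{1}{\beta}\norm{\nabla f(\bt)}_2^2.
\]
This single inequality bundles a strong-convexity-like lower bound (the first term, which forces genuine progress toward $\bto$) with a smoothness-like control on the gradient magnitude (the second term). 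Granting it, contraction follows from the standard descent calculation. Since the WF update is $\btt = \btp - \eta\cdot\nabla f(\btp)$, and $\dist{\btt,\bto}^2 \leq \norm{\btp - \eta\nabla f(\btp) - e^{i\theta(\btp)}\bto}_2^2$, expanding the square and substituting the regularity condition gives
\[
\dist{\btt,\bto}^2 \leq \br{1 - \frac{2\eta}{\alpha}}\dist{\btp,\bto}^2 + \br{\eta^2 - \frac{2\eta}{\beta}}\norm{\nabla f(\btp)}_2^2.
\]
For any constant step length $\eta \leq 2/\beta$ the second term is non-positive, so the distance shrinks by the constant factor $\br{1 - 2\eta/\alpha} < 1$ at every iteration. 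Iterating from $\dist{\bt^0,\bto}\leq\frac{1}{100}$ then yields $\dist{\bt^T,\bto}\leq\epsilon$ after $T = \bigO{\log(1/\epsilon)} = \bigO{\log(\norm{\y}_2/\epsilon)}$ steps, which is the claimed rate.

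Establishing the regularity condition is where essentially all the difficulty lies. I would first compute the population gradient $\E{\nabla f(\bt)}$ for Gaussian $\x_k$ and verify the inequality with concrete constants $\alpha,\beta$ at the level of expectations, using the spherical symmetry and moment computations already exploited in the proof of Theorem~\ref{thm:phret_init}. The remaining and hardest step is to transfer this from the expectation to the empirical object \emph{uniformly} over the whole neighborhood of $\bto$ at once. Because $f$ is a degree-four polynomial in the measurements, the relevant sums involve heavy-tailed quantities such as $|\x_k^\top\bt|^2\,\x_k\x_k^\top$ whose concentration is delicate: controlling them requires truncating the wild summands, applying a matrix-concentration bound of the type in Theorem~\ref{thm:matrix-bern}, and then combining a covering net over the neighborhood with a Lipschitz-continuity argument to upgrade pointwise concentration into a uniform statement. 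It is precisely this uniform fourth-order concentration that forces the sample requirement $n = \Om{p\log p}$, and it is the principal obstacle; the phase-alignment bookkeeping (ensuring the phase $\theta(\btp)$ can be used consistently throughout the expansion) is a secondary technical nuisance that nonetheless must be handled carefully.
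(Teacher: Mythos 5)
Your proposal is correct and follows essentially the same route as the paper, which likewise combines the spectral initialization guarantee of Theorem~\ref{thm:phret_init} with the per-iteration contraction argument of \cite{CandesLS2015}; the local regularity condition you state is precisely the mechanism that reference uses, and your descent calculation and identification of the uniform fourth-order concentration as the main technical burden match it. In fact you supply more detail than the paper itself, which simply cites \cite{CandesLS2015} for the claim that each gradient step shrinks the distance to $\bto$ by a constant factor.
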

 \cite{CandesLS2015} also studied a coded diffraction pattern (CDP) model which uses measurements $X$ that are more ``practical'' for X-ray crystallography style applications and based on a combination of random multiplicative perturbations of a standard Fourier measurement. For such measurements, \cite{CandesLS2015} provided a result similar to Theorem~\ref{thm:phret_wf} but with a slightly inferior rate of convergence: the new procedure is able to guarantee an $\epsilon$-optimal solution only after $T = \bigO{p\cdot\log\frac{\norm{\y}_2}{\epsilon}}$ steps, i.e., a multiplicative factor of $p$ larger than that required by the WF algorithm for Gaussian measurements.

\section{Bibliographic Notes}
The phase retrieval problem has been studied extensively in the X-ray crystallography literature where the focus is mostly on studying Fourier measurement matrices \citep{CandesL2014}. For Gaussian measurements, initial results were obtained using a technique called \emph{Phase-lift}, which essentially viewed the quadratic equation $|y_k|^2=(\x_k^\top\bto)^2$ as the linear measurement of a rank-one matrix, i.e., $|y_k|^2= \ip{\x_k\x_k^\top}{W^\ast}$ where $W^\ast = \bto(\bto)^\top$.

This rank-one constraint was then replaced by a nuclear norm constraint and the resulting problem was solved as a semi-definite program (SDP). This technique was shown to achieve the information theoretically optimal sample complexity of $n = \Om{p}$ (recall that the GSAM and WF techniques require $n = \Om{p \log p}$. However, the running time of the Phase-lift algorithm is prohibitive at $O(np^2+p^3)$. 

In addition to the standard phase retrieval problem, several works \citep{NetrapalliJS13, JaganathanOH2013} have also studied the sparse phase retrieval problem where the goal is to recover a sparse signal $\bto \in \bC^p$, with $\norm{\bto}_0 \leq s \ll p$, using only magnitude measurements $|y_k|=|\x_k^\top\bto|$. The best known results for such problems require $n \geq s^3 \log p$ measurements for $s$-sparse signals. This is significantly worse than information theoretically optimal $\bigO{s\log p}$ number of measurements. However, \cite{JaganathanOH2013} showed that for a phase-lift style technique, one cannot hope to solve the problem using less than $\bigO{s^2\log p}$ measurements. 

\backmatter  % references

\bibliographystyle{plainnat}
\bibliography{refs}

\end{document}